\newcommand\mycommfont[1]{\foot\documentclass{article}

\notesize\ttfamily\textcolor{blue}{#1}}
\newcommand{\cmark}{\ding{51}}%
\newcommand{\xmark}{\ding{55}}%
\theoremstyle{plain}
\newtheorem{theorem}{Theorem}
\newtheorem{assumption}{Assumption}
\newtheorem{definition}{Definition}
\newtheorem{corollary}{Corollary}
\newtheorem{lemma}{Lemma}
\theoremstyle{definition}
\newtheorem{remark}{Remark}
\newcommand{\cX}{\ensuremath{\mathcal{X}}}
\newcommand{\cF}{\ensuremath{\mathcal{F}}}
\newcommand{\R}{\ensuremath{\mathbb{R}}}
\renewcommand{\P}{\ensuremath{\mathbb{P}}}
\DeclareMathOperator*{\argmax}{\text{argmax}}
\DeclareMathOperator*{\argmin}{\text{argmin}}
\newcommand{\KL}{\mathrm{KL}}
\renewcommand{\dim}{\ensuremath{\mathrm{dim}}}
\newcommand{\E}{\mathbb{E}}
\newcommand{\calD}{\mathcal{D}}
\newcommand{\calC}{\mathcal{C}}
\newcommand{\norm}[1]{\left\|{#1}\right\|} 
\newcommand{\calE}{\mathcal{E}}
\newcommand{\defeq}{:=}
\newcommand{\calF}{\mathcal{F}}
\newcommand{\bsigma}{\bar{\sigma}}
\newcommand{\calI}{\mathcal{I}}
\newcommand{\calH}{\mathcal{H}}
\newcommand{\ctn}{\mathsf{Catoni}}
\newcommand{\bmu}{\bar{\mu}}
\newcommand{\calT}{\mathcal{T}}
\newcommand{\calN}{\mathcal{N}}
\newcommand{\eps}{\epsilon}
\newcommand{\calA}{\mathcal{A}}
\newcommand{\calX}{\mathcal{X}}
\newcommand{\Var}{\mathrm{Var}}
\newcommand{\hbeta}{\hat{\beta}}
\newcommand{\hf}{\hat{f}}
\newcommand{\hVar}{\widehat{\mathrm{Var}}}
\newcommand{\Evar}{\calE_{\mathrm{var}}}
\newcommand{\Econv}{\calE_{\mathrm{conv}}}
\newcommand{\bVar}{\overline{\mathrm{Var}}}
\title{Catoni Contextual Bandits are Robust to Heavy-tailed Rewards}
\author{
Chenlu Ye\thanks{Correspondence to Chenlu Ye}\thanks{University of Illinois Urbana-Champaign; e-mail: {\tt chenluy3@illinois.edu}}
\qquad
Yujia Jin\thanks{OpenAI (Work done during an internship at Google); e-mail: {\tt yujiajin@stanford.edu}}
\qquad
Alekh Agarwal\thanks{Google Research; e-mail: {\tt alekhagarwal@google.com}}
\qquad
Tong Zhang\thanks{University of Illinois Urbana-Champaign; e-mail: {\tt tongzhang@tongzhang-ml.org}}
}
\date{}
\begin{document}

\maketitle

\begin{abstract}
Typical contextual bandit algorithms assume that the rewards at each round lie in some fixed range $[0, R]$, and their regret scales polynomially with this reward range $R$. However, many practical scenarios naturally involve heavy-tailed rewards or rewards where the worst-case range can be substantially larger than the variance. In this paper, we develop an algorithmic approach building on Catoni's estimator from robust statistics, and apply it to contextual bandits with general function approximation. When the variance of the reward at each round is known, we use a variance-weighted regression approach and establish a regret bound that depends only on the cumulative reward variance and logarithmically on the reward range $R$ as well as the number of rounds $T$. For the unknown-variance case, we further propose a careful peeling-based algorithm and remove the need for cumbersome variance estimation. With additional dependence on the fourth moment, our algorithm also enjoys a variance-based bound with logarithmic reward-range dependence. Moreover, we demonstrate the optimality of the leading-order term in our regret bound through a matching lower bound.
\end{abstract}

\section{Introduction}
Minimax optimal regret bounds in the worst-case over problem instances for contextual bandit learning are relatively well-understood in the literature, both using policy-based approaches in the agnostic case, and regression-based approaches in the realizable case. A variety of algorithms attain these bounds in both settings, and the minimax optimality implies that the bounds are unimprovable in general. When the expected reward of each action is realizable using some function class $\cF$ available to the learner, this optimal regret scales as $O(R \sqrt{T d_{\cF} \ln N_{\cF}})$, where $R$ is the range of the rewards, $T$ is the number of rounds, $d_{\cF}$ is a complexity notion for $\cF$, such as the eluder dimension \citep{russo2013eluder}, and $N_{\cF}$ is the covering number of $\cF$. 

However, this worst-case behavior arises only when the rewards span their entire range $[0, R]$ with a significant probability, a phenomenon not typical in practice. Even for a common case of binary rewards in $\{0, R\}$ for instance, the expected reward is often relatively close to $0$ in common click/no-click style recommendation settings with low clickthrough rates. Consequently, the expectation, variance and even higher moments of the reward are much smaller than the worst-case range. More generally, rewards with heavier tails naturally arise when considering waiting times in wireless communication networks \citep{nair2013fundamentals}, stock prices in financial markets \citep{cont2001empirical,hull2012risk}, or value returns for online advertising \citep{choi2020online,jebarajakirthy2021mobile}. In this paper, we study the design of contextual bandit algorithms that can leverage such structures to have regret guarantees dependent polynomially on the reward variance, with only a mild logarithmic scaling with the range parameter $R$.

\begin{table}[t]
\label{tbl:results}
\footnotesize
\centering
\caption{Comparison between different algorithms for stochastic contextual bandits, where $d$ denotes the dimension for linear function approximation, $d_\cF, \tilde{d}_\cF$ capture the complexity of the function space $\cF$ used for reward estimation, $T$ is the number of rounds, $\sigma_t$ is the variance of the observed reward at round $t$, $\sigma$ is a uniform bound on reward variance ($\sigma_t\le\sigma$ for all $t\in[T]$), $R$ is the range of rewards, and $N_\cF$ is the covering number for function class $\cF$. $\widetilde{O}$ omits terms logarithmic in $T$ and $R$.}
\vspace{5pt}
\label{tab:regret-comparison}
\begin{tabular}{cccc}
\toprule
\textbf{Algorithm} & \textbf{Function Type} 
                   & \textbf{Known Variances} 
                   & \textbf{Regret Bound}\\
\midrule
\makecell{Weighted OFUL$+$ \\ \citep{zhou2022computationally} }
    & Linear
    & \cmark 
    & $\widetilde{O}\bigl(d \sqrt{\sum_{t\in[T]}\sigma_t^2} + d{\textcolor{red} R}\bigr)$  \\[0.35cm]
\makecell{Heavy-OFUL \citep{huang2024tackling} \tablefootnote{\citet{huang2024tackling} consider a more general setting, where the $1+\epsilon$-th moment of the reward is upper bounded for some $\epsilon\in(0,1]$, and incur a dependence in terms of this moment along with additional $poly(T)$ terms. Since our work only considers bounded variance, we present their result with \citep{livariance} together, as the two results are identical for the case of $\epsilon=1$.} \\ AdaOFUL \citep{livariance}}
    & Linear 
    & \cmark
    & $\widetilde{O}\big(d\sqrt{\sum_{t\in[T]}\sigma_t^2}\big)$ \\[0.35cm]
OLS \citep{pacchiano2024second}
    & Non-linear 
    & \cmark
    & $\widetilde{O}\big(\sigma\sqrt{d_\cF\ln N_\cF} + {\textcolor{red} R}d_{\cF}\ln N_\cF\big)$ \\
\rowcolor{blue!15} 
Catoni-OFUL (Theorem \ref{th:known_var}) 
    & Non-linear 
    & \cmark 
    & $\widetilde{O}\big(\sqrt{\sum_{t\in[T]}\sigma_t^2 \cdot d_\cF\ln N_\cF} + d_\cF \ln N_\cF\big)$ \\
\midrule
SAVE \citep{zhao2023variance} 
    & Linear 
    & \xmark 
    & $\widetilde{O}\bigl(d \sqrt{\sum_{t\in[T]}\sigma_t^2} + d{\textcolor{red} R} \bigr)$ \\
DistUCB \citep{wang2024more} \tablefootnote{DistUCB relies on estimating the full reward distribution rather than just the mean, and hence requires a stronger realizability assumption on the function class to capture this distribution.}
    & Non-linear 
    & \xmark
    & $\widetilde{O}\big(\sqrt{\sum_{t\in[T]}\sigma_t^2 \cdot \tilde{d}_\cF\ln N_\cF} + {\textcolor{red} R}\tilde d_{\cF}\ln N_\cF\big)$ \\
\makecell[c]{Unknown-Variance OLS \\ \citep{pacchiano2024second}}
    & Non-linear 
    & \xmark 
    & $\widetilde{O}\bigl(d_\cF\sqrt{\sum_{t\in[T]}\sigma_t^2 \cdot\ln N_\cF} + {\textcolor{red} R}d_{\cF}\ln N_\cF \bigr)$ \\
\rowcolor{blue!15}
VACB (Theorem \ref{th:unknown_var})
    & Non-linear 
    & \xmark 
    & $\widetilde{O}\bigl(d_\cF\sqrt{\sum_{t\in[T]}\sigma_t^2 \cdot\ln N_\cF} + d_{\cF}(\ln N_\cF)^{3/4} \bigr)$ \\
\bottomrule
\end{tabular}
\end{table}

Such variance-based regret bounds have received significant attention recently, under the topic of \emph{robustness to heavy-tailed rewards}.
\citet{huang2024tackling,livariance} study Huber regression and design variance-weighted regression-based approaches for linear contextual bandits with known variance, and show that their algorithms achieve a variance-based $\widetilde{O}\big(d\sqrt{\sum_{t\in[T]}\sigma_t^2}\big)$ regret bound, where $d$ is the dimension for the linear function, thus avoiding a dependence on range $R$. They also study Markov Decision Processes (MDPs) with linear function approximation under heavy-tailed rewards with unknown variance, and use the linearity of both expected rewards and variances in linear MDPs to design a weighted regression algorithm relying on variance estimation. 
To the best of our knowledge, these works heavily rely on the linear function structure and are hard to extend to the non-linear setting. The general question of designing a robust contextual bandit algorithm under the heavy-tailed reward (or a reward with a large range) for general function approximation is still lacking in the literature.

A different line of work called distributional RL estimates the full reward distribution \citep{wang2024more,wang2024central,wang2024model} under the unknown variance case to achieve variance-based regret bounds with general function approximation. However, their focus is on replacing the $T$-based scaling with the cumulative variance and still incurs a polynomial dependence on $R$. Additionally, the distributional approach requires the stronger modeling assumption that the full reward distribution, rather than just the expected reward is realizable.

There are some works considering the unknown variance case for contextual bandits without realizability conditions for the noise \citep{zhang2021improved,kim2021improved,zhao2023variance,pacchiano2024second}. 
Particularly, the most relevant ones to our work for the unknown-variance setting are \citet{zhao2023variance,pacchiano2024second}. \citet{zhao2023variance} develop a peeling approach for the unknown variance case without variance estimation in linear settings, and \citet{pacchiano2024second} extend this technique to general function approximation. Nevertheless, all of these algorithms have an $O(dR)$ or $O(R d_\cF \ln N_\cF)$ term in the regret bound.  We summarize the key results from the prior literature in Table~\ref{tbl:results} to better contextualize our results, and defer additional related works to Appendix \ref{ssec:Additional Related Works}.

\subsection{Our contributions}
This work considers a different route for robustness to heavy-tailed rewards, building on the well-studied Catoni's mean estimator from the robust statistics literature.  We design a contextual bandit (CB) algorithm that uses the Catoni mean as a robust device for constructing a regression error estimator for the \textbf{excess loss}, given some function class $\cF$ for predicting the expected reward. Using the variance-dependent concentration of the Catoni estimator, we conduct a careful analysis of our algorithm and show that its regret scales as $\widetilde{O}\big(\sqrt{\sum_{t\in[T]}\sigma_t^2 \cdot d_\cF\ln N_\cF} + d_\cF \ln N_\cF\big)$, when the reward variance $\sigma_t$ is known at each round $t$. 

Since reward variance information is seldom available in practice, we refine our approach for cases with unknown variances by employing a multi-level uncertainty estimation for the expected rewards of a carefully chosen subset of actions. For this approach, we obtain regret guarantees dependent on the fourth moment of the reward, while still maintaining a logarithmic scaling in $R$. Formally, the regret scales as $\widetilde{O}\bigl(d_\cF\sqrt{\sum_{t\in[T]}\sigma_t^2 \cdot\ln N_\cF} + d_{\cF}(\ln N_\cF)^{3/4} \bigr)$. Notably, our method does not rely on some other function class to help predict the per-round variance as a function. Instead, we estimate a robust averaged variance quantity, and show that it approximates the averaged true variance up to logarithmic factors in $R$. 

Overall, our results significantly improve the state-of-the-art in variance-aware regret guarantees, that are amenable to practical reward structures. We summarize our results relative to the most relevant prior literature in Table~\ref{tbl:results}.

\section{Preliminary}
\paragraph{Notations.} For any integer $n$, we use the short-hand notation $[n]=\{1,\ldots,n\}$, and define $x_{[n]}=\{x_1,\ldots,x_n\}$. We use $\widetilde{O}$ to omit terms logarithmic in $T$ and $R$. The comprehensive tables of notations are provided in Appendix \ref{ssec:table}.

We consider a contextual bandit problem over $T$ rounds of interactions between an agent and the environment. At each round $t\in[T]$, the environment generates a decision set $\cX_t\in\cX$, where each element $x\in\cX_t$ is a candidate action for the agent. After observing $\cX_t$, the agent plays an action $x_t\in\cX_t$ and observes the reward $y_t = f^\star(x_t)+\eta_t$. Particularly, this setting subsumes classic contextual bandit where an action $a_t\in\mathcal{A}$ is chosen upon observing a context $z_t$ at round $t$, since we can always set $\cX_t = \{z_t\times \mathcal{A}\}$. We make the standard boundedness assumptions that 
\begin{equation*}
\begin{aligned}
    |\eta_t|\le R,~ \E\eta_t = 0,~ \E \eta_t^2\le \sigma_t^2.
\end{aligned}
\end{equation*}
We assume access to a function class $\calF~:~\cX\to[-R, R]$ such that $f^\star\in\calF$. For a function class $\calF$, we recall the standard definitions of $\epsilon$-cover and covering number (see e.g., \citet{wainwright2019high, TZ23-lt}) as follows.

\begin{definition}[$\upsilon$-cover and covering number]\label{def:cover}
Given a function class $\cF$, for each $\upsilon > 0$, a $\upsilon$-cover of $\cF$ with respect to $\norm{\cdot}_\infty$, denoted by $\mathcal{C}(\cF, \upsilon)$, satisfies that for any $f \in \cF$, we can find $f' \in \mathcal{C}(\cF, \upsilon)$ such that $\norm{f - f'}_\infty \leq \upsilon$. The $\upsilon$-covering number, denoted as $N(\upsilon, \cF)$, is the smallest cardinality of such a $\calC(\cF, \upsilon)$. 
\end{definition}

We assume that the function class $\cF$ consists of bounded functions, that is, $|f(x)|\le L_f$ for all $f\in\cF$ and $x\in\cX$. The variances $\sigma_t$ at each time step $t$ are not necessarily known. The (pseudo-) regret is defined as 
\begin{align*}
    R_T=\E \sum_{t\in[T]}\Big[\max_{x\in\calX_t}f^\star(x)-f^\star(x_t)\Big].
\end{align*}
To describe the structure of the general function class, we define the following (eluder dimension) quantities \citep{gentile2022achieving,russo2013eluder} as
\begin{definition}[Eluder dimension~\citep{gentile2022achieving}]\label{def:eluder}
Given a sequence of ordered actions $X=(x_1,x_2,\cdots, x_T)\in\calX_1\times\calX_2\times\cdots\times\calX_T$ and a function class $\calF$, let the eluder coefficients be
\begin{align*}
    &D^2_\calF(x,\bsigma;x_{[t-1]}, \bsigma_{[t-1]}) \defeq  \sup_{f_1,f_2\in\calF}\frac{\left(f_1(x)-f_2(x)\right)^2/\bsigma^2}{\sum_{i\in[t-1]}\left(f_1(x_i)-f_2(x_i)\right)^2/\bsigma_i^2+\lambda},\\
    &D_\calF(x;x_{[t-1]}, \bsigma_{[t-1]}) := D_\calF(x,1;x_{[t-1]}, \bsigma_{[t-1]}).
\end{align*}
Then we define the eluder dimension as:
\begin{align*}
    &\dim(\calF,X,\bsigma_{[T]}) \defeq\sum_{i=1}^T \min\left(1,D^2_\calF(x_i,\bsigma_i;x_{[i-1]}, \bsigma_{[i-1]})\right),\\
    &\dim_{\alpha,T}(\calF) \defeq\max_{X,\bsigma_{[T]}:|X|=T, \bsigma_1,\ldots,\bsigma_t\ge \alpha}\dim(\calF,X,\bsigma_{[T]}).
\end{align*}
\end{definition}
The weighted eluder coefficient $D_{\calF}^2$ describes at each time step $t$, how much the in-sample error can bound the out-of-sample error. We can illustrate the eluder quantities with linear function approximation. If the function class $\calF$ is embedded into a linear mapping $\calF=\{\theta^{\top}\phi(\cdot,\cdot):~\theta\in\R^d,~\|\theta\|_2\le B\}$, and we define the covariance matrix $\Sigma_t=\sum_{i\in[t]}x_ix_i^{\top}/\bsigma_i^2$, the weighted eluder coefficient can be simplified as
\begin{equation}\label{eq:b}
\begin{aligned}
    &D^2_\calF(x,\bsigma;x_{[t-1]}, \bsigma_{[t-1]})\\ 
    &\quad= \sup_{\theta_1,\theta_2\in\R^d}\frac{((\theta_1-\theta_2)^{\top}\phi(x)/\bsigma)^2}{\sum_{i\in[t-1]}((\theta_1-\theta_2)^{\top}\phi(x_i)/\bsigma_i)^2} \le \big\|\frac{\phi(x)}{\bsigma}\big\|^2_{\Sigma_{t-1}},
\end{aligned}
\end{equation}
where the inequality applies Cauchy–Schwarz inequality. Hence, the eluder coefficient reduces to how much a direction is explored in the linear case.

The summation of eluder coefficients over $T$ time steps is the eluder dimension. The (weighted) eluder coefficients and the eluder dimension are broadly used in general function approximation \citep{zhang2023mathematical,ye2023corruption,agarwal2023vo,zhao2023nearly}. For the linear case in $d$-dmiensions, when all the $\bsigma,\bsigma_{[t-1]}$ are $1$, the $\dim_{1, T}(\cF)$ can be bounded in terms of $d\ln d$ \citep{zhang2023mathematical,agarwal2023vo}. When the weights are larger than $\alpha$, we can regard $\phi'(x)=\phi(x)/\bsigma$ as the new feature representation and bound the $\dim_{\alpha, T}(\cF)$ via the elliptical potential lemma \citep{abbasi2014online}.

\section{Bandits with Known Variance}
In this section, we present upper and lower bounds, when the per-round variance of each action is known to the learner.

\subsection{Lower Bound}
We start with a minimax lower bound for the class of multi-armed bandit problems where the variance of each action's reward is known to the learner.
\begin{theorem}\label{th:lower_bound}
    For any integer $T>0$, there exists a contextual bandit problem such that any $\pi=\{\pi_t\}_{t=1}^T$ will incur regret at least $\Omega(\sqrt{\E\sum_{t=1}^T\sigma_t^2})$, where $\{\sigma_t=\Var_{x_t\sim\pi_t}[y_t]\}_{t=1}^T$ and the expectation is jointly over any randomness in the environment as well as the algorithm.
\end{theorem}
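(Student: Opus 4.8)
The plan is to construct a lower-bound instance via a standard two-point (or multi-armed) argument adapted to the known-variance setting. The key idea is that the regret target $\sqrt{\E \sum_t \sigma_t^2}$ is precisely the kind of bound one gets from a Gaussian (or Bernoulli) bandit where the per-arm gap and variance are coupled. I would design a family of instances indexed by which arm is optimal, where each arm has a reward distribution with a fixed, known variance $\sigma^2$, and where the gap between the best and second-best arm is tuned to $\Delta \asymp \sigma/\sqrt{T}$. Then over $T$ rounds the cumulative variance is $\sum_t \sigma_t^2 \asymp T\sigma^2$, so the target bound reads $\Omega(\sqrt{T\sigma^2}) = \Omega(\sigma\sqrt{T})$, and I want to show the learner pays at least $\Omega(\Delta \cdot T) = \Omega(\sigma \sqrt{T})$ regret.

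\textbf{Concretely, I would proceed as follows.} First, reduce to a two-armed bandit with arms whose rewards are, say, Gaussian $\normal(\mu_a, \sigma^2)$ (or bounded rescalings thereof if one insists on the $|\eta_t| \le R$ boundedness, which is why the theorem is stated for a \emph{contextual} bandit problem rather than an arbitrary distribution — one picks $R$ large enough that truncation is negligible, or uses a bounded distribution with the prescribed variance). Set arm $1$ to have mean $\mu$ and arm $2$ to have mean $\mu + \Delta$ in instance $\cP_2$, and swapped in instance $\cP_1$, with $\Delta = c\,\sigma/\sqrt{T}$ for a small constant $c$. Second, invoke the information-theoretic lower-bound machinery: by the Bretagnolle–Huber inequality (or Le Cam's two-point method), any algorithm's probability of misidentifying the optimal arm is bounded below in terms of the KL divergence between the two induced distributions over the observed history. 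For Gaussians, $\KL(\normal(\mu,\sigma^2)\,\|\,\normal(\mu+\Delta,\sigma^2)) = \Delta^2/(2\sigma^2)$, so over a history in which arm $a$ is pulled $N_a$ times, the total divergence is $\sum_a \E[N_a]\,\Delta^2/(2\sigma^2) \le T\Delta^2/(2\sigma^2) = c^2/2$, a constant. Third, conclude that on at least one of the two instances the algorithm pulls the suboptimal arm a constant fraction of the time, incurring regret $\Omega(\Delta \cdot T) = \Omega(\sigma\sqrt{T}) = \Omega(\sqrt{\E\sum_t \sigma_t^2})$.

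\textbf{The main subtlety} — and what I expect to be the crux — is the coupling in the statement: $\sigma_t = \Var_{x_t \sim \pi_t}[y_t]$ depends on the \emph{algorithm's own action distribution} $\pi_t$, and the expectation is jointly over environment and algorithm randomness. So the quantity $\E \sum_t \sigma_t^2$ on the right-hand side is not a fixed instance parameter but is itself influenced by the policy. I would handle this by choosing the instance so that \emph{every} arm has the identical known variance $\sigma^2$; then $\sigma_t^2 = \sigma^2$ deterministically regardless of $\pi_t$, which decouples the right-hand side and makes $\E\sum_t \sigma_t^2 = T\sigma^2$ hold for any algorithm. This sidesteps the measurability/adaptivity concern cleanly. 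A secondary point to verify carefully is that the divergence-decomposition (the chain rule for KL over the adaptively-generated history, sometimes attributed to the divergence decomposition lemma in \citet{russo2013eluder}-style analyses or Lattimore–Szepesv\'ari) correctly accounts for the algorithm choosing arms based on past observations; the standard decomposition $\KL(\P^{\cP_1}_{\text{hist}} \,\|\, \P^{\cP_2}_{\text{hist}}) = \sum_a \E^{\cP_1}[N_a]\,\KL(\text{arm } a \text{ laws})$ holds precisely because the algorithm's action rule is shared across both instances, and I would state this explicitly before applying Bretagnolle–Huber.

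\textbf{One refinement} I would keep in mind: to get the bound in terms of $\sqrt{\E\sum_t\sigma_t^2}$ with the cleanest constants and to rule out any dependence on $R$ in the \emph{lower} bound, it suffices to exhibit a single value of $\sigma$ and $T$; since the theorem quantifies over all $T>0$ and only asserts existence of one hard instance, I do not need a full family ranging over variance profiles. If a version with heterogeneous $\sigma_t$ were desired one could partition the horizon into blocks with different per-block variances, but for the stated theorem the homogeneous construction is both sufficient and simplest.
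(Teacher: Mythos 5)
Your proposal is correct and proves the theorem as stated, but your construction differs from the paper's in a way worth noting. Both arguments share the same information-theoretic core: two bandit instances differing by a mean shift $\eps \asymp \sigma/\sqrt{T}$, the divergence decomposition for adaptively collected histories, and the Bretagnolle--Huber inequality, yielding $\max_i \E_i[R_T] = \Omega(\sigma\sqrt{T})$. The difference is in the instances. You make \emph{both} arms have identical variance $\sigma^2$, which (as you correctly observe) makes $\E\sum_t \sigma_t^2 = T\sigma^2$ policy-independent and disposes of the coupling between $\sigma_t$ and $\pi_t$ for free. The paper instead uses an asymmetric construction: the optimal arm of the first instance is \emph{deterministic} (variance $0$), while the other arm is a three-point heavy-tailed distribution supported on $\{0, 2\sigma, 2\sigma R\}$ with variance $\Theta(\sigma^2)$; the policy-dependence of $\sum_t \sigma_t^2$ is then handled by an extra accounting step, $\E_i\big[\sum_t \sigma_t^2\big] \le 6\sigma^2\,\E_i[N_T(2)] \le 6\sigma^2 T$, since variance accrues only on pulls of the noisy arm. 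What the paper's construction buys is the stronger qualitative claim made immediately after the theorem: because the optimal action in one instance has zero variance, the bound shows regret must scale with the cumulative variance of the actions the \emph{algorithm} plays, ruling out any bound that scales only with the variance along the optimal policy's trajectory. Your symmetric construction cannot distinguish these two quantities (they coincide), so it proves the literal statement but not that sharper interpretation; it is, however, simpler, and your handling of the boundedness issue (replacing Gaussians by bounded distributions of the prescribed variance) is a legitimate fix.
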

In other words, the theorem states that the regret of any contextual bandit algorithm scales with the square root of the sum of the variances of the rewards for its chosen actions. That is, it rules out a regret bound which scales solely as the variance of the reward of the optimal policy's actions.

The detailed proof is deferred to Appendix \ref{apss:pf_lower_bound}. The intuition is to construct two bandit instances, each with two arms $\{x_1, x_2\}$. In the first instance, the arm $x_1$ has a deterministic reward while $x_2$ has a higher expected reward, but with a large variance. In the second instance, $x_2$ has a smaller mean reward. Clearly, the optimal action $x_1$ has a variance of zero in the first instance, but any algorithm needs sufficiently many draws of $x_2$ as well to distinguish between the two instances. In the sequel, we will present a matching upper bound for our robust estimator.

\subsection{Upper Bound for Known Variance}
\paragraph{Catoni Estimator}
We first introduce $\ctn$ estimator. This is a robust estimator proposed by~\citet{audibert2011robust}(see also \citep{lugosi2019mean}) to estimate random variables with bounded variance and unbounded range. Following \citet[section 2.2]{lugosi2019mean}, to estimate $t^{-1}\sum_{i\in[t]}\E Z_i$, we first define a function  
\begin{align*}
    \Psi(x) = \begin{cases}
    \log(1+x+x^2/2)\quad\text{if}~x\ge0,\\
    -\log(1-x+x^2/2)\quad\text{if}~x<0.
    \end{cases}
\end{align*}
Then for some parameter $\theta > 0$, $\ctn_\theta(\{Z_i\}_{i\in[t]})$ is the unique zero of the antisymmetric increasing function
\begin{equation}
\label{eq:catoni}
\begin{aligned}
    f(x;\{Z_i\}_{i\in[t]}, \theta) := \sum_{i\in[t]}\Psi(\theta(Z_i-x)).
\end{aligned}
\end{equation}

We first provide the following result about the concentration properties of the $\ctn$ estimator, which we use in various places to prove why our design of confidence sets in the sequel algorithms.

\begin{lemma}[Informal]\label{coro:catoni}
Let $Z_t$ be a random variable adapted to filtration the $\calH_t$, with a uniform bound $|Z_t|\le R$, $\E[Z_i|\calH_{i-1}] = \mu_i$, $\sum_{i\in[t]} \E\left[\left(Z_i-\mu_i\right)^2|\calH_{i-1}\right] \le V$ for some fixed $V$. Let $\bmu\defeq t^{-1}\sum_{i\in[t]}\mu_i$. Let $\theta\in[a, A]$ be a parameter, for some constants $a, A$ independent of $Z_i$. For an appropriate $\epsilon$ and any large enough $t$, 
with probability at least $1-2\delta$ we have uniformly for all $\theta \in [a, A]$:
\begin{equation*}
{
\begin{aligned}
\left|\ctn_\theta(\{Z_i\}_{i\in[t]})-\bmu\right|
\le& \frac{\theta\left(V+\sum_{i\in[t]}\left(\mu_i-\bmu\right)^2\right)}{t}+\frac{4 \iota_0^2}{\theta t}+\frac{\epsilon}{t},
\end{aligned}}
\end{equation*}
where $\iota_0$ contains log terms and is given in Appendix \ref{apss:Concentration Inequality for Catoni Estimator}.
\end{lemma}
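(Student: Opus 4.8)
The plan is to establish the concentration of the Catoni estimator $\ctn_\theta$ around the empirical mean $\bmu$ by exploiting the defining property that $\ctn_\theta$ is the unique zero of the antisymmetric increasing function $f(x) = \sum_{i\in[t]}\Psi(\theta(Z_i - x))$. Since $f$ is monotonically decreasing in $x$, to show $\ctn_\theta$ lies within a window $[\bmu - r, \bmu + r]$ it suffices to show that $f(\bmu + r) \le 0$ and $f(\bmu - r) \ge 0$ with high probability; by antisymmetry these two are essentially the same computation. So I would focus on upper-bounding $f(\bmu + r)$ and showing it is negative for the claimed radius $r = \theta(V + \sum_i (\mu_i - \bmu)^2)/t + 4\iota_0^2/(\theta t) + \epsilon/t$.

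\textbf{Key steps.} First I would use the analytic inequalities on $\Psi$ from \citet{lugosi2019mean}: namely the two-sided bound $-\log(1 - u + u^2/2) \le \Psi(u) \le \log(1 + u + u^2/2)$, which yields $\Psi(u) \le u + u^2/2$ for all $u$ (and a matching lower bound $\Psi(u) \ge u - u^2/2$). Applying this with $u = \theta(Z_i - x)$ gives, for any fixed $x$,
\begin{equation*}
f(x) \le \sum_{i\in[t]}\Big(\theta(Z_i - x) + \tfrac{1}{2}\theta^2 (Z_i - x)^2\Big).
\end{equation*}
Second, I would take conditional expectations and bound the quadratic term: writing $Z_i - x = (Z_i - \mu_i) + (\mu_i - \bmu) + (\bmu - x)$ and using $\sum_i \E[(Z_i - \mu_i)^2 \mid \calH_{i-1}] \le V$, the expected value $\E[f(x)]$ is controlled by $\theta t (\bmu - x) + \tfrac{1}{2}\theta^2\big(V + \sum_i(\mu_i - \bmu)^2 + t(\bmu - x)^2\big)$ up to cross terms that vanish in expectation. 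Evaluating at $x = \bmu + r$ makes the leading linear term $-\theta t r$ negative and large enough to dominate the variance term, provided $r$ exceeds $\theta(V + \sum_i(\mu_i-\bmu)^2)/t$.

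\textbf{The main obstacle} is moving from the expectation bound to a high-probability, uniform-in-$\theta$ statement, which is where the log factor $\iota_0$ and the slack $\epsilon$ enter. Because the $Z_i$ form a martingale-difference-type sequence adapted to $\calH_i$ rather than being i.i.d., I would build a supermartingale from the exponentiated increments $\exp(\lambda \Psi(\theta(Z_i - x)))$ and invoke a Freedman-/Ville-type maximal inequality to concentrate $f(\bmu + r)$ around its conditional mean; the deviation term contributes the $4\iota_0^2/(\theta t)$ piece after optimizing the free parameter. The truncation from $|Z_i| \le R$ ensures the increments are bounded so the exponential moments are finite, and the final uniformity over $\theta \in [a, A]$ is obtained by a covering/union-bound argument over a fine grid of $\theta$-values together with a Lipschitz-in-$\theta$ continuity estimate for $\ctn_\theta$ — the absorbed discretization error is what the additive $\epsilon/t$ term captures. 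I expect the delicate bookkeeping to lie in tracking how the bounded range $R$ and the grid resolution feed into $\iota_0$ and $\epsilon$ without spoiling the logarithmic-in-$R$ dependence that is the whole point of the estimator.
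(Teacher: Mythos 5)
Your proposal is correct and follows essentially the same route as the paper's proof: a fixed-$\theta$ concentration bound for the Catoni estimator, combined with a geometric grid over $[a,A]$, a union bound over the grid, and a sensitivity (Lipschitz-in-$\theta$) estimate for $\ctn_\theta$ whose discretization error is exactly what the additive $\epsilon/t$ term absorbs. The only difference is one of packaging: the paper invokes the fixed-$\theta$ bound (Lemma~\ref{lem:concentration-catoni}, from \citealp{wei2020taking}) and the sensitivity estimate (Lemma~\ref{lem:catoni-helper}, from \citealp{wagenmaker2022first}) as black boxes, whereas you sketch re-derivations of both --- your zero-crossing/supermartingale argument is precisely how the cited fixed-$\theta$ lemma is proved.
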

This inequality differs from the prior concentration results for the Catoni estimator as it is uniform for all $\theta \in [a, A]$. In the sequel, we use this flexibility to choose $\theta$ based on the samples. The formal version of the lemma and the proof are deferred to Appendix \ref{apss:Concentration Inequality for Catoni Estimator}.
 
\paragraph{Algorithm}
\begin{algorithm}[t]
\caption{Catoni-OFUL}\label{alg:Catoni-VOFUL}
\small
\begin{algorithmic}
    \STATE \textbf{Input:} Parameter $\alpha>0$, $\delta$ and $\hbeta_t$ for each $t\in[T]$.
    \FOR{t=1,2,\ldots,T}
        \STATE Pick action $x_t=\argmax_{x\in\calX_t}\max_{f\in\calF_{t-1}} f(x)$;
        \STATE Observe the reward $y_t$;
        \STATE Let $\bsigma_t = \max\left(\alpha,\sigma_t, \sqrt{4\iota(\delta)L_f D_{\calF_{t-1}}(x_t; x_{[t-1]},\bsigma_{[t-1]})}\right)$;
        \STATE Estimate $\hf_t$ in \eqref{def:catoni-esti};
        \STATE Construct confidence set 
        \begin{align*}
            \calF_t \defeq \Big\{f\in\calF_{t-1}:\sum_{i\in[t]}\frac{1}{\bsigma_i^2}\left(f(x_i)-\hat{f}_t(x_i)\right)^2\le \hbeta_t^2\Big\};
        \end{align*}        
    \ENDFOR
\end{algorithmic}
\end{algorithm}

By incorporating the Catoni estimator into the Optimism in the Face of Uncertainty Learning (OFUL)\citep{abbasi2011improved}, we propose the Catoni-OFUL approach in Algorithm \ref{alg:Catoni-VOFUL}. Given failure probabilities $\delta$ and confidence parameters $\hbeta_t$, the algorithm chooses the action $x_t$ with the highest optimistic reward by maximizing across all functions in a confidence set $\cF_t$, as in the standard OFUL approach. 

The key difference lies in the construction of a robust confidence set based on Catoni's mean estimator. We first define a per-sample weight $\bsigma_t$
as the maximum of a parameter $\alpha$, the variance $\sigma_t$ of the reward of $x_t$, and an uncertainty term based on the eluder coefficient $D_{\calF_{t-1}}(x_t; x_{[t-1]},\bsigma_{[t-1]})$. 

Then, we define a robust estimator of $f^\star$, given the data, as the solution to the following saddle-point problem: 
\begin{align}\label{def:catoni-esti}
    \hf_t =& \argmin_{\hf\in\calF}\max_{f'\in\calF} L_t(\hf,f') := \sum_{i\in[t]}\frac{1}{\bsigma_i^2}(f'(x_i) - \hf(x_i))^2 + 2t\ctn_{\theta_t(\hf,f')}(\{Z_i(\hf,f')\}_{i\in[t]}),
\end{align}
where we use the notation $Z_i(f,f') := \bsigma_i^{-2}(f(x_i)-f'(x_i)) (f'(x_i)-y_i)$, $\iota(\delta)$ scales as $\widetilde{O}(\sqrt{\log(1/\delta)})$ and is specified in Table \ref{tab:notation_known}, and the parameter $\theta_t(\hf,f')$ is also specified in Table \ref{tab:notation_known}.

To understand this definition, we observe that $L_t(f,f')$ is a robust sample-based estimator of the true excess risk:
\begin{equation}
\begin{aligned}
    R_t(f,f') :=& \sum_{i\in[t]}\frac{1}{\bsigma_i^2}\E_i[(f(x_i)-y_i)^2] - \E_i[(f'(x_i)-y_i)^2]\\
    =& \sum_{i\in[t]} \frac{1}{\bsigma_i^2} \E_i\bigg[ (f(x_i)-f'(x_i))^2 + 2\underbrace{(f(x_i)-f'(x_i))(f'(x_i)-y_i)}_{\mathcal{I}_i} \bigg],\label{eq:R_t}
\end{aligned}
\end{equation}
where the expectation $\E_i$ is taken with respect to the noise $\eta_i$. Since $\mathcal{I}_i$ is the only term that depends on the heavy-tailed noise, it is approximated by the $\theta$-robust Catoni estimator $\ctn_{\theta}(\{Z_i(f,f')\}_{i\in[t]})$ in $L_t(f,f')$. Then, we include all the $f\in\calF_{t-1}$ that have a small weighted squared loss to $\hf_t$ in the confidence set $\calF_t$.

\begin{remark}
    Since the min-max optimization in \eqref{def:catoni-esti} can be hard to solve, we provide an alternative (Algorithm \ref{alg:Catoni-OFUL with Candidate Set} in Appendix \ref{aps:Construct Two Confidence Sets}), where we construct a candidate set first similar to the confidence set, and then choose an estimator from the candidates randomly. This approach can improve the optimization efficiency and ensure the same regret bound as Theorem \ref{th:known_var}.
\end{remark}


\begin{theorem}[Informal]\label{th:known_var}
Under Algorithm~\ref{alg:Catoni-VOFUL} with appropriate choices of the parameters $\alpha, \lambda, \upsilon$ and $\hbeta$,
with probability $1-2\delta$, we can bound the regret by 
\begin{align*}
    R_T =& \widetilde{O}\Big(L_f\sqrt{\sum_{t\in[T]}\sigma_t^2\cdot \dim_{\frac{1}{\sqrt{T}}, T}(\calF)\cdot\log \calN(\calF,\upsilon)} +L_f\cdot \log \calN(\calF,\upsilon)\cdot \dim_{\frac{1}{\sqrt{T}},T}(\calF)\Big).
\end{align*}
\end{theorem}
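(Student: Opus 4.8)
The plan is to run the standard optimism-in-the-face-of-uncertainty argument, but with the variance-adaptive Catoni concentration of Lemma~\ref{coro:catoni} playing the role that least-squares concentration plays in the bounded-reward analysis. The proof then has two essentially separate halves: \emph{(i)} validity of the confidence sets, i.e.\ $f^\star\in\cF_t$ for all $t$ with probability $1-2\delta$, which fixes the radius $\hbeta_t$; and \emph{(ii)} turning per-round optimism into a regret bound governed by the weighted eluder dimension of Definition~\ref{def:eluder}. Half \emph{(ii)} is routine given \emph{(i)}: since $f^\star\in\cF_{t-1}$ and $x_t=\argmax_x\max_{f\in\cF_{t-1}}f(x)$, optimism gives the instantaneous regret bound $\max_x f^\star(x)-f^\star(x_t)\le \sup_{f_1,f_2\in\cF_{t-1}}\big(f_1(x_t)-f_2(x_t)\big)$, reducing everything to controlling a sum of confidence widths.

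For half \emph{(i)} I would exploit the saddle-point optimality of $\hf_t$ together with the fact that $L_t(f,f')$ in \eqref{def:catoni-esti} is a robust estimate of the population excess risk $R_t(f,f')$ of \eqref{eq:R_t}, for which a direct calculation gives $R_t(f^\star,f')=-\sum_{i\in[t]}\bsigma_i^{-2}\big(f'(x_i)-f^\star(x_i)\big)^2\le0$. The goal is a two-sided estimate. An \emph{upper} bound $\max_{f'}L_t(f^\star,f')\le\mathrm{err}_1$: here $Z_i(f^\star,f')$ has conditional mean $-\bsigma_i^{-2}(f'(x_i)-f^\star(x_i))^2$ and, because $\sigma_i\le\bsigma_i$, conditional variance at most $\bsigma_i^{-2}(f'(x_i)-f^\star(x_i))^2$, so in the Catoni bound the negative drift cancels the quadratic term and leaves only the additive $\iota_0^2/\theta$ error. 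A \emph{lower} bound $L_t(\hf_t,f^\star)\ge(1-2\theta)\sum_{i\in[t]}\bsigma_i^{-2}(\hf_t(x_i)-f^\star(x_i))^2-\mathrm{err}_2$: now $Z_i(\hf_t,f^\star)=-\bsigma_i^{-2}(\hf_t(x_i)-f^\star(x_i))\eta_i$ is conditionally mean-zero with variance again dominated by the weighted squared error. Since $\hf_t$ minimizes $\max_{f'}L_t(\cdot,f')$, we get $L_t(\hf_t,f^\star)\le\max_{f'}L_t(\hf_t,f')\le\max_{f'}L_t(f^\star,f')\le\mathrm{err}_1$, and combining with the lower bound yields $\sum_{i\in[t]}\bsigma_i^{-2}(\hf_t(x_i)-f^\star(x_i))^2\le\hbeta_t^2$ with $\hbeta_t^2\asymp\mathrm{err}_1+\mathrm{err}_2$, i.e.\ $f^\star\in\cF_t$.

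The delicate point, and what I expect to be the main obstacle, is that $\hf_t$ and the adversary $f'$ are data-dependent (in particular $\hf_t$ depends on the noise $\eta_{[t]}$), so both concentration statements must hold \emph{uniformly} over $(\hf,f')\in\cF\times\cF$ and over the data-chosen $\theta_t(\hf,f')$. I would discretize with a $\upsilon$-cover $\calC(\cF,\upsilon)$, apply Lemma~\ref{coro:catoni} with a union bound over $\calC(\cF,\upsilon)^2$ (this is where the $\log\calN(\cF,\upsilon)$ inside $\iota_0$ appears), absorb the discretization slack using $|f|\le L_f$ and $\bsigma_i\ge\alpha$ with $\upsilon$ chosen polynomially small in $1/T$, and handle the adaptivity of $\theta_t$ precisely through the uniform-in-$\theta$ guarantee of Lemma~\ref{coro:catoni}. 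The crux is verifying that each $\Var_i(Z_i)$ is dominated by the corresponding weighted squared error so that the leading quadratic term is self-bounding; this variance bookkeeping, uniform over the function pair, is the technical heart of the argument.

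Finally I would assemble the regret. For $f_1,f_2\in\cF_{t-1}$ a triangle inequality gives $\sum_{i}\bsigma_i^{-2}(f_1(x_i)-f_2(x_i))^2\le4\hbeta_{t-1}^2$, so Definition~\ref{def:eluder} yields $|f_1(x_t)-f_2(x_t)|\le D_{\cF_{t-1}}(x_t;x_{[t-1]},\bsigma_{[t-1]})\sqrt{4\hbeta_{t-1}^2+\lambda}$. The weight rule $\bsigma_t\ge\sqrt{4\iota(\delta)L_fD_{\cF_{t-1}}(x_t)}$ is designed so that the weighted coefficient $D_{\cF_{t-1}}(x_t,\bsigma_t)=D_{\cF_{t-1}}(x_t)/\bsigma_t$ stays below $1$ (for suitable $\lambda$), keeping the $\min(1,\cdot)$ clipping in the eluder dimension inactive; Cauchy--Schwarz then gives $R_T\le\sqrt{4\hbeta_T^2+\lambda}\,\sqrt{\sum_t\bsigma_t^2}\,\sqrt{\dim(\cF,X,\bsigma_{[T]})}$. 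It remains to bound $\sum_t\bsigma_t^2\le\sum_t\big(\alpha^2+\sigma_t^2+4\iota(\delta)L_fD_{\cF_{t-1}}(x_t)\big)$, where the last sum is itself $\lesssim\sqrt{\sum_t\bsigma_t^2}\sqrt{\dim}$; solving this self-bounding inequality gives $\sum_t\bsigma_t^2=\widetilde O\big(\sum_t\sigma_t^2+L_f^2\dim\big)$ once $\alpha=1/\sqrt T$ makes $\alpha^2T$ negligible and $\dim\le\dim_{1/\sqrt T,T}(\cF)$. Substituting $\hbeta_T^2=\widetilde O(\log\calN(\cF,\upsilon))$ and expanding the product $\sqrt{\hbeta_T^2\cdot(\sum_t\sigma_t^2+L_f^2\dim)\cdot\dim}$ into its two summands produces the two claimed terms.
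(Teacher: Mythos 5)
Your plan retraces the paper's proof almost step for step: Part I--II (concentration of $L_t$ around $R_t$ via the uniform-in-$\theta$ Catoni lemma plus a $\upsilon$-cover, then the saddle-point optimality of $\hf_t$ to conclude $f^\star\in\cF_t$) and Part III (optimism, triangle inequality inside $\cF_{t-1}$, weighted eluder dimension, and a self-bounding treatment of $\sum_t\bsigma_t^2$ that is equivalent to the paper's case split on which branch of the weight rule is active). One of your choices is actually cleaner than the paper's: plugging $f'=f^\star$ into $\max_{f'}L_t(\hf_t,f')$ gives the lower bound $\tfrac12V_t(\hf_t,f^\star)-\tfrac13\hbeta_t^2$ directly, whereas the paper evaluates the maximum at the interpolant $f'=\tfrac23 f^\star+\tfrac13\hf_t$ (which implicitly requires that interpolant to lie in the class); you only lose a constant, getting $V_t(\hf_t,f^\star)\le\tfrac43\hbeta_t^2$ instead of $\hbeta_t^2$.

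There is, however, one concrete gap in your concentration half. For the pair $(f^\star,f')$ you claim that since $\Var_i(Z_i)\le\bsigma_i^{-2}(f'(x_i)-f^\star(x_i))^2$, ``the negative drift cancels the quadratic term and leaves only the additive $\iota_0^2/\theta$ error.'' But the error in Lemma \ref{coro:catoni} is $\theta\bigl(V+\sum_{i}(\mu_i-\bmu)^2\bigr)/t+4\iota_0^2/(\theta t)+\epsilon/t$, and here the conditional means $\mu_i=-\bsigma_i^{-2}(f'(x_i)-f^\star(x_i))^2$ are not zero: $\sum_i(\mu_i-\bmu)^2\le V_t(f',f^\star)\cdot\max_i \bsigma_i^{-2}(f'(x_i)-f^\star(x_i))^2$, and with only the guarantee $\bsigma_i\ge\alpha=1/\sqrt{T}$ this maximum can be as large as $O(L_f^2 T)$. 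The negative drift then absorbs the dispersion term only if $\theta\lesssim 1/(L_f^2T)$, which inflates the additive error to $\Omega(\iota_0^2L_f^2T)$, forces $\hbeta_t^2=\Omega(T)$, and destroys the claimed bound. This is precisely the second (and essential) role of the branch $\bsigma_t\ge\sqrt{4\iota(\delta)L_fD_{\cF_{t-1}}(x_t;x_{[t-1]},\bsigma_{[t-1]})}$ of the weight rule, which you credit only with keeping the $\min(1,\cdot)$ clipping inactive in the regret assembly: for $f'\in\cF_{t-1}\subseteq\cF_{i-1}$ one gets $\bsigma_i^{-2}(f'(x_i)-f^\star(x_i))^2\le 2L_fD_{\cF_{i-1}}(x_i)\sqrt{4\hbeta_{t-1}^2+\lambda}/\bsigma_i^2\le\iota(\delta)^{-1}\sqrt{\hbeta_{t-1}^2+\lambda}$, so the dispersion term is of the same order as $V_t$ up to a factor that is exactly the $\bigl(1+(2\iota(\delta))^{-1}\sqrt{\hbeta_{t-1}^2+\lambda}\bigr)$ correction built into $\theta_t(f,f')$, and $\hbeta_t^2$ can remain $O(\log\calN(\cF,\upsilon))$. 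Without invoking this (together with the induction that keeps $f^\star,f'$ in the nested confidence sets), the ``cancellation'' step in your upper bound for $L_t(f^\star,f')$, and likewise the mean-dispersion control in the general pair $(f,f')$, does not close; with it, your outline coincides with the paper's argument.
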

The formal version of the theorem and appropriate choices of the hyperparameters are provided in Appendix \ref{apss:pf_th1}. The variance dependence in our theorem matches the lower bound in Theorem \ref{th:lower_bound}. Specifically, for the deterministic case where $\sigma_t=0$ for all $t\in[T]$, the bound is reduced to $\widetilde{O}(\log \calN(\calF,\upsilon)\cdot \dim_{\frac{1}{\sqrt{T}},T}(\calF))$, and in the worst case where $\sigma_t=\Theta(1)$ for all $t\in[T]$, the bound becomes $\widetilde{O}(\sqrt{T\dim_{\frac{1}{\sqrt{T}}, T}(\calF)\cdot\log \calN(\calF,\upsilon)})$. We note that the bound depends only polylogarithmically on $R$, improving upon most prior results as observed in Table~\ref{tbl:results}.

\subsection{Proof Sketch}
To illustrate the intuition clearly, we ignore the covering number in this subsection, and assume that the function space $\calF$ is finite. The detailed proof considers an infinite function space and uses the uniform covering number. The novelty of the proof lies in the following two parts.

\paragraph{Part I: Concentration of excess loss}
Recall that for any $f,f'\in\calF$, we formulate the excess loss $L_t(f,f')$ to estimate the excess loss $R_t(f,f')$ in \eqref{eq:R_t} under heavy-tailed noise. Here the conditional expectation of the variable $Z_i(f,f')$ is $\E[Z_i|x_i]=(f(x_i)-f'(x_i))(f'(x_i)-f^\star(x_i))/\bsigma_i^2$. Since the standard Hoeffding's inequality leads to the error dependent on the uniform noise bound $R$, which can be extremely large in our setting, we can utilize the robustness of the Catoni estimator via Lemma \ref{coro:catoni} to obtain the following lemma.
\begin{lemma}\label{lm:conv_ctn}
    For all large enough time steps $t$ and two fixed $f,f'\in\calF_{t-1}$, with a proper choice of parameters $\alpha$, $\hbeta_t$ and $\epsilon$, we have with probability at least $1-\delta/N^2$,
    \begin{equation}
    {
    \begin{aligned}
    \Big|L_t(f,f') - R_t(f,f') \Big| &= 2\Big|t\ctn_{\theta_t(f,f')}(\{Z_i(f,f')\}_{i\in[t]}) - \sum_{i\in[t]}\E[Z_i|x_i] \Big|\notag\\
    &\le \frac{1}{2}V_t(f,f') + \frac{1}{3}\hbeta_t^2,
    \end{aligned}}
    \end{equation}
    where $V_t(f,f')=\sum_{i\in[t]}(f(x_i)-f'(x_i)))^2/\bsigma_i^2$.
\end{lemma}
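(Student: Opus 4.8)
The plan is to reduce the claim to a single application of the Catoni concentration bound in Lemma~\ref{coro:catoni}, after isolating the only stochastic piece of the excess loss. First I would note that the quadratic term $\sum_{i\in[t]}\bsigma_i^{-2}(f(x_i)-f'(x_i))^2$ appears identically in $L_t(f,f')$ and in the expansion \eqref{eq:R_t} of $R_t(f,f')$, so it cancels and leaves $L_t(f,f')-R_t(f,f')=2\bigl(t\,\ctn_{\theta_t(f,f')}(\{Z_i\}_{i\in[t]})-\sum_{i\in[t]}\mu_i\bigr)$, where $\mu_i:=\E[Z_i\mid x_i]=\bsigma_i^{-2}(f(x_i)-f'(x_i))(f'(x_i)-f^\star(x_i))$ using $\E[y_i\mid x_i]=f^\star(x_i)$. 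This is exactly a concentration statement for the Catoni estimator of $\{Z_i\}$ around its running conditional mean, the object controlled by Lemma~\ref{coro:catoni}. I would take the filtration $\calH_{i-1}$ to be generated by all data through the selection of $x_i$ (before observing $y_i$), so that $\mu_i$ is $\calH_{i-1}$-measurable and $Z_i-\mu_i$ is driven only by the noise $\eta_i$.

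Next I would verify the inputs required by Lemma~\ref{coro:catoni}. For the \emph{range}, writing $Z_i-\mu_i=-\bsigma_i^{-2}(f(x_i)-f'(x_i))\eta_i$ and using $|f-f'|\le 2L_f$, $|\eta_i|\le R$, $\bsigma_i\ge\alpha$ gives $|Z_i|\lesssim L_f(L_f+R)/\alpha^2$; crucially this effective range enters Lemma~\ref{coro:catoni} only through the logarithmic factor $\iota_0$, which is the source of the polylogarithmic-in-$R$ dependence. For the \emph{conditional variance}, the same identity yields $\E[(Z_i-\mu_i)^2\mid x_i]=\bsigma_i^{-4}(f(x_i)-f'(x_i))^2\,\E[\eta_i^2]\le \bsigma_i^{-2}(f(x_i)-f'(x_i))^2\cdot(\sigma_i^2/\bsigma_i^2)$, and since $\bsigma_i\ge\sigma_i$ by the weight rule in Algorithm~\ref{alg:Catoni-VOFUL}, summing gives $\sum_i\E[(Z_i-\mu_i)^2\mid x_i]\le V_t(f,f')$. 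Thus I can take $V=V_t(f,f')$.

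The main obstacle is the remaining bias-spread term $\sum_{i\in[t]}(\mu_i-\bmu)^2\le\sum_i\mu_i^2=\sum_i \bsigma_i^{-2}(f-f')^2(x_i)\cdot A_i$ with $A_i:=\bsigma_i^{-2}(f'(x_i)-f^\star(x_i))^2$, which has no analogue in scalar sub-Gaussian arguments and must be tied back to $V_t$ and $\hbeta_t^2$. Here I would use two structural facts. First, since $f',f^\star\in\calF_{t-1}$ both lie within weighted radius $\hbeta_{t-1}$ of $\hf_{t-1}$, the triangle inequality bounds the accumulated weighted error by $\sum_{j<i}A_j\le 4\hbeta_{t-1}^2$. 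Second, the weight rule $\bsigma_i^2\ge 4\iota(\delta)L_f\,D_{\calF_{i-1}}(x_i;x_{[i-1]},\bsigma_{[i-1]})$, combined with the eluder-coefficient definition applied to the pair $(f',f^\star)$, forces $A_i\le(\sum_{j<i}A_j+\lambda)/(4\iota(\delta)L_f)$, hence the uniform bound $A_i\lesssim(\hbeta_{t-1}^2+\lambda)/(\iota(\delta)L_f)$ and therefore $\sum_i(\mu_i-\bmu)^2\lesssim \bigl((\hbeta_{t-1}^2+\lambda)/(\iota(\delta)L_f)\bigr)\,V_t(f,f')$.

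With $V=V_t$ and this spread estimate in hand, I would plug them into Lemma~\ref{coro:catoni} and choose the data-dependent parameter $\theta_t(f,f')$ to balance $\theta_t\bigl(V+\sum_i(\mu_i-\bmu)^2\bigr)$ against $4\iota_0^2/\theta_t$; taking $\hbeta_t$ large enough (of order $\iota_0$ up to log/covering factors, after checking that the induced recursion in $\hbeta$ is stable) makes the $\theta_t V_t$ contribution at most $\tfrac14 V_t$ and collects the rest into at most $\tfrac13\hbeta_t^2$, giving the claim. It is precisely this data-dependent choice of $\theta_t$ that requires the uniform-in-$\theta\in[a,A]$ form of Lemma~\ref{coro:catoni}; the stated $1-\delta/N^2$ probability then comes from a single invocation of that lemma for the fixed pair $(f,f')$, to be union-bounded over a $\upsilon$-cover afterward. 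I expect the spread term to be the hard part, since controlling it couples the concentration to the inductive good event $f^\star\in\calF_{t-1}$ and to the eluder-based weight design, whereas the range and variance bounds are routine.
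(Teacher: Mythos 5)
Your overall architecture matches the paper's: cancel the shared quadratic term, apply the uniform-in-$\theta$ Catoni bound (Lemma~\ref{coro:catoni}) with $V=V_t(f,f')$ (valid since $\bsigma_i\ge\sigma_i$), and control the spread term $\sum_i(\mu_i-\bmu)^2\le V_t(f,f')\max_i A_i$, $A_i:=\bsigma_i^{-2}(f'(x_i)-f^\star(x_i))^2$, using the confidence-set radius and the weight rule. However, your key inequality $A_i\le(\sum_{j<i}A_j+\lambda)/(4\iota(\delta)L_f)$ does not follow from what you cite, and this is precisely the delicate step. Applying the eluder-coefficient definition to the pair $(f',f^\star)$ gives $(f'(x_i)-f^\star(x_i))^2\le D_i^2\bigl(\sum_{j<i}A_j+\lambda\bigr)$ with $D_i:=D_{\calF_{i-1}}(x_i;x_{[i-1]},\bsigma_{[i-1]})$, hence $A_i\le (D_i^2/\bsigma_i^2)\bigl(\sum_{j<i}A_j+\lambda\bigr)$; the weight rule $\bsigma_i^2\ge 4\iota(\delta)L_fD_i$ only yields $D_i^2/\bsigma_i^2\le D_i/(4\iota(\delta)L_f)$, leaving an uncontrolled factor $D_i$ (which can be as large as $2L_f/\sqrt{\lambda}$). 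Your inequality implicitly assumes $D_i\le 1$, which is not guaranteed.

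This is not a cosmetic slip, because even granting your claimed bound, the recursion for $\hbeta$ does not close. Your spread estimate $\sum_i(\mu_i-\bmu)^2\lesssim V_t\cdot(\hbeta_{t-1}^2+\lambda)/(\iota(\delta)L_f)$, after balancing with $\theta_t$ and AM--GM, produces an additive term of order $\iota(\delta)\hbeta_{t-1}^2/L_f$, which for $\hbeta=\Theta(\iota(\delta))$ is $\Theta(\iota^3(\delta)/L_f)$ and can be absorbed into $\tfrac13\hbeta_t^2=\Theta(\iota^2(\delta))$ only if $L_f\gtrsim\iota(\delta)$ --- false in general, since $\iota(\delta)$ grows like $\sqrt{\log(N T R/\delta)}$. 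The paper's proof avoids this by linearizing \emph{before} invoking the eluder coefficient: it writes $(f'(x_i)-f^\star(x_i))^2\le 2L_f|f'(x_i)-f^\star(x_i)|\le 2L_fD_i\sqrt{\sum_{j<i}A_j+\lambda}$, so the factor $L_fD_i$ cancels exactly against the weight and
\begin{align*}
A_i\le\frac{2L_fD_i}{\bsigma_i^2}\sqrt{\textstyle\sum_{j<i}A_j+\lambda}\le\frac{1}{2\iota(\delta)}\sqrt{\textstyle\sum_{j<i}A_j+\lambda}\le\frac{1}{\iota(\delta)}\sqrt{\hbeta_{t-1}^2+\lambda},
\end{align*}
a square-root rather than linear dependence on $\hbeta_{t-1}^2$. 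With this, $\max_iA_i=O(1)$ when $\hbeta_{t-1}=\Theta(\iota(\delta))$, the spread is $O(V_t)$, and the data-dependent choice of $\theta_t(f,f')$ in Table~\ref{tab:notation_known} balances everything into $\tfrac12V_t+O(\iota^2(\delta))+\tfrac16\hbeta_{t-1}^2\le\tfrac12V_t+\tfrac13\hbeta_t^2$, making the definition of $\hbeta_t$ self-consistent. The remainder of your outline (range and variance verification, reliance on the inductive event $f^\star\in\calF_{t-1}$, uniform-in-$\theta$ Catoni concentration to permit the sample-dependent $\theta_t$, and the covering argument afterward) coincides with the paper's proof.
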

The values of the parameters are shown in Lemma \ref{lm:conv_ctn_formal}, and the proof is deferred to Appendix \ref{apss:pf_th1}. Importantly, $\hbeta_t^2$ in the lemma above only has $\log R$ dependence. 

\paragraph{Part II: Sharpness of the confidence set} Next, we show that for our choice of $\hbeta_t$, the true function $f^\star\in\calF_t$ with a high probability for $t\in[T]$ that are appropriately large. To prove this, we define $L_t(f):=\max_{f'\in\calF} L_t(f,f')$, apply Lemma \ref{lm:conv_ctn} with $f=\hf_t$, and take a minimum over $f\in\calF$ on both sides of the inequality to get the following result.
\begin{lemma}\label{lm:L_f(f)}
Under the conditions of Lemma \ref{lm:conv_ctn}, we have for all large enough $t\in[T]$ with probability at least $1-\delta$,
\begin{align*}
    L_t(\hf_t) \ge&  \min_{f'\in\calF} \Big\{V_t(\hf_t,f^\star) - V_t(f',f^\star) - \frac{1}{2}V_t(\hf_t,f') - \frac{1}{3}\hbeta_t^2\Big\}\\
    =& \frac{2}{3}V_t(\hf_t,f^\star) - \frac{1}{3}\hbeta_t^2,
\end{align*}
where the minimizer for $f'$ is $f'_{\min} = \frac{2}{3}f^\star + \frac{1}{3}\hf_t$.
\end{lemma}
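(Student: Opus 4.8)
The plan is to combine Lemma~\ref{lm:conv_ctn} (used only in its lower-tail direction) with the variational identity $L_t(\hf_t)=\max_{f'\in\calF}L_t(\hf_t,f')$, and then optimize the resulting family of lower bounds over the auxiliary function $f'$. The indispensable first step is a clean algebraic reduction of the population excess risk $R_t$ to the weighted prediction gaps $V_t$. Expanding the squares in \eqref{eq:R_t} and using $\E_i[y_i]=f^\star(x_i)$ and $\E_i[\eta_i]=0$, the common noise term $\E_i[\eta_i^2]$ cancels between the two expectations, leaving
\begin{equation*}
R_t(f,f') = V_t(f,f^\star)-V_t(f',f^\star)\qquad\text{for all }f,f'.
\end{equation*}
This is the only place the distributional assumptions enter, and it rewrites $R_t$ purely in terms of the weighted $\ell_2$ quantities out of which the confidence set $\calF_t$ is built.

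Next I would make the concentration uniform. Since $\hf_t$ is data-dependent, I cannot invoke Lemma~\ref{lm:conv_ctn} at the single pair $(\hf_t,f')$; instead I take a union bound over all $N^2$ ordered pairs $(f,f')\in\calF\times\calF$ (with $N=|\calF|$ in the idealized finite case), so that with probability at least $1-\delta$ the two-sided bound holds simultaneously for every pair. This is precisely why Lemma~\ref{lm:conv_ctn} carries failure probability $\delta/N^2$. Specializing the lower-tail half to $f=\hf_t$ and substituting the identity above, I get, for every $f'\in\calF$,
\begin{align*}
L_t(\hf_t)=\max_{g\in\calF}L_t(\hf_t,g)
&\ge L_t(\hf_t,f')\\
&\ge V_t(\hf_t,f^\star)-V_t(f',f^\star)-\tfrac12 V_t(\hf_t,f')-\tfrac13\hbeta_t^2
=: E(f').
\end{align*}
Thus $L_t(\hf_t)$ dominates the entire family $\{E(f')\}_{f'}$, and it only remains to identify the tightest (largest) member.

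To evaluate the best such lower bound, I would minimize the penalty $P(f'):=V_t(f',f^\star)+\tfrac12 V_t(\hf_t,f')$, which is the only $f'$-dependent part of $-E(f')$, coordinate-wise in the evaluations $f'(x_i)$. Each summand $\bsigma_i^{-2}\big[(f'(x_i)-f^\star(x_i))^2+\tfrac12(\hf_t(x_i)-f'(x_i))^2\big]$ is a strictly convex quadratic whose stationary point is $f'(x_i)=\tfrac23 f^\star(x_i)+\tfrac13\hf_t(x_i)$, i.e.\ the penalty-minimizer is exactly $f'_{\min}=\tfrac23 f^\star+\tfrac13\hf_t$. Substituting back gives $V_t(f'_{\min},f^\star)=\tfrac19 V_t(\hf_t,f^\star)$ and $V_t(\hf_t,f'_{\min})=\tfrac49 V_t(\hf_t,f^\star)$, so $E(f'_{\min})=\big(1-\tfrac19-\tfrac29\big)V_t(\hf_t,f^\star)-\tfrac13\hbeta_t^2=\tfrac23 V_t(\hf_t,f^\star)-\tfrac13\hbeta_t^2$, which is the claimed value.

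The main obstacle is the admissibility of this optimizer: $f'_{\min}=\tfrac23 f^\star+\tfrac13\hf_t$ is a convex combination of two members of $\calF$ and in general need not lie in $\calF$, so neither the outer $\max_{f'\in\calF}$ nor the union-bounded concentration immediately licenses plugging it in. In the finite idealized sketch one can sidestep this with the crude admissible choice $f'=f^\star\in\calF$, which already yields $L_t(\hf_t)\ge\tfrac12 V_t(\hf_t,f^\star)-\tfrac13\hbeta_t^2$ and suffices, up to a slightly worse constant, for the downstream conclusion $f^\star\in\calF_t$. To recover the sharp $\tfrac23$ constant stated here, the full argument must replace $f'_{\min}$ by its nearest element in the $\upsilon$-cover $\calC(\calF,\upsilon)$ and absorb the resulting discretization error into $\hbeta_t^2$; carrying out this approximation uniformly in $t$ while confining the $\log R$ dependence to $\hbeta_t^2$ is the delicate step that the finite-class simplification of this subsection defers to the appendix.
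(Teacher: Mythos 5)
Your proof is correct and follows essentially the same route as the paper's: rewrite $R_t(f,f')=V_t(f,f^\star)-V_t(f',f^\star)$, apply the pairwise-uniform concentration of Lemma~\ref{lm:conv_ctn} at $f=\hf_t$, lower-bound $L_t(\hf_t)$ by the resulting family of bounds indexed by $f'$, and optimize over $f'$ --- your coordinate-wise quadratic minimization is exactly the paper's completion of the square, producing the same optimizer $\tfrac{2}{3}f^\star+\tfrac{1}{3}\hf_t$ and the same constant $\tfrac{2}{3}$ (note that both you and the paper's appendix treat the lemma's ``$\min$'' as a ``$\max$'', which is the intended reading since the bracketed expression is concave in $f'$). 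One point in your favor: the admissibility concern you raise is genuine --- the paper's own proof silently evaluates the maximum over $f'\in\calF_{t-1}$ at the convex combination $\tfrac{2}{3}f^\star+\tfrac{1}{3}\hf_t$, which is only legitimate if that combination lies in $\calF_{t-1}$ (e.g., under a convexity assumption on $\calF$ that the paper never states); your fallback of taking the admissible choice $f'=f^\star$, which costs only the constant ($\tfrac{1}{2}$ in place of $\tfrac{2}{3}$, hence $V_t(\hf_t,f^\star)\le\tfrac{4}{3}\hbeta_t^2$ downstream), is a clean way to close that gap.
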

The proof is deferred to Appendix \ref{apss:pf_th1}. An analogous argument also yields an upper bound $L_t(f^\star) \leq \hbeta_t^2/3$, as shown in Appendix~\ref{apss:pf_th1}. Furthermore, since $\hf_t$ is the minimizer of $L_t(\cdot)$, we have
\begin{align*}
    0 \ge L_t(\hf_t) - L_t(f^\star) \ge \frac{2}{3}V_t(\hf_t,f^\star) - \frac{2}{3}\hbeta_t^2,
\end{align*}
which leads to $f^\star\in\calF_t$.

Ultimately, if the event $f^\star\in\calF_t$ for a large enough $t$ happens, the regret can be bounded by using the definition of $\hbeta_t$ and the definition of the eluder dimension. Since this part is standard, we defer the details to Appendix \ref{apss:pf_th1}.

\section{Bandits with Unknown Variance}
In this section, we generalize to the case where the noise variance $\E\eta_t^2$ for any $t\in[T]$ is unknown. In addition to the assumption that for any $f\in\calF$, $\|f\|_{\infty}\le L_f$, and $\|f^{\star}\|_{\infty}\in[0,1]$, the following condition for the noise variance is required.
\begin{assumption}\label{ass:var_peel}
    For each time step $t\in[T]$, the noise $\eta_t$ satisfies that there exist positive constants $\sigma_{\eta}$ and $c_{\eta}$ such that $\E[\eta_t|\calF_t]=0,~\E\eta_t^2=\sigma_t^2\le \sigma_{\eta}^2$ and $\Var[\eta_t^2|\calF_t] \le c_{\eta} \Var[\eta_t|\calF_t]$.
\end{assumption}

\subsection{Algorithm}

\begin{algorithm}[t]
\small
\caption{Variance-Agnostic Catoni Bandit}
\label{alg:Peeling}
\begin{algorithmic}[1]
    \STATE \textbf{Input:} Parameter $\gamma>0$, $L=\lceil \log_2(1/\gamma) \rceil$, $l_\star=\lceil \log_2(1076\iota'(\delta)) \rceil$.
    \STATE Initialize the estimators for all layers: $\lambda^l\leftarrow 2^{-2l},~\hbeta_0^l\leftarrow 2^{-l+1},~\Psi_0^l\leftarrow\emptyset$ for all $l\in[l_\star,L]$.
    \FOR{t=1,\ldots,T}
        \STATE Observe $\cX_t$, and initialize $\calX_t^1\leftarrow\cX_t,~l\leftarrow l_\star$.
        \WHILE{$x_t$ is not specified}
            \IF{$D_t^l(x)\le\gamma$ for all $x\in\cX_t^l$}
                \STATE Choose $x_t,f_{t-1}^l \leftarrow\argmax_{x\in\cX_t^l,f\in\calF_{t-1}^l} f(x)$ \STATE Observe $y_t$.
                \STATE \textbf{Break.}
            \ELSIF{$D_t^l(x)\le 2^{-l}$ for all $x\in\cX_t^l$}
                \STATE Update $\cX_t^{l+1} \leftarrow \{x\in\cX_t^l \mid \hf_{t-1}^l(x) \ge \max_{x\in\cX_t^l} \hf_{t-1}^l(x) - 2^{-l+1}\hbeta_{t-1}^l\}$.
            \ELSE
                \STATE Choose $x_t\in\cX_t^l$ such that $D_t^l(x_t)> 2^{-l}$ and observe $y_t$.
                \STATE Update $w_t \leftarrow 2^l D_t^l(x_t)$.
                \STATE Update the index sets: $\Psi_{t}^l \leftarrow \Psi_{t-1}^l \cup \{t\}$ and $\Psi_{t}^{l'}\leftarrow \Psi_{t-1}^{l'}$ for $l'\ne l$.
                \STATE Optimize $\hat{f}_t^l$ as in \eqref{def:catoni_esti_peel}, and choose the confidence set $\calF_t^l$ defined in \eqref{eq:confidence_set_peel}.
            \ENDIF
            \STATE Update $l\leftarrow l+1$.
        \ENDWHILE
        \STATE For $l\in[L]$ s.t. $\Psi_t^l=\Psi_{t-1}^l$, $\hat{f}_t^l \leftarrow \hat{f}_{t-1}^l,~\calF_t^l \leftarrow\calF_{t-1}^l$.
    \ENDFOR
\end{algorithmic}
\end{algorithm}

Since variances $\sigma_t$ are unknown, traditional variance weighting techniques necessitate an accurate estimation for the noise variance at each time step \citep{huang2024tackling, livariance}. To circumvent the complicated variance estimation, we adapt the SupLinUCB-type \citep{chu2011contextual} algorithm with adaptive variance-aware exploration from \citet{zhao2023variance} to propose Variance-Agnostic Catoni Bandit (VACB) in Algorithm \ref{alg:Peeling}, where we split the contexts $\{x_t\}_{t\in[T]}$ into $L$ subsets according to their uncertainty. For each level $l\in[L]$, let $\Psi_t^l$ denote the set of time indexes within $[t]$ when the estimator update happens. Specifically, we use the following short-hand notation of uncertainty with respect to history information in $\Psi_t^l$: for any $x\in\cX$,
\begin{equation}
{
\begin{aligned}\label{eq:uncertainty}
    D_t^l(x) = \sup_{f,f'\in\calF_{t-1}^l} \frac{|f(x)-f'(x)|}{\sqrt{\sum_{i\in\Psi_{t-1}^l}(f(x_i)-f'(x_i))^2/w_i^2 + \lambda^l}}.
\end{aligned}}
\end{equation}
At each time step $t$, starting from $l=l_\star$, if there exists a decision $x\in\cX_t^l$ with sufficiently large uncertainty $D_t^l(x_t)> 2^{-l}$,
this decision will be chosen; otherwise, all the actions $x\in\cX_t^l$ that are far from the optimal reward $\max_{x\in\cX_t^l}\hf_{t-1}^l(x)$ are eliminated, and the remaining actions compose the decision set $\cX_t^{l+1}$ at the next level. The process does not stop until (a) there exists an action with large uncertainty; (b) or the uncertainty of all the remaining decisions is small ($D_t^l(x_t)\le\gamma$ for all $X_t\in\cX_t^l$). If case (a) happens, we will construct the estimation for the current layer. Specifically, for each level $l$, the variance estimator
$\hVar_t^l$ uses plug-in:
\begin{equation}
\label{eq:def_hatvar}
{
\begin{aligned}
\hVar_t^l :=& t\ctn_{\theta_\Var^{t,l}}\Big(\Big\{\frac{1}{w_i^2}(y_i-\hf_
{t-1}^l(x_i))^2\Big\}_{i\in\Psi_t^l}\Big) + \hat{b}_t^l,
\end{aligned}}
\end{equation}
where the detailed choice of bonus $\hat{b}_t^l$, the parameter $\theta_\Var^{t,l}$ and $\iota'(\delta)$ are provided in Table \ref{tab:notation_unknown}.
Then, the function estimation follows Algorithm \ref{alg:Catoni-VOFUL}:
\begin{equation}
\label{def:catoni_esti_peel}
\begin{aligned}
     \hf_t^l =& \argmin_{\hf\in\calF_{t-1}^l}\max_{f'\in\calF_{t-1}^l} L_t^l(\hf,f')
     := \sum_{i\in\Psi_t^l}\frac{1}{w_i^2}(f'(x_i) - \hf(x_i))^2 + 2t\ctn_{\theta'_t(\hf,f')}(\{Z_i(\hf,f')\}_{i\in\Psi_t^l}),
\end{aligned}
\end{equation}
where $Z_i(f,f') := (f(x_i)-f'(x_i)) (f'(x_i)-y_i)/w_i^2$, the parameter $\theta'_t(f,f')$ is given in Table \ref{tab:notation_unknown}. Essentially, the weight $w_i$ can substitute the per-round variance in normalizing the loss $L_t^l$, as we will show in Lemma \ref{lm:variance} that $\hVar_t^l$ can be upper and lower bounded by the true variance $\sum_{i\in\Psi_t^l}\sigma_i^2/w_i^2$ up to additive and multiplicative constants. Furthermore, since $\hVar_t^l$ appears in the Catoni-mean's concentration (Lemma~\ref{coro:catoni}) for $t^{-1}\sum_{i\in\Psi_t^l} Z_i(f, f')$, we see that normalizing the losses with $w_i$ results in variance-aware concentration just like the known variance case.

For the two parameters require knowledge of the cumulative variance, we substitute the true summation with the optimistic variance estimator $\hVar_t^l$: one is $\theta'(f,f')$ defined above, and the other is $\hbeta_t^l$, which is iteratively computed:
\begin{equation}\label{eq:hbeta_tl}
{
\begin{aligned}
(\hbeta_t^l)^2 = \Theta\Big((\iota'(\delta))^22^{-2l}\hVar_t^l + \iota'(\delta)2^{-2l} + \Delta_{\upsilon} + \lambda^l\Big),
\end{aligned}}
\end{equation}
where the specific value of $\hbeta_t^l$ and $\Delta_\upsilon$ is provided in Table \ref{tab:notation_unknown}, and $\Delta_\upsilon$ is a small term depending on the parameter $\upsilon$ for the $\upsilon$-cover in Definition \ref{def:cover}.

In summary, our algorithm needs to estimate only an aggregate variance instead of estimating the per-round variance exactly, as we would require for applying a variance-weighted directly in the agnostic setting. This requires access to another function class that can model variances, which we cleanly avoid. Finally, we define the confidence set
\begin{equation}\label{eq:confidence_set_peel}
{
\begin{aligned}
    \calF_t^l & \defeq \Big\{f\in\calF_{t-1}^l:\sum_{i\in\Psi_t^l}\frac{1}{w_i^2}\left(f(x_i)-\hat{f}_t^l(x_i)\right)^2 + \lambda^l\le (\hbeta_t^l)^2\Big\}.
\end{aligned}}
\end{equation}

\subsection{Analysis}
\begin{theorem}[Informal]\label{th:unknown_var}
Suppose that Assumption \ref{ass:var_peel} holds. With appropriate choices of $\gamma,~\iota'(\delta),~\upsilon$ and $\hbeta_t^l$, if $T$ is large enough, with probability at least $1-3\delta$, we can bound the regret of Algorithm \ref{alg:Peeling} by
\begin{align*}
R_T =& \widetilde{O}\bigg(L_f\Big(\sum_{t\in[T]}\sigma_t^2 \cdot \log \calN(\calF,\upsilon)\Big)^{1/2} \cdot \dim_{1,T}(\calF) + L_f\dim_{1,T}(\calF)(\log \calN(\calF,\upsilon))^{3/4}(\sqrt{c_{\eta}} + \sigma_{\eta})\bigg).
\end{align*}
\end{theorem}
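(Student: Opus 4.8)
\textbf{Proof Proposal for Theorem~\ref{th:unknown_var}.}

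The plan is to combine the peeling structure of Algorithm~\ref{alg:Peeling} with the variance-aware concentration machinery developed for the known-variance case, while controlling the additional error introduced by replacing the true per-round variance $\sigma_i^2$ with the surrogate weight $w_i$. First I would establish the validity of the variance estimator: I would prove the claimed Lemma~\ref{lm:variance}, showing that on a high-probability event $\Evar$, the optimistic estimator $\hVar_t^l$ sandwiches the true cumulative variance $\sum_{i\in\Psi_t^l}\sigma_i^2/w_i^2$ up to multiplicative and additive constants. This is where Assumption~\ref{ass:var_peel} is essential: the bound $\Var[\eta_t^2\mid\calF_t]\le c_\eta\Var[\eta_t\mid\calF_t]$ controls the variance of the squared-residual random variable $w_i^{-2}(y_i-\hf_{t-1}^l(x_i))^2$, so that applying the Catoni concentration of Lemma~\ref{coro:catoni} to these plug-in squared residuals yields an estimate of the averaged variance with only logarithmic dependence on $R$. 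The $\sqrt{c_\eta}+\sigma_\eta$ factor in the final bound traces back precisely to this step.

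Next I would port the two-part argument of the proof sketch (Lemmas~\ref{lm:conv_ctn} and~\ref{lm:L_f(f)}) into each layer $l$ separately, restricting attention to the index set $\Psi_t^l$ and the weights $w_i$. The key point is that, by the definition of the layering, every sample placed in layer $l$ has uncertainty $D_t^l(x_t)>2^{-l}$, so the weight $w_t=2^l D_t^l(x_t)$ is bounded below by $1$ and the normalization by $w_i^2$ plays the same role that normalization by $\bsigma_i^2$ played before. I would show that $f^\star\in\calF_t^l$ on the good event, using that $\hVar_t^l$ feeds into $\hbeta_t^l$ and $\theta_t'$ through \eqref{eq:hbeta_tl}, so the variance-aware confidence width is self-calibrating: because $\hVar_t^l$ upper-bounds the true variance, the concentration of Lemma~\ref{lm:conv_ctn} holds with the data-dependent $\hbeta_t^l$, and because it also lower-bounds the true variance, $\hbeta_t^l$ is not wastefully large.

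I would then turn to the regret decomposition across layers. Following the SupLinUCB-style analysis, at each round the chosen action is either (a) terminated at a layer where all uncertainties are below $\gamma$, contributing at most $\gamma$-scale error per step summed into a lower-order term, or (b) a high-uncertainty exploration step assigned to some layer $l$. For the latter, the elimination rule guarantees that any surviving action is within $2^{-l+1}\hbeta_{t-1}^l$ of optimal at its layer, so the per-step regret at layer $l$ is controlled by $2^{-l}\hbeta_t^l$, and the number of exploration steps in layer $l$ is bounded via the weighted elliptical-potential/eluder argument applied to the weights $w_i$. Substituting the form of $\hbeta_t^l$ from \eqref{eq:hbeta_tl} and summing the two additive pieces of $(\hbeta_t^l)^2$ separately — the $2^{-2l}\hVar_t^l$ piece giving the variance term $\sqrt{\sum_t\sigma_t^2\cdot\log\calN}$ after invoking the sandwich bound, and the $2^{-2l}$ and $\lambda^l$ pieces giving the lower-order $(\log\calN)^{3/4}$ term — produces the stated regret. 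Because there are $L=\lceil\log_2(1/\gamma)\rceil=\widetilde O(1)$ layers, summing over layers only costs logarithmic factors.

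The main obstacle I anticipate is the tight coupling between the variance estimator and the confidence set through the iterative definition \eqref{eq:hbeta_tl}. Unlike the known-variance case where $\bsigma_i$ is fixed, here $\hVar_t^l$ depends on the current estimate $\hf_{t-1}^l$, which in turn depends on the confidence sets built from $\hbeta$, which depend on $\hVar$; untangling this circular dependence requires a careful inductive argument over $t$ that simultaneously maintains $f^\star\in\calF_t^l$, the variance sandwich, and the concentration event, all on a single high-probability event obtained by a union bound over layers, rounds, and the cover $\calC(\calF,\upsilon)$. Managing the propagation of the $\upsilon$-cover discretization error $\Delta_\upsilon$ through both the function estimation and the variance estimation — so that it stays in the $\widetilde O$ logarithmic regime — is the delicate bookkeeping that I expect to consume most of the effort.
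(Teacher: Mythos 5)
Your proposal matches the paper's proof in both structure and substance: the paper likewise proceeds by (i) establishing the Catoni variance-estimator sandwich via Assumption~\ref{ass:var_peel} (Lemmas~\ref{lm:variance_peel} and~\ref{lm:E_var}), (ii) porting the known-variance concentration and confidence-set arguments to each layer with the weights $w_i$ replacing $\bsigma_i$ (Lemmas~\ref{lm:concen_ctn_peel_0_formal}--\ref{lm:confidence_set_peel}), (iii) a SupLinUCB-style regret split into the terminal-$\gamma$ rounds, the base layer, and the exploration layers with $|\Psi_T^l|\le 2^{2l}\dim_{1,T}(\calF)$, and it resolves the circular dependence you flag by proving the events $\Evar^t$ and $\Econv^t$ conditioned on each other sequentially in $t$, exactly the induction you propose. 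No gaps; your outline is a faithful blueprint of the paper's argument.
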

This theorem provides a variance-dependent upper bound when variances are unknown, which matches the upper bound when variances are observed (Theorem \ref{th:known_var}) up to a slightly worse dependence on the eluder dimension. The higher order dimension term arises in the analysis of the peeling technique. 

When compared to the upper bound $\widetilde{O}(d\sqrt{\sum_{t\in[T]}\sigma_t^2} + d^{3.5}T^{1/4})$ \citep{livariance} for the linear setting with dimension $d$ and unknown variance, our bound gets rid of the dependence on $T^{1/4}$, which means that our bound is still optimal up to the dimension when the sum of variances is small: $\sum_{t\in[T]}\sigma_t^2 = o(\sqrt{T})$. We give more intuition on why our algorithm admits this sharper bound in the proof sketch below, with details deferred to Appendix \ref{aps:pf_unknown_var}. 

\paragraph{Proof Sketch} 
The main challenges for the variance-agnostic algorithm are: (I) how to obtain the concentration inequalities when the weights are not based on the noise variance; (II) how to make accurate substitutions for the sum of variance $\sum_{i\in[t]}\sigma_i^2/w_i^2$ in the parameters; and (III) how to deal with the regret of each level $l$. The insight of solving challenges (I) and (III) basically follows previous work \citep{zhao2023variance,pacchiano2024second}, but for (II), because of the heavy-tailed setting, our contribution is designing the robust Catoni variance estimator, and demonstrating the estimator almost has the same order as the true average-variance $\sum_i\sigma_i^2/w_i^2$ up to constants with logarithmic dependence on the reward range $R$.  We address these challenges in the following three parts, respectively. 

\paragraph{Part I: Average variance bound for concentration} 
In this part, we study the concentration of the excess loss for each level. For clearer illustration, we omit level $l$ when there is no confusion and denote $D_i=D_i^l(x_i)$ for short. Distinct from the known variance case where one directly takes the variance $\sigma_t$ as weights to derive an upper bound for the variance of $Z_i(f,f')$, we start with an alternate bound in terms of the weights $w_i$:
\begin{equation*}
{
\begin{aligned}
    S_t &:= \sum_{i\in\Psi_t} \Var[Z_i(f,f')]\\
    &= \sum_{i\in\Psi_t}\E\Big[\frac{1}{w_i^2}(f(x_i)-f'(x_i))^2(f^\star(x_i)-y_i)^2 \,\Big|\, x_i\Big]\\
    &\le \sum_{i\in\Psi_t}\frac{(f(x_i)-f'(x_i))^2}{w_i^2} \cdot \frac{\sigma_i^2}{w_i^2}.
\end{aligned}}
\end{equation*}
When the variance is known, and $w_i = \sigma_i$ as in Algorithm~\ref{alg:Catoni-VOFUL}, the second term in the final inequality is uniformly equal to $1$.

When the variance is unknown, we can no longer weight the variances, instead, we uniformly bound the first term and aggregate the second term as shown below:
\begin{equation*}
{
\begin{aligned}
    S_t \le& \max_{i\in\Psi_t} \frac{(f(x_i)-f'(x_i))^2}{w_i^2} \cdot \sum_{i\in\Psi_t}\frac{\sigma_i^2}{w_i^2}\\
    \le& \underbrace{\max_{i\in\Psi_t} \frac{D_i^2}{w_i^2}\cdot \Big(\sum_{\tau\in[i-1]}\frac{(f(x_\tau)-f'(x_\tau))^2}{w_\tau^2}+\lambda\Big)}_{\displaystyle\text{Uniform~bound}~\le 2^{-2l}\cdot4\hbeta_{t-1}^2} \cdot \sum_{i\in\Psi_t}\frac{\sigma_i^2}{w_i^2},
\end{aligned}}
\end{equation*}
where the first inequality uses the definition of $D_i$, and the uniform bound holds because $D_i/w_i\le2^{-l}$ at level $l$ from Algorithm \ref{alg:Peeling}. Also $f,f'\in\calF_{t-1}$ implies that
\begin{equation*}
{
\begin{aligned}
    \sum_{\tau\in[i-1]}\frac{(f(x_\tau)-f'(x_\tau))^2}{w_\tau^2} \le& 2\sum_{\tau\in[t-1]}\frac{(f(x_\tau)-\hf_{t-1}(x_\tau))^2}{w_\tau^2} + 2\sum_{\tau\in[t-1]}\frac{(f'(x_\tau)-\hf_{t-1}(x_\tau))^2}{w_\tau^2}\\
    \le& 4\hbeta_{t-1}^2.
\end{aligned}}
\end{equation*}
Here the first inequality applies the Cauchy-Schwarz inequality, and the second inequality follows from the definition of $\cF_{t-1}$.

Therefore, by not requiring a uniform bound on the closeness of the variances and the weights, we can successfully derive the concentration inequality for $L_t(f,f') - R_t(f,f')$.
\begin{lemma}\label{lm:concen_ctn_peel_0}
    Under Assumption \ref{ass:var_peel} and Algorithm \ref{alg:Peeling}, we have $\sup_{i\in\Psi_t}D_i/w_i \le2^{-l}:=\rho$, $w_i\ge 1$. Then, for a large enough $t\in[T]$ and for any $f,f'\in\calF_{t-1}$, with probability at least $1-\delta/TL$,
    \begin{equation*}
    {
    \begin{aligned}
        &\Big|L_t(f,f') - R_t(f,f') \Big|
       \le\frac{1}{2}V_t(f,f') +\frac{1}{2}\hbeta_{t-1}^2 + O\bigg((\iota'(\delta))^2\rho^2 \Big(\frac{\big(\sum_{i\in\Psi_t}\sigma_i^2/w_i^2\big)^2}{\hVar_t} + \hVar_t\Big)+\Delta_{\upsilon}\bigg).
    \end{aligned}}
    \end{equation*}
\end{lemma}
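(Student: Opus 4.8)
The plan is to first isolate the only stochastic discrepancy between $L_t(f,f')$ and $R_t(f,f')$. The weighted squared-difference pieces $\sum_{i\in\Psi_t}(f(x_i)-f'(x_i))^2/w_i^2$ in $L_t$ and $\sum_{i\in\Psi_t}\E_i[(f(x_i)-f'(x_i))^2]/w_i^2$ in $R_t$ coincide, since the squared difference is noise-free, so
\[
L_t(f,f') - R_t(f,f') = 2\Big(t\,\ctn_{\theta'_t(f,f')}(\{Z_i(f,f')\}_{i\in\Psi_t}) - \sum_{i\in\Psi_t}\E[Z_i(f,f')\mid x_i]\Big),
\]
where $Z_i=(f(x_i)-f'(x_i))(f'(x_i)-y_i)/w_i^2$ has conditional mean $\mu_i=(f(x_i)-f'(x_i))(f'(x_i)-f^\star(x_i))/w_i^2$. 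The lemma thus reduces to Catoni concentration of $\{Z_i\}$, so I would apply the uniform-in-$\theta$ bound of Lemma~\ref{coro:catoni} to obtain $|L_t-R_t|\le 2\theta'_t\big(S_t+\sum_i(\mu_i-\bmu)^2\big)+8\iota_0^2/\theta'_t+2\epsilon$, with $S_t=\sum_{i\in\Psi_t}\Var[Z_i\mid x_i]$.

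The heart of the argument, and the step departing from the known-variance analysis, is controlling $S_t$ when $w_i$ is \emph{not} the true noise standard deviation. I would write $S_t\le\sum_{i\in\Psi_t}\frac{(f(x_i)-f'(x_i))^2}{w_i^2}\cdot\frac{\sigma_i^2}{w_i^2}$, pull out the worst factor $\max_i\frac{(f(x_i)-f'(x_i))^2}{w_i^2}$, and bound it via the peeling uncertainty: the eluder inequality gives $\frac{(f(x_i)-f'(x_i))^2}{w_i^2}\le\frac{D_i^2}{w_i^2}\big(\sum_\tau(f(x_\tau)-f'(x_\tau))^2/w_\tau^2+\lambda\big)$, and since the level-$l$ rule forces $D_i/w_i\le\rho=2^{-l}$ while $f,f'\in\calF_{t-1}$ forces $\sum_\tau(f(x_\tau)-f'(x_\tau))^2/w_\tau^2+\lambda\le4\hbeta_{t-1}^2$ (triangle inequality through $\hf_{t-1}$), we get $S_t\le4\rho^2\hbeta_{t-1}^2\sum_{i\in\Psi_t}\sigma_i^2/w_i^2$. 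Applying the same manipulation to the pair $(f',f^\star)$ yields $\sum_i(\mu_i-\bmu)^2\le\sum_i\mu_i^2\le4\rho^2\hbeta_{t-1}^2\,V_t(f,f')$ on the event $f^\star\in\calF_{t-1}$. Crucially, this routes all dependence on the unknown variances through the single aggregate $\sum_{i\in\Psi_t}\sigma_i^2/w_i^2$.

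I would then choose $\theta'_t$ using the estimator $\hVar_t$ as a surrogate for the unknown aggregate, namely $\theta'_t\asymp\iota_0/(\iota'(\delta)\rho^2\hVar_t)$ (the value tabulated for the algorithm), and recall $\hbeta_{t-1}^2\asymp(\iota'(\delta))^2\rho^2\hVar_t$ from \eqref{eq:hbeta_tl}. The cross term then satisfies $2\theta'_t\sum_i(\mu_i-\bmu)^2\lesssim\iota_0\iota'(\delta)\rho^2\,V_t(f,f')$, which is at most $\tfrac12V_t(f,f')$ \emph{precisely because} the starting level $l_\star=\lceil\log_2(1076\iota'(\delta))\rceil$ makes $\rho\le2^{-l_\star}$ small enough that $\iota_0\iota'(\delta)\rho^2\le\tfrac14$. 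The variance term $2\theta'_t S_t\lesssim\iota_0\iota'(\delta)\rho^2\sum_i\sigma_i^2/w_i^2$ is symmetrized by AM-GM, $\sum_i\sigma_i^2/w_i^2\le\tfrac12\big((\sum_i\sigma_i^2/w_i^2)^2/\hVar_t+\hVar_t\big)$, producing the two big-$O$ terms, while the balancing term $8\iota_0^2/\theta'_t\asymp\iota_0\iota'(\delta)\rho^2\hVar_t$ and the residual Catoni error $2\epsilon$ get absorbed into $\tfrac12\hbeta_{t-1}^2$ and $\Delta_\upsilon$.

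Finally, to upgrade from a fixed pair to all $f,f'\in\calF_{t-1}$, I would instantiate the concentration on a $\upsilon$-cover $\calC(\calF,\upsilon)$, union bound over it (the $\log N(\upsilon,\calF)$ cost folds into $\iota'(\delta)$) and over the $TL$ pairs $(t,l)$ so each event holds with probability $1-\delta/TL$, with the cover-replacement error being exactly the $\Delta_\upsilon$ slack. The main obstacle is the variance-control step: whereas in the known case $\sigma_i\le\bsigma_i$ gives $S_t\le V_t(f,f')$ for free, here one must simultaneously (i) extract the aggregate $\sum_i\sigma_i^2/w_i^2$ cleanly through the $D_i/w_i\le\rho$ structure, (ii) commit to $\theta'_t$ using only $\hVar_t$, which is what forces the asymmetric $(\sum_i\sigma_i^2/w_i^2)^2/\hVar_t+\hVar_t$ form, and (iii) verify the self-referential cross term is dominated by $\tfrac12V_t(f,f')$, which is exactly what dictates the choice of $l_\star$.
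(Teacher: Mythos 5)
Your overall architecture matches the paper's: the same reduction of $|L_t(f,f')-R_t(f,f')|$ to Catoni concentration of $\{Z_i\}$, the same key step of routing the unknown variances through the single aggregate $\sum_{i\in\Psi_t}\sigma_i^2/w_i^2$ via $D_i/w_i\le\rho$ and $f,f'\in\calF_{t-1}$ (giving $S_t\le 4\rho^2\hbeta_{t-1}^2\sum_{i\in\Psi_t}\sigma_i^2/w_i^2$ and $\sum_i\mu_i^2\le 4\rho^2\hbeta_{t-1}^2\,V_t(f,f')$), the same AM-GM symmetrization that produces the asymmetric $\bigl(\sum_i\sigma_i^2/w_i^2\bigr)^2/\hVar_t+\hVar_t$ form, and the same cover-plus-union-bound finish absorbing the replacement error into $\Delta_\upsilon$.

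There is, however, a genuine gap in your treatment of $\theta$. You assert that $\theta'_t\asymp\iota_0/(\iota'(\delta)\rho^2\hVar_t)$ is ``the value tabulated for the algorithm.'' It is not: the algorithm's parameter is \emph{pair-dependent}, $\theta_t^l(f,f')=\iota'(\delta)\big/\sqrt{2^{-2l}(\hbeta_{t-1}^l)^2\bigl(\hVar_t^l+V_t^l(f,f')\bigr)+2^{-4l}}$, i.e.\ it keeps $V_t^l(f,f')$ inside the square root and uses $\hbeta_{t-1}$ directly rather than the surrogate $(\iota'(\delta))^2\rho^2\hVar_t$ (note also the index mismatch in your substitution: \eqref{eq:hbeta_tl} ties $\hbeta_{t-1}$ to $\hVar_{t-1}$, not to $\hVar_t$). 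This matters because the lemma is a statement about the algorithm's $L_t(f,f')$, whose $\ctn$ term is evaluated at the tabulated $\theta$; even though Lemma \ref{coro:catoni} is uniform over $\theta\in[a,A]$, the concentration bound must ultimately be instantiated at the $\theta$ that actually appears inside $L_t$. Your control of the self-referential cross term, $2\theta'_t\sum_i\mu_i^2\lesssim\iota_0\iota'(\delta)\rho^2\,V_t(f,f')\le\tfrac12 V_t(f,f')$, is an artifact of your simplified $\theta'_t$ and does not hold for the algorithm's $\theta$: there the cross term instead takes the form $\iota'(\delta)\rho\hbeta_{t-1}\sqrt{V_t(f,f')}$ (this is precisely why $V_t$ sits inside the square root---it cancels one factor of $\sqrt{V_t}$), and one needs an additional AM-GM step, $\iota'(\delta)\rho\hbeta_{t-1}\sqrt{V_t}\le\tfrac{1}{12}\hbeta_{t-1}^2+C(\iota'(\delta))^2\rho^2 V_t$, combined with the smallness condition $\rho\le 1/(16\sqrt{3}\,\iota'(\delta))$, to dominate it by $\tfrac14 V_t(f,f')+\tfrac{1}{12}\hbeta_{t-1}^2$. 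So the mechanism you attribute to the choice of $l_\star$ is the right one, but the step as written bounds a different estimator than the one the lemma concerns; repairing it is exactly the computation the paper carries out.
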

The more involved version of this lemma and the proof is presented in Appendix \ref{apss:th:unknown_var}. Note that the variance estimator $\hVar_t$ appears on the right-hand side of the inequality above because the parameter $\theta'_t(\hf,f')$ cannot be directly set in terms of $\sum_{i\in\Psi_t}\sigma_i^2/w_i^2$, and we instead use the surrogate $\hVar_t$. The $\hVar_t$ dependence will be eliminated after demonstrating the close relationship between $\hVar_t$ and $\sum_{i\in\Psi_t}\sigma_i^2/w_i^2$ in the next part.

\paragraph{Part II: Accuracy of variance estimation.} For any $t\in[t]$, let $V_i(\hf_{t-1}) = (y_i-\hf_{t-1}(x_i))^2/w_i^2$. The conditional expectation of this term is 
\begin{align*}
    \E[V_i(\hf_{t-1})|x_i]=(\sigma_i^2 + (f^{\star}(x_i)-\hf_{t-1}(x_i))^2)/w_i^2.
\end{align*}
Then, by using Lemma \ref{coro:catoni} and using an argument similar to Part I, we can control the concentration error
\begin{equation*}
{
\begin{aligned}
    &\Big|t\ctn_{\theta_\Var^{t}}\big(\big\{V_i(\hf_{t-1})\big\}_{i\in\Psi_t}\big) - \sum_{i\in\Psi_t}\E[V_i(\hf_{t-1})|x_i]\Big|.
\end{aligned}
}
\end{equation*}
Hence, it suffices to bound the gap between the variance and the expectation:
\begin{equation*}
{
\begin{aligned}
    &\Big|\sum_{i\in\Psi_t}\E[V_i(\hf_{t-1})|x_i] - \sum_{i\in\Psi_t} \frac{\sigma_i^2}{w_i^2}\Big|\\
    &= \sum_{i\in\Psi_t}\frac{(f^{\star}(x_i)-\hf_{t-1}(x_i))^2}{w_i^2} \\
    &= \sum_{i\in\Psi_t}\frac{(\hf_{t-1}(x_i)-f^\star(x_i))^2}{w_i^2} + \frac{(\hf_{t-1}(x_t)-f^\star(x_t))^2}{w_t^2}\\
    &\le \hbeta_{t-1}^2 + \frac{D_t^2}{w_t^2}\cdot\hbeta_{t-1}^2 \le (1+2^{-2l})\hbeta_{t-1}^2,
\end{aligned}
}
\end{equation*}
where the first inequality uses $f^\star\in\calF_{t-1}$, the definition of $D_i$ and $D_i/w_i=2^{-l}$. 

Recalling the definition~\eqref{eq:def_hatvar} of $\hVar_t$, we derive the following accuracy guarantee of this estimate compared with the true aggregated weighted variance:
\begin{lemma}\label{lm:variance}
Under Algorithm \ref{alg:Peeling} and the condition that $f^\star\in\cF_{t-1}^l$, when $2^l$ is large enough, we have with probability at least $1-2\delta$ for all large enough $t\in[T]$,
    \begin{equation}
    {
    \begin{aligned}
     &\sum_{i\in\Psi_t} \frac{\sigma_i^2}{w_i^2} \le 2\hVar_t,\\
     &\hVar_t \le \frac{3}{2}\sum_{i\in\Psi_t} \frac{\sigma_i^2}{w_i^2} + O\Big(\iota'(\delta)(\sigma_\eta^2+c_\eta) + \Delta_{\upsilon}+\lambda\Big).\label{eq:var}
    \end{aligned}
    }
    \end{equation}
\end{lemma}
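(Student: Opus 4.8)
The plan is to read $\hVar_t$ as an (optimistic) Catoni estimate of the aggregate squared-residual mean and to sandwich it against the target $\sum_{i\in\Psi_t}\sigma_i^2/w_i^2$. The three ingredients are: the Catoni concentration of Lemma~\ref{coro:catoni} applied to the squared residuals; the exact conditional-mean identity for those residuals; and the already-established control of the function-estimation error $\sum_{i\in\Psi_t}(f^\star(x_i)-\hf_{t-1}(x_i))^2/w_i^2$ afforded by the hypothesis $f^\star\in\calF_{t-1}^l$.

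First I would apply Lemma~\ref{coro:catoni} to the sequence $V_i(\hf_{t-1})=(y_i-\hf_{t-1}(x_i))^2/w_i^2$, whose conditional mean is $\mu_i=(\sigma_i^2+a_i^2)/w_i^2$ with $a_i:=f^\star(x_i)-\hf_{t-1}(x_i)$. Because $\hf_{t-1}$ is data-dependent, the concentration must be made uniform over an $\upsilon$-cover of $\calF$, which is what contributes the discretization term $\Delta_\upsilon$ and a $\log\calN(\calF,\upsilon)$ factor inside $\iota_0$. Since $|V_i|=O(R^2)$ (using $|y_i-\hf_{t-1}(x_i)|=O(R+L_f)$ and $w_i\ge1$), the range enters only through the logarithmic $\iota_0$, which is exactly what yields the $\log R$ dependence. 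After multiplying by $t$, the error takes the form $\theta_\Var\big(V_{\mathrm{var}}+\sum_{i\in\Psi_t}(\mu_i-\bmu)^2\big)+4\iota_0^2/\theta_\Var+\epsilon$, with $V_{\mathrm{var}}=\sum_{i\in\Psi_t}\Var[V_i\mid x_i]$.

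The heart of the argument is bounding $V_{\mathrm{var}}$ and the mean-spread. Expanding $(a_i+\eta_i)^2$ gives $\Var[V_i\mid x_i]=(4a_i^2\sigma_i^2+4a_i\,\E[\eta_i^3\mid x_i]+\Var[\eta_i^2\mid x_i])/w_i^4$; here Assumption~\ref{ass:var_peel} is essential, supplying $\Var[\eta_i^2]\le c_\eta\sigma_i^2$ and, through Cauchy--Schwarz, a fourth-moment control of $\E[\eta_i^3]$. Using $w_i\ge1$ together with the uncertainty bound $a_i^2/w_i^2\le 2^{-2l}\cdot O(\hbeta_{t-1}^2)$ (from $D_i/w_i\le 2^{-l}$ at level $l$), the cross- and $a_i^2$-terms become lower order once $2^l$ is large, so $V_{\mathrm{var}}\lesssim c_\eta\sum_{i\in\Psi_t}\sigma_i^2/w_i^2$; similarly $\sum_{i\in\Psi_t}(\mu_i-\bmu)^2\lesssim\sigma_\eta^2\sum_{i\in\Psi_t}\sigma_i^2/w_i^2$. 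Choosing $\theta_\Var$ of order $1/(c_\eta+\sigma_\eta^2)$ then forces $\theta_\Var(V_{\mathrm{var}}+\text{spread})\le\tfrac12\sum_{i\in\Psi_t}\sigma_i^2/w_i^2$, at the price of the additive term $4\iota_0^2/\theta_\Var=O(\iota'(\delta)(\sigma_\eta^2+c_\eta))$ that appears in the statement.

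Finally I would combine. The mean identity gives $\sum_i\mu_i=\sum_i\sigma_i^2/w_i^2+\sum_i a_i^2/w_i^2$, and the proof-sketch computation already bounds $\sum_i a_i^2/w_i^2\le(1+2^{-2l})\hbeta_{t-1}^2$. Substituting the concentration into $\hVar_t=t\ctn_{\theta_\Var}(\{V_i\})+\hat b_t$ and exploiting the $\tfrac12\sum_i\sigma_i^2/w_i^2$ slack, the lower bound $\sum_i\sigma_i^2/w_i^2\le 2\hVar_t$ follows as soon as $\hat b_t$ dominates the additive error (optimism), while the upper bound $\hVar_t\le\tfrac32\sum_i\sigma_i^2/w_i^2+O(\iota'(\delta)(\sigma_\eta^2+c_\eta)+\Delta_\upsilon+\lambda)$ follows because, for $2^l$ large, both $\sum_i a_i^2/w_i^2$ and $\hat b_t$ are themselves $O(\Delta_\upsilon+\lambda+\iota'(\delta)(\sigma_\eta^2+c_\eta))$. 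The main obstacle is this simultaneous calibration: $\theta_\Var$ must be pinned down without access to the true $V_{\mathrm{var}}$ (so it is set against the fourth-moment surrogate rather than the variance itself), and $\hat b_t$ must be large enough for optimism yet small enough for the $3/2$-factor bound — a tension made delicate by the self-referential appearance of $\hVar_t$ inside both $\theta_\Var$ and $\hbeta_{t-1}$, which I would untangle by working in the ``$2^l$ large'' regime where the coupling terms are provably lower order.
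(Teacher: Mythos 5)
Your proposal is correct and follows essentially the same route as the paper's own proof (Lemmas~\ref{lm:ctn_lipshitz_var}, \ref{lm:variance_peel} and \ref{lm:E_var}): Catoni concentration applied to the squared residuals $V_i(\hf_{t-1})$ with a union bound over an $\upsilon$-cover, the conditional-mean identity combined with the bound $\sum_i a_i^2/w_i^2\le(1+2^{-2l})\hbeta_{t-1}^2$ from $f^\star\in\calF_{t-1}^l$, a choice of $\theta_\Var$ of order $1/(\sigma_\eta^2+c_\eta+\cdots)$, and a bonus $\hat b_t^l$ calibrated to the concentration error so that optimism gives the factor-$2$ lower bound while the multiplicative leakage through $\hbeta_{t-1}^2$ is absorbed into the $3/2$ coefficient in the large-$2^l$ regime. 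If anything, your expansion of $\Var[V_i\mid x_i]$ is slightly more careful than the paper's, since you retain and control the cross term $4a_i\E[\eta_i^3\mid x_i]$ via Cauchy--Schwarz and Assumption~\ref{ass:var_peel}, whereas the paper's displayed variance computation omits it.
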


\paragraph{Part III: Bounding the regret for each level $l$.}
Conditioning on the high-probability events, we can show that for any time step $t\in\Psi_T^l$, the true optimal decision $x_t^\star=\argmax_{x\in\calX_t}f^\star(x)$ remains in the candidate set $\calX_t^l$ during the level-wise elimination in Algorithm \ref{alg:Peeling}, where $l$ is the level from which $x_t$ arises. By the definition of $\cX_t^l$, we know that $x_t$ cannot be far from $x_t^\star$, thus we can demonstrate the following lemma.
\begin{lemma}\label{lm:regret_l}
Under Assumption \ref{ass:var_peel} and Algorithm \ref{alg:Peeling}, if $f^\star\in\cF_{t-1}^l$ and \eqref{eq:var} hold for all large enough $t,l$, then, for all large enough $l$ and $t\in\Psi_T^l$, we have:
\begin{align*}
    x_t^\star \in\cX_t^l,
\end{align*}
    and the regret at the $l$-th level is bounded by
    $$
    \sum_{t\in\Psi_T^l:t\ge \Theta(1)} (f^\star(x_t^\star) - f^\star(x_t) )\le 2^{-l+3}\hbeta_T^{l-1}\cdot|\Psi_T^l|.
    $$
\end{lemma}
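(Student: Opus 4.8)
The plan is to prove the two claims in sequence: first that the true optimal action $x_t^\star$ survives every elimination round up to level $l$, and then that this invariant forces a per-step regret bound that sums to the stated quantity. Both parts hinge on the concentration guarantee $f^\star \in \calF_{t-1}^l$ together with the variance-estimation bound \eqref{eq:var}, which we are allowed to assume as hypotheses.

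\textbf{Part I: $x_t^\star \in \cX_t^l$.} I would argue by induction on the level index, showing $x_t^\star \in \cX_t^{l'}$ for every level $l' \le l$ traversed at round $t$. The base case $\cX_t^1 = \cX_t$ is immediate. For the inductive step, the only way an action is dropped going from $\cX_t^{l'}$ to $\cX_t^{l'+1}$ is through the elimination rule in Algorithm~\ref{alg:Peeling}, which keeps only those $x$ with $\hf_{t-1}^{l'}(x) \ge \max_{x}\hf_{t-1}^{l'}(x) - 2^{-l'+1}\hbeta_{t-1}^{l'}$. The key is to convert the confidence-set membership $f^\star \in \calF_{t-1}^{l'}$ into a uniform control $|\hf_{t-1}^{l'}(x) - f^\star(x)| \le D_t^{l'}(x)\cdot\hbeta_{t-1}^{l'}$ on the surviving candidates via the definition \eqref{eq:uncertainty} of the uncertainty $D_t^{l'}$. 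Since the elimination branch is only entered when $D_t^{l'}(x) \le 2^{-l'}$ for all $x \in \cX_t^{l'}$, this gives $|\hf_{t-1}^{l'}(x) - f^\star(x)| \le 2^{-l'}\hbeta_{t-1}^{l'}$. Applying this bound at both $x_t^\star$ and the empirical maximizer, and using $f^\star(x_t^\star) \ge f^\star(\cdot)$, a standard optimism-to-elimination chain shows the empirical reward of $x_t^\star$ cannot fall below the threshold, so $x_t^\star$ is retained.

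\textbf{Part II: per-level regret bound.} Fix $t \in \Psi_T^l$, i.e.\ a round whose action was selected at level $l$ via the large-uncertainty branch $D_t^l(x_t) > 2^{-l}$. Because $x_t$ was chosen by optimistic maximization over $\calF_{t-1}^l$ among candidates $\cX_t^l$, and because $x_t^\star \in \cX_t^l$ by Part I, I would bound the instantaneous regret $f^\star(x_t^\star) - f^\star(x_t)$ by relating both to the empirical estimate $\hf_{t-1}^l$. Concretely, $f^\star(x_t^\star) \le \hf_{t-1}^l(x_t^\star) + 2^{-l+1}\hbeta_{t-1}^l$ (again from $f^\star \in \calF_{t-1}^l$ and the uncertainty control, noting the candidates in $\cX_t^l$ passed the previous level's $D \le 2^{-l+1}$ test), while the elimination rule guarantees the surviving $x_t$ satisfies $\hf_{t-1}^l(x_t^\star) \le \hf_{t-1}^l(x_t) + 2^{-l+1}\hbeta_{t-1}^l$, and finally $\hf_{t-1}^l(x_t) \le f^\star(x_t) + 2^{-l}\hbeta_{t-1}^l$. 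Chaining these yields a per-step regret of order $2^{-l}\hbeta_{t-1}^l$; summing over $t \in \Psi_T^l$ and replacing $\hbeta_{t-1}^l$ by its monotone upper bound $\hbeta_T^{l-1}$ gives the claimed $2^{-l+3}\hbeta_T^{l-1}\cdot|\Psi_T^l|$, where the factor $8 = 2^3$ absorbs the constants from the three-term chain.

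\textbf{Main obstacle.} The step I expect to be most delicate is the uniform conversion from confidence-set membership to the pointwise bound $|\hf_{t-1}^l(x) - f^\star(x)| \le D_t^l(x)\hbeta_{t-1}^l$, and ensuring the bookkeeping of which level's threshold ($2^{-l}$ versus $2^{-l+1}$) applies at each comparison point. The uncertainty $D_t^l$ is defined via a supremum over $f,f' \in \calF_{t-1}^l$ against a regularized empirical norm, so I must verify that both $\hf_{t-1}^l$ and $f^\star$ lie in $\calF_{t-1}^l$ and that the regularizer $\lambda^l$ is handled consistently between the definition of $D_t^l$ and the confidence set \eqref{eq:confidence_set_peel}. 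The ``large enough $l$ and $t$'' qualifiers are needed so that the additive slack terms $\Delta_\upsilon$, $\lambda^l$, and the $\iota'(\delta)(\sigma_\eta^2 + c_\eta)$ contribution from \eqref{eq:var} are dominated by the leading $2^{-l}\hbeta$ scale; I would track these to confirm they do not inflate the constant beyond the stated factor of $8$.
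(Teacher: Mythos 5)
Your overall plan coincides with the paper's proof: an induction over levels to show $x_t^\star$ survives elimination, followed by a three-term chain bounding the instantaneous regret. Part I is correct and matches the paper's argument essentially verbatim. However, Part II as written contains a genuine error in the level bookkeeping, and it is not merely a matter of tracking constants. Your final inequality $\hf_{t-1}^l(x_t) \le f^\star(x_t) + 2^{-l}\hbeta_{t-1}^l$ is contradicted by the algorithm itself: a round $t$ belongs to $\Psi_T^l$ precisely because $x_t$ was selected under the exploration branch $D_t^l(x_t) > 2^{-l}$, so no pointwise accuracy bound of this form is available at level $l$ for $x_t$ --- its level-$l$ uncertainty is large by design. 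Likewise, your inequality (a) bounds $f^\star(x_t^\star) - \hf_{t-1}^l(x_t^\star)$ using the threshold $2^{-l+1}$, which is the level-$(l-1)$ test, and your inequality (b) invokes the elimination rule, which in Algorithm \ref{alg:Peeling} is stated in terms of $\hf_{t-1}^{l-1}$ with slack $2^{-(l-1)+1}\hbeta_{t-1}^{l-1} = 2^{-l+2}\hbeta_{t-1}^{l-1}$, not in terms of $\hf_{t-1}^l$ with slack $2^{-l+1}\hbeta_{t-1}^l$.

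The paper resolves this by running the entire chain at level $l-1$: since $x_t, x_t^\star \in \cX_t^l \subseteq \cX_t^{l-1}$, and $\cX_t^l$ was formed only after every $x \in \cX_t^{l-1}$ passed $D_t^{l-1}(x) \le 2^{-l+1}$, one gets $|f^\star(x) - \hf_{t-1}^{l-1}(x)| \le D_t^{l-1}(x)\,\hbeta_t^{l-1} \le 2^{-l+1}\hbeta_t^{l-1}$ at both points, while membership of both points in $\cX_t^l$ gives $\hf_{t-1}^{l-1}(x_t^\star) - \hf_{t-1}^{l-1}(x_t) \le 2^{-l+2}\hbeta_t^{l-1}$; chaining, $2^{-l+1} + 2^{-l+2} + 2^{-l+1} = 2^{-l+3}$ yields the stated bound. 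Note that the lemma's conclusion features $\hbeta_T^{l-1}$, the level-$(l-1)$ radius --- this is itself a signal that all estimates must be drawn from the previous level. You flagged the $2^{-l}$-versus-$2^{-l+1}$ bookkeeping as your main obstacle, but the fix requires the structural observation above (fall back to the level-$(l-1)$ estimator and uncertainty for every comparison), not just constant tracking, so the chain in your Part II does not go through as written.
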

The details and the proof of the lemma are deferred to Appendix \ref{apss:th:unknown_var}.
Hence, it suffices to bound the size of $\Psi_T^l$. We get via $D_t^l(x_t)/w_t=2^{-l}$ that
\begin{equation*}
{
\begin{aligned}
   |\Psi_T^l| =& 2^{2l}\cdot\sum_{i\in\Psi_{T,l}} \frac{(D_t^l(x_i))^2}{w_i^2} \le 2^{2l}\dim_{1,T}(\calF).
\end{aligned}
}
\end{equation*}
Ultimately, by combining the results above the choice of $\hbeta_t^{l-1}$ in \eqref{eq:hbeta_tl}, we can obtain the bound for each $l$. The final regret is obtained by summing the regret for level $l=l_*,\ldots,L$.

\section{Conclusion}
In this work, we consider contextual bandits under heavy-tailed rewards (rewards with a large range $R$) with general function approximation. The key novelty of our approach is the application of Catoni's mean estimator for non-linear settings based on the observation that excess loss estimation is the correct object to robustify. For the known-variance case, the Catoni-OFUL algorithm combines the adaptive Catoni estimator and the variance-weighted optimization. The algorithm enjoys a variance-based regret bound with only polynomial dependence on $R$. When the per-round variance is unknown, our proposed variance-agnostic Catoni bandit algorithm carefully peels the samples based on their uncertainty and utilizes a plug-in estimator for the sum of variances. The algorithm also obtains a variance-based bound depending on $R$ logarithmically, but has a worse dependence on the eluder dimension. Improving this is left as a future direction. We also provide a lower bound to show that our regret bounds are optimal in the leading-order term.

For the future work, since the Catoni estimator is a general device from robust statistics, it might also be useful to investigate if it enables us to handle other forms of noise, such as adversarial corruption \citep{he2022nearlyb,ye2023corruption,ye2024corruption}. Additionally, while we obtain information-theoretic results in this paper, the algorithms are not easy to implement, both because OFUL-style algorithms are always tricky due to the version space structure, and the function-dependent choice of $\theta$ in the way we invoke the Catoni estimator makes things even harder. It would also be interesting to extend the results to general MDPs. 

\section{Acknowledgment}
Chenlu Ye and Tong Zhang are partially supported by an NSF IIS grant No. 2416897.

\bibliography{myrefs}

\begin{thebibliography}{}

\bibitem[Abbasi-Yadkori and Neu, 2014]{abbasi2014online}
Abbasi-Yadkori, Y. and Neu, G. (2014).
\newblock Online learning in mdps with side information.
\newblock {\em arXiv preprint arXiv:1406.6812}.

\bibitem[Abbasi-Yadkori et~al., 2011]{abbasi2011improved}
Abbasi-Yadkori, Y., P{\'a}l, D., and Szepesv{\'a}ri, C. (2011).
\newblock Improved algorithms for linear stochastic bandits.
\newblock {\em Advances in neural information processing systems}, 24.

\bibitem[Agarwal et~al., 2023]{agarwal2023vo}
Agarwal, A., Jin, Y., and Zhang, T. (2023).
\newblock {VO$Q$L}: Towards optimal regret in model-free rl with nonlinear function approximation.
\newblock In {\em The Thirty Sixth Annual Conference on Learning Theory}, pages 987--1063. PMLR.

\bibitem[Audibert and Catoni, 2011]{audibert2011robust}
Audibert, J.-Y. and Catoni, O. (2011).
\newblock Robust linear least squares regression.
\newblock {\em The Annals of Statistics}, 39(5):2766--2794.

\bibitem[Bubeck et~al., 2013]{bubeck2013bandits}
Bubeck, S., Cesa-Bianchi, N., and Lugosi, G. (2013).
\newblock Bandits with heavy tail.
\newblock {\em IEEE Transactions on Information Theory}, 59(11):7711--7717.

\bibitem[Choi et~al., 2020]{choi2020online}
Choi, H., Mela, C.~F., Balseiro, S.~R., and Leary, A. (2020).
\newblock Online display advertising markets: A literature review and future directions.
\newblock {\em Information Systems Research}, 31(2):556--575.

\bibitem[Chu et~al., 2011]{chu2011contextual}
Chu, W., Li, L., Reyzin, L., and Schapire, R. (2011).
\newblock Contextual bandits with linear payoff functions.
\newblock In {\em Proceedings of the Fourteenth International Conference on Artificial Intelligence and Statistics}, pages 208--214. JMLR Workshop and Conference Proceedings.

\bibitem[Cont, 2001]{cont2001empirical}
Cont, R. (2001).
\newblock Empirical properties of asset returns: stylized facts and statistical issues.
\newblock {\em Quantitative finance}, 1(2):223.

\bibitem[Gentile et~al., 2022]{gentile2022achieving}
Gentile, C., Wang, Z., and Zhang, T. (2022).
\newblock Achieving minimax rates in pool-based batch active learning.
\newblock In {\em International Conference on Machine Learning}, pages 7339--7367. PMLR.

\bibitem[He et~al., 2023]{he2023nearly}
He, J., Zhao, H., Zhou, D., and Gu, Q. (2023).
\newblock Nearly minimax optimal reinforcement learning for linear markov decision processes.
\newblock In {\em International Conference on Machine Learning}, pages 12790--12822. PMLR.

\bibitem[He et~al., 2022]{he2022nearlyb}
He, J., Zhou, D., Zhang, T., and Gu, Q. (2022).
\newblock Nearly optimal algorithms for linear contextual bandits with adversarial corruptions.
\newblock {\em Advances in neural information processing systems}, 35:34614--34625.

\bibitem[Huang et~al., 2024]{huang2024tackling}
Huang, J., Zhong, H., Wang, L., and Yang, L. (2024).
\newblock Tackling heavy-tailed rewards in reinforcement learning with function approximation: Minimax optimal and instance-dependent regret bounds.
\newblock {\em Advances in Neural Information Processing Systems}, 36.

\bibitem[Hull, 2012]{hull2012risk}
Hull, J. (2012).
\newblock {\em Risk management and financial institutions,+ Web Site}, volume 733.
\newblock John Wiley \& Sons.

\bibitem[Jebarajakirthy et~al., 2021]{jebarajakirthy2021mobile}
Jebarajakirthy, C., Maseeh, H.~I., Morshed, Z., Shankar, A., Arli, D., and Pentecost, R. (2021).
\newblock Mobile advertising: A systematic literature review and future research agenda.
\newblock {\em International Journal of Consumer Studies}, 45(6):1258--1291.

\bibitem[Kim et~al., 2021]{kim2021improved}
Kim, Y., Yang, I., and Jun, K.-S. (2021).
\newblock Improved regret analysis for variance-adaptive linear bandits and horizon-free linear mixture mdps.
\newblock {\em arXiv preprint arXiv:2111.03289}.

\bibitem[Li and Sun, 2024]{livariance}
Li, X. and Sun, Q. (2024).
\newblock Variance-aware decision making with linear function approximation under heavy-tailed rewards.
\newblock {\em Transactions on Machine Learning Research}.

\bibitem[Lugosi and Mendelson, 2019]{lugosi2019mean}
Lugosi, G. and Mendelson, S. (2019).
\newblock Mean estimation and regression under heavy-tailed distributions: A survey.
\newblock {\em Foundations of Computational Mathematics}, 19(5):1145--1190.

\bibitem[Medina and Yang, 2016]{medina2016no}
Medina, A.~M. and Yang, S. (2016).
\newblock No-regret algorithms for heavy-tailed linear bandits.
\newblock In {\em International Conference on Machine Learning}, pages 1642--1650. PMLR.

\bibitem[Nair et~al., 2013]{nair2013fundamentals}
Nair, J., Wierman, A., and Zwart, B. (2013).
\newblock The fundamentals of heavy-tails: Properties, emergence, and identification.
\newblock In {\em Proceedings of the ACM SIGMETRICS/international conference on Measurement and modeling of computer systems}, pages 387--388.

\bibitem[Pacchiano, 2024]{pacchiano2024second}
Pacchiano, A. (2024).
\newblock Second order bounds for contextual bandits with function approximation.
\newblock {\em arXiv preprint arXiv:2409.16197}.

\bibitem[Russo and Van~Roy, 2013]{russo2013eluder}
Russo, D. and Van~Roy, B. (2013).
\newblock Eluder dimension and the sample complexity of optimistic exploration.
\newblock {\em Advances in Neural Information Processing Systems}, 26.

\bibitem[Shao et~al., 2018]{shao2018almost}
Shao, H., Yu, X., King, I., and Lyu, M.~R. (2018).
\newblock Almost optimal algorithms for linear stochastic bandits with heavy-tailed payoffs.
\newblock {\em Advances in Neural Information Processing Systems}, 31.

\bibitem[Wagenmaker et~al., 2022]{wagenmaker2022first}
Wagenmaker, A.~J., Chen, Y., Simchowitz, M., Du, S., and Jamieson, K. (2022).
\newblock First-order regret in reinforcement learning with linear function approximation: A robust estimation approach.
\newblock In {\em International Conference on Machine Learning}, pages 22384--22429. PMLR.

\bibitem[Wainwright, 2019]{wainwright2019high}
Wainwright, M.~J. (2019).
\newblock {\em High-dimensional statistics: A non-asymptotic viewpoint}, volume~48.
\newblock Cambridge University Press.

\bibitem[Wang et~al., 2024a]{wang2024central}
Wang, K., Kallus, N., and Sun, W. (2024a).
\newblock The central role of the loss function in reinforcement learning.
\newblock {\em arXiv preprint arXiv:2409.12799}.

\bibitem[Wang et~al., 2024b]{wang2024more}
Wang, K., Oertell, O., Agarwal, A., Kallus, N., and Sun, W. (2024b).
\newblock More benefits of being distributional: Second-order bounds for reinforcement learning.
\newblock {\em arXiv preprint arXiv:2402.07198}.

\bibitem[Wang et~al., 2024c]{wang2024model}
Wang, Z., Zhou, D., Lui, J., and Sun, W. (2024c).
\newblock Model-based rl as a minimalist approach to horizon-free and second-order bounds.
\newblock {\em arXiv preprint arXiv:2408.08994}.

\bibitem[Wei et~al., 2020]{wei2020taking}
Wei, C.-Y., Luo, H., and Agarwal, A. (2020).
\newblock Taking a hint: How to leverage loss predictors in contextual bandits?
\newblock In {\em Conference on Learning Theory}, pages 3583--3634. PMLR.

\bibitem[Xue et~al., 2020]{xue2020nearly}
Xue, B., Wang, G., Wang, Y., and Zhang, L. (2020).
\newblock Nearly optimal regret for stochastic linear bandits with heavy-tailed payoffs.
\newblock {\em arXiv preprint arXiv:2004.13465}.

\bibitem[Ye et~al., 2024a]{ye2024towards}
Ye, C., He, J., Gu, Q., and Zhang, T. (2024a).
\newblock Towards robust model-based reinforcement learning against adversarial corruption.
\newblock {\em arXiv preprint arXiv:2402.08991}.

\bibitem[Ye et~al., 2023]{ye2023corruption}
Ye, C., Xiong, W., Gu, Q., and Zhang, T. (2023).
\newblock Corruption-robust algorithms with uncertainty weighting for nonlinear contextual bandits and markov decision processes.
\newblock In {\em International Conference on Machine Learning}, pages 39834--39863. PMLR.

\bibitem[Ye et~al., 2024b]{ye2024corruption}
Ye, C., Yang, R., Gu, Q., and Zhang, T. (2024b).
\newblock Corruption-robust offline reinforcement learning with general function approximation.
\newblock {\em Advances in Neural Information Processing Systems}, 36.

\bibitem[Zhang, 2023a]{TZ23-lt}
Zhang, T. (2023a).
\newblock {\em Mathematical Analysis of Machine Learning Algorithms}.
\newblock Cambridge University Press.
\newblock in press, also available as \url{http://tongzhang-ml.org/lt-book.html}.

\bibitem[Zhang, 2023b]{zhang2023mathematical}
Zhang, T. (2023b).
\newblock {\em Mathematical analysis of machine learning algorithms}.
\newblock Cambridge University Press.

\bibitem[Zhang et~al., 2021]{zhang2021improved}
Zhang, Z., Yang, J., Ji, X., and Du, S.~S. (2021).
\newblock Improved variance-aware confidence sets for linear bandits and linear mixture mdp.
\newblock {\em Advances in Neural Information Processing Systems}, 34:4342--4355.

\bibitem[Zhao et~al., 2023a]{zhao2023nearly}
Zhao, H., He, J., and Gu, Q. (2023a).
\newblock A nearly optimal and low-switching algorithm for reinforcement learning with general function approximation.
\newblock {\em arXiv preprint arXiv:2311.15238}.

\bibitem[Zhao et~al., 2023b]{zhao2023variance}
Zhao, H., He, J., Zhou, D., Zhang, T., and Gu, Q. (2023b).
\newblock Variance-dependent regret bounds for linear bandits and reinforcement learning: Adaptivity and computational efficiency.
\newblock In {\em The Thirty Sixth Annual Conference on Learning Theory}, pages 4977--5020. PMLR.

\bibitem[Zhou and Gu, 2022]{zhou2022computationally}
Zhou, D. and Gu, Q. (2022).
\newblock Computationally efficient horizon-free reinforcement learning for linear mixture mdps.
\newblock {\em arXiv preprint arXiv:2205.11507}.

\bibitem[Zhou et~al., 2021]{zhou2021nearly}
Zhou, D., Gu, Q., and Szepesvari, C. (2021).
\newblock Nearly minimax optimal reinforcement learning for linear mixture markov decision processes.
\newblock In {\em Conference on Learning Theory}, pages 4532--4576. PMLR.

\end{thebibliography}
\bibliographystyle{apalike}

\newpage
\appendix

\section{Notation Table and Additional Related Works}
\subsection{Notation Table}\label{ssec:table}
To improve the readability of this paper, we provide Tables \ref{tab:notation_known} and \ref{tab:notation_unknown} for the notations used in this paper.
 
\begin{table}[h]
\centering 
\begin{tabular}{|c|c|}
\hline
\textbf{Notation} & \textbf{Description} \\
\hline
$[n]$ & $\{1,\ldots,n\}$\\
$R$ & The range for the noise $\eta_t$, $\forall t\in[T]$\\
$\sigma_t$ & The variance for the noise $\eta_t$\\
$L_f$ & The range for any function $f\in\cF$\\
$N$& The $\upsilon$-covering number for the reward function class $\cF$\\
\makecell{$D_\calF(x,\sigma;x_{[t-1]}, \sigma_{[t-1]})$, \\ $D^2_\calF(x;x_{[t-1]}, \sigma_{[t-1]})$} & The eluder coefficients in Definition \ref{def:eluder}\\
\makecell{$\dim(\calF,X,\sigma)$, \\ $\dim_{\alpha,T)}(\calF)$} & The eluder dimension in Definition \ref{def:eluder}\\
$\ctn_\theta(\{Z_i\}_{i\in[t]})$ & Catoni estimator defined in \eqref{eq:catoni}\\
$\iota_0^2$ & $4\log\left(\frac{48R(1+2AR)t^2}{\min(1,a)\epsilon^2\delta}\log(A/a)\right)$\\
$\iota(\delta)$ & $\Theta(\sqrt{\log(RL_fTN/\delta)})$\\
$\iota(\delta)$ & $\sqrt{\log(720R^2L_f^3T^3N^2T^5/\delta)}$\\
$\theta_t(f,f')$ & $\frac{2\iota
(\delta)}{\sqrt{\sum_{i\in[t]}(f(x_i)-f'(x_i))^2/\bsigma_i^2\cdot\Big(1+(2\iota
(\delta))^{-1}\sqrt{\hbeta_{t-1}^2+\lambda}\Big) + \epsilon^2}}$\\
$\hbeta_t$ & The confidence radius, $\Theta(\sqrt{\log(R L_f \calN(\calF,\upsilon)T/\delta)})$\\
$\alpha$ & $1/\sqrt{T}$\\
$\lambda$ & $\Theta(1)$\\
$\upsilon$ & $O(1/L_f^{12}R^4T^{10})$\\
\hline
\end{tabular}
\caption{The Table of Notations for the Known Variance Case.}
\label{tab:notation_known}
\end{table}

\begin{table}[h]
\centering 
\begin{tabular}{|c|c|}
\hline
\textbf{Notation} & \textbf{Description} \\
\hline
$\sigma_\eta^2$ & The union bound for $\sigma_t^2$, $\forall~t\in[T]$\\
$c_\eta$ & The union bound of the ratio: $\Var[\eta_t^2|\calF_t] \le c_{\eta} \Var[\eta_t|\calF_t]$\\
$\Psi_t^l$ & The set of time steps when the update happens for level $l$ in Algorithm \ref{alg:Peeling}\\
$\lambda^l$ & The prameter in the uncertainty $2^{-2l}$ for $l\in[L]$\\
$l_\star$ & $\lceil \log_2(1076\iota'(\delta)) \rceil$\\
$D_t^l(x)$ & $\sup_{f,f'\in\calF_{t-1}^l} \frac{|f(x)-f'(x)|}{\sqrt{\sum_{i\in\Psi_{t-1}^l}(f(x_i)-f'(x_i))^2/w_i^2 + \lambda^l}}$\\
$\hat{b}_t^l$ & $14\iota'(\delta)(2\sigma_\eta^2+c_\eta) + 43\Delta_{\upsilon}+268\lambda^l$\\
$\theta_\Var^{t,l}$ & $(4(2\sigma_\eta^2 + c_{\eta}+L_f^2 +  2^{-2l+4}\cdot(\hbeta^l_{t-1})^2))^{-1}$\\
$\Delta_\upsilon$ & $\Theta(\mathrm{Poly}(L_fR\upsilon T))$\\
$\iota'(\delta)$ & $\Theta\Big(\sqrt{\log\Big(RL_f(\sigma_\eta^2+c_\eta+\Delta_\upsilon+\lambda^l)NLT/\delta\Big)}\Big)$\\
$\theta_t^l(f,f')$ & $\frac{\iota'(\delta)}{\sqrt{2^{-2l}(\hbeta_{t-1}^l)^2(\hVar_t^l+V_t^l(f,f')) + 2^{-4l}}}$\\
$(\hbeta_t^l)^2$ & The confidence radius, $2880(\iota'(\delta))^22^{-2l}\hVar_t^l + 60\iota'(\delta)2^{-2l} +12\Delta_{\upsilon,2} +2\lambda^l$\\
$\Delta_\upsilon$ & $\Theta\big(L_f\upsilon T^2 + L_f^4R^2\iota'(\delta)\upsilon^{0.5}T^{3.5} + L_f^3R^{1.5}\upsilon^{0.25}T^{1.25} + R^3L_f^3\upsilon T + \sqrt{RL_f\upsilon}T\big)$\\
\hline
\end{tabular}
\caption{The Table of Notations for the Unknown Variance Case.}
\label{tab:notation_unknown}
\end{table}

\subsection{Additional Related Works}\label{ssec:Additional Related Works}
\paragraph{Variance-weighted regression.}
Variance-weighted regression has been studied for light-tailed noises for both contextual bandits and Markov Decision Processes (MDPs) with linear and general function approximation. Specifically, \citet{zhou2021nearly,zhou2022computationally} apply variance-weighted regression to obtain second-order bounds for linear contextual bandits under the known variance case. They also use the weighting technique for linear mixture MDPs under unknown variance case, where they assume that the variance can be realized by a linear function class. Similar weighted regression also appears in MDPs with linear and general function approximation to achieve the optimal regret bound \citep{agarwal2023vo,he2023nearly,zhao2023nearly}, and in the adversarial corruption settings to make the algorithm robust to adversarial attacks \citep{he2022nearlyb,ye2023corruption,ye2024corruption,ye2024towards}.

\paragraph{Heavy-tailed rewards in bandits and RL.} The topic of \emph{robustness to heavy-tailed rewards} has received a considerable amount of attention recently. \citet{bubeck2013bandits} are the first to study heavy-tailed rewards in multi-armed bandits. More generally, robust mean estimators \citep{lugosi2019mean} such as median-of-means, truncated mean and Catoni's mean have been applied to linear contextual bandits \citep{medina2016no,shao2018almost,xue2020nearly,huang2024tackling,livariance}.

\section{Proofs for the Known Variance Setting}\label{aps:Proofs for the Unknown Variance Setting}

\subsection{Proof for the Lower Bound}\label{apss:pf_lower_bound}
\begin{proof}[Proof of Theorem \ref{th:lower_bound}]
For any $0\le\sigma\le1/2$, $0\le \eps \le \sigma/2$ and $R> \sqrt{3}$, define three distributions
\begin{align*}
P_{\sigma} = \sigma(1+R^{-1}),
\quad
P_{\sigma,\epsilon}^+ = \left\{
\begin{array}{cc}
    2\sigma, & \text{w.p.}~\frac{\sigma+\epsilon}{2\sigma}, \\
    2\sigma R, & \text{w.p.}~\frac{\sigma+\epsilon}{2\sigma R^2},\\
    0, & \text{w.p.}~1-\frac{(\sigma+\epsilon)(1+R^{-2})}{2\sigma},
\end{array}\right.
\quad
P_{\sigma,\epsilon}^- = \left\{
\begin{array}{cc}
    2\sigma, & \text{w.p.}~\frac{\sigma-\epsilon}{2\sigma}, \\
    2\sigma R, & \text{w.p.}~\frac{\sigma-\epsilon}{2\sigma R^2},\\
    0, & \text{w.p.}~1-\frac{(\sigma-\epsilon)(1+R^{-2})}{2\sigma},
\end{array}\right.
\end{align*}
We have the means
$$
\mu_{\sigma} = \sigma(1+R^{-1}),\qquad \mu_{\sigma,\eps}^+ = (\sigma+\eps)(1+R^{-1}),\qquad \mu_{\sigma,\eps}^- = (\sigma-\eps)(1+R^{-1}),
$$
and variance
\begin{equation}\label{eq:variances}
\begin{aligned}
    &V_{\sigma} = 0,\\
&V_{\sigma,\eps}^+ = (\sigma+\epsilon)(4\sigma - (1+R^{-1})^2\sigma - (1+R^{-1})^2\epsilon) \le 6\sigma^2,\\
&V_{\sigma,\eps}^- = (\sigma-\epsilon)(4\sigma - (1+R^{-1})^2\sigma + (1+R^{-1})^2\epsilon)\le 2\sigma^2. 
\end{aligned}
\end{equation}
Thus, the rewards induced by the last two distributions $P_{\sigma,\epsilon}^+,P_{\sigma,\epsilon}^-$ have large $L_1$ norm ($2\sigma R$) and bounded variances.

Furthermore, we have
\begin{align*}
    \KL(P_{\sigma,\eps}^-\|P_{\sigma,\eps}^+) =& \frac{\sigma-\eps}{2\sigma}\log \frac{\sigma-\eps}{\sigma+\eps} + \frac{\sigma-\eps}{2\sigma R^2}\log \frac{\sigma-\eps}{\sigma+\eps} + \Big(1-\frac{(\sigma-\eps)(1+R^{-2})}{2\sigma}\Big)\log\frac{\sigma+\eps-\frac{\sigma-\eps}{R^2}}{\sigma-\eps-\frac{\sigma+\eps}{R^2}}\\
    \le& (1+R^{-2})\frac{\sigma-\epsilon}{2\sigma}\log\frac{\sigma-\epsilon}{\sigma+\epsilon} + (1+R^{-2})\frac{\sigma+\epsilon}{2\sigma}\log\frac{\sigma+\epsilon}{\sigma-\epsilon}\\
    \le& 4(1+R^{-2})\frac{\epsilon^2}{\sigma^2}.
\end{align*}

Fix a policy $\pi$. Now, we construct two $2$-armed bandits and let the context space $\mathcal{X}=\emptyset$. For the first bandit $B_1$, the reward of the first arm $R_1(a_1)\sim P_{\sigma}$, and the reward of the second arm $R_1(a_2)\sim P_{\sigma,\epsilon}^-$. Thus, the first arm $a_1$ is the optimal arm for $B_1$. For the second bandit, we have $R_2(a_1)\sim P_{\sigma}$, and $R_2(a_2)\sim P_{\sigma,\epsilon}^+$, and the second arm is the optimal arm for $B_2$ but with a large variance. For $i=1,2$, let $\P_i$ denote the distribution generated by the bandit environment $B_i$, and let $\E_i$ denote the expectations under $\P_i$. Then, we have
$$
\E_1[R_T] \ge \P_1(N_T(1)\le T/2)\cdot \frac{T\eps}{2},\quad \E_{2}[R_T] \ge \P_2(N_T(1)\ge T/2)\cdot \frac{T\eps}{2}.
$$
Then, by Bretagnolle-Huber inequality, we have
\begin{align*}
    \E_1[R_T] + \E_{2}[R_T] \ge \frac{T\eps}{2}\big(\P_1(N_T(1)\le T/2) + \P_{2}(N_T(1)> T/2)\big)
    \ge \frac{T\eps}{4}\exp(-\KL(\P_1\|P_{2})).
\end{align*}
We also have
$$
\KL(\P_1\|\P_{2}) = \E_1[N_T(2)] \KL(P_{\sigma,\eps}^-\|P_{\sigma,\eps}^+) \le \frac{4(1+R^{-2})T\eps^2}{\sigma^2}.
$$
Thus, we have
$$
\E_1[R_T] + \E_{2}[R_T] \ge \frac{T\eps}{4}\exp\Big(-\frac{4(1+R^{-2}) T\eps^2}{\sigma^2}\Big)
$$
By choosing $\eps=\sqrt{\sigma^2/4(1+R^{-2})T}$, we have
\begin{align}\label{eq:lower_bound0}
    \max_i \{\E_i[R_T]\} = \Omega(\sqrt{\sigma^2 T})
    .    
\end{align}
Since the variance accumulates only when the arm $x_2$ is pulled,
\begin{align*}
&T \ge \E_1[N_T(2)] = \E_1\Big[\sum_{t=1}^T I(x_t=x_2)\Big] \ge \frac{\E_1[\sum_{t=1}^T\sigma_t^2]}{6\sigma^2},\\
&T \ge \E_2[N_T(2)] = \E_2\Big[\sum_{t=1}^T I(x_t=x_2)\Big] \ge \frac{\E_2[\sum_{t=1}^T\sigma_t^2]}{2\sigma^2},
\end{align*}
where the last inequalities for both lines uses \eqref{eq:variances}. Then, we have
$$
T\ge \max\Big\{\frac{\E_1[\sum_{t=1}^T\sigma_t^2]}{6\sigma^2}, \frac{\E_2[\sum_{t=1}^T\sigma_t^2]}{2\sigma^2}\Big\},
$$
which together with \eqref{eq:lower_bound0} imply that
$$
\max_{i\in\{1,2\}} \{\E_i[R_T]\} = \Omega\bigg(\sqrt{\max\Big\{\E_1\sum_{t=1}^T\sigma_t^2, \E_2\sum_{t=1}^T\sigma_t^2\Big\}}\bigg).
$$
\end{proof}

\subsection{Proof for Theorem \ref{th:known_var}}\label{apss:pf_th1}

\begin{theorem}[Formal version of Theorem \ref{th:known_var}]\label{th:known_var_formal}
Under Algorithm~\ref{alg:Catoni-VOFUL} with the parameter $\alpha=1/\sqrt{T}$, $\lambda=\Theta(1)$, $\upsilon=O(1/L_f^{12}R^4T^{10})$ and
\begin{align}
    \hbeta_t = \Theta(\sqrt{\log(R L_f \calN(\calF,\upsilon)T/\delta)}),\label{eq:def-beta-ctn}
\end{align}
with probability $1-2\delta$, we can bound the regret by 
\begin{align*}
    R_T =& \widetilde{O}\Big(L_f\sqrt{\sum_{t\in[T]}\sigma_t^2\cdot \dim_{\frac{1}{\sqrt{T}}, T}(\calF)\cdot\log \calN(\calF,\upsilon)}\\
    &\qquad +L_f\cdot \log \calN(\calF,\upsilon)\cdot \dim_{\frac{1}{\sqrt{T}},T}(\calF)\Big).
\end{align*}
\end{theorem}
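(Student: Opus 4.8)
The plan is to show that, conditioned on the good event $f^\star \in \calF_t$ for all suitably large $t$ (established in the proof sketch via Lemmas~\ref{lm:conv_ctn} and \ref{lm:L_f(f)}), the per-round regret is controlled by the uncertainty of the chosen action, and then to sum these uncertainties using the eluder dimension. First I would fix the high-probability event on which both the concentration Lemma~\ref{lm:conv_ctn} holds for all pairs $f,f' \in \calF_{t-1}$ (via a union bound over the $\upsilon$-cover, costing $\log \calN(\calF,\upsilon)$) and $f^\star \in \calF_t$. The choice $\hbeta_t = \Theta(\sqrt{\log(RL_f\calN(\calF,\upsilon)T/\delta)})$ in \eqref{eq:def-beta-ctn} is exactly what makes the confidence set valid, and because $\hbeta_t$ depends on $R$ only logarithmically, the final bound will inherit this mild dependence.

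Next I would bound the instantaneous regret. Since $x_t = \argmax_{x\in\calX_t}\max_{f\in\calF_{t-1}} f(x)$ and $f^\star \in \calF_{t-1}$, optimism gives $f^\star(x_t^\star) \le \max_{f\in\calF_{t-1}} f(x_t)$, so the regret $f^\star(x_t^\star) - f^\star(x_t)$ is dominated by the width $\sup_{f_1,f_2\in\calF_{t-1}} |f_1(x_t)-f_2(x_t)|$. I would relate this width to the eluder coefficient $D_{\calF_{t-1}}(x_t; x_{[t-1]},\bsigma_{[t-1]})$: by the definition in Definition~\ref{def:eluder}, any $f_1,f_2 \in \calF_{t-1}$ satisfy $(f_1(x_t)-f_2(x_t))^2 \le D^2_{\calF}(x_t;\cdot)\,(\sum_{i\in[t-1]}(f_1(x_i)-f_2(x_i))^2/\bsigma_i^2 + \lambda)$, and membership in the confidence set bounds the denominator sum by $O(\hbeta_{t-1}^2)$. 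Hence the per-round regret is $\widetilde{O}(\bsigma_t \cdot D_{\calF_{t-1}}(x_t;\cdot))$ after accounting for the $\bsigma_t$ normalization, where the weight $\bsigma_t = \max(\alpha, \sigma_t, \sqrt{4\iota(\delta)L_f D_{\calF_{t-1}}(x_t;\cdot)})$.

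The summation step is where the two terms in the bound emerge. I would split $\bsigma_t$ according to which of the three quantities in its definition achieves the maximum. Applying Cauchy--Schwarz to $\sum_t \bsigma_t \min(1, D_{\calF_{t-1}})$ separates into $\sqrt{\sum_t \bsigma_t^2}\cdot\sqrt{\sum_t \min(1,D^2_{\calF_{t-1}})}$; the second factor is exactly $\dim_{\alpha,T}(\calF)$ by definition, with $\alpha = 1/\sqrt{T}$. For $\sum_t \bsigma_t^2$, the contribution from the $\sigma_t$ branch yields the leading $\sqrt{\sum_t \sigma_t^2 \cdot \dim\cdot\log\calN}$ term, the $\alpha = 1/\sqrt{T}$ branch contributes a lower-order $\widetilde{O}(1)$ term, and the branch where $\bsigma_t^2 \asymp \iota(\delta)L_f D_{\calF_{t-1}}$ feeds back a term proportional to $\sum_t D_{\calF_{t-1}} = \widetilde O(\dim)$, producing the second $\log\calN \cdot \dim$ term. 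The choice $\upsilon = O(1/L_f^{12}R^4 T^{10})$ ensures the discretization error from passing between $\calF$ and its cover is negligible relative to the main terms.

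The main obstacle I expect is the self-referential structure of the weights: $\bsigma_t$ contains $D_{\calF_{t-1}}(x_t;\cdot)$, which itself depends on the weights $\bsigma_{[t-1]}$ used in the earlier covariance, so the elliptical-potential-style summation bound for $\sum_t \min(1, D^2_{\calF_{t-1}})$ must be applied with weights bounded below by $\alpha = 1/\sqrt{T}$ (justifying the subscript in $\dim_{1/\sqrt{T},T}$). Carefully verifying that the weighted potential argument still closes despite the $\sqrt{D_{\calF_{t-1}}}$ term inside $\bsigma_t$ — i.e. that solving the induced inequality $\bsigma_t^2 \lesssim \sigma_t^2 + \iota L_f \bsigma_t \cdot (D_{\calF_{t-1}}/\bsigma_t)$ does not blow up — is the delicate quantitative heart of the argument, and is precisely where the extra additive $\dim\cdot\log\calN$ term (rather than a pure $\sqrt{\cdot}$ bound) comes from.
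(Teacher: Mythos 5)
Your proposal follows essentially the same route as the paper's proof: condition on the good events from the concentration and confidence-set lemmas (Parts I--II), bound the per-round regret by optimism plus the confidence width $\bsigma_t D_{\calF_{t-1}}(x_t,\bsigma_t;\cdot)\hbeta_{t-1}$, and then sum by splitting according to which branch of $\bsigma_t$ attains the maximum, using Cauchy--Schwarz with the eluder dimension for the $\sigma_t/\alpha$ branches and the self-referential identity $\bsigma_t D_{\calF_{t-1}}(x_t,\bsigma_t;\cdot) \asymp \iota(\delta) L_f D^2_{\calF_{t-1}}(x_t,\bsigma_t;\cdot)$ for the uncertainty branch, which is exactly how the paper produces the additive $L_f \log\calN(\calF,\upsilon)\cdot\dim$ term. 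The argument and its delicate points (cover union bound, weight lower bound $\alpha = 1/\sqrt{T}$, feedback of $D$ inside $\bsigma_t$) match the paper's Part III, so the proposal is correct as an outline of the same proof.
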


\paragraph{Notations} In the following analysis, we use the short-hand notation for any $f,f'\in\calF$
$$
V_t(f,f'):= \sum_{i\in[t]}\frac{(f(x_i)-f'(x_i))^2}{\bsigma_i^2}.
$$
Recall that we define the excess loss and expected loss: for any $f,f'\in\calF$
\begin{align*}
    L_t(f,f') &= \sum_{i\in[t]}\frac{1}{\bsigma_i^2}(f'(x_i) - f(x_i))^2
    + 2t\ctn_{\theta_t(f,f')}(\{Z_i(f,f')\}_{i\in[t]}),\\
    R_t(f,f') &= \sum_{i\in[t]} \frac{1}{\bsigma_i^2} \E_i\big[ (f(x_i)-f'(x_i))^2 + 2(f(x_i)-f'(x_i))(f'(x_i)-y_i) \big]\\
    &= \sum_{i\in[t]} \frac{1}{\bsigma_i^2} \big[ (f(x_i)-f'(x_i))^2 + 2(f(x_i)-f'(x_i))(f'(x_i)-f^\star(x_i)) \big],
\end{align*}
where we define
\begin{align*}
    &Z_i(f,f') = \frac{1}{\bsigma_i^2}(f(x_i)-f'(x_i))(f'(x_i)-y_i),\\
    &\theta_t(f,f') = \frac{2\iota
(\delta)}{\sqrt{V_t(f,f')\Big(1+(2\iota
(\delta))^{-1}\sqrt{\hbeta_{t-1}^2+\lambda}\Big) + \epsilon^2}},\\
    &\iota(\delta) = \sqrt{\log\left(\frac{720R^2L_f^3N^2T^5}{\delta}\right)}.
\end{align*}
We also use the short-hand notation for the covering number $N:=\calN(\calF,\upsilon)$.

\paragraph{Part I: Concentration of excess loss}
To begin with, we focus on proving the concentration between $L_t(f,f')$ and $R_t(f,f')$. We first consider two fixed functions $f,f'\in\calF$.

\begin{lemma}[Formal Version of Lemma \ref{lm:conv_ctn}]\label{lm:conv_ctn_formal}
    For each time step $t\ge 3\iota^2(\delta)$ and two fixed $f,f'\in\calF_{t-1}$, if we take $\alpha=1/\sqrt{T}$ and $\epsilon=1$, we have with probability at least $1-\delta/N^2$,
    \begin{align*}
        \Big|L_t(f,f') - R_t(f,f') \Big|
        =& 2\Big|t\ctn_{\theta_t(f,f')}(\{Z_i(f,f')\}_{i\in[t]}) - \sum_{i\in[t]}\frac{1}{\bsigma_i^2}(f(x_i)-f'(x_i))(f'(x_i)-f^\star(x_i)) \Big|\\
        \le&\frac{1}{2}V_t(f,f') + 16\iota(\delta)(1+\frac{\sqrt{\lambda}}{2}) + 32\iota^2(\delta) + 5\iota(\delta) + \frac{1}{6}\hbeta_{t-1}^2.
    \end{align*}
\end{lemma}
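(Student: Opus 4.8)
The plan is to recognize the left-hand side as a Catoni deviation and invoke the concentration bound of Lemma~\ref{coro:catoni} applied to the sequence $Z_i := Z_i(f,f')$. Writing $\mu_i := \E[Z_i \mid \calH_{i-1}] = \bsigma_i^{-2}(f(x_i)-f'(x_i))(f'(x_i)-f^\star(x_i))$ (using $\E[y_i\mid x_i]=f^\star(x_i)$) and $\bmu = t^{-1}\sum_i\mu_i$, the quantity to be bounded is exactly $2t\,|\ctn_{\theta_t}(\{Z_i\})-\bmu|$, so the lemma, taking the variance proxy $V = V_t(f,f')$ and multiplying through by $2t$, gives
\[
\Big|L_t(f,f') - R_t(f,f')\Big| \le 2\theta_t\Big(V_t(f,f') + \sum_{i\in[t]}(\mu_i-\bmu)^2\Big) + \frac{8\iota_0^2}{\theta_t} + 2\epsilon.
\]
Three inputs must be verified: (i) the range $|Z_i|\le 2L_f(2L_f+R)T$, which follows from $\bsigma_i\ge\alpha=1/\sqrt T$ and enters only logarithmically, explaining the high powers of $R,L_f,T$ inside $\iota(\delta)$; (ii) the conditional variance, where $Z_i-\mu_i = -\bsigma_i^{-2}(f(x_i)-f'(x_i))\eta_i$ yields $\E[(Z_i-\mu_i)^2\mid\calH_{i-1}] = \bsigma_i^{-4}(f(x_i)-f'(x_i))^2\sigma_i^2 \le \bsigma_i^{-2}(f(x_i)-f'(x_i))^2$ since $\sigma_i\le\bsigma_i$, so the sum is at most $V_t(f,f')$; and (iii) that the data-dependent $\theta_t(f,f')$ lies in a fixed interval $[a,A]$, which is precisely where the uniform-in-$\theta$ form of Lemma~\ref{coro:catoni} is needed.

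The crux is controlling the mean-fluctuation term $\sum_i(\mu_i-\bmu)^2\le\sum_i\mu_i^2$ and showing it is absorbed into the denominator of $\theta_t$. Writing $\mu_i^2=a_ib_i$ with $a_i=\bsigma_i^{-2}(f(x_i)-f'(x_i))^2$ and $b_i=\bsigma_i^{-2}(f'(x_i)-f^\star(x_i))^2$, I would bound $\sum_i\mu_i^2\le(\max_i b_i)\sum_i a_i=(\max_i b_i)\,V_t(f,f')$. The eluder bonus baked into the weight, $\bsigma_i^2\ge 4\iota(\delta)L_f\,D_{\calF_{i-1}}(x_i)$, is exactly what forces $\max_i b_i=O(1)$: combining the trivial bound $b_i\le 4L_f^2/\bsigma_i^2$ with the eluder bound $b_i\le D_{\calF_{i-1}}(x_i)^2(V_{i-1}(f',f^\star)+\lambda)/\bsigma_i^2$ (valid once $f^\star\in\calF_{i-1}$, which holds on the inductive good event) through $\min(u,v)\le\sqrt{uv}$ gives
\[
b_i \le \frac{2L_f\, D_{\calF_{i-1}}(x_i)\sqrt{V_{i-1}(f',f^\star)+\lambda}}{\bsigma_i^2} \le \frac{\sqrt{V_{i-1}(f',f^\star)+\lambda}}{2\iota(\delta)},
\]
where the last step cancels one factor of $D_{\calF_{i-1}}(x_i)$ and one of $L_f$ against $\bsigma_i^2\ge 4\iota(\delta)L_f D_{\calF_{i-1}}(x_i)$. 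Bounding $V_{i-1}(f',f^\star)\le 4\hbeta_{t-1}^2$ (both $f',f^\star$ lie within $\hbeta_{t-1}$ of $\hf_{t-1}$ in the weighted norm) then yields $\max_i b_i = O(\hbeta_{t-1}/\iota(\delta))=O(1)$, so that $V_t(f,f')+\sum_i(\mu_i-\bmu)^2$ is at most a constant multiple of $G:=V_t(f,f')\big(1+(2\iota(\delta))^{-1}\sqrt{\hbeta_{t-1}^2+\lambda}\big)+\epsilon^2$, the quantity under the square root in $\theta_t(f,f')$.

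It then remains to substitute $\theta_t = 2\iota(\delta)/\sqrt G$ and simplify. Since the bracketed term is $\lesssim G$, the first Catoni term is $\lesssim 4\iota(\delta)\sqrt G$ and the second is $8\iota_0^2/\theta_t=(4\iota_0^2/\iota(\delta))\sqrt G$; summing and using $\sqrt G\le\sqrt{V_t(f,f')}\,\sqrt{1+(2\iota(\delta))^{-1}\sqrt{\hbeta_{t-1}^2+\lambda}}+\epsilon$, a single AM-GM step ($c\sqrt{V_t}\le\tfrac14 V_t+c^2$) peels off the linear term $\tfrac12 V_t(f,f')$ and leaves logarithmic constants that I would collect into $16\iota(\delta)(1+\sqrt\lambda/2)+32\iota^2(\delta)+5\iota(\delta)+\tfrac16\hbeta_{t-1}^2$, the $\tfrac16\hbeta_{t-1}^2$ piece coming from AM-GM on the cross term $\iota(\delta)\sqrt{\hbeta_{t-1}^2+\lambda}$; rescaling the failure probability of Lemma~\ref{coro:catoni} to $\delta/N^2$ carries through. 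I expect the main obstacle to be exactly step two: a naive application of the eluder bound alone only gives $\max_i b_i=O(\iota(\delta))$, overshooting the target constant by a full factor of $\iota(\delta)$, and it is the geometric-mean trick exploiting the extra $L_f$ inside the weight that makes the mean-fluctuation term line up with the $\theta_t$ denominator.
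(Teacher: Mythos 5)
Your proposal is correct and follows essentially the same route as the paper's proof: invoke the uniform-in-$\theta$ Catoni concentration (Lemma~\ref{coro:catoni}) with variance proxy $V_t(f,f')$ (via $\sigma_i\le\bsigma_i$), control $\sum_i\mu_i^2$ by $(\max_i b_i)\,V_t(f,f')$ where the geometric-mean trick combined with the eluder-based weight $\bsigma_i^2\ge 4\iota(\delta)L_f D_{\calF_{i-1}}(x_i)$ and the confidence-set membership of $f'$ and $f^\star$ gives $\max_i b_i\le(2\iota(\delta))^{-1}\sqrt{\hbeta_{t-1}^2+\lambda}$, then substitute the matching $\theta_t$ and peel off $\tfrac12 V_t(f,f')$ by AM-GM. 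Your identification of the geometric-mean step as the crux (and the need for $f^\star$ to lie in the confidence set, which the paper uses implicitly) is exactly the mechanism in the paper's argument, so there is nothing substantive to add.
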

\begin{proof}
We first compute the expectation of $Z_i(f,f')$ as
\begin{align*}
    \mu_i(f,f') = \frac{1}{\bsigma_i^2}(f(x_i)-f'(x_i))(f'(x_i)-f^\star(x_i)),
\end{align*}
and bound the sum of variance by
\begin{align*}
    \sum_{i\in[t]} \Var[Z_i(f,f')] =& \sum_{i\in[t]}\E\Big[\frac{1}{\bsigma_i^2}(f(x_i)-f'(x_i))^2(f^\star(x_i)-y_i)^2\Big]\\
    \le& \sum_{i\in[t]}\frac{(f(x_i)-f'(x_i))^2}{\bsigma_i^2}=V_t(f,f').
\end{align*}
We can also bound $\theta_t(f,f')\in[a,A]$ by choosing
\begin{align*}
    A = \frac{\iota(\delta)}{\epsilon},\qquad a = \frac{\iota(\delta)}{\sqrt{8L_f^2t/\alpha^2+\epsilon^2}}.
\end{align*}
Hence, given choice of $\alpha = 1/\sqrt{T}$ and $\epsilon = 1$, we have
\begin{align*}
    \log\left(\frac{48R(1+2AR)t^2}{\min(1,a)\epsilon^2\cdot(\delta/N^2T)}\log(A/a)\right) \le \log\left(\frac{720R^2L_f^3T^4}{\delta/N^2T}\right) \le \iota^2(\delta)
\end{align*}
Thus, for any time step
$$
t\ge 6\iota^2(\delta) \ge 4\iota^2(\delta) + 2 \log\left(\frac{48R(1+2AR)t^2}{\min(1,a)\epsilon^2\cdot(\delta/N^2T)}\log(A/a)\right),
$$
by using Lemma \ref{coro:catoni} with $\alpha=1/\sqrt{T}$ and $\epsilon=1$, we
have with probability at least $1-\delta/N^2T$,
\begin{align}\label{eq:aaa}
    & \Big|t\ctn_{\theta_t(f,f')}(\{Z_i(f,f')\}_{i\in[t]}) - \sum_{i\in[t]}\frac{1}{\bsigma_i^2}(f(x_i)-f'(x_i))(f'(x_i)-f^\star(x_i)) \Big| \notag\\
    \le& \theta_t(f,f') \Big(V_t(f,f') + \sum_{i\in[t]}\frac{1}{\bsigma_i^4}(f(x_i)-f'(x_i))^2(f'(x_i)-f^\star(x_i))^2 \Big) + \frac{4\iota^2(\delta)}{\theta_t(f,f')} + \epsilon \notag\\
    \le& \theta_t(f,f')V_t(f,f')\Big(1 + \max_{i\in[t]}\frac{1}{\bsigma_i^2}(f'(x_i)-f^\star(x_i))^2 \Big) + \frac{4\iota^2(\delta)}{\theta_t(f,f')} + \epsilon \notag\\
    \le& \theta_t(f,f')V_t(f,f')\Big(1 + \frac{1}{2\iota(\delta)}\cdot\sqrt{\hbeta^2_{t-1}+\lambda} \Big) + \frac{4\iota^2(\delta)}{\theta_t(f,f')} + \epsilon,
\end{align}
where the last inequality uses the definition of the weight $\bsigma_i^2\ge 4\iota(\delta)L_fD_f(x_i,x_{[i-1]},\bsigma_{[i-1]})$ and $f,f'\in\calF_{t-1
}\subset\calF_{i-1}$ to get for any $i\in[t]$
\begin{align*}
    \frac{1}{\bsigma_i^2}(f'(x_i)-f^\star(x_i))^2 \le& \frac{L_f}{\bsigma_i^2}\cdot\sup_{f,f'\in\calF_{i-1}}\frac{|f'(x_i)-f^\star(x_i)|}{\sqrt{\sum_{k\in[i-1]}\frac{1}{\bsigma_k^2}(f'(x_k)-f^\star(x_k))^2+\lambda}} \cdot \sqrt{\sum_{k\in[i-1]}\frac{1}{\bsigma_k^2}(f'(x_k)-f^\star(x_k))^2+\lambda}\\
    \le& \frac{1}{4\iota(\delta)}\sqrt{2\sum_{k\in[t-1]}\frac{1}{\bsigma_k^2}(f'(x_k)-\hf_{t-1}(x_k))^2 + 2\sum_{k\in[i-1]}\frac{1}{\bsigma_k^2}(f^\star(x_k)-\hf_{t-1}(x_k))^2 + \lambda}\\
    \le& \frac{1}{2\iota(\delta)}\cdot\sqrt{\hbeta^2_{t-1}+\lambda},
\end{align*}
where the second inequality uses the Cauchy-Schwartz inequality, and the last inequality uses the definition of $\calF_{i-1}$.
Via the choice of 
$$
\theta_t(f,f') = \frac{2\iota
(\delta)}{\sqrt{V_t(f,f')\Big(1+(2\iota
(\delta))^{-1}\sqrt{\hbeta_{t-1}^2+\lambda}\Big) + \epsilon^2}},
$$
we get the right-hand side of \eqref{eq:aaa} is upper-bounded by
\begin{align*}
    & 4\iota(\delta)\sqrt{V_t(f,f')\Big(1+(2\iota
(\delta))^{-1}\sqrt{\hbeta_{t-1}^2+\lambda}\Big) + \epsilon^2} + \epsilon\\
    \le& 4\iota(\delta)\sqrt{1+\frac{\sqrt{\lambda}}{2}}\cdot\sqrt{V_t(f,f')} + 2\sqrt{2\iota(\delta)}\cdot\sqrt{V_t(f,f')\hbeta_{t-1}} + 5\iota(\delta)\epsilon \\
    \le& \frac{1}{4}V_t(f,f') + 16\iota(\delta)(1+\frac{\sqrt{\lambda}}{2}) + \frac{1}{4}V_t(f,f') + 96\iota^2(\delta) + \frac{1}{6}\hbeta_{t-1}^2 + 5\iota(\delta)\\
    =& \frac{1}{2}V_t(f,f') + 16\iota(\delta)(1+\frac{\sqrt{\lambda}}{2}) + 96\iota^2(\delta) + 5\iota(\delta) + \frac{1}{6}\hbeta_{t-1}^2,
\end{align*}
which concludes the proof by taking the union bound over $t\in[T]$.
\end{proof}

\begin{lemma}\label{lm:ctn_lipshitz}
    For any $f,f'\in\calF$, there exist two $f_\upsilon,f_\upsilon'\in\calF_\upsilon$ such that $\|f-f_\upsilon\|_\infty\le\upsilon$, $\|f'-f_\upsilon'\|_\infty\le\upsilon$. Then, we have
    \begin{align*}
    &\left|\ctn_{\theta_t(f,f')}(\{Z_i(f,f')\}_{i\in[t]}) - \ctn_{\theta_t(f_\upsilon,f_\upsilon')}(\{Z_i(f_\upsilon,f_\upsilon')\}_{i\in[t]})\right|\\
    &\qquad\le \frac{360L_f^4R^2\iota(\delta)\sqrt{\upsilon t}}{\alpha^4\epsilon^2} + \frac{90L_f^3R^{1/2}(\upsilon t)^{1/4}}{\iota(\delta)\alpha^{5/2}\epsilon^{1/2}}.
\end{align*}
\end{lemma}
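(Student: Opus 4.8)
The goal is to show that replacing $f,f'$ by their $\upsilon$-cover elements $f_\upsilon,f_\upsilon'$ moves the Catoni estimate by a quantity that vanishes as $\upsilon\to 0$. Since $\ctn_{\theta}(\{Z_i\}_{i\in[t]})$ depends on its inputs only through the pair $(\theta,\{Z_i\})$, I would first decompose the perturbation into its two independent sources: the data points $Z_i$ and the scalar parameter $\theta_t$. To control the data, I write $Z_i(f,f') = \bsigma_i^{-2}(f(x_i)-f'(x_i))(f'(x_i)-y_i)$ and apply an add--subtract (product rule) estimate using $\|f-f_\upsilon\|_\infty,\|f'-f_\upsilon'\|_\infty\le\upsilon$, the boundedness $|f|,|f'|\le L_f$ and $|y_i|\le L_f+R$, and $\bsigma_i\ge\alpha$; this yields $|Z_i(f,f')-Z_i(f_\upsilon,f_\upsilon')| = O((L_f+R)\upsilon/\alpha^2)$ uniformly in $i$, hence $\sum_{i\in[t]}|Z_i(f,f')-Z_i(f_\upsilon,f_\upsilon')| = O((L_f+R)\upsilon t/\alpha^2)$. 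For the parameter, $\theta_t$ depends on the data only through $V_t(f,f')$, and the same estimate gives $|V_t(f,f')-V_t(f_\upsilon,f_\upsilon')| = O(L_f\upsilon t/\alpha^2)$. Differentiating the explicit formula for $\theta_t$ in $V_t$, whose radicand is at least $\epsilon^2$, transfers this into a bound of the form $|\theta_t(f,f')-\theta_t(f_\upsilon,f_\upsilon')| = O(\iota(\delta)L_f\upsilon t/(\alpha^2\epsilon^3))$.

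Next I would convert these two perturbations into a displacement of the zero. Let $\hat Z=\ctn_{\theta}(\{Z_i\})$ and $\tilde Z=\ctn_{\theta'}(\{Z_i'\})$ denote the two estimates, where primes mark the cover quantities. By definition $\sum_{i\in[t]}\Psi(\theta(Z_i-\hat Z))=0$, so I evaluate the perturbed defining function $G(x):=\sum_{i\in[t]}\Psi(\theta'(Z_i'-x))$ at $x=\hat Z$ and use that $\Psi$ is $1$-Lipschitz (indeed $|\Psi'|\le 1$ everywhere) to obtain $|G(\hat Z)|\le \theta'\sum_i|Z_i'-Z_i| + |\theta'-\theta|\sum_i|Z_i-\hat Z|$, where $\sum_i|Z_i-\hat Z| = O(tL_f(L_f+R)/\alpha^2)$ because the Catoni zero always lies in $[\min_i Z_i,\max_i Z_i]$. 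Since $G$ is strictly decreasing with zero $\tilde Z$, the mean value theorem gives $|G(\hat Z)| = |G(\hat Z)-G(\tilde Z)| = \theta'|\hat Z-\tilde Z|\sum_i\Psi'(\xi_i)$ for intermediate points $\xi_i$, whence $|\hat Z-\tilde Z| \le |G(\hat Z)|/(\theta'\sum_i\Psi'(\xi_i))$.

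The crux, and the step I expect to be the main obstacle, is the lower bound on $\sum_i\Psi'(\xi_i)$. Because $\Psi'(u)=(1+|u|)/(1+|u|+u^2/2)\to 0$ as $|u|\to\infty$, the defining function flattens whenever the rescaled residuals $\theta'(Z_i'-x)$ are large, so the zero is ill-conditioned exactly in the heavy-tailed regime that motivates the estimator. I would bound $|\xi_i|\le 2\theta' M$ with $M=O(L_f(L_f+R)/\alpha^2)$ (again using that both zeros lie within the data range) to get $\Psi'(\xi_i)\ge(1+2\theta'M+2\theta'^2M^2)^{-1}$, and feed this into the displacement bound. The two summands in the statement, with their $\sqrt{\upsilon t}$ and $(\upsilon t)^{1/4}$ scaling and their opposite dependence on $\iota(\delta)$, then emerge from the interplay between the $\theta'$-weighted data term and the $\theta$-perturbation term once this (deliberately loose) slope estimate is substituted, most cleanly via a Young-type split that avoids tracking the degenerate large-$\theta' M$ regime sharply. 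Since the only role of this lemma downstream is to ensure the covering discretization error is polynomially small in $\upsilon$, I would not optimize the constants or exponents; a bound that vanishes as $\upsilon\to0$ suffices.
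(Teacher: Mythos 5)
Your proposal is correct in substance but takes a genuinely different route for the key step. Both you and the paper begin identically: bound the data perturbation $\left|Z_i(f,f')-Z_i(f_\upsilon,f_\upsilon')\right|=O\left((L_f+R)\upsilon/\alpha^2\right)$ and the parameter perturbation $\left|\theta_t(f,f')-\theta_t(f_\upsilon,f_\upsilon')\right|$ (the paper bounds the latter by $O\left(\iota(\delta)\sqrt{L_f\upsilon t}/(\epsilon\alpha)\right)$ via a difference of reciprocal square roots, whereas your differentiation argument gives a bound linear in $\upsilon t$; both are serviceable). The divergence is in converting these perturbations into a displacement of the Catoni zero: the paper simply invokes the sensitivity lemma of \citet{wagenmaker2022first}, restated as Lemma \ref{lem:catoni-helper}, whose output $\frac{1+2\theta R}{\theta}\Delta+\sqrt{2\Delta/\theta^2}$ is precisely where the $\sqrt{\upsilon t}$ and $(\upsilon t)^{1/4}$ terms of the statement come from; you instead re-derive this conversion from scratch, evaluating the perturbed defining function at the unperturbed zero, using $|\Psi'|\le1$, and lower-bounding the slope via $\Psi'(u)\ge(1+|u|+u^2/2)^{-1}$ together with the fact that both zeros lie within the data range. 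Your argument checks out (the monotonicity of the defining function, the mean value step, and the slope bound are all valid), it is more self-contained, and it even sidesteps the side condition $\Delta\le\frac{1}{18}\min\{1,\theta^2R^2\}$ of Lemma \ref{lem:catoni-helper}, which the paper invokes without verifying. The caveat is that your route does not reproduce the literal inequality of the lemma: with your linear bound on $|\theta-\theta'|$ and the conditioning factor $\left(1+2\theta'M+2(\theta'M)^2\right)$, which is polynomially large in $T$ when $\alpha=1/\sqrt{T}$, you obtain a bound linear in $\upsilon t$ with worse powers of $L_f$, $R$, and $1/\alpha$, rather than the stated $\sqrt{\upsilon t}+(\upsilon t)^{1/4}$ form; your closing claim that the two stated summands ``emerge'' from your slope estimate is therefore optimistic, since the $(\upsilon t)^{1/4}$ term is an artifact of the $\sqrt{\Delta}$ term in Wagenmaker's lemma and the $\sqrt{\upsilon t}$ term of the paper's square-root bound on $|\theta-\theta'|$. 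This discrepancy is harmless for the paper's purposes: the only downstream use (Lemma \ref{lm:conv_ctn_2}) requires $T\Delta_c$ to be dominated by $\hbeta_t^2$, and with $\upsilon=O(1/L_f^{12}R^4T^{10})$ your bound is just as sufficient as the paper's.
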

\begin{proof}
From the definitions of $Z_i(f,f')$, we have $Z_i(f,f')\le 2L_f(2L_f+R)/\alpha^2$, and
\begin{align*}
    \left|Z_i(f,f') - Z_i(f_\upsilon,f_\upsilon')\right| &\le \left|\frac{1}{\bsigma_i^2}(f(x_i)-f'(x_i))(f'(x_i)-y_i) - \frac{1}{\bsigma_i^2}(f_\upsilon(x_i)-f_\upsilon'(x_i))(f_\upsilon'(x_i)-y_i)\right|\\
    &\le \left|\frac{1}{\bsigma_i^2}\big(f(x_i)-f_\upsilon(x_i)-f'(x_i)+f_\upsilon'(x_i)\big)(f'(x_i)-y_i) - \frac{1}{\bsigma_i^2}\big(f_\upsilon(x_i)-f_\upsilon'(x_i)\big)(f_\upsilon'(x_i)-f(x_i))\right|\\
    &\le \frac{6L_f\upsilon}{\alpha^2}.
\end{align*}
From the definitions $\theta_t(f,f')$, we get $\theta_t(f,f')\le \iota(\delta)/\epsilon$ and
\begin{align*}
    \left|\theta_t(f,f') - \theta_t(f_\upsilon,f_\upsilon')\right| &\le \left|\frac{\iota(\delta)}{\sqrt{\sum_{i\in[t]}(f(x_i)-f'(x_i))^2/\bsigma_i^2 + \epsilon^2}} - \frac{\iota(\delta)}{\sqrt{\sum_{i\in[t]}(f_\upsilon(x_i)-f_\upsilon'(x_i))^2/\bsigma_i^2 + \epsilon^2}}\right|\\
    &\le \frac{\iota(\delta)}{\epsilon}\cdot\sqrt{\sum_{i\in[t]}\frac{1}{\bsigma_i^2}\left[(f(x_i)-f'(x_i))^2 - (f_\upsilon(x_i)-f_\upsilon'(x_i))^2\right]}\\
    &\le \frac{\iota(\delta)}{\epsilon\alpha}\cdot\sqrt{8L_f\upsilon t}.
\end{align*}
Combining the two inequalities above, we have
\begin{align*}
    \Delta &= \frac{1}{t}\sum_{i\in[t]}\theta_t(f,f')\left|Z_i(f,f') - Z_i(f_\upsilon,f_\upsilon')\right| + \frac{6L_f(2L_f+R)}{\alpha^2}\cdot\left|\theta_t(f,f') - \theta_t(f_\upsilon,f_\upsilon')\right| \\
    &\le \frac{\iota(\delta)}{\epsilon}\cdot\frac{6L_f\upsilon}{\alpha^2} + \frac{6L_f(2L_f+R)}{\alpha^2}\cdot\frac{\iota(\delta)}{\epsilon\alpha}\cdot\sqrt{8L_f\upsilon t}\\
    &\le \frac{60L_f^{2.5}R\iota(\delta)\sqrt{\upsilon t}}{\alpha^3\epsilon}.
\end{align*}
Then, by invoking Lemma \ref{lem:catoni-helper}, we deduce that
\begin{align*}
    &\left|\ctn_{\theta_t(f,f')}(\{Z_i(f,f')\}_{i\in[t]}) - \ctn_{\theta_t(f_\upsilon,f_\upsilon')}(\{Z_i(f_\upsilon,f_\upsilon')\}_{i\in[t]})\right|\\
    &\qquad \le \frac{1+2\theta_t(f,f') \cdot 2L_f(2L_f+R)/\alpha^2}{\theta_t(f,f')}\cdot\Delta + \sqrt{\frac{2\Delta}{(\theta_t(f,f'))^2}}\\
    &\qquad\le \frac{360L_f^4R^2\iota(\delta)\sqrt{\upsilon t}}{\alpha^4\epsilon^2} + \frac{90L_f^3R^{1/2}(\upsilon t)^{1/4}}{\iota(\delta)\alpha^{5/2}\epsilon^{1/2}}.
\end{align*}

\end{proof}

Then, it follows the analysis for any $f,f'\in\calF$ which uses the uniform cover.
\begin{lemma}\label{lm:conv_ctn_2}
    If we take $\upsilon=O(1/L_f^{12}R^4T^{10}),~\alpha=1/\sqrt{T}$, $\epsilon=1$, and $\hbeta_t=\Theta(\iota(\delta)(1 + \lambda^{1/4}))$. For any $f,f'\in\calF$ and any $t\ge 3\iota^2(\delta)$, with probability at least $1-\delta$,
    \begin{align*}
        \Big|L_t(f,f') - R_t(f,f') \Big|
        =& 2\Big|t\ctn_{\theta_t(f,f')}(\{Z_i(f,f')\}_{i\in[t]}) - \sum_{i\in[t]}\frac{1}{\bsigma_i^2}(f(x_i)-f'(x_i))(f'(x_i)-f^\star(x_i)) \Big|\\
        \le& \frac{1}{2}V_t(f,f') + \frac{1}{3}\hbeta_t^2.
    \end{align*}
\end{lemma}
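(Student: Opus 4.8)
The plan is to lift the two-fixed-function concentration bound of Lemma~\ref{lm:conv_ctn_formal} to a statement that holds uniformly over the entire class $\calF$, via a covering argument, using the Lipschitz-type stability estimate of Lemma~\ref{lm:ctn_lipshitz} to control the discretization error. The key preliminary observation is that the quadratic terms $\sum_{i\in[t]}\bsigma_i^{-2}(f(x_i)-f'(x_i))^2$ appear identically in $L_t(f,f')$ and $R_t(f,f')$ (in the latter under $\E_i$, but this term is deterministic given $x_i$), so they cancel, and it suffices to bound $2\bigl|t\,\ctn_{\theta_t(f,f')}(\{Z_i(f,f')\}_{i\in[t]})-\sum_{i\in[t]}\mu_i(f,f')\bigr|$, with $\mu_i(f,f')=\bsigma_i^{-2}(f(x_i)-f'(x_i))(f'(x_i)-f^\star(x_i))$ --- exactly the Catoni term already treated in Lemma~\ref{lm:conv_ctn_formal}.

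First I would fix a $\upsilon$-cover $\calC(\calF,\upsilon)$ of cardinality $N$ (Definition~\ref{def:cover}) and apply Lemma~\ref{lm:conv_ctn_formal} to every ordered pair $(f_\upsilon,f_\upsilon')$ of cover elements. Each pair fails with probability at most $\delta/N^2$, so a union bound over the $N^2$ pairs gives, with probability at least $1-\delta$, the bound $|L_t(f_\upsilon,f_\upsilon')-R_t(f_\upsilon,f_\upsilon')|\le \tfrac12 V_t(f_\upsilon,f_\upsilon')+O(\iota^2(\delta)(1+\sqrt\lambda))+\tfrac16\hbeta_{t-1}^2$ simultaneously for all cover pairs.

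Next, for arbitrary $f,f'\in\calF$ I would select cover elements $f_\upsilon,f_\upsilon'$ with $\|f-f_\upsilon\|_\infty,\|f'-f_\upsilon'\|_\infty\le\upsilon$ and split the Catoni term by the triangle inequality into: (i) the stability gap $2t\bigl|\ctn_{\theta_t(f,f')}(\cdots)-\ctn_{\theta_t(f_\upsilon,f_\upsilon')}(\cdots)\bigr|$; (ii) the cover-pair concentration term handled above; and (iii) the mean gap $2\bigl|\sum_i\mu_i(f,f')-\sum_i\mu_i(f_\upsilon,f_\upsilon')\bigr|$. Term (i) is controlled by multiplying the bound of Lemma~\ref{lm:ctn_lipshitz} by $2t$; with $\epsilon=1,\ \alpha=1/\sqrt T$ this scales as a polynomial in $T,L_f,R$ times $(\upsilon t)^{1/4}$, hence is driven to $O(1)$ by the choice $\upsilon=O(1/L_f^{12}R^4T^{10})$. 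Term (iii) is elementary: each $\mu_i$ is a product of factors bounded by $2L_f$ over $\bsigma_i^2\ge\alpha^2$, so perturbing by $\upsilon$ changes the sum by $O(\upsilon L_f t/\alpha^2)=O(\upsilon L_f T^2)$, negligible for the chosen $\upsilon$. Finally $\tfrac12 V_t(f_\upsilon,f_\upsilon')\le\tfrac12 V_t(f,f')+O(\upsilon L_f T^2)$ converts the variance proxy back to the target functions.

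Collecting these pieces, the target term $\tfrac12 V_t(f,f')$ survives unchanged, and all remaining contributions --- the additive $O(\iota^2(\delta)(1+\sqrt\lambda))+\tfrac16\hbeta_{t-1}^2$ from step (ii) together with the $O(1)$ discretization residuals --- are absorbed into $\tfrac13\hbeta_t^2$ by taking $\hbeta_t=\Theta(\iota(\delta)(1+\lambda^{1/4}))$ with a sufficiently large leading constant, using that $\hbeta_{t-1}$ and $\hbeta_t$ are of the same order so that $\tfrac16\hbeta_{t-1}^2+O(\iota^2(\delta)(1+\sqrt\lambda))\le\tfrac13\hbeta_t^2$. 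I expect the main obstacle to be the stability step: one must check that multiplying Lemma~\ref{lm:ctn_lipshitz} by the factor $2t$ still leaves a vanishing quantity, which is precisely what forces the very small, $T^{10}$-scale choice of $\upsilon$; and one must be careful that the cover elements fed into Lemma~\ref{lm:conv_ctn_formal} inherit the version-space conditions ($f_\upsilon,f_\upsilon'\in\calF_{t-1}$ and $\bsigma_i\ge 4\iota(\delta)L_f D_{\calF_{i-1}}$) that lemma relies on.
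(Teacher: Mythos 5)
Your proposal follows essentially the same route as the paper's own proof: a union bound over ordered pairs of $\upsilon$-cover elements via Lemma~\ref{lm:conv_ctn_formal}, the Catoni stability estimate of Lemma~\ref{lm:ctn_lipshitz} (scaled by $t$) to control the discretization gap, conversion of $V_t(f_\upsilon,f_\upsilon')$ back to $V_t(f,f')$ with an $O(\upsilon L_f T^2)$ error, and absorption of all residuals into $\tfrac{1}{3}\hbeta_t^2$ through the choice of $\hbeta$ — the only difference is bookkeeping (you cancel the quadratic terms of $L_t$ and $R_t$ first and work with the Catoni/mean gaps, while the paper bounds $|L_t(f,f')-L_t(f_\upsilon,f_\upsilon')|$ and $|R_t(f,f')-R_t(f_\upsilon,f_\upsilon')|$ directly), and your scaling checks for the chosen $\upsilon$ match the paper's $\Delta_c$ computation. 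The version-space caveat you flag at the end (cover elements of $\calF$ need not lie in $\calF_{t-1}$, which Lemma~\ref{lm:conv_ctn_formal} assumes) is a genuine subtlety, but the paper's own proof glosses over it in exactly the same way, so it does not distinguish your argument from theirs.
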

\begin{proof}
For any $f,f'\in\calF$, there exist two $f_\upsilon,f_\upsilon'\in\calF_\upsilon$ such that $\|f-f_\upsilon\|_\infty\le\upsilon$, $\|f'-f_\upsilon'\|_\infty\le\upsilon$. by invoking Lemma \ref{lm:ctn_lipshitz} with $\alpha=1/\sqrt{T},\epsilon=1$, we have
    \begin{align*}
    &\left|\ctn_{\theta_t(f,f')}(\{Z_i(f,f')\}_{i\in[t]}) - \ctn_{\theta_t(f_\upsilon,f_\upsilon')}(\{Z_i(f_\upsilon,f_\upsilon')\}_{i\in[t]})\right|\\
    &\qquad\le 360L_f^4R^2\iota(\delta)\upsilon^{0.5}T^{2.5} + \frac{90L_f^3R^{0.5}\upsilon^{0.25}T^{1.5}}{\iota(\delta)} := \Delta_c.
\end{align*}
Additionally, we get
\begin{align*}
    &\Big|R_t(f,f') - R_t(f_\upsilon,f_\upsilon') \Big|\\
    &\qquad= \sum_{i\in[t]}\frac{1}{\bsigma_i^2}\left| \E_i[(f(x_i)-y_i)^2] - \E_i[(f'(x_i)-y_i)^2] - E_i[(f_\upsilon(x_i)-y_i)^2] + \E_i[(f_\upsilon'(x_i)-y_i)^2] \right|\\
    &\qquad\le 12L_f\upsilon T^2.
\end{align*}
Thus, by using Lemma \ref{lm:conv_ctn} with a union bound over $f_\upsilon,f_\upsilon'\in\calF_\upsilon$, we have with probability at least $1-\delta$,
\begin{align*}
    &\Big|L_t(f,f') - R_t(f,f') \Big|\\
    &\qquad= \Big|L_t(f,f') - L_t(f_\upsilon,f_\upsilon') + L_t(f_\upsilon,f_\upsilon') - R_t(f_\upsilon,f_\upsilon') + R_t(f_\upsilon,f_\upsilon') - R_t(f,f') \Big|\\
    &\qquad\le \Big|L_t(f_\upsilon,f_\upsilon') - R_t(f_\upsilon,f_\upsilon')\Big| + 18L_f\upsilon T^2 + T\Delta_c\\
    &\qquad\le \frac{1}{2}V_t^2(f_\upsilon,f'_\upsilon) + 16\iota(\delta)\Big(1+\frac{\sqrt{\lambda}}{2}\Big) + 96\iota^2(\delta) + 5\iota(\delta) + \frac{1}{6}\hbeta_{t-1}^2 + 18L_f\upsilon T^2 + T\Delta_c\\
    &\qquad\le \frac{1}{2}V_t(f,f') + 4L_f\upsilon T + 16\iota(\delta)\Big(1+\frac{\sqrt{\lambda}}{2}\Big) + 96\iota^2(\delta) + 5\iota(\delta) + \frac{1}{6}\hbeta_{t-1}^2 + 18L_f\upsilon T^2 + T\Delta_c\\
    &\qquad\le \frac{1}{2}V_t(f,f') + \frac{1}{3}\hbeta_t^2,
\end{align*}
where the last inequality holds since $\hbeta_{t-1}$ satisfies that
$$
\hbeta_{t-1}^2 \ge 6\left(16\iota(\delta)\Big(1+\frac{\sqrt{\lambda}}{2}\Big) + 96\iota^2(\delta) + 5\iota(\delta)
 + 24L_f\upsilon T^2 + T\Delta_c\right).
$$
\end{proof}

\paragraph{Part II: Sharpness of the confidence set}
\begin{lemma}[Formal version of Lemma \ref{lm:L_f(f)}]\label{lm:L_f(f)_formal}
If we take $\upsilon=O(1/L_f^{12}R^4T^{10}),~\alpha=1/\sqrt{T}$, $\epsilon=1$, and under Algorithm \ref{alg:Catoni-VOFUL} with
    $\hbeta_t=\Theta(\iota(\delta)(1 + \lambda^{1/4}))$, we have for all large enough $t\in[T]$ with probability at least $1-\delta$,
\begin{align*}
    L_t(\hf_t) \ge& \frac{2}{3}V_t(\hf_t,f^\star) - \frac{1}{3}\hbeta_t^2,\\
    L_t(f^\star) \le& \frac{1}{3}\hbeta_t^2,
\end{align*}
where the minimizer for $f'$ is $f'_{\min} = \frac{2}{3}f^\star + \frac{1}{3}\hf_t$.
\end{lemma}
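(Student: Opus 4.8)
The plan is to reduce both inequalities to a single algebraic identity together with the uniform concentration already established. A short expansion of $R_t(f,f')$ shows that, writing $a_i=f(x_i)-f^\star(x_i)$ and $b_i=f'(x_i)-f^\star(x_i)$, the integrand $(a_i-b_i)^2+2(a_i-b_i)b_i$ collapses to $a_i^2-b_i^2$, so the cross terms cancel and one is left with the difference of weighted squares
\[
R_t(f,f') = V_t(f,f^\star) - V_t(f',f^\star),\qquad \forall f,f'\in\calF.
\]
This is the structural heart of the argument: it turns the excess-risk object that $L_t(f,f')$ estimates into the clean quadratic form $V_t(\cdot,f^\star)$. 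I would then invoke Lemma~\ref{lm:conv_ctn_2}, which on a single event of probability at least $1-\delta$ gives $|L_t(f,f')-R_t(f,f')|\le \tfrac12 V_t(f,f')+\tfrac13\hbeta_t^2$ simultaneously for all $f,f'\in\calF$ and all $t\ge 3\iota^2(\delta)$. Because this holds uniformly over $\calF$, it may be applied to the data-dependent $\hf_t$ and to any $f'$ built from it. Combined with the identity, this yields the two-sided control $V_t(f,f^\star)-V_t(f',f^\star)-\tfrac12 V_t(f,f')-\tfrac13\hbeta_t^2\le L_t(f,f')\le V_t(f,f^\star)-V_t(f',f^\star)+\tfrac12 V_t(f,f')+\tfrac13\hbeta_t^2$.

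For the upper bound $L_t(f^\star)\le\tfrac13\hbeta_t^2$, I would set $f=f^\star$, so $V_t(f^\star,f^\star)=0$, and use the symmetry $V_t(f^\star,f')=V_t(f',f^\star)$ to collapse the upper side to $-\tfrac12 V_t(f',f^\star)+\tfrac13\hbeta_t^2\le\tfrac13\hbeta_t^2$ for every $f'$. Taking the maximum over $f'$ gives $L_t(f^\star)=\max_{f'}L_t(f^\star,f')\le\tfrac13\hbeta_t^2$ directly.

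For the lower bound $L_t(\hf_t)\ge\tfrac23 V_t(\hf_t,f^\star)-\tfrac13\hbeta_t^2$, I would use that $L_t(\hf_t)=\max_{f'}L_t(\hf_t,f')\ge L_t(\hf_t,f')$ for a single well-chosen $f'$, and that the lower side above equals $g(f')-\tfrac13\hbeta_t^2$ with $g(f')=V_t(\hf_t,f^\star)-V_t(f',f^\star)-\tfrac12 V_t(\hf_t,f')$. Since $g$ is concave in the coordinates $f'(x_i)$, its maximizer (the $f'_{\min}$ named in the statement) solves the pointwise first-order condition $3f'(x_i)=2f^\star(x_i)+\hf_t(x_i)$, i.e.\ $f'=\tfrac23 f^\star+\tfrac13\hf_t$; note the weights $\bsigma_i$ enter only as fixed positive scalings and drop out of this optimization. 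Plugging in gives $V_t(f',f^\star)=\tfrac19 V_t(\hf_t,f^\star)$ and $V_t(\hf_t,f')=\tfrac49 V_t(\hf_t,f^\star)$, hence $g(f')=(1-\tfrac19-\tfrac29)V_t(\hf_t,f^\star)=\tfrac23 V_t(\hf_t,f^\star)$, which is the claim.

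The main obstacle I anticipate is ensuring the chosen $f'=\tfrac23 f^\star+\tfrac13\hf_t$ is a legitimate argument of $L_t$, i.e.\ that it lies in $\calF$: this is exactly where convexity of $\calF$ (so that convex combinations of $f^\star,\hf_t\in\calF$ remain in $\calF$) is needed, and it is the only nontrivial assumption behind the sharp constant $\tfrac23$. Without convexity one can still take $f'=f^\star$, giving the weaker $L_t(\hf_t)\ge\tfrac12 V_t(\hf_t,f^\star)-\tfrac13\hbeta_t^2$, which would force a mild constant widening of $\hbeta_t$ since the downstream step needs precisely $V_t(\hf_t,f^\star)\le\hbeta_t^2$ (from $0\ge L_t(\hf_t)-L_t(f^\star)\ge\tfrac23 V_t(\hf_t,f^\star)-\tfrac23\hbeta_t^2$) to conclude $f^\star\in\calF_t$. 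The remaining bookkeeping is to keep the same high-probability event and the same threshold $t\ge 3\iota^2(\delta)$ from Lemma~\ref{lm:conv_ctn_2} throughout both inequalities, so that the uniform bound is legitimately evaluated at the random $\hf_t$.
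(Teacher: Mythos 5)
Your proposal is correct and follows essentially the same route as the paper: the algebraic identity $R_t(f,f')=V_t(f,f^\star)-V_t(f',f^\star)$, the uniform concentration bound of Lemma~\ref{lm:conv_ctn_2} applied at $f=\hf_t$ and $f=f^\star$, and completing the square to identify $f'=\tfrac{2}{3}f^\star+\tfrac{1}{3}\hf_t$ as the maximizer producing the $\tfrac{2}{3}V_t(\hf_t,f^\star)$ term. Your observation that this step silently requires $\tfrac{2}{3}f^\star+\tfrac{1}{3}\hf_t\in\calF_{t-1}$ (i.e., convexity of the function class) is accurate --- the paper's proof makes the same implicit assumption without stating it.
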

\begin{proof}[Proof of Lemma \ref{lm:L_f(f)}]
By invoking Lemma \ref{lm:conv_ctn_2} with taking minimum over $f'\in\calF$ on the both sides of the inequality and $f=\hf_t$, we have with probability at least $1-\delta$
$$
\max_{f'\in\calF_{t-1}}L_t(\hf_t,f') \ge \max_{f'\in\calF_{t-1}} \Big\{R_t(\hf_t,f') - \frac{1}{2}V_t(\hf_t,f') -\frac{1}{3}\hbeta_t^2\Big\},
$$
which implies that
\begin{align*}
    L_t(\hf_t) \ge&  \max_{f'\in\calF_{t-1}} \Big\{V_t(\hf_t,f^\star) - V_t(f',f^\star) - \frac{1}{2}V_t(\hf_t,f') - \frac{1}{3}\hbeta_t^2\Big\}\\
    =& \max_{f'\in\calF_{t-1}} \Big\{-\frac{3}{2}\sum_{i\in[t]}\frac{(f'(x_i)-\frac{2}{3}f^\star(x_i) - \frac{1}{3}\hf_t(x_i))^2}{w_i^2} + \frac{2}{3}V_t(\hf_t,f^\star) - \frac{1}{3}\beta_t^2\Big\}\\
    =& \frac{2}{3}V_t(\hf_t,f^\star) - \frac{1}{3}\hbeta_t^2,
\end{align*}
where the minimizer is $f'_{\max} = \frac{2}{3}f^\star + \frac{1}{3}\hf_t$.

Additionally, by using Lemma \ref{lm:conv_ctn} with taking minimum over $f'\in\calF$ on the both sides of the inequality and $f=f^\star$, we have with probability at least $1-\delta$
\begin{align*}
    L_t(f^\star) \le& \max_{f'\in\calF_{t-1}} \Big\{R_t(f^\star,f') + \frac{1}{2}V_t(f',f^\star) + \frac{1}{3}\hbeta_t^2\Big\}\\
    =&\max_{f'\in\calF_{t-1}} \Big\{-V_t(f',f^\star) + \frac{1}{2}V_t(f',f^\star) + \frac{1}{3}\hbeta_t^2\Big\}\\
    \le&  \frac{1}{3}\hbeta_t^2,
\end{align*}
where the minimizer of $f'=f^\star$.
\end{proof}

\begin{lemma}\label{lm:confidence_set}
    If we take $\upsilon=O(1/L_f^{12}R^4T^{10}),~\alpha=1/\sqrt{T}$, $\epsilon=1$, and under Algorithm \ref{alg:Catoni-VOFUL} with
    $\hbeta_t=\Theta(\iota(\delta)(1 + \lambda^{1/4}))$, with probability at least $1-\delta$, we have $f^\star\in\cF_t$.
\end{lemma}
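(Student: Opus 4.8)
The plan is to establish $f^\star \in \calF_t$ by induction on $t$, combining the optimality of the estimator $\hf_t$ with the two-sided control of $L_t$ at $\hf_t$ and at $f^\star$ supplied by Lemma~\ref{lm:L_f(f)_formal}. Since the confidence sets are nested, $\calF_t \subseteq \calF_{t-1}$, and $\calF_t = \{f \in \calF_{t-1} : V_t(f, \hf_t) \le \hbeta_t^2\}$, placing $f^\star$ in $\calF_t$ reduces to verifying two things: the membership $f^\star \in \calF_{t-1}$, which is exactly the inductive hypothesis, and the quantitative bound $V_t(\hf_t, f^\star) = V_t(f^\star, \hf_t) \le \hbeta_t^2$. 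The base case $f^\star \in \calF_0 = \calF$ is immediate from the realizability assumption $f^\star \in \calF$.

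For the inductive step I would invoke the two inequalities of Lemma~\ref{lm:L_f(f)_formal}, which apply precisely because $f^\star, \hf_t \in \calF_{t-1}$ under the inductive hypothesis (the lemma's lower bound is obtained by optimizing over $f'\in\calF_{t-1}$ with minimizer $\tfrac{2}{3}f^\star+\tfrac{1}{3}\hf_t$, and its upper bound takes $f'=f^\star$): the lower bound $L_t(\hf_t) \ge \frac{2}{3} V_t(\hf_t, f^\star) - \frac{1}{3}\hbeta_t^2$ and the upper bound $L_t(f^\star) \le \frac{1}{3}\hbeta_t^2$. The remaining ingredient is that $\hf_t$ is the minimizer of $L_t(\cdot) = \max_{f'} L_t(\cdot, f')$ in~\eqref{def:catoni-esti}; since $f^\star$ is feasible for this minimization (it lies in $\calF_{t-1}$ by the inductive hypothesis), we get $L_t(\hf_t) \le L_t(f^\star)$. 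Chaining these three facts gives
$$
\frac{2}{3} V_t(\hf_t, f^\star) - \frac{1}{3}\hbeta_t^2 \le L_t(\hf_t) \le L_t(f^\star) \le \frac{1}{3}\hbeta_t^2,
$$
which rearranges to $V_t(\hf_t, f^\star) \le \hbeta_t^2$, establishing the quantitative bound and closing the induction.

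The main obstacle is not the algebra above but the rounds $t$ that fail the hypothesis ``$t$ large enough'' of Lemma~\ref{lm:L_f(f)_formal} (concretely $t < 3\iota^2(\delta)$), where the Catoni concentration of the excess loss has not yet taken effect and the naive bound on $V_t(\hf_t, f^\star)$ can be as large as $\order(L_f^2 T t)$ because of the floor $\bsigma_i \ge \alpha = 1/\sqrt{T}$, while $\hbeta_t^2 = \Theta(\iota^2(\delta))$ stays constant in $t$. Consistent with the main-text phrasing that the inclusion holds for $t$ ``appropriately large,'' for these early rounds I would argue that the version space is not effectively refined around $f^\star$, either by showing the constant confidence radius $\hbeta_t=\Theta(\iota(\delta)(1+\lambda^{1/4}))$ together with the lower-order bonus terms keeps $f^\star$ admissible, or by seeding the induction at the first large-enough index while ensuring $f^\star$ is not dropped before then. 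A final, routine point is the probability accounting: Lemma~\ref{lm:L_f(f)_formal} already holds uniformly over all large-enough $t$ with probability $1-\delta$ through the union bound internal to Lemma~\ref{lm:conv_ctn_2}, so the induction incurs no additional failure probability and the stated $1-\delta$ guarantee is preserved.
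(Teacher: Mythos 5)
Your proof is correct and follows essentially the same route as the paper's: chain the lower bound $L_t(\hf_t) \ge \tfrac{2}{3}V_t(\hf_t,f^\star)-\tfrac{1}{3}\hbeta_t^2$ and the upper bound $L_t(f^\star)\le\tfrac{1}{3}\hbeta_t^2$ from Lemma~\ref{lm:L_f(f)_formal} with the minimality $L_t(\hf_t)\le L_t(f^\star)$ to conclude $V_t(\hf_t,f^\star)\le\hbeta_t^2$. Your explicit induction (needed so that $f^\star\in\calF_{t-1}$ is feasible for the min--max and for the nested definition of $\calF_t$) and your flagging of the early rounds $t\lesssim\iota^2(\delta)$ merely make rigorous details the paper leaves implicit, since the paper only ever uses the inclusion through the good event $\calE_0=\{f^\star\in\cap_{t\in\calT}\calF_t\}$ restricted to $t>7\iota^2(\delta)$.
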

\begin{proof}
We use the notation $ L_t(f):=\max_{f'\in\calF} L_t(f,f')$, and recall that
\begin{align*}
    R_t(f,f') =& \sum_{i\in[t]} \frac{1}{w_i^2}\E_i\left[(f(x_i)-f^\star(x_i) + f^\star(x_i)-y_i)^2 - (f'(x_i)-f^\star(x_i)+f^\star(x_i)-y_i)^2\right]\\
    =& \sum_{i\in[t]} \frac{1}{w_i^2}\E_i\left[(f(x_i)-y_i)^2+(f^\star(x_i)-y_i)^2 - (f'(x_i)-y_i)^2-(f^\star(x_i)-y_i)^2\right]\\
    =& \sum_{i\in[t]} \frac{1}{w_i^2}\E_i\left[(f(x_i)-y_i)^2 - (f'(x_i)-y_i)^2\right]\\
    =& V_t(f,f^\star) - V_t(f',f^\star).
\end{align*}

Since $\hf_t = \argmin_{f\in\calF_{t-1}}L_t(f)$, we invoke Lemma \ref{lm:L_f(f)_formal} to get
\begin{align*}
    0 \ge L_t(\hf_t) - L_t(f^\star) \ge \frac{2}{3}V_t(\hf_t,f^\star) - \frac{2}{3}\hbeta_t^2,
\end{align*}
which means that
\begin{align*}
    V_t(\hf_t,f^\star) = \sum_{i\in[t]}\frac{1}{\bsigma_i^2}(\hf_t(x_i) - f^\star(x_i))^2 \le \hbeta_t^2.
\end{align*}
\end{proof}

\paragraph{Part III: Bounding the regret conditioning on good events.}
We now recall the definition that $\calT\defeq \{t\in[T]: t> 7\iota^2(\delta_{n,T})\}$, we further denote the good events $\calE_0 = \{f^\star\in\cap_{t\in\calT}\calF_t\}$. 

\begin{proof}[Proof of Theorem \ref{th:known_var}]\label{aaa}
Conditioning on both good events $\calE_0$, we use the notation $f_t(x)=\argmax_{f\in\calF_{t-1}} f(x)$ can bound the regret of $t\in\calT$ by
\begin{align*}
    &\max_{x\in\cX_t}f^\star(x) - f^\star(x_t)\\
    &\qquad\le \max_{x\in\cX_t}f_t(x) - f^\star(x_t) \le f_t(x_t) - f^\star(x_t)\\
    &\qquad\le \bsigma_t D_{\calF_{t-1}}(x_t,\bsigma_t;x_{[t-1]},\bsigma_{[t-1]})\cdot\sqrt{\sum_{i\in[t-1]}\frac{1}{\bsigma_i^2}\left(f_t(x_i)-\hf_{t-1}(x_i) + \hf_{t-1}(x_i) - f^\star(x_i)\right)^2+\lambda}\\
    &\qquad\le \bsigma_t D_{\calF_{t-1}}(x_t,\bsigma_t;x_{[t-1]},\bsigma_{[t-1]})\cdot\sqrt{2\sum_{i\in[t-1]}\frac{1}{\bsigma_i^2}\left(f_t(x_i)-\hf_{t-1}(x_i)\right)^2 + 2\sum_{i\in[t-1]}\frac{1}{\bsigma_i^2}\left(\hf_{t-1}(x_i) - f^\star(x_i)\right)^2 + \lambda}\\
    &\qquad\le 2\bsigma_t D_{\calF_{t-1}}(x_t,\bsigma_t;x_{[t-1]},\bsigma_{[t-1]})\cdot\hbeta_{t-1},
\end{align*}
where the second inequality follows from $X_t=\argmax_{x\in\calX_t}f_t(x)$, the second inequality uses the definition of uncertainty $D_{\cF_{t-1}}$, and the last inequality holds due to $f^\star,\hf_{t-1}\in\cF_{t-1}$.

Combining this with the range bound that $\|f^\star\|_\infty\le L_f$, the cumulative regret is bounded by
\begin{align}\label{eq:regret-summation-catoni}
R_T & = \sum_{t\in[T]}(\max_{x\in\cX_t}f^\star(x) - f^\star(x_t))\notag\\
&\le 2L_f\cdot7\iota^2(\delta)+2L_f\sum_{t-1\in\calT}\min\left(1,\bsigma_tD_{\calF_{t-1}}(x_t,\bsigma_t; x_{[t-1]},\bsigma_{[t-1]})\hbeta_{t-1}\right).
\end{align}

To finally bound the regret, we bound the second term in RHS of $R_T$ expression in~\eqref{eq:regret-summation-catoni} respectively. These steps mainly follow Lemma 4.4 in~\cite{zhou2022computationally}. We can decompose the terms by considering $\calI_1  = \{t-1\in\calT|D_{\calF_{t-1}}(x_t,\bsigma_t; x_{[t-1]},\bsigma_{[t-1]})\ge1\}$ and $\calI_2 = \{t-1\in\calT, t\notin \calI_1\}$.

For the first set, we bound its size naively by
\[
|\calI_1| \le \sum_{t\in\calI_1} \min\left(1, D^2_{\calF_{t-1}}(x_t,\bsigma_t; x_{[t-1]},\bsigma_{[t-1]})\right)\le  \dim_{\alpha, T}(\calF).
\]

For the second set, we bound the summation of terms of interest contraining on $\calI_2$ by 
\begin{align*}
    & \sum_{t\in\calI_2}\bsigma_t\sqrt{\hbeta^2_{t-1}+\lambda}\cdot D_{\calF_{t-1}}(x_t,\bsigma_t; x_{[t-1]},\bsigma_{[t-1]})\\ & \hspace{2em} \le  \sum_{t\in\calI_2, \bsigma_t = \sigma_t~\text{or}~\alpha}\bsigma_t\sqrt{\hbeta^2_{t-1}+\lambda}\cdot D_{\calF_{t-1}}(x_t,\bsigma_t; x_{[t-1]},\bsigma_{[t-1]})\\
    & \hspace{4em} +\sum_{t\in\calI_2, \bsigma_t = \sqrt{4\iota(\delta_{n,t,l})L_fD_{\calF_{t-1}}(x_t; x_{[t-1]},\bsigma_{[t-1]})} }\bsigma_t\sqrt{\hbeta^2_{t-1}+\lambda}\cdot D_{\calF_{t-1}}(x_t,\bsigma_t; x_{[t-1]},\bsigma_{[t-1]})\\
    & \hspace{2em} \stackrel{(i)}{\le} \sum_{t\in[T]} \left(\sigma_t+\alpha\right)\hbeta_{t-1}\cdot D_{\calF_{t-1}}(x_t,\bsigma_t;x_{[t-1]},\bsigma_{[t-1]})+ \sum_{t\in[T]}32L_f\iota(\delta_{n,t,l})\hbeta_{t-1}\cdot D^2_{\calF_{t-1}}(x_t,\bsigma_t;x_{[t-1]},\bsigma_{[t-1]})\\
    & \hspace{2em} \stackrel{(ii)}{\le} \sqrt{2\sum_{t\in[T]}\hbeta_{t-1}^2(\sigma_t^2+\alpha^2)}\sqrt{\dim_{\alpha,T}(\calF)}+16L_f\iota(\delta_{n,t,l})\max_{t\in[T]}\hbeta_{t-1}\cdot\dim_{\alpha,T}(\calF).
\end{align*}
Here for $(i)$ we use the condition for each distinct set and for $(ii)$ we use Cauchy-Schwarz inequality for the first term and the definition of $\dim_\alpha$ for both terms.

Consequently plugging these back in~\eqref{eq:regret-summation-catoni} and take supremum over $x:|x|=T$, we conclude that with probability at least $1-2\delta$,
\begin{align*}
    R_T & = O\left(L_f\cdot\iota^2(\delta_{n,T})+ L_f\dim_{\alpha,T}(\calF)+L_f^2\cdot \iota(\delta_{n,t,l})\cdot\max_{t\in[T]}\hbeta_{t-1}\cdot\dim_{\alpha,T}(\calF)\right.\\
    &\hspace{2em} \left.L_f+\sqrt{\sum_{t\in[T]}\hbeta_{t-1}^2\left(\sigma_t^2+\alpha^2\right)}\cdot\sqrt{\dim_{\alpha,T}(\calF)}\right)\\
    & = \widetilde{O}\left(L_f\cdot \log \calN\left(\calF,\upsilon \right)\cdot \dim_{\frac{1}{\sqrt{T}},T}(\calF)+L_f\sqrt{\sum_{t\in[T]}\sigma_t^2}\cdot\sqrt{\dim_{\frac{1}{\sqrt{T}}, T}(\calF)\cdot\log \calN\left(\calF,\upsilon \right)}\right),
\end{align*}
where for the last inequality we pick $\lambda = \Theta(1)$, $\alpha = 1/\sqrt{T}$ and $\upsilon = O(1/L_f^{12}R^4T^{10})$.
\end{proof}

\section{Proofs for Unknown Variance}\label{aps:pf_unknown_var}

\subsection{Proof of Theorem \ref{th:unknown_var}}\label{apss:th:unknown_var}

\begin{theorem}[Formal Version of Theorem \ref{th:unknown_var}]\label{th:unknown_var_formal}
Suppose that Assumption \ref{ass:var_peel} holds. Under Algorithm \ref{alg:Peeling} with $\gamma=1/(\sigma_\eta T^{3/2})$ and $\iota'(\delta)=\Theta(\sqrt{\log\Big(RL_f(\sigma_\eta^2+c_\eta+\Delta_\upsilon+\lambda^l)NLT/\delta\Big)})$, if $T\ge 14(\iota'(\delta))^2$, with probability at least $1-3\delta$, we can bound the regret by
\begin{align*}
R_T =& \widetilde{O}\bigg(L_f\Big(\sum_{t\in[T]}\sigma_t^2 \cdot \log \calN(\calF,\upsilon)\Big)^{1/2} \cdot \dim_{1,T}(\calF)\\
&\quad + L_f\dim_{1,T}(\calF)(\log \calN(\calF,\upsilon))^{3/4}(\sqrt{c_{\eta}} + \sigma_{\eta})\bigg).
\end{align*}
\end{theorem}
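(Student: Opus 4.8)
The plan is to follow the three-part decomposition of the proof sketch and then assemble the per-level regret bounds into the final guarantee. First I would define the good events and show they hold with total probability at least $1-3\delta$ by a union bound over the three sources of randomness. The central events are: (i) the variance-estimation accuracy of Lemma~\ref{lm:variance}, namely $\sum_{i\in\Psi_t^l}\sigma_i^2/w_i^2\le 2\hVar_t^l$ together with $\hVar_t^l\le\tfrac{3}{2}\sum_{i\in\Psi_t^l}\sigma_i^2/w_i^2+O(\iota'(\delta)(\sigma_\eta^2+c_\eta)+\Delta_\upsilon+\lambda^l)$; (ii) the excess-loss concentration of Lemma~\ref{lm:concen_ctn_peel_0}; and (iii) membership $f^\star\in\calF_t^l$ for every level $l$ and every sufficiently large $t$. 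Since Lemma~\ref{lm:variance} presupposes $f^\star\in\calF_{t-1}^l$ while Lemma~\ref{lm:concen_ctn_peel_0} is used to \emph{establish} $f^\star\in\calF_t^l$, I would prove these events jointly by induction on $t$ within each level, carrying the hypothesis $f^\star\in\calF_{t-1}^l$ forward to invoke both lemmas at step $t$. The finitely many small rounds with $t=O((\iota'(\delta))^2)$ are handled trivially, contributing only a lower-order additive term, exactly as in the known-variance analysis.

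Given concentration, the second step shows the confidence set is sharp, i.e.\ that $f^\star$ persists in $\calF_t^l$. Here I would mirror the known-variance argument of Lemmas~\ref{lm:L_f(f)} and~\ref{lm:confidence_set}: set $L_t^l(f):=\max_{f'}L_t^l(f,f')$, apply Lemma~\ref{lm:concen_ctn_peel_0} at the optimal inner maximizer $f'_{\min}=\tfrac{2}{3}f^\star+\tfrac{1}{3}\hf_t^l$, and use that $\hf_t^l$ minimizes $L_t^l(\cdot)$ to obtain $V_t^l(\hf_t^l,f^\star)\lesssim(\hbeta_t^l)^2$. The crucial departure from the known-variance case is the residual $\hVar_t^l$ appearing on the right-hand side of Lemma~\ref{lm:concen_ctn_peel_0}; I would eliminate it by substituting the two-sided bound from Lemma~\ref{lm:variance}, so that the confidence radius $(\hbeta_t^l)^2$ of~\eqref{eq:hbeta_tl}---which is itself defined through $\hVar_t^l$---absorbs the concentration error and closes the self-consistency.

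The third step is per-level regret control and aggregation. By Lemma~\ref{lm:regret_l}, on the good events the optimal action satisfies $x_t^\star\in\cX_t^l$ and the level-$l$ regret is at most $2^{-l+3}\hbeta_T^{l-1}|\Psi_T^l|$. I would bound the update count via the exploration rule $D_t^l(x_t)/w_t=2^{-l}$ and the eluder dimension, $|\Psi_T^l|=2^{2l}\sum_{i\in\Psi_T^l}(D_i^l(x_i))^2/w_i^2\le 2^{2l}\dim_{1,T}(\calF)$. Substituting this and $\hbeta_T^{l-1}=\Theta(\iota'(\delta)2^{-(l-1)}\sqrt{\hVar_T^{l-1}}+2^{-(l-1)}(\sqrt{\iota'(\delta)}+1)+\sqrt{\Delta_\upsilon})$ yields a level-$l$ regret of order $\iota'(\delta)\dim_{1,T}(\calF)\sqrt{\hVar_T^{l-1}}$ plus lower-order terms. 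Replacing $\hVar_T^{l-1}$ by $\sum_{i\in\Psi_T^{l-1}}\sigma_i^2/w_i^2$ (up to the additive fourth-moment constant $O(\iota'(\delta)(\sigma_\eta^2+c_\eta))$) via Lemma~\ref{lm:variance} and summing over the $L=O(\log(1/\gamma))$ levels, I would apply Cauchy--Schwarz across levels together with the partition property $\sum_l\sum_{i\in\Psi_T^l}\sigma_i^2/w_i^2\le\sum_{t\in[T]}\sigma_t^2$ (using $w_i\ge 1$). The leading part gives $\widetilde{O}(L_f\dim_{1,T}(\calF)\sqrt{\sum_t\sigma_t^2\cdot\log\calN(\calF,\upsilon)})$, with the $\sqrt{L}$ factor absorbed into $\widetilde{O}$, while the additive constant, carried through the square root and summed over the $L$ levels, produces the second-order term $\widetilde{O}(L_f\dim_{1,T}(\calF)(\log\calN(\calF,\upsilon))^{3/4}(\sqrt{c_\eta}+\sigma_\eta))$.

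The main obstacle is the interplay between the variance surrogate $\hVar_t^l$ and the confidence radius $\hbeta_t^l$. Because $\hVar_t^l$ depends on the current estimate $\hf_{t-1}^l$ through the plug-in residuals $(y_i-\hf_{t-1}^l(x_i))^2/w_i^2$, and appears both in the concentration error of Lemma~\ref{lm:concen_ctn_peel_0} and inside the definition of $\hbeta_t^l$, one must verify that this circular dependence is contractive rather than amplifying---that substituting the Lemma~\ref{lm:variance} bound genuinely closes the loop without blowing up the radius. Controlling $\hVar_t^l$ itself rests on the fourth-moment condition of Assumption~\ref{ass:var_peel}, since the variance of the squared residuals $V_i(\hf)$ involves $\Var[\eta_i^2\mid\calF_i]\le c_\eta\Var[\eta_i\mid\calF_i]$; making the robust Catoni variance estimator track the true aggregate weighted variance up to only logarithmic dependence on $R$ is the technically delicate heart of the argument, and is precisely where the $(\log\calN(\calF,\upsilon))^{3/4}$ and $\sqrt{c_\eta}+\sigma_\eta$ factors of the second-order term originate.
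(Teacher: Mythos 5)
Your proposal follows the paper's own route almost exactly: the same sequential/inductive conditioning of the two good events ($\Evar^t$ for variance accuracy, $\Econv^t$ for $f^\star\in\calF_t^l$), the same confidence-set sharpness argument via the inner maximizer $\tfrac{2}{3}f^\star+\tfrac{1}{3}\hf_t^l$ and the minimality of $\hf_t^l$, the same elimination of $\hVar_t^l$ through the two-sided bound of Lemma~\ref{lm:variance}, and the same per-level regret bound $2^{-l+3}\hbeta_T^{l-1}|\Psi_T^l|$ combined with $|\Psi_T^l|\le 2^{2l}\dim_{1,T}(\calF)$ and a Cauchy--Schwarz aggregation over levels. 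Your identification of where the $(\log\calN(\calF,\upsilon))^{3/4}(\sqrt{c_\eta}+\sigma_\eta)$ term originates is also correct.

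However, there is a genuine gap: your regret decomposition is incomplete. The per-level analysis only accounts for rounds $t\in\cup_{l}\Psi_T^l$, i.e., rounds on which an estimator update occurs. The algorithm also plays actions on rounds where the while-loop terminates through its \textbf{break} branch, namely when every surviving action satisfies $D_t^l(x)\le\gamma$; these rounds belong to no $\Psi_T^l$ and are never touched by your argument. In the paper this is the term $I_3=\sum_{t\in[T]\setminus\cup_l\Psi_T^l}(f^\star(x_t^\star)-f^\star(x_t))$, which is bounded by combining optimism at the final level, the survival of $x_t^\star$ under elimination, and the uniform uncertainty bound $\gamma$, giving $I_3\le\sum_t \hbeta_t^{l_t}\gamma=\widetilde{O}\bigl(T\cdot\sigma_\eta\sqrt{T}\cdot\gamma\bigr)=\widetilde{O}(1)$ precisely because of the choice $\gamma=1/(\sigma_\eta T^{3/2})$. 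The fact that $\gamma$ never appears in your proposal is the symptom of this omission. A second, minor issue: the bound $2^{-l+3}\hbeta_T^{l-1}|\Psi_T^l|$ from Lemma~\ref{lm:regret_l} relies on elimination having occurred at level $l-1$, so it cannot be applied at the base level $l_\star$; the paper instead bounds the level-$l_\star$ regret crudely by $2L_f|\Psi_T^{l_\star}|\le 2L_f2^{2l_\star}\dim_{1,T}(\calF)=\widetilde{O}(L_f\log\calN(\calF,\upsilon)\dim_{1,T}(\calF))$, using $2^{l_\star}=\Theta(\iota'(\delta))$. Both pieces are repairable without new ideas, but as written your proof does not cover all $T$ rounds.
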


\paragraph{Notations} In the following analysis, we use the short-hand notation for any $f,f'\in\calF$
$$
V_t^l(f,f'):= \sum_{i\in\Psi_t^l}\frac{(f(x_i)-f'(x_i))^2}{w_i^2}.
$$
Recall that we define the excess loss and expected loss: for each $t\in[T],~l\in[L]$, and any $f,f'\in\calF$
\begin{align*}
    L_t^l(f,f') &= \sum_{i\in\Psi_t^l}\frac{1}{w_i^2}(f'(x_i) - f(x_i))^2
    + 2t\ctn_{\theta_t(f,f')}(\{Z_i(f,f')\}_{i\in\Psi_t^l}),\\
    R_t^l(f,f') &= \sum_{i\in\Psi_t^l} \frac{1}{w_i^2} \E_i\big[ (f(x_i)-f'(x_i))^2 + 2(f(x_i)-f'(x_i))(f'(x_i)-y_i) \big]\\
    &= \sum_{i\in\Psi_t^l} \frac{1}{w_i^2} \big[ (f(x_i)-f'(x_i))^2 + 2(f(x_i)-f'(x_i))(f'(x_i)-f^\star(x_i)) \big],
\end{align*}
where we define
\begin{align*}
    &Z_i(f,f') = \frac{1}{w_i^2}(f(x_i)-f'(x_i))(f'(x_i)-y_i),\\
    &\theta_t^l(f,f') = \frac{\iota'(\delta)}{\sqrt{2^{-2l}(\hbeta_{t-1}^2+\lambda)(\bVar_t^l+V_t^l(f,f')) + \epsilon^2}},\\
    &\iota'(\delta) = \Theta\Big(\sqrt{\log\Big(RL_f(\sigma_\eta^2+c_\eta+\Delta_\upsilon+\lambda^l)NLT/\delta\Big)}\Big).
\end{align*}

The proof is decomposed into four main parts. 

In the following parts, we will show that the following two events both hold with a high probability:
\begin{equation}\label{def:two_events}
\begin{aligned}
    &\Evar^t:=\left\{\sum_{i\in\Psi_t^l}\frac{\sigma_i^2}{w_i^2} \le 2\hVar_t^l,~\text{for}~l\in[L],~2^l\ge 1076\iota'(\delta)\right\},\\
    &\Econv^t \defeq \left\{f^\star\in\calF_t^l,~\text{for}~l\in[L],~2^l\ge 1076\iota'(\delta)\right\}.
\end{aligned}
\end{equation}
We will prove the events hold conditioned on each other sequentially for $t=\Theta(1),\ldots,T$. We also use the short-hand notation for the covering number $N:=\calN(\calF,\upsilon)$.

\paragraph{Part I: Concentration of excess loss} First of all, we also need to prove the concentration of excess loss for each $l\in[L]$. 

In the following lemma, for conciseness, we neglect level $l$ in the concentration analysis. Later, we will apply the result for each $l\in[L]$.

\begin{lemma}[Formal version of \ref{lm:concen_ctn_peel_0}]\label{lm:concen_ctn_peel_0_formal}
Under Assumption \ref{ass:var_peel}, given $\{x_i,y_i,w_i\}_{i\in[t]}$, we define for $i\in[t-1]$, $\cF_i=\{f\in\cF_{i-1}:~\sum_{l\in[i]}(f(x_l)-\hf_i(x_i))^2/w_l^2+\lambda \le \hbeta_i^2\}$, and $D_i=\sup_{f,f'\in\calF_{i-1}}\frac{|f(x_i)-f'(x_i)|}{\sqrt{\sum_{l=1}^{i-1}(f(x_l)-f'(x_l))^2/w_i^2 + \lambda}}$. Suppose that $\sup_{i\in[t]}\frac{D_i}{w_i}\le\rho$, $w_i\ge 1$, $
(\iota'(\delta))^2 \ge \Theta\Big(\log\Big(RL_f(\sigma_\eta^2+c_\eta+\Delta_\upsilon+\lambda^l)NLT/\delta\cdot \hbeta_{t-1}\Big)\Big)
$ and $\rho\le\min\{1,1/(16\sqrt{3}\iota'(\delta))\}$.

Then, for the time step $t$ such that the following event happens
$
\calE_t = \Big\{t\ge 4(\iota'(\delta))^2\frac{\sum_{i\in[t]}\sigma_i^2/w_i^2}{\hVar_t}+6(\iota'(\delta))^2\Big\},
$
and for any $f,f'\in\calF_{t-1}$, if we take $\epsilon=\rho^2$, with probability at least $1-\delta/TL$,
    \begin{align*}
        \Big|L_t(f,f') - R_t(f,f') \Big|
       \le&\frac{1}{2}V_t(f,f') + \frac{1}{3}\left(\frac{1}{2}\hbeta_{t-1}^2 + 6\bigg(48(\iota'(\delta))^2\rho^2 \left(\frac{\big(\sum_{i\in[t]}\sigma_i^2/w_i^2\big)^2}{\hVar_t} + \hVar_t\right) + 5\iota'(\delta)\rho^2+ \Delta_{\upsilon,2}\bigg)\right),
    \end{align*}
    where
$
\hVar_t = t\ctn_{\theta_\Var^{t}}\Big(\Big\{\frac{1}{w_i^2}(y_i-\hf_
{t-1}(x_i))^2\Big\}_{i\in[t]}\Big)+ 14\iota'(\delta)(2\sigma_\eta^2+c_\eta) + 43\Delta_{\upsilon}+268\lambda, 
$
and $
\Delta_{\upsilon,2} = \Theta(L_f\upsilon T^2 + L_f^4R^2\iota'(\delta)\upsilon^{0.5}T^{3.5} + L_f^3R^{1.5}\upsilon^{0.25}T^{1.25})
$, $\theta_\Var^t = (4(2\sigma_\eta^2 + c_{\eta}+L_f^2 +  16\rho^2\cdot\hbeta_{t-1})^2)^{-1}$.
\end{lemma}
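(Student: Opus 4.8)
The plan is to reduce the claim to the Catoni concentration inequality (Lemma~\ref{coro:catoni}) and then absorb every error term through Cauchy--Schwarz and weighted AM--GM. First I would note that the quadratic pieces $\sum_{i\in\Psi_t}(f'(x_i)-f(x_i))^2/w_i^2$ appearing in both $L_t$ and $R_t$ cancel exactly, leaving
\begin{align*}
L_t(f,f')-R_t(f,f') = 2\Big(t\,\ctn_{\theta_t(f,f')}(\{Z_i(f,f')\}_{i\in\Psi_t}) - \sum_{i\in\Psi_t}\mu_i\Big),
\end{align*}
where $\mu_i:=\E[Z_i|x_i]=(f(x_i)-f'(x_i))(f'(x_i)-f^\star(x_i))/w_i^2$ by $\E[y_i|x_i]=f^\star(x_i)$. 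So it suffices to control the Catoni deviation at the data-dependent parameter $\theta_t=\theta_t^l(f,f')$, keeping in mind the overall factor of two.

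Next I would invoke Lemma~\ref{coro:catoni} with $Z_i=Z_i(f,f')$, which after multiplying by $t$ gives $|t\ctn_{\theta_t}-\sum_i\mu_i|\le \theta_t\big(V+\sum_i(\mu_i-\bmu)^2\big)+4\iota_0^2/\theta_t+\epsilon$, where $V:=\sum_{i\in\Psi_t}\Var[Z_i|x_i]$. Since $Z_i-\mu_i=(f-f')(f^\star-y_i)/w_i^2$, one has $\Var[Z_i|x_i]=(f(x_i)-f'(x_i))^2\sigma_i^2/w_i^4$, so factoring out the worst ratio,
\begin{align*}
V \le \max_{i\in\Psi_t}\frac{(f(x_i)-f'(x_i))^2}{w_i^2}\cdot\sum_{i\in\Psi_t}\frac{\sigma_i^2}{w_i^2} \le 4\rho^2\hbeta_{t-1}^2\sum_{i\in\Psi_t}\frac{\sigma_i^2}{w_i^2},
\end{align*}
where the last step uses the definition of $D_i$ with $D_i/w_i\le\rho$ and $f,f'\in\calF_{t-1}$ (so $\sum_{\tau<i}(f-f')^2/w_\tau^2+\lambda\le 4\hbeta_{t-1}^2$). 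An identical argument, now using $f^\star\in\calF_{t-1}$ (available under the event $\Econv^{t-1}$ carried inductively), gives $\sum_i(\mu_i-\bmu)^2\le\sum_i\mu_i^2\le 4\rho^2\hbeta_{t-1}^2\,V_t(f,f')$. I would also check that $\theta_t$ lies in the admissible range $[a,A]$ and that the effective-sample-size hypothesis of the Catoni lemma translates precisely into the event $\calE_t$, the factor $\big(\sum_i\sigma_i^2/w_i^2\big)/\hVar_t$ entering because here $V$, not $t$, governs the effective number of samples.

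The heart of the argument is the choice $\theta_t=\iota'(\delta)/\sqrt{2^{-2l}(\hbeta_{t-1}^2+\lambda)(\hVar_t+V_t(f,f'))+\epsilon^2}$ with $\epsilon=\rho^2$, which balances $\theta_t(V+\sum(\mu_i-\bmu)^2)$ against $4\iota_0^2/\theta_t$. Substituting the variance bounds, using $\iota_0=\Theta(\iota'(\delta))$, and lower-bounding the denominator of $\theta_t$ by $\rho\hbeta_{t-1}\sqrt{\hVar_t}$, the term $\theta_t V$ becomes $\lesssim \iota'(\delta)\rho\hbeta_{t-1}\big(\sum_i\sigma_i^2/w_i^2\big)/\sqrt{\hVar_t}$, which a weighted AM--GM splits into a $\hbeta_{t-1}^2$ piece and the advertised $(\iota'(\delta))^2\rho^2(\sum_i\sigma_i^2/w_i^2)^2/\hVar_t$ piece. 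Likewise $4\iota_0^2/\theta_t\lesssim\iota'(\delta)\big(\rho\hbeta_{t-1}\sqrt{\hVar_t}+\rho\hbeta_{t-1}\sqrt{V_t}+\rho^2\big)$ contributes the $(\iota'(\delta))^2\rho^2\hVar_t$ piece, a fraction-of-$V_t$ piece, and the $\iota'(\delta)\rho^2$ piece; the $\sqrt{V_t}$ terms from $\theta_t\sum(\mu_i-\bmu)^2$ are absorbed the same way. Any leftover $(\iota'(\delta))^2\rho^2\hbeta_{t-1}^2$ cross terms are reabsorbed into $\hbeta_{t-1}^2$ with a tiny coefficient, since $\rho\le 1/(16\sqrt3\,\iota'(\delta))$ forces $(\iota'(\delta))^2\rho^2\le 1/768$; doubling and collecting constants lands the $V_t$ coefficient at exactly $1/2$ and keeps the rest within the stated budget.

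Finally, to upgrade from fixed $f,f'$ to all $f,f'\in\calF_{t-1}$ I would pass to an $\upsilon$-cover and invoke the Lipschitz stability of the Catoni estimator (Lemma~\ref{lm:ctn_lipshitz}), which produces the discretization error $\Delta_{\upsilon,2}$, followed by a union bound over $(t,l)$ for the $1-\delta/TL$ probability. The main obstacle I anticipate is precisely the balancing step: the parameter $\theta_t$ must use the \emph{optimistic surrogate} $\hVar_t$ because the true $\sum_i\sigma_i^2/w_i^2$ is unobservable, so the denominator is not the ``correct'' variance; one must lower-bound it in the right direction and then reinstate $\sum_i\sigma_i^2/w_i^2$ through the $(\sum_i\sigma_i^2/w_i^2)^2/\hVar_t$ term, all while keeping constants compatible with the $\hbeta_t^l$ recursion and the smallness condition on $\rho$. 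Tracking these constants so that the $V_t$ and $\hbeta_{t-1}^2$ coefficients come out on budget is the delicate part; the covering upgrade is then routine given Lemma~\ref{lm:ctn_lipshitz}.
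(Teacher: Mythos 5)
Your proposal is correct and follows essentially the same route as the paper's proof: the same cancellation reducing $L_t-R_t$ to twice the Catoni deviation, the same bounds $V\le 4\rho^2\hbeta_{t-1}^2\sum_i\sigma_i^2/w_i^2$ and $\sum_i\mu_i^2\le 4\rho^2\hbeta_{t-1}^2V_t$ via $D_i/w_i\le\rho$ and confidence-set membership, the same application of Lemma~\ref{coro:catoni} with the $\hVar_t$-based choice of $\theta_t$ followed by AM--GM splitting under $\rho\le 1/(16\sqrt{3}\iota'(\delta))$, and the same covering/union-bound upgrade producing $\Delta_{\upsilon,2}$. You are in fact slightly more explicit than the paper in flagging that the bound on $\sum_i\mu_i^2$ needs $f^\star\in\calF_{t-1}$ (carried inductively through $\Econv^{t-1}$), a hypothesis the paper uses implicitly in this lemma and only states in the follow-up Lemma~\ref{lm:concen_ctn_peel_1}.
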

\begin{proof}
At each time step $t\in[T]$, for two fixed $f,f'\in\cF_{t-1}$, we first compute the expectation of $Z_i(f,f')$ as
\begin{align*}
    \mu_i(f,f') = \frac{1}{w_i^2}(f(x_i)-f'(x_i))(f'(x_i)-f^\star(x_i)).
\end{align*}
Additionally, we deduce that
\begin{align*}
    \frac{(f(x_i)-f'(x_i))^2}{w_i^2} 
    \le& \frac{1}{w_i^2}\cdot \sup_{f,f'\in\cF_{i-1}}\frac{|f(x_i)-f'(x_i)|^2}{\sum_{l\in[i-1]}(f(x_l)-f'(x_l))^2/w_l^2+\lambda}\cdot \left(\sum_{l\in[i-1]}\frac{(f(x_l)-f'(x_l))^2}{w_l^2}+\lambda\right)\\
    \le& \frac{D_i^2}{w_i^2}\cdot \left(2\sum_{l\in[t-1]}\frac{(f(x_l)-\hf_{t-1}(x_l))^2}{w_l^2} + 2\sum_{l\in[i-1]}\frac{(f'(x_l)-\hf_{t-1}(x_l))^2}{w_l^2} + \lambda\right)\\
    \le& 4\rho^2\hbeta_{t-1}^2,
\end{align*}
where the first inequality uses $f,f'\in\cF_{t-1}\subset\cF_{i-1}$, the second inequality uses the definition of $D_i$ and the Cauchy-Schwarz inequality, and the last inequality follows from $D_i/w_i\le\rho$ and $f,f'\in\cF_{t-1}$. Thus, we bound the sum of variance by
\begin{align*}
    \sum_{i\in[t]} \Var[Z_i(f,f')] =& \sum_{i\in[t]}\E\Big[\frac{1}{w_i^2}(f(x_i)-f'(x_i))^2(f^\star(x_i)-y_i)^2 \,\Big|\, x_i\Big]\\
    \le& \sum_{i\in[t]}\frac{(f(x_i)-f'(x_i))^2}{w_i^2} \cdot \frac{\sigma_i^2}{w_i^2}\\
    \le& 4\rho^2\hbeta_{t-1}^2\sum_{i\in[t]}
\frac{\sigma_i^2}{w_i^2}.
\end{align*}
Similarly, we can bound the sum of $\mu_i^2$ by
\begin{align*}
    \sum_{i\in[t]}\mu_i^2(f,f') \le& \sum_{i\in[t]}\frac{1}{w_i^4}(f(x_i)-f'(x_i))^2(f'(x_i)-f^\star(x_i))^2\\
    \le& V_t(f,f') \max_{i\in[t]}\frac{(f'(x_i)-f^\star(x_i))^2}{w_i^2}\\
    \le& V_t(f,f') \max_{i\in[t]} \frac{D_i^2}{w_i^2}\cdot \left(\sum_{l\in[t-1]}(f'(x_l)-f^\star(x_l))^2/w_l^2+\lambda\right)\\
    \le& V_t(f,f')\cdot 4\rho^2\hbeta_{t-1}^2.
\end{align*}
We can also get the upper and lower bound of $\theta_t(f,f')$:
\begin{align*}
    \theta_t(f,f') = \frac{\iota'(\delta)}{\sqrt{\rho^2\hbeta_{t-1}^2(\hVar_t+V_t(f,f')) + \epsilon^2}}\le \frac{\iota'(\delta)}{\epsilon} :=A.
\end{align*}
Besides, by using Claim A.14 from \citet{wagenmaker2022first}, we know that
\begin{align*}
    \ctn_{\theta_\Var^{t}}\Big(\Big\{\frac{1}{w_i^2}(y_i-\hf_
{t-1}(x_i))^2\Big\}_{i\in[t]}\Big) \le \max_{i\in[t]} \frac{1}{w_i^2}(y_i-\hf_t(x_i))^2 \le \max_{i\in[t]} \big(2\eta_i^2 + (f^\star(x_i) - \hf_t(x_i))^2\big) \le 2R^2+4L_f^2,
\end{align*}
which indicates the lower bound of $\theta_t(f,f')$:
\begin{align*}
    \theta_t(f,f')\ge \frac{\iota'(\delta)}{\sqrt{(T(2R^2+8L_f^2)+17\iota'(\delta)(2\sigma_\eta^2+c_\eta) + 43\Delta_{\upsilon}+268\lambda^l)\rho^2\hbeta_{t-1}^2+\rho^4}} := a
\end{align*}
Hence, given choice of $\epsilon = \rho^2$, we have
\begin{align*}
    &\log\left(\frac{48R(1+2AR)t^2}{\min(1,a)\epsilon^2\cdot(\delta/N^2TL)}\log(A/a)\right)\\
    &\qquad\le \log\left(\frac{144R^2T^2}{\rho^2\delta/N^2TL}\cdot\Big(\frac{A}{a}\Big)^2\right)\\
    &\qquad\le \log\left(\frac{144R^2T^2}{\rho^2\delta/N^2TL}\cdot\Big((T(2R^2+8L_f^2)+17\iota'(\delta)(2\sigma_\eta^2+c_\eta) + 43\Delta_{\upsilon}+268\lambda^l)\rho^2\hbeta_{t-1}^2+\rho^4\Big)\right)\\
    &\qquad\le (\iota'(\delta))^2,
\end{align*}
where the last inequality holds since
$$
(\iota'(\delta))^2 \ge \Theta\Big(\log\Big(RL_f(\sigma_\eta^2+c_\eta+\Delta_\upsilon+\lambda^l)NLT/\delta\cdot \hbeta_{t-1}\Big)\Big).
$$
Thus, since the following condition holds for time step
$$
\calE_t = \left\{t\ge 4(\iota'(\delta))^2\frac{\sum_{i\in[t]}\frac{\sigma_i^2}{w_i^2}}{\hVar_t}+6(\iota'(\delta))^2\right\},
$$
we have
$$
t\ge 4(\iota'(\delta))^2\frac{\sum_{i\in[t]}\frac{\sigma_i^2}{w_i^2}+V_t(f,f')}{\hVar_t+V_t(f,f')} + 2 \log\left(\frac{48R(1+2AR)t^2}{\min(1,a)\epsilon^2\cdot(\delta/N^2TL)}\log(A/a)\right),
$$
by combining the results above and using Lemma \ref{coro:catoni} with 
$\epsilon=\rho^2$ and the choice of 
$\theta_t(f,f')$,
we have with probability at least $1-\delta/N^2TL$,
\begin{align}\label{eq:aac}
    & \Big|t\ctn_{\theta_t(f,f')}(\{Z_i(f,f')\}_{i\in[t]}) - \sum_{i\in[t]}\frac{1}{w_i^2}(f(x_i)-f'(x_i))(f'(x_i)-f^\star(x_i)) \Big| \notag\\
    &\qquad\le \theta_t(f,f') \Big(\sum_{i\in[t]} \Var[Z_i(f,f')] + \sum_{i\in[t]}\mu_i^2(f,f')\Big) + \frac{4(\iota'(\delta))^2}{\theta_t(f,f')} + \rho^2 \notag\\
    &\qquad\le \frac{\iota'(\delta)}{\sqrt{\rho^2\hbeta_{t-1}^2(\hVar_t+V_t(f,f')) + \epsilon^2}} \cdot 4\rho^2\hbeta_{t-1}^2\left(\sum_{i\in[t]} \frac{\sigma_i^2}{w_i^2} + V_t(f,f')\right)\notag\\
    &\qquad\qquad + 4\iota'(\delta)\sqrt{\rho^2\hbeta_{t-1}^2(\hVar_t+V_t(f,f')) + \epsilon^2} + 1 \notag\\
    &\qquad\le 4\iota'(\delta)\rho\hbeta_{t-1} \left(\frac{\sum_{i\in[t]}\sigma_i^2/w_i^2}{\sqrt{\hVar_t}} + \sqrt{\hVar_t} + 2\sqrt{V_t(f,f')}\right) + 4\iota'(\delta)\rho^2 + \rho^2 \notag\\
    &\qquad\le \frac{1}{12}\hbeta_{t-1}^2 + 48(\iota'(\delta))^2\rho^2 \left(\frac{\big(\sum_{i\in[t]}\sigma_i^2/w_i^2\big)^2}{\hVar_t} + \hVar_t + 4V_t(f,f')\right) + 5\iota'(\delta)\rho^2\notag\\
    &\qquad\le \frac{1}{4}V_t(f,f') + \frac{1}{12}\hbeta_{t-1}^2 + 48(\iota'(\delta))^2\rho^2 \left(\frac{\big(\sum_{i\in[t]}\sigma_i^2/w_i^2\big)^2}{\hVar_t} + \hVar_t\right) + 5\iota'(\delta)\rho^2,
\end{align}
where the fourth inequality uses the Cauchy-Schwarz inequality and $\lambda = O(1)$, and the last inequality holds due to the condition that $\rho\le1/16\sqrt{3}\iota'(\delta)$.

Then, for any $f,f'\in\cF_{t-1}$, there exist two $f_\upsilon,f_\upsilon'\in\calF_\upsilon$ such that $\|f-f_\upsilon\|_\infty\le\upsilon$, $\|f'-f_\upsilon'\|_\infty\le\upsilon$. Similar to Lemma \ref{lm:ctn_lipshitz}, we have
    \begin{align*}
    &\left|\ctn_{\theta_t(f,f')}(\{Z_i(f,f')\}_{i\in[t]}) - \ctn_{\theta_t(f_\upsilon,f_\upsilon')}(\{Z_i(f_\upsilon,f_\upsilon')\}_{i\in[t]})\right|\\
    &\qquad\le 360L_f^4R^2\iota'(\delta)\upsilon^{0.5}T^{0.5} + 90L_f^3R^{0.5}\upsilon^{0.25}T^{0.25}:= \Delta_{\upsilon,1}.
\end{align*}
Additionally, we get
\begin{align*}
    &\Big|R_t(f,f') - R_t(f_\upsilon,f_\upsilon') \Big|\\
    &\qquad= \sum_{i\in[t]}\frac{1}{\bsigma_i^2}\left| \E_i[(f(x_i)-y_i)^2] - \E_i[(f'(x_i)-y_i)^2] - E_i[(f_\upsilon(x_i)-y_i)^2] + \E_i[(f_\upsilon'(x_i)-y_i)^2] \right|\\
    &\qquad\le 12L_f\upsilon T^2.
\end{align*}
Thus, by using \eqref{eq:aac} with a union bound over $f_\upsilon,f_\upsilon'\in\calF_\upsilon$, we have with probability at least $1-\delta/TL$,
\begin{align*}
    &\Big|L_t(f,f') - R_t(f,f') \Big|\\
    &\qquad= \Big|L_t(f,f') - L_t(f_\upsilon,f_\upsilon') + L_t(f_\upsilon,f_\upsilon') - R_t(f_\upsilon,f_\upsilon') + R_t(f_\upsilon,f_\upsilon') - R_t(f,f') \Big|\\
    &\qquad\le \Big|L_t(f_\upsilon,f_\upsilon') - R_t(f_\upsilon,f_\upsilon')\Big| + 24L_f\upsilon T^2 + T\Delta_{\upsilon,1}\\
    &\qquad\le \frac{1}{4}V_t(f,f') + \frac{1}{12}\hbeta_{t-1}^2 + 48(\iota'(\delta))^2\rho^2 \left(\frac{\big(\sum_{i\in[t]}\sigma_i^2/w_i^2\big)^2}{\hVar_t} + \hVar_t\right) + 5\iota'(\delta)\rho^2 + \Delta_{\upsilon,2}\\
    &\qquad\le \frac{1}{4}V_t(f,f') + \frac{1}{6}\beta_t^2,
\end{align*}
where we define $\Delta_{\upsilon,2}=24L_f\upsilon T^2 + T\Delta_{\upsilon,1}$ the last inequality holds due to the definition of $\beta_t$.
\end{proof}

\begin{lemma}\label{lm:concen_ctn_peel_1}
Under the conditions of Lemma \ref{lm:concen_ctn_peel_0}, assume that $f^\star\in\cF_{t-1}$, and the estimator $\hf_t$ is
$$
\hf_t = \argmin_{\hf\in\calF_{t-1}}\max_{f'\in\calF_{t-1}} L_t(\hf,f') := \sum_{i\in[t]}\frac{1}{\bsigma_i^2}(f'(x_i) - \hf(x_i))^2
+ 2t\ctn_{\theta_t(\hf,f')}(\{Z_i(\hf,f')\}_{i\in[t]}).
$$
Then, for the time step $t$ such that the following event happens
$$
\calE_t = \left\{t\ge 4(\iota'(\delta))^2\frac{\sum_{i\in[t]}\frac{\sigma_i^2}{w_i^2}}{\hVar_t}+6(\iota'(\delta))^2\right\},
$$
and all $l\in[L]$, if we take $\epsilon=2^{-2l}$, with probability at least $1-\delta/TL$,
    $$
    \sum_{i\in[t]}\frac{(f^\star(x_i)-\hf_t(x_i))^2)}{w_i^2}+\lambda \le \frac{1}{2}\hbeta_{t-1}^2 + 6\bigg(48(\iota'(\delta))^2\rho^2 \left(\frac{\big(\sum_{i\in[t]}\sigma_i^2/w_i^2\big)^2}{\hVar_t} + \hVar_t\right) + 5\iota'(\delta)\rho^2 + \Delta_{\upsilon,2}\bigg)+\lambda:= \beta_t^2.
    $$
\end{lemma}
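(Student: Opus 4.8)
The plan is to reproduce, at a fixed suppressed level $l$, the known-variance argument behind Lemma~\ref{lm:L_f(f)_formal} and Lemma~\ref{lm:confidence_set}, but now driven by the level-wise concentration bound of Lemma~\ref{lm:concen_ctn_peel_0_formal} instead of Lemma~\ref{lm:conv_ctn_2}. I condition on the high-probability event of Lemma~\ref{lm:concen_ctn_peel_0_formal} (which holds \emph{uniformly} over $f,f'\in\calF_{t-1}$ with probability $1-\delta/TL$ thanks to its $\upsilon$-cover union bound, so it may be instantiated at the data-dependent $\hf_t$), on the assumed inclusion $f^\star\in\calF_{t-1}$, and on $\calE_t$ together with the parameter conditions ($\rho\le\min\{1,1/(16\sqrt3\iota'(\delta))\}$, $\epsilon=\rho^2=2^{-2l}$) that make that lemma applicable. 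For brevity write
\begin{equation*}
E := \tfrac{1}{2}\hbeta_{t-1}^2 + 6\Bigl(48(\iota'(\delta))^2\rho^2\bigl(\tfrac{(\sum_{i\in[t]}\sigma_i^2/w_i^2)^2}{\hVar_t} + \hVar_t\bigr) + 5\iota'(\delta)\rho^2 + \Delta_{\upsilon,2}\Bigr),
\end{equation*}
so that Lemma~\ref{lm:concen_ctn_peel_0_formal} reads $\bigl|L_t(f,f') - R_t(f,f')\bigr|\le\tfrac12 V_t(f,f') + \tfrac13 E$ uniformly in $f,f'\in\calF_{t-1}$, and the target of the lemma is exactly $V_t(\hf_t,f^\star)+\lambda\le E+\lambda=\beta_t^2$.

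First I would record the identity $R_t(f,f') = V_t(f,f^\star) - V_t(f',f^\star)$ (expand the squared losses, cancel the common $f^\star$-centered term, as in the proof of Lemma~\ref{lm:confidence_set}). Writing $L_t(f):=\max_{f'\in\calF_{t-1}}L_t(f,f')$, I lower-bound $L_t(\hf_t)$ by instantiating the concentration inequality at $f=\hf_t$ against the specific competitor $f'_{\max}=\tfrac23 f^\star+\tfrac13\hf_t$ and completing the square through
\begin{equation*}
V_t(\hf_t,f^\star) - V_t(f',f^\star) - \tfrac12 V_t(\hf_t,f') = -\tfrac32\sum_{i\in[t]}\tfrac{(f'(x_i)-\tfrac23 f^\star(x_i)-\tfrac13\hf_t(x_i))^2}{w_i^2} + \tfrac23 V_t(\hf_t,f^\star),
\end{equation*}
which is maximized at $f'=f'_{\max}$ and yields $L_t(\hf_t)\ge \tfrac23 V_t(\hf_t,f^\star)-\tfrac13 E$. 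Symmetrically, taking $f=f^\star$ gives $R_t(f^\star,f')=-V_t(f',f^\star)$, so for every $f'$ the bound becomes $L_t(f^\star,f')\le -\tfrac12 V_t(f',f^\star)+\tfrac13 E\le\tfrac13 E$, hence $L_t(f^\star)\le\tfrac13 E$ (with no membership subtlety, since every term is already $\le\tfrac13E$).

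Finally I combine the two estimates using optimality of the saddle-point estimator in \eqref{def:catoni_esti_peel}: since $\hf_t=\argmin_{f\in\calF_{t-1}}L_t(f)$ and $f^\star\in\calF_{t-1}$, we have $0\ge L_t(\hf_t)-L_t(f^\star)\ge \tfrac23 V_t(\hf_t,f^\star)-\tfrac23 E$, whence $V_t(\hf_t,f^\star)\le E$ and, adding $\lambda$, $\sum_{i\in[t]}(f^\star(x_i)-\hf_t(x_i))^2/w_i^2+\lambda\le E+\lambda=\beta_t^2$, as claimed. Beyond the bookkeeping of constants, the one genuinely non-routine point is the membership $f'_{\max}\in\calF_{t-1}$ required for the lower bound: $f'_{\max}$ is a convex combination of $f^\star,\hf_t\in\calF_{t-1}$, so it lies in $\calF_{t-1}$ provided $\calF$ (and hence the confidence set, a weighted $\ell_2$-ball around $\hf_t$ intersected with $\calF$) is convex; absent convexity one must pass $f'_{\max}$ through the $\upsilon$-cover and absorb the discretization slack into $\Delta_{\upsilon,2}$, which is precisely the role that term already plays in Lemma~\ref{lm:concen_ctn_peel_0_formal}.
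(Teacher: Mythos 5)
Your proposal is correct and follows essentially the same route as the paper's own proof: the identity $R_t(f,f') = V_t(f,f^\star) - V_t(f',f^\star)$, the completing-the-square lower bound $L_t(\hf_t)\ge \tfrac{2}{3}V_t(\hf_t,f^\star)-\tfrac{1}{3}(\beta_t')^2$ via the competitor $f'_{\max}=\tfrac{2}{3}f^\star+\tfrac{1}{3}\hf_t$, the symmetric upper bound $L_t(f^\star)\le\tfrac{1}{3}(\beta_t')^2$, and finally the optimality of $\hf_t$ as the min-max estimator over $\calF_{t-1}$. Your closing remark about the membership $f'_{\max}\in\calF_{t-1}$ (convexity of the class, or passing through the $\upsilon$-cover) flags a real subtlety that the paper's proof silently assumes when it writes the maximum over $\calF_{t-1}$ as attained at $f'_{\max}$, so your write-up is, if anything, slightly more careful on that point.
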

\begin{proof}
Let $L_t(f):=\max_{f'\in\calF} L_t(f,f')$ and
$$
(\beta_t')^2=\frac{1}{2}\hbeta_{t-1}^2 + 6\bigg(48(\iota'(\delta))^2\rho^2 \left(\frac{\big(\sum_{i\in[t]}\sigma_i^2/w_i^2\big)^2}{\hVar_t} + \hVar_t\right) + 5\iota'(\delta)\rho^2 + \Delta_{\upsilon,2}\bigg)
$$
Thus, by invoking Lemma \ref{lm:concen_ctn_peel_0} with taking minimum over $f'\in\calF_{t-1}$ on the both sides of the inequality and $f=\hf_t$, we have with probability at least $1-\delta$
\begin{align*}
    L_t(\hf_t) \ge& \max_{f'\in\calF_{t-1}} \Big\{R_t(\hf_t,f') - \frac{1}{2}V_t(\hf_t,f') - \frac{1}{3}(\beta_t')^2\Big\}\\
    \ge&  \max_{f'\in\calF_{t-1}} \Big\{V_t(\hf_t,f^\star) - V_t(f',f^\star) - \frac{1}{2}V_t(\hf_t,f') - \frac{1}{3}(\beta_t')^2\Big\}\\
    =& \max_{f'\in\calF_{t-1}} \Big\{-\frac{3}{2}\sum_{i\in[t]}\frac{(f'(x_i)-\frac{2}{3}f^\star(x_i) - \frac{1}{3}\hf_t(x_i))^2}{w_i^2} + \frac{2}{3}V_t(\hf_t,f^\star) - \frac{1}{3}(\beta_t')^2\Big\}\\
    =& \frac{2}{3}V_t(\hf_t,f^\star) - \frac{1}{3}(\beta_t')^2,
\end{align*}
where we take $f' = \frac{2}{3}f^\star + \frac{1}{3}\hf_t$.

Additionally, by using Lemma \ref{lm:conv_ctn} with taking minimum over $f'\in\calF$ on the both sides of the inequality and $f=f^\star$, we have with probability at least $1-\delta$
\begin{align*}
    L_t(f^\star) \le& \max_{f'\in\calF_{t-1}} \Big\{R_t(f^\star,f') 
    +\frac{1}{2}V_t(f^\star,f') + \frac{1}{3}(\beta_t')^2\Big\}\\
    =&\max_{f'\in\calF_{t-1}} \Big\{-V_t(f',f^\star) + \frac{1}{2}V_t(f',f^\star) + \frac{1}{3}(\beta_t')^2\Big\}\\
    \le& \frac{1}{3}(\beta_t')^2,
\end{align*}
where we take $f'=f^\star$.

Since $\hf_t = \argmin_{f\in\calF_{t-1}}L_t(f)$, we have
\begin{align*}
    0 \ge L_t(\hf_t) - L_t(f^\star) \ge \frac{2}{3}V_t(\hf_t,f^\star) - \frac{2}{3}(\beta_t')^2,
\end{align*}
which means that
\begin{align*}
    V_t(\hf_t,f^\star)+\lambda = \sum_{i\in[t]}\frac{1}{\bsigma_i^2}(\hf_t(x_i) - f^\star(x_i))^2+\lambda \le (\beta_t')^2 +\lambda=\beta_t^2.
\end{align*}
\end{proof}

\paragraph{Part II: Event $\Evar^{t,l}$ holds with high probability.} 
In this part, we focus on the relationship between the sum of true variance $\sum_{i\in[t]}\sigma_i^2/w_i^2$ and the estimation $\hVar_t$ conditioned on $\Econv^{t-1}$. Recall that we define the events $\Evar^t$ and $\Econv^t$ in \eqref{def:two_events}. First, we will show that event $\Evar^t$ conditioned on $\Econv^{t-1}$ holds with high probability.

First of all, we provide a lemma for the $\upsilon$-cover.
\begin{lemma}\label{lm:ctn_lipshitz_var}
    For any $f\in\calF$, there exist $f_\upsilon\in\calF_\upsilon$ such that $\|f-f_\upsilon\|_\infty\le\upsilon$. Then, we have
    \begin{align*}
    &\left|\ctn_{\theta'}\left(\left\{\frac{1}{w_i^2}(y_i-f(x_i))^2\right\}_{i\in[t]}\right) - \ctn_{\theta'}\left(\left\{\frac{1}{w_i^2}(y_i-f_\upsilon(x_i))^2\right\}_{i\in[t]}\right)\right|\\
    &\qquad\le 4(R+2L_f)^3\upsilon + 2\sqrt{(R+2L_f)\upsilon}.
\end{align*}
\end{lemma}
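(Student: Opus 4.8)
The plan is to specialize the argument of Lemma \ref{lm:ctn_lipshitz} to the squared-residual samples, exploiting the key simplification that here both Catoni estimators are computed with the \emph{same} parameter $\theta'$: unlike in Lemma \ref{lm:ctn_lipshitz}, there is no $\theta$-perturbation term, so only the displacement of the input samples needs to be controlled. Write $W_i(f) := w_i^{-2}(y_i-f(x_i))^2$ for the samples fed into $\ctn_{\theta'}$. The first step is to record the range: since $|f^\star(x_i)|\le L_f$ and $|\eta_i|\le R$ give $|y_i|\le R+L_f$, and $|f(x_i)|\le L_f$ with $w_i\ge 1$, we get $0\le W_i(f)\le (R+2L_f)^2$ for every $i$ and every $f\in\calF$.

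Next I would bound the per-sample perturbation. Using the difference-of-squares factorization
\begin{align*}
W_i(f)-W_i(f_\upsilon) = \frac{1}{w_i^2}\big(f_\upsilon(x_i)-f(x_i)\big)\big(2y_i-f(x_i)-f_\upsilon(x_i)\big),
\end{align*}
together with $\|f-f_\upsilon\|_\infty\le\upsilon$, the bound $|2y_i-f(x_i)-f_\upsilon(x_i)|\le 2(R+2L_f)$, and $w_i\ge 1$, yields $|W_i(f)-W_i(f_\upsilon)|\le 2(R+2L_f)\upsilon$ uniformly in $i$. Because $\theta'$ is common to both estimators, the aggregate perturbation that enters the Catoni stability estimate is simply $\Delta := t^{-1}\sum_{i\in[t]}\theta'\,|W_i(f)-W_i(f_\upsilon)|\le 2\theta'(R+2L_f)\upsilon$, with no additive $|\theta-\theta'|$ contribution. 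Invoking Lemma \ref{lem:catoni-helper} with sample range $B=(R+2L_f)^2$ then gives
\begin{align*}
\left|\ctn_{\theta'}\!\big(\{W_i(f)\}_{i\in[t]}\big)-\ctn_{\theta'}\!\big(\{W_i(f_\upsilon)\}_{i\in[t]}\big)\right| \le \frac{1+2\theta' B}{\theta'}\,\Delta + \sqrt{\frac{2\Delta}{(\theta')^2}}.
\end{align*}
Substituting the bounds on $B$ and $\Delta$ collapses the first term, via $\tfrac{1}{\theta'}\Delta\le 2(R+2L_f)\upsilon$ and $2B\Delta\le 4\theta'(R+2L_f)^3\upsilon$, into an $O((R+2L_f)^3\upsilon)$ contribution, and the second term into an $O(\sqrt{(R+2L_f)\upsilon})$ contribution, matching the stated bound $4(R+2L_f)^3\upsilon + 2\sqrt{(R+2L_f)\upsilon}$.

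The main obstacle is essentially bookkeeping rather than conceptual: one must verify the hypotheses of Lemma \ref{lem:catoni-helper} and track the $\theta'$-dependent prefactors, using that $\theta'=\theta_\Var^{t,l}$ lies in a fixed bounded interval (so that the factors of $\theta'$ and $1/\theta'$ arising from the two terms absorb into the displayed constants). Everything else is a direct transcription of the proof of Lemma \ref{lm:ctn_lipshitz}, with the welcome simplification that the $\theta$-difference piece is absent and that the leading perturbation $t^{-1}\sum_i(W_i(f)-W_i(f_\upsilon))$ is $O(\upsilon)$ independently of $t$, which is why the final bound carries no dependence on the horizon.
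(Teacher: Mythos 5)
Your proposal follows essentially the same route as the paper's proof: the same difference-of-squares bound on the per-sample perturbation, the same observation that both estimators share the parameter $\theta'=\theta_\Var^{t,l}$ so no $|\theta-\tilde\theta|$ term arises, and the same invocation of Lemma~\ref{lem:catoni-helper} with the squared-residual range and aggregate perturbation $\Delta$. The only (shared) looseness is in the final constant bookkeeping — both you and the paper absorb the $\theta'$-dependent prefactors (in particular the $1/\sqrt{\theta'}$ in the square-root term) into the displayed constants without verifying them or the smallness hypothesis of Lemma~\ref{lem:catoni-helper}, which is harmless only because $\upsilon$ is taken polynomially small downstream.
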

\begin{proof}
We have 
$$
\frac{1}{w_i^2}(y_i-f(x_i))^2 \le 2R^2+8L_f^2,
$$
and
\begin{align*}
    \left|\frac{1}{w_i^2}(y_i-f(x_i))^2 - \frac{1}{w_i^2}(y_i-f_\upsilon(x_i))^2\right| &\le \left|(f_\upsilon(x_i)-f(x_i))(2y_i-f(x_i)-f_\upsilon(x_i))\right|\\
    &\le \upsilon(2R+4L_f).
\end{align*}
From the definitions $\theta_\Var^t$, we get 
$$
\theta_\Var^t = (2(36L_f\rho\hbeta_{t-1} + c_{\eta} + 2\sigma^2_{\eta}) + \epsilon)^{-1} \le 1/\epsilon
$$
Combining the two inequalities above, we have
\begin{align*}
    \frac{1}{t}\sum_{i\in[t]}\theta_\Var^t\left|\frac{1}{w_i^2}(y_i-f(x_i))^2 - \frac{1}{w_i^2}(y_i-f_\upsilon(x_i))^2\right| \\
    &\le \frac{1}{\epsilon}\cdot\upsilon(2R+4L_f).
\end{align*}
Then, by invoking Lemma \ref{lem:catoni-helper} with $\Delta=\frac{1}{\epsilon}\cdot\upsilon(2R+4L_f)$ and taking $\epsilon=\rho^2$, we deduce that
\begin{align*}
    &\left|\ctn_{\theta_\Var^t}(\{Z_i(f,f')\}_{i\in[t]}) - \ctn_{\theta_t(f_\upsilon,f_\upsilon')}(\{Z_i(f_\upsilon,f_\upsilon')\}_{i\in[t]})\right|\\
    &\qquad \le \frac{1+2\theta_\Var^t \cdot 2R^2+8L_f^2}{\theta_\Var^t}\cdot\upsilon(2R+4L_f) + \sqrt{\frac{2\upsilon(2R+4L_f)}{(\theta_\Var^t)^2}}\\
    &\qquad\le 4(R+2L_f)^3\upsilon + 2\sqrt{(R+2L_f)\upsilon}.
\end{align*}

\end{proof}

\begin{lemma}\label{lm:variance_peel}
Under the same condition as Lemma \ref{lm:concen_ctn_peel_0}, and  assuming that $f^\star\in\cF_{t-1}$, we use the Catoni estimator
\begin{align*}
\bVar_t(\hf_t) = t\ctn_{\theta_\Var^t}\left(\left\{\frac{1}{w_i^2}(y_i-\hf_{t-1}(x_i))^2\right\}_{i\in[t]}\right),
\end{align*}
where $\theta_\Var^t = (12(2\sigma_\eta^2 + c_{\eta}+L_f^2 + 16\rho^2\cdot\hbeta^2_{t-1}))^{-1}$, and suppose that $\rho\le 1/(1076\iota'(\delta))$ and $\hbeta_{t-1}^2\le 4320(\iota'(\delta))^2\rho^2\sum_{i=1}^{t-1} \frac{\sigma_i^2}{w_i^2} + 16\times 5660(\iota'(\delta))^32^{-2l}(2\sigma_\eta^2+c_\eta)+120\iota'(\delta)\rho^2+26\Delta_\upsilon+4\lambda$, where $\Delta_\upsilon$ is defined in Table \ref{tab:notation_unknown}.

Then, for the time step $t$ such that the following event happens
$$
\calE_t = \left\{t\ge 4(\iota'(\delta))^2\frac{\sum_{i\in[t]}\frac{\sigma_i^2}{w_i^2}}{\hVar_t}+6(\iota'(\delta))^2\right\},
$$
we have with probability $1-2\delta/TL$
\begin{align*}
    \left|\bVar_t(\hf_t) - \frac{1}{2}\sum_{i=1}^t \frac{\sigma_i^2}{w_i^2}\right| \le 14\iota'(\delta)(2\sigma_\eta^2+c_\eta) + 43\Delta_{\upsilon}+268\lambda.
\end{align*}
where $\Delta_{\upsilon}=
\Theta(L_f\upsilon T^2 + L_f^4R^2\iota'(\delta)\upsilon^{0.5}T^{3.5} + L_f^3R^{1.5}\upsilon^{0.25}T^{1.25} + R^3L_f^3\upsilon T + \sqrt{RL_f\upsilon}T)$.
\end{lemma}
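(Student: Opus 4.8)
The plan is to read $\bVar_t(\hf_t)$ as a Catoni mean estimator applied to the sequence $V_i := (y_i-\hf_{t-1}(x_i))^2/w_i^2$ and to invoke the uniform concentration of Lemma~\ref{coro:catoni}. Writing $g_i := f^\star(x_i)-\hf_{t-1}(x_i)$ and using $y_i=f^\star(x_i)+\eta_i$, the conditional mean is $\mu_i := \E[V_i\mid x_i] = (\sigma_i^2+g_i^2)/w_i^2$, so $t\,\ctn_{\theta_\Var^t}(\{V_i\})$ concentrates around $\sum_i \mu_i$. The whole argument then splits into three estimates: (i) the Catoni concentration error, (ii) the bias gap $\sum_i\mu_i-\sum_i\sigma_i^2/w_i^2=\sum_i g_i^2/w_i^2$, and (iii) the $\upsilon$-cover discretization error. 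The factor $\tfrac12$ in the target is not an artifact of accuracy but of the concentration error being a fixed fraction of $\sum_i\sigma_i^2/w_i^2$: I would choose $\theta_\Var^t$ so that this relative error is at most $\tfrac12\sum_i\sigma_i^2/w_i^2$, which yields the one clean direction $\bVar_t\ge\tfrac12\sum_i\sigma_i^2/w_i^2-\hat b_t$ that feeds the lower half of Lemma~\ref{lm:variance}, together with the complementary upper control.

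For (i), the two data-dependent quantities entering Lemma~\ref{coro:catoni} are the variance proxy $V=\sum_i\Var[V_i\mid x_i]$ and $\sum_i(\mu_i-\bar\mu)^2$. I would expand $\Var[V_i\mid x_i]=w_i^{-4}\Var[(\eta_i+g_i)^2\mid x_i]\le w_i^{-4}\big(2\Var[\eta_i^2\mid x_i]+8g_i^2\sigma_i^2\big)$ and apply Assumption~\ref{ass:var_peel}, i.e.\ $\Var[\eta_i^2\mid x_i]\le c_\eta\sigma_i^2$, to obtain $\Var[V_i\mid x_i]\le (2c_\eta+8g_i^2)\,\sigma_i^2/w_i^4$. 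The key structural input is the uniform control $g_i^2/w_i^2\le\rho^2\hbeta_{t-1}^2$, which follows exactly as in Lemma~\ref{lm:concen_ctn_peel_0_formal} from $D_i/w_i\le\rho$ together with $f^\star,\hf_{t-1}\in\cF_{t-1}$; with $w_i\ge1$ this gives $V\lesssim (c_\eta+\rho^2\hbeta_{t-1}^2)\sum_i\sigma_i^2/w_i^2$ and, analogously, $\sum_i(\mu_i-\bar\mu)^2\lesssim(\sigma_\eta^2+\rho^2\hbeta_{t-1}^2)\sum_i\mu_i$. Since $\theta_\Var^t=\Theta\big(1/(2\sigma_\eta^2+c_\eta+L_f^2+16\rho^2\hbeta_{t-1}^2)\big)$, the product $\theta_\Var^t(V+\sum_i(\mu_i-\bar\mu)^2)$ is at most a constant (at most $\tfrac12$) times $\sum_i\sigma_i^2/w_i^2$, while the remaining Catoni terms $4\iota_0^2/\theta_\Var^t+\epsilon$ are purely additive and contribute to $\hat b_t$; these steps also require checking $\theta_\Var^t\in[a,A]$ and verifying the ``large enough $t$'' condition $\calE_t$ exactly as in Lemma~\ref{lm:concen_ctn_peel_0_formal}, using the crude range bound $V_i\le 2R^2+8L_f^2$.

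For (ii) I would bound $\sum_i g_i^2/w_i^2\le(1+\rho^2)\hbeta_{t-1}^2$ by splitting off the current index and using $f^\star\in\cF_{t-1}$ for the past indices and $D_t/w_t=2^{-l}=\rho$ for the present one, exactly as in the Part~II sketch; this too is additive. For (iii) I would pass from $\hf_{t-1},f^\star$ to their $\upsilon$-cover representatives, controlling the change in the Catoni functional through Lemma~\ref{lm:ctn_lipshitz_var} and the change in $\sum_i\mu_i$ through a Lipschitz estimate of order $\Theta(\mathrm{Poly}(L_fR\upsilon T))$, collecting everything into $\Delta_\upsilon$. Adding the three estimates and absorbing the relative error into the $\tfrac12\sum_i\sigma_i^2/w_i^2$ target gives the claimed additive accuracy with $\hat b_t=14\iota'(\delta)(2\sigma_\eta^2+c_\eta)+43\Delta_\upsilon+268\lambda$, where the numeric constants come from chaining the bounds above; the probability budget $2\delta/(TL)$ is spent on one Catoni event and one cover union bound, to be lifted to all $t,l$ afterwards.

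The main obstacle is that, unlike the mean-estimation step, the Catoni variance estimate does \emph{not} become accurate relative to the quantity it estimates: $\theta_\Var^t V$ is a constant fraction of $\sum_i\sigma_i^2/w_i^2$ rather than $o(1)$ of it, so only a constant-factor (here factor-two) guarantee is available, and the factor must be pinned down precisely through the choice of $\theta_\Var^t$ and the moment bound of Assumption~\ref{ass:var_peel}. Compounding this is the self-referential nature of the bounds: $V$, the error terms, and $\theta_\Var^t$ all contain $\hbeta_{t-1}$ (and through $\hVar_t$ the estimate itself), so closing the argument requires the inductive event structure $\Econv^{t-1},\Evar^t$ and the standing hypothesis $\hbeta_{t-1}^2\lesssim(\iota'(\delta))^2\rho^2\sum_i\sigma_i^2/w_i^2+\text{additive}$, which keeps every $\rho^2\hbeta_{t-1}^2$-type term subordinate to the leading $\sum_i\sigma_i^2/w_i^2$ and prevents the recursion from blowing up.
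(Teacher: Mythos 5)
Your proposal is correct and follows essentially the same route as the paper's proof: the same decomposition into Catoni concentration error, the bias term $\sum_i(f^\star(x_i)-\hf_{t-1}(x_i))^2/w_i^2$, and the $\upsilon$-cover discretization, with the same key ingredients (Assumption~\ref{ass:var_peel} for $\Var[\eta_i^2]$, the uniform control via $D_i/w_i\le\rho$ and $f^\star,\hf_{t-1}\in\cF_{t-1}$, the choice of $\theta_\Var^t$ to make the relative error a constant fraction of $\sum_i\sigma_i^2/w_i^2$, and the standing hypothesis on $\hbeta_{t-1}^2$ together with $\rho\le 1/(1076\iota'(\delta))$ to close the self-referential recursion). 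The only cosmetic difference is that you invoke the uniform-in-$\theta$ Lemma~\ref{coro:catoni} where the paper applies the fixed-$\theta$ Lemma~\ref{lem:concentration-catoni} with a union bound over the cover, which is immaterial (and arguably safer, since $\theta_\Var^t$ is data-dependent).
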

\begin{proof}
For a fixed $f\in\cF_{t-1}$, let
$$
V_i(f) = \frac{1}{w_i^2}\left(y_i-f(x_i)\right)^2 = \frac{1}{w_i^2}\left(\eta_i^2 + (f^{\star}(x_i)-f(x_i))^2 + 2\eta_i(f^{\star}(x_i)-f(x_i))\right).
$$
We know that
$$
|V_i(f)| \le R^2 + 8L_f^2+2\upsilon^2:=R'.
$$
We can calculate the conditional mean of $Z_i$:
\begin{align*}
\mu_i(f) = \frac{1}{w_i^2}\left(\sigma_i^2 + (f^{\star}(x_i)-f(x_i))^2\right),
\end{align*}
and the sum of variance of $Z_i$:
\begin{align*}
\sum_{i=1}^t\Var[V_i(f)] =& \sum_{i=1}^t \frac{1}{w_i^4} \E\left[(\eta_i^2-\sigma_i^2)^2 + 4\eta_i^2(f^{\star}(x_i)-f(x_i))^2\right]\\
\le& \sum_{i=1}^t \frac{1}{w_i^4} \sigma_i^2\left(c_{\eta} + 4(f^{\star}(x_i)-f(x_i))^2\right)\\
\le& \sum_{i=1}^t \frac{\sigma_i^2}{w_i^2} \cdot \left(c_{\eta} + 16\rho^2\cdot\hbeta^2_{t-1}\right),
\end{align*}
where the second inequality holds since 
\begin{align}\label{eq:aae}
    \frac{1}{w_i^2}(f^\star(x_i)-f(x_i))^2 \le& \frac{1}{w_i^2}\cdot\sup_{f,f'\in\calF_{i-1}}\frac{|f(x_i)-f'(x_i)|^2}{\sum_{k\in[i-1]}\frac{1}{\bsigma_k^2}(f(x_k)-f'(x_k))^2+\lambda} \cdot \left(\sum_{k\in[i-1]}\frac{1}{\bsigma_k^2}(f(x_k)-f^\star(x_k))^2+\lambda\right)\notag\\
    \le& \rho^2\left(2\sum_{k\in[t-1]}\frac{1}{\bsigma_k^2}(f(x_k)-\hf_{t-1}(x_k))^2 + 2\sum_{k\in[i-1]}\frac{1}{\bsigma_k^2}(f^\star(x_k)-\hf_{t-1}(x_i))^2 + \lambda\right).\notag\\
    \le& 4\rho^2\cdot\hbeta^2_{t-1},
\end{align}
where the second inequality uses the Cauchy-Schwartz inequality, and the last inequality uses the definition of $\calF_{i-1}$.

Then, we have
\begin{align*}
\sum_{i=1}^t \mu_i^2(f) \le& \sum_{i=1}^t \frac{2}{w_i^4}\big(\sigma_i^4 + (f^{\star}(x_i)-f(x_i))^4\big)\\
\le& 2\sum_{i=1}^t\frac{\sigma_i^4}{w_i^4} + V_t(f,f^\star)\cdot\max_{i\in[t]}\frac{f^{\star}(x_i)-f(x_i))^2}{w_i^2}\\
\le& 2\sigma_{\eta}^2\sum_{i=1}^t\frac{\sigma_i^2}{w_i^2} + V_t(f,f^\star)\cdot 4\rho^2\cdot\hbeta^2_{t-1},
\end{align*}
where the first inequality uses the Cauchy-Schwarz inequality, the last inequality holds due to Assumption \ref{ass:var_peel} and \eqref{eq:aae}.

Then, since from $\theta_\Var^t = (12(2\sigma_\eta^2 + c_{\eta}+L_f^2 + 16\rho^2\cdot\hbeta^2_{t-1}))^{-1}$ we have
\begin{align*}
    &(\theta_{\Var}^t)^2\left(\sum_{i=1}^t\Var[V_i(f)] + \sum_{i=1}^t(\mu_i-\bar{\mu})^2\right)\\
    &\qquad\le (\theta_{\Var}^t)^2\left(\sum_{i=1}^t \frac{\sigma_i^2}{w_i^2} \cdot \left(2\sigma_\eta^2 + c_{\eta} + 16\rho^2\hbeta^2_{t-1}\right) + V_t(f,f^\star)\cdot 4\rho^2\hbeta^2_{t-1}\right)\\
    &\qquad\le (\theta_{\Var}^t)^2\left(t\sigma_\eta^2\left(2\sigma_\eta^2 + c_{\eta} + 16\rho^2\hbeta^2_{t-1}\right) + t\cdot4L_f^2\cdot 4\rho^2\hbeta^2_{t-1}\right)\\
    &\qquad\le \frac{t}{6},
\end{align*}
by the choice of $\iota'(\delta)$ and the condition $\calE_t$ holds, we get
\begin{align*}
    t=& \frac{t}{6} + \frac{5t}{6} \ge (\theta_{\Var}^t)^2\Big(\sum_{i=1}^t\Var[V_i(f)] + \sum_{i=1}^t(\mu_i-\bar{\mu})^2\Big) + 2\log(NTL/\delta).
\end{align*}
Therefore, we can apply Lemma \ref{lem:concentration-catoni} and with a union bound over the covering set of $\cF_{t-1}$, which is denoted as $\cF_{t-1,\upsilon}$, to obtain with probability at least $1-\delta/TL$, for any $f\in\cF_{t-1,\upsilon}$ 
\begin{align}\label{eq:aaf}
\big|\bVar_t(f) - \sum_{i=1}^t\mu_i(f)\big| \le& \theta_{\Var}^t\Big(\sum_{i=1}^t\Var[V_i(f)] + \sum_{i=1}^t\mu^2_i(f)\Big) + 2\frac{\log(NTL/\delta)}{\theta_\Var^t} \notag\\
\le& \theta_{\Var}^t\left(\sum_{i=1}^t \frac{\sigma_i^2}{w_i^2} \cdot \left(2\sigma_\eta^2 + c_{\eta} + 16\rho^2\hbeta^2_{t-1}\right) + V_t(f,f^\star)\cdot 4\rho^2\hbeta^2_{t-1}\right) + 2\frac{\log(NTL/\delta)}{\theta_\Var^t} .
\end{align}

Then, for the estimator $\hf_{t-1}$, there exists a $f_\upsilon\in\cF_{t-1,\upsilon}$ such that $\|\hf_{t-1}-f_\upsilon\|_\infty \le\upsilon$. By invoking Lemma \ref{lm:ctn_lipshitz_var}, we have
\begin{align*}
    &\left|\ctn_{\theta_\Var^t}\left(\left\{\frac{1}{w_i^2}(y_i-\hf_{t-1}(x_i))^2\right\}_{i\in[t]}\right) - \ctn_{\theta_\Var^t}\left(\left\{\frac{1}{w_i^2}(y_i-f_\upsilon(x_i))^2\right\}_{i\in[t]}\right)\right|\\
    &\qquad\le 4(R+2L_f)^3\upsilon + 2\sqrt{(R+2L_f)\upsilon}.
\end{align*}
Also, we can get
\begin{align*}
    \big|\sum_{i=1}^t\mu_i(\hf_{t-1}) - \sum_{i=1}^t\mu_i(f_\upsilon)\big| \le& 4L_f\upsilon T.
\end{align*}
Combining the results above and by the choice of $\theta_\Var^t$, we obtain that with probability at least $1-\delta$,
\begin{align*}
    &\big|\bVar_t(\hf_{t-1}) - \sum_{i=1}^t\mu_i(\hf_{t-1})\big|\\
    &\qquad\le \big|\bVar_t(f_\upsilon) - \sum_{i=1}^t\mu_i(f_\upsilon)\big| + T(4(R+2L_f)^3\upsilon + 2\sqrt{(R+2L_f)\upsilon}) + 4L_f\upsilon T\\
    &\qquad\le \theta_{\Var}^t\left(\sum_{i=1}^t \frac{\sigma_i^2}{w_i^2} \cdot \left(2\sigma_\eta^2 + c_{\eta} + 16\rho^2\hbeta^2_{t-1}\right) + V_t(f_\upsilon,f^\star)\cdot 4\rho^2\hbeta^2_{t-1}\right) + 2\frac{\iota'(\delta)}{\theta_\Var^t} + \Delta_{\upsilon,3}\\
    &\qquad\le \theta_{\Var}^t\left(\sum_{i=1}^t \frac{\sigma_i^2}{w_i^2} \cdot \left(2\sigma_\eta^2 + c_{\eta} + 16\rho^2\hbeta^2_{t-1}\right) + (V_t(\hf_{t-1},f^\star)+4L_f\upsilon)\cdot 4\rho^2\hbeta^2_{t-1}\right)+ 2\frac{\iota'(\delta)}{\theta_\Var^t}+ \Delta_{\upsilon,3}
\end{align*}
where $\Delta_{\upsilon,3}=4(R+2L_f)^3\upsilon T + 2\sqrt{(R+2L_f)\upsilon}T + 4L_f\upsilon T$.
Further, we have 
$$
\sum_{i=1}^t\mu_i(\hf_{t-1}) - \sum_{i=1}^t \frac{\sigma_i^2}{w_i^2} = V_t(\hf_{t-1},f^\star),
$$
and 
\begin{align*}
    V_t(\hf_{t-1},f^\star) =& \sum_{i\in[t-1]}\frac{(\hf_{t-1}(x_i)-f^\star(x_i))^2}{w_i^2} + \frac{(\hf_{t-1}(x_t)-f^\star(x_t))^2}{w_t^2}\\
    \le& \hbeta_{t-1}^2 + \frac{D_t^2}{w_t^2}\cdot\hbeta_{t-1}^2\\
    \le& (1+\rho^2)\hbeta_{t-1}^2,
\end{align*}
where the first inequality uses the definition of $D_t$ and $f^\star\in\cF_{t-1}$. Combining the results above and the value of $\theta_\Var^t = (12(2\sigma_\eta^2 + c_{\eta} + 16\rho^2\hbeta^2_{t-1}))^{-1}$ leads to
\begin{align*}
&\big|\bVar_t(\hf_t) - \sum_{i=1}^t \frac{\sigma_i^2}{w_i^2}\big|\\
&\qquad\le \theta_{\Var}^t\left(\sum_{i=1}^t \frac{\sigma_i^2}{w_i^2} \cdot \left(2\sigma_\eta^2 + c_{\eta} + 16\rho^2\hbeta^2_{t-1}\right) + ((1+\rho^2)\hbeta_{t-1}^2+4L_f\upsilon T)\cdot 4\rho^2\hbeta^2_{t-1}\right) + 2\frac{\iota'(\delta)}{\theta_\Var^t} + \Delta_{\upsilon,3} + (1+\rho^2)\hbeta_{t-1}^2\\
&\qquad\le \frac{1}{4}\sum_{i=1}^t \frac{\sigma_i^2}{w_i^2} + \frac{1+\rho^2}{16}\hbeta_{t-1}^2 + \frac{1}{4}L_f\upsilon T + 8\iota'(\delta)(2\sigma_\eta^2+c_\eta) + 64\rho^2\hbeta_{t-1}^2+\Delta_{\upsilon,3}+(1+\rho^2)\hbeta_{t-1}^2\\
&\qquad\le \frac{1}{4}\sum_{i=1}^t \frac{\sigma_i^2}{w_i^2} + 67\Big(4320(\iota'(\delta))^2\rho^2\sum_{i=1}^{t-1} \frac{\sigma_i^2}{w_i^2} + 16\times 5660(\iota'(\delta))^32\rho^2(2\sigma_\eta^2+c_\eta)+120\iota'(\delta)\rho^2+26\Delta_{\upsilon,2}+4\lambda\Big) + \frac{1}{4}L_f\upsilon T +
\Delta_{\upsilon,3}\\
&\qquad\le \frac{1}{2}\sum_{i=1}^t \frac{\sigma_i^2}{w_i^2} + 67\Big(16\times 5660(\iota'(\delta))^3\rho^2(2\sigma_\eta^2+c_\eta)+26\Delta_{\upsilon,2}+4\lambda\Big) + 8\iota'(\delta)(2\sigma_\eta^2+c_\eta) + \Delta_\upsilon\\
&\qquad\le \frac{1}{2}\sum_{i=1}^t \frac{\sigma_i^2}{w_i^2} + 14\iota'(\delta)(2\sigma_\eta^2+c_\eta) + 43\Delta_{\upsilon}+268\lambda,
\end{align*}
where the second inequality uses $\rho\le 1$, and the third inequality uses the value of $\hbeta_{t-1}^2$, the last second inequalities holds since we know from $\rho\le 1/(1076\iota'(\delta))$ that $67\times4320(\iota'(\delta))^2\rho^2\le 1/4$, and the last inequality also holds due to $67\times4\times12^3(\iota'(\delta))^2\rho^2\le 2$,
and we define $\Delta_{\upsilon}=67\times26\Delta_{\upsilon,2}+\Delta_{\upsilon,3}+\frac{1}{4}L_f\upsilon T = \Theta(L_f\upsilon T^2 + L_f^4R^2\iota'(\delta)\upsilon^{0.5}T^{3.5} + L_f^3R^{1.5}\upsilon^{0.25}T^{1.25} + R^3L_f^3\upsilon T + \sqrt{RL_f\upsilon}T)$, which concludes the proof.
There is a fixable error. Now, I change the plug-in in $\hVar_t$ from $\hf_t$ to $\hf_{t-1}$, and change the analysis above, so now the upper and lower bound is reasonable.
\end{proof}

\begin{lemma}[Formal version of Lemma \ref{lm:variance}]
\label{lm:E_var}
Recall the definition of the variance estimation from \eqref{eq:def_hatvar}:
\begin{align*}
\hVar_t^l =& \bVar_t^l+ 14\iota'(\delta)(2\sigma_\eta^2+c_\eta) + 43\Delta_{\upsilon}+268\lambda^l,
\end{align*}
where $\theta_\Var^{t,l} = (4(2\sigma_\eta^2 + c_{\eta}+L_f^2 +  2^{-2l+4}\cdot\hbeta^2_{t-1}))^{-1}$ and $\Delta_{\upsilon}=\Theta(L_f\upsilon T^2 + L_f^4R^2\iota'(\delta)\upsilon^{0.5}T^{3.5} + L_f^3R^{1.5}\upsilon^{0.25}T^{1.25} + R^3L_f^3\upsilon T + \sqrt{RL_f\upsilon}T)$.
Then, conditioned on $\Econv^{t-1}$, when $2^l\ge 1076\iota'(\delta)$ we have with probability at least $1-2\delta$ for all $t\ge 14(\iota'(\delta))^2$,
\begin{align*}
    &\frac{1}{2}\sum_{i\in\Psi_t^l} \frac{\sigma_i^2}{w_i^2} \le \hVar_t^l \le \frac{3}{2}\sum_{i\in\Psi_t^l} \frac{\sigma_i^2}{w_i^2} + 2\Big(14\iota'(\delta)(2\sigma_\eta^2+c_\eta) + 43\Delta_{\upsilon}+268\lambda^l\Big),
\end{align*}
which implies that $\cup_{t\in[T]:t\ge 14(\iota'(\delta))^2}\Evar^t$ holds.
\end{lemma}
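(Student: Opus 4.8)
The plan is to deduce both inequalities directly from the two-sided concentration of Lemma~\ref{lm:variance_peel}, specialized to level $l$ by taking $\rho = 2^{-l}$, $\Psi_t = \Psi_t^l$, $\lambda = \lambda^l$, $\hbeta_{t-1} = \hbeta_{t-1}^l$ and plug-in $\hf_{t-1}^l$. Writing $C := 14\iota'(\delta)(2\sigma_\eta^2+c_\eta) + 43\Delta_\upsilon + 268\lambda^l$, that lemma gives $\bigl|\bVar_t^l - \tfrac12\sum_{i\in\Psi_t^l}\sigma_i^2/w_i^2\bigr| \le C$. Since by~\eqref{eq:def_hatvar} we have $\hVar_t^l = \bVar_t^l + C$, adding $C$ to both sides of this estimate yields
\begin{align*}
\hVar_t^l &\ge \tfrac12\sum_{i\in\Psi_t^l}\tfrac{\sigma_i^2}{w_i^2}, \\
\hVar_t^l &\le \tfrac12\sum_{i\in\Psi_t^l}\tfrac{\sigma_i^2}{w_i^2} + 2C \le \tfrac32\sum_{i\in\Psi_t^l}\tfrac{\sigma_i^2}{w_i^2} + 2C,
\end{align*}
which are exactly the two claimed bounds (the looser factor $\tfrac32$ is harmless since the sum is nonnegative). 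The lower bound is precisely $\Evar^t$, so establishing it for all $t \ge 14(\iota'(\delta))^2$ and all admissible $l$ proves $\cup_t \Evar^t$.

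The substance is therefore checking the hypotheses of Lemma~\ref{lm:variance_peel}. Conditioning on $\Econv^{t-1}$ furnishes $f^\star \in \cF_{t-1}^l$; the restriction $2^l \ge 1076\iota'(\delta)$ gives $\rho = 2^{-l} \le 1/(1076\iota'(\delta))$; and Assumption~\ref{ass:var_peel} supplies the moment controls $\sigma_t^2 \le \sigma_\eta^2$ and $\Var[\eta_t^2\mid\cdot]\le c_\eta\Var[\eta_t\mid\cdot]$ used inside the lemma. The only nontrivial hypothesis is the upper bound on $\hbeta_{t-1}^l$ in terms of the cumulative weighted variance. I would supply this by induction on $t$, carrying forward the \emph{upper} inequality of this very lemma at the previous update as the induction hypothesis: assuming $\hVar_{t-1}^l \le \tfrac32\sum_{i\in\Psi_{t-1}^l}\sigma_i^2/w_i^2 + 2C$, I plug it into the definition $(\hbeta_{t-1}^l)^2 = 2880(\iota'(\delta))^2 2^{-2l}\hVar_{t-1}^l + 60\iota'(\delta)2^{-2l} + 12\Delta_{\upsilon,2} + 2\lambda^l$ from~\eqref{eq:hbeta_tl}. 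Using $2^{-2l} = \rho^2$, the leading term becomes $2880\cdot\tfrac32(\iota'(\delta))^2\rho^2\sum_{i\in\Psi_{t-1}^l}\sigma_i^2/w_i^2 = 4320(\iota'(\delta))^2\rho^2\sum\sigma_i^2/w_i^2$, matching the first term of the bound required by Lemma~\ref{lm:variance_peel}, while the $2C$ contribution and the residual additive terms are absorbed into the coefficients $16\times 5660$, $120$, $26$, $4$ appearing there. The constants in~\eqref{eq:hbeta_tl} are calibrated precisely so this closes; the base case is the initialization $\hbeta_0^l = 2^{-l+1}$ with $\Psi_0^l = \emptyset$, making the sums vanish and the bound trivial.

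Two remaining points finish the argument. First, the threshold event required for the Catoni concentration inside Lemma~\ref{lm:variance_peel} holds once $t \ge 14(\iota'(\delta))^2$: its proof controls $(\theta_\Var^{t,l})^2$ times the aggregate variance of the plug-in summands by $t/6$ using only the crude bounds $\sum_{i\in\Psi_t^l}\sigma_i^2/w_i^2 \le \sigma_\eta^2 t$ and $V_t^l(\cdot,f^\star) \le 4L_f^2 t$ together with the definition of $\theta_\Var^{t,l}$, so no a priori control of the ratio $\sum\sigma_i^2/w_i^2 \,/\, \hVar_t^l$ is needed and the threshold reduces to $\tfrac{5t}{6} \ge 2\log(NTL/\delta) = \Theta((\iota'(\delta))^2)$. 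Second, Lemma~\ref{lm:variance_peel} holds with probability $1 - 2\delta/(TL)$ for each fixed $(t,l)$, so a union bound over $t\in[T]$ and $l\in[L]$ yields the stated $1 - 2\delta$. The principal obstacle is the self-referential structure: the hypothesis of Lemma~\ref{lm:variance_peel} on $\hbeta_{t-1}^l$ can only be verified through the upper variance bound being proved, so the whole statement must be organized as a joint induction over $t$ that threads the upper bound forward to control $\hbeta_{t-1}^l$, while keeping the concentration threshold independent of the conclusion via the crude a priori variance bounds.
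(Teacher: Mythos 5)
Your proposal is correct and follows essentially the same route as the paper: apply Lemma~\ref{lm:variance_peel} with $\rho=2^{-l}$ under $\Econv^{t-1}$, add the bonus $C = 14\iota'(\delta)(2\sigma_\eta^2+c_\eta)+43\Delta_\upsilon+268\lambda^l$ to the resulting two-sided bound on $\bVar_t^l$, and union bound over $(t,l)$. Your handling of the two delicate hypotheses is in fact cleaner than the paper's: the paper verifies the threshold event $\calE_t$ post hoc from the lemma's own conclusion (a circularity you avoid by noting that the crude bounds $\sum_{i\in\Psi_t^l}\sigma_i^2/w_i^2\le\sigma_\eta^2 t$ and $V_t^l(\cdot,f^\star)\le 4L_f^2 t$ inside the proof of Lemma~\ref{lm:variance_peel} make the ratio condition unnecessary), and the required bound on $(\hbeta_{t-1}^l)^2$ --- which the paper supplies only implicitly through the sequential conditioning with Lemma~\ref{lm:confidence_set_peel} --- you thread explicitly through the induction on $t$, with the constant bookkeeping ($2880\cdot\tfrac{3}{2}=4320$, absorption of the $2C$ terms using $2^l\ge 1076\iota'(\delta)$) closing exactly as in the paper.
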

\begin{proof}[Proof of Lemma \ref{lm:variance}]
If we suppose that $\calE_t$ happens, since $\Econv^{t-1}$ holds true, we can apply Lemma \ref{lm:variance_peel} to each $l\in[L]$ satisfying that $2^l\ge 1076\iota'(\delta)$ with $\rho=2^{-l}$ and obtain with probability $1-2\delta$,
\begin{align*}
     \left|\bVar_t(\hf_t) - \frac{1}{2}\sum_{i=1}^t \frac{\sigma_i^2}{w_i^2}\right| \le 14\iota'(\delta)(2\sigma_\eta^2+c_\eta) + 43\Delta_{\upsilon}+268\lambda^l,
\end{align*}
which indicates the desired result according to the definition of $\hVar_t^l$.

Furthermore, we show that when $t\ge 14(\iota'(\delta))^2$, $\calE_t$ holds true.
\begin{align*}
    t &\ge 14(\iota'(\delta))^2 = 4(\iota'(\delta))^2\times 2 + 6(\iota'(\delta))^2\\
    &\ge 4(\iota'(\delta))^2\frac{\sum_{i\in\Psi_t^l}\frac{\sigma_i^2}{w_i^2}}{\bVar_t^l}+6(\iota'(\delta))^2,
\end{align*}
where the second inequality uses $\frac{1}{2}\sum_{i\in\Psi_t^l} \frac{\sigma_i^2}{w_i^2} \le \hVar_t^l$.
\end{proof}

\paragraph{Part III: Sharpness of the confidence set}
\begin{lemma}\label{lm:confidence_set_peel}
Conditioned on $\Evar^t$, if we take $\upsilon=O(1/L_f^{12}R^6(\iota'(\delta))^2T^7)$, $\epsilon=2^{-2l}$, and take the confidence parameter as \eqref{eq:hbeta_tl}:
\begin{align*}
    (\hbeta_t^l)^2 = 2880(\iota'(\delta))^22^{-2l}\hVar_t^l + 60\iota'(\delta)2^{-2l} +12\Delta_{\upsilon,2} +2\lambda^l,
\end{align*}
where $\Delta_\upsilon=\Theta\big(L_f\upsilon T^2 + L_f^4R^2\iota'(\delta)\upsilon^{0.5}T^{3.5} + L_f^3R^{1.5}\upsilon^{0.25}T^{1.25} + R^3L_f^3\upsilon T + \sqrt{RL_f\upsilon}T\big)$. 
We have
$$
(\hbeta_t^l)^2 \le 4320(\iota'(\delta))^22^{-2l}\sum_{i\in\Psi_t^l} \frac{\sigma_i^2}{w_i^2} + 16\times 5660(\iota'(\delta))^3 2^{-2l}(2\sigma_\eta^2+c_\eta)+120\iota'(\delta)2^{-2l}+26\Delta_\upsilon+4\lambda^l.
$$

Additionally, with probability at least $1-\delta$, the following event occurs: 
$$\cup_{t\in[T]:t\ge 3(\iota'(\delta))^2}\Econv^t =\left\{f^\star\in\calF_t^l,~\text{for}~l\in[L],~2^l\ge 1076\iota'(\delta)\right\}.$$
\end{lemma}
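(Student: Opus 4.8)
The statement has two components: the closed-form upper bound on $(\hbeta_t^l)^2$ in terms of the true aggregate variance, and the high-probability inclusion $f^\star\in\calF_t^l$. The plan is to treat the first as an essentially algebraic consequence of the variance-accuracy guarantee and the second as the inductive step coupling the events $\Evar^t$ and $\Econv^t$ from \eqref{def:two_events}. For the first bound I would substitute the upper estimate of $\hVar_t^l$ from Lemma \ref{lm:E_var} (equivalently Lemma \ref{lm:variance}), namely $\hVar_t^l\le \tfrac32\sum_{i\in\Psi_t^l}\sigma_i^2/w_i^2 + 2(14\iota'(\delta)(2\sigma_\eta^2+c_\eta)+43\Delta_\upsilon+268\lambda^l)$, directly into the defining expression $(\hbeta_t^l)^2=2880(\iota'(\delta))^2 2^{-2l}\hVar_t^l+60\iota'(\delta)2^{-2l}+12\Delta_{\upsilon,2}+2\lambda^l$. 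The variance coefficient becomes $2880\cdot\tfrac32=4320$, matching the target, and the remaining constants together with the additive Catoni-error terms collect into the $\iota'(\delta)^3 2^{-2l}(2\sigma_\eta^2+c_\eta)$, $\iota'(\delta)2^{-2l}$, $\Delta_\upsilon$, and $\lambda^l$ contributions. The only care needed is that the covering residuals ($\Delta_{\upsilon,2}$ versus $\Delta_\upsilon$, together with the extra $\iota'(\delta)$ prefactors they pick up) fit under the $26\Delta_\upsilon$ slack; this is where the extreme smallness of $\upsilon=O(1/L_f^{12}R^6(\iota'(\delta))^2T^7)$ enters, making every $\Delta_\upsilon$-type quantity negligible regardless of its polynomial prefactor.

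For the confidence-set inclusion I would argue by induction on $t$, carrying the hypothesis $\Econv^{t-1}$ so that $f^\star\in\calF_{t-1}^l$ and Lemma \ref{lm:concen_ctn_peel_1} applies. Fixing a level $l$ with $2^l\ge 1076\iota'(\delta)$ and setting $\rho=2^{-l}$, conditioning on $\Evar^t$ gives from Lemma \ref{lm:concen_ctn_peel_1} that $V_t^l(\hf_t^l,f^\star)+\lambda^l\le \beta_t^2$, where $\beta_t^2=\tfrac12\hbeta_{t-1}^2+6(48(\iota'(\delta))^2\rho^2(\frac{(\sum_i\sigma_i^2/w_i^2)^2}{\hVar_t^l}+\hVar_t^l)+5\iota'(\delta)\rho^2+\Delta_{\upsilon,2})+\lambda^l$. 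The key simplification is that under $\Evar^t$ we have $\sum_{i\in\Psi_t^l}\sigma_i^2/w_i^2\le 2\hVar_t^l$, whence $(\sum_i\sigma_i^2/w_i^2)^2/\hVar_t^l\le 4\hVar_t^l$ and the bracketed variance factor is at most $5\hVar_t^l$. Substituting $\rho^2=2^{-2l}$ and collecting constants yields exactly $\beta_t^2\le \tfrac12\hbeta_{t-1}^2+\tfrac12(\hbeta_t^l)^2$, which I then close to $\beta_t^2\le(\hbeta_t^l)^2$ using that the confidence radius obeys the contraction $x\mapsto\tfrac12\hbeta_{t-1}^2+(\text{new terms})$ whose fixed point is precisely the stated closed form, equivalently by propagating $\hbeta_{t-1}^2\le(\hbeta_t^l)^2$ through monotonicity of $\hVar_t^l$. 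This gives $V_t^l(\hf_t^l,f^\star)+\lambda^l\le(\hbeta_t^l)^2$, i.e. $f^\star\in\calF_t^l$ by \eqref{eq:confidence_set_peel}. A union bound over the $1-\delta/(TL)$ failure events of Lemma \ref{lm:concen_ctn_peel_1} across $t\in[T]$ and the relevant levels $l$ yields the claimed probability $1-\delta$.

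The main obstacle is exactly the last step of the second part: reconciling the recursively generated $\tfrac12\hbeta_{t-1}^2$ term inside $\beta_t^2$ with the closed-form $(\hbeta_t^l)^2$. This forces the factor-$\tfrac12$ geometric structure of the confidence radius to be made explicit, and requires showing that $(\hbeta_t^l)^2$ is non-decreasing along the update times of level $l$ — which rests on combining the lower bound $\hVar_t^l\ge\tfrac12\sum\sigma_i^2/w_i^2$ from $\Evar$ with the upper bound of Lemma \ref{lm:E_var} and the nesting $\Psi_{t-1}^l\subset\Psi_t^l$. A secondary, purely tedious point is the covering bookkeeping: verifying that $12\Delta_{\upsilon,2}$ together with the $\Delta_\upsilon$-contributions spawned by the $\hVar_t^l$ substitution genuinely fit inside the single $26\Delta_\upsilon$ budget, which I would resolve by fixing the $\Theta$-constants in $\Delta_\upsilon$ large enough and then choosing $\upsilon$ at the stated polynomial scale so that all such residuals are dominated.
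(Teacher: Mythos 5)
Your proposal is correct and follows essentially the same route as the paper's proof: the closed-form bound comes from substituting the upper estimate of $\hVar_t^l$ from Lemma \ref{lm:E_var} into the definition of $(\hbeta_t^l)^2$ (with $2880\cdot\tfrac32=4320$), and the inclusion $f^\star\in\calF_t^l$ comes from invoking Lemma \ref{lm:concen_ctn_peel_1} with $\rho=2^{-l}$, using $\Evar^t$ to reduce the variance factor to $5\hVar_t^l$, and closing the half-recursion against the definition of $\hbeta_t^l$. You also correctly flag the one point the paper leaves implicit — that absorbing $\tfrac12(\hbeta_{t-1}^l)^2$ requires $(\hbeta_{t-1}^l)^2\le(\hbeta_t^l)^2$, i.e.\ monotonicity of the radius along update times of level $l$.
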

\begin{proof}
By invoking Lemma \ref{lm:concen_ctn_peel_1} for $l$ with $\rho=2^{-l}$, we get with probability at least $1-\delta$,
\begin{align}\label{eq:aag}
    &\sum_{i\in\Psi_t^l}\frac{(f^\star(x_i)-\hf_t^l(x_i))^2)}{w_i^2}+\lambda^l\notag\\
    &\qquad\le \frac{1}{2}(\hbeta_{t-1}^l)^2 + 6\bigg(48(\iota'(\delta))^22^{-2l}\Big(\frac{\big(\sum_{i\in\Psi_t^l}\sigma_i^2/w_i^2\big)^2}{\hVar_t^l} + \hVar_t^l\Big) + 5\iota'(\delta)2^{-2l} +\Delta_{\upsilon,2}\bigg)+\lambda^l.
\end{align}

Then, since Corollary \ref{lm:E_var} implies that with probability at least $1-2\delta$,
\begin{align}\label{eq:aaj}
    &\frac{1}{2}\sum_{i\in\Psi_t^l} \frac{\sigma_i^2}{w_i^2} \le \hVar_t^l \le \frac{3}{2}\sum_{i\in\Psi_t^l} \frac{\sigma_i^2}{w_i^2} + 2\Big(14\iota'(\delta)(2\sigma_\eta^2+c_\eta) + 43\Delta_{\upsilon}+268\lambda^l\Big),
\end{align}
we get with probability at least $1-2\delta$,
\begin{align*}
    \frac{\big(\sum_{i\in\Psi_t^l}\sigma_i^2/w_i^2\big)^2}{\hVar_t^l} + \hVar_t^l \le 5\hVar_t^l.
\end{align*}
Hence, we deduce that
\begin{align*}                          
    &\sum_{i\in\Psi_t^l}\frac{(f^\star(x_i)-\hf_t^l(x_i))^2)}{w_i^2}+\lambda^l\notag\\
    &\qquad\le \frac{1}{2}(\hbeta_{t-1}^l)^2 + 6\bigg(48(\iota'(\delta))^22^{-2l}5\hVar_t^l + 5\iota'(\delta)2^{-2l} +\Delta_{\upsilon,2}\bigg)+\lambda^l\\
    &\qquad\le \frac{1}{2}(\hbeta_{t-1}^l)^2 + 1440(\iota'(\delta))^22^{-2l}\hVar_t^l + 30\iota'(\delta)2^{-2l} +6\Delta_{\upsilon,2} +\lambda^l\\
    &\qquad\le (\hbeta_t^l)^2,
\end{align*}
where the last inequality uses the definition of $\hbeta_t^l$.

Moreover, we define
$$
B_\beta^l = 16\times 5660(\iota'(\delta))^32^{-2l}(2\sigma_\eta^2+c_\eta)+120\iota'(\delta)2^{-2l}+26\Delta_\upsilon+4\lambda^l,
$$
and deduce from \eqref{eq:aaj} that
\begin{align*}
    (\hbeta_t^l)^2 =& 2880(\iota'(\delta))^22^{-2l}\hVar_t^l + 60\iota'(\delta)2^{-2l} +12\Delta_{\upsilon,2} +2\lambda^l\\
    \le&4320(\iota'(\delta))^22^{-2l}\sum_{i\in\Psi_t^l} \frac{\sigma_i^2}{w_i^2} + 5660(\iota'(\delta))^22^{-2l}\Big(14\iota'(\delta)(2\sigma_\eta^2+c_\eta) + 43\Delta_{\upsilon}+268\lambda^l\Big)+ 60\iota'(\delta)2^{-2l} +12\Delta_{\upsilon,2} +2\lambda^l\\
    \le& 4320(\iota'(\delta))^22^{-2l}\sum_{i\in\Psi_t^l} \frac{\sigma_i^2}{w_i^2} + \frac{1}{2}B_\beta^l + \frac{1}{2}B_\beta^l\\
    =& 4320(\iota'(\delta))^22^{-2l}\sum_{i\in\Psi_t^l} \frac{\sigma_i^2}{w_i^2} + 16\times 5660(\iota'(\delta))^32^{-2l}(2\sigma_\eta^2+c_\eta)+120\iota'(\delta)2^{-2l}+26\Delta_\upsilon+4\lambda^l,
\end{align*}
where the last inequality holds since $2^l\ge 1076\iota'(\delta)$.
\end{proof}

\paragraph{Part IV: Bounding the regret conditioning on good events.}
Recall the notation for the eluder coefficient for each layer 
\begin{align*}
    &D_t^l(x) = \sup_{f,f'\in\calF_{t-1}^l} \frac{|f(x)-f'(x)|}{\sqrt{\sum_{i\in\Psi_{t-1}^l}\frac{1}{w_i^2}(f(x_i)-f'(x_i))^2 + \lambda^l}}.
\end{align*}

\begin{lemma}\label{lm:regret_l_formal}
Let $l_\star=\lceil \log_2(1076\iota'(\delta)) \rceil$.
    Under Assumption \ref{ass:var_peel} and Algorithm \ref{alg:Peeling}, if $\cup_{t\ge14(\iota'(\delta))^2}\Econv^t$ happens, then, for all $l\in[l_\star,L]$, $t\in\Psi_T^l$, and $X_t^*=\argmax_{x\in\cX_t}f^\star(x)\in\cX_t^l$, and the regret at the $l$-th level is bounded by
    $$
    \sum_{t\in\Psi_T^l:t\ge 14(\iota'(\delta))^2} (f^\star(x_t^*) - f^\star(x_t) )\le 2^{-l+3}\hbeta_T^{l-1}|\Psi_T^l|.
    $$
\end{lemma}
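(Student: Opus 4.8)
The plan is to establish the two assertions in sequence: first that the optimal action $x_t^\star$ survives every round of level-wise elimination, and then to control the per-round regret of each played action by the confidence width of the level from which it was selected.

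\textbf{Containment $x_t^\star \in \calX_t^l$.} I would argue by induction on the level index $l$, for a fixed $t \ge 14(\iota'(\delta))^2$. The base case $x_t^\star \in \calX_t^{l_\star} = \cX_t$ is immediate. For the inductive step, suppose $x_t^\star \in \calX_t^l$ and that the algorithm executes the elimination branch at level $l$, so $D_t^l(x) \le 2^{-l}$ for every $x \in \calX_t^l$. Conditioning on $\Econv^{t-1}$ gives $f^\star \in \calF_{t-1}^l$ whenever $2^l \ge 1076\iota'(\delta)$, i.e.\ for $l \ge l_\star$; since $\hf_{t-1}^l$ is the center of the confidence set, the defining constraint of $\calF_{t-1}^l$ applied to $f^\star$ yields directly
\[
\sqrt{\textstyle\sum_{i\in\Psi_{t-1}^l}(f^\star(x_i)-\hf_{t-1}^l(x_i))^2/w_i^2 + \lambda^l} \le \hbeta_{t-1}^l .
\]
Hence, applying the definition of $D_t^l$ to the pair $(f^\star,\hf_{t-1}^l)$, for every $x$ with $D_t^l(x)\le 2^{-l}$ we get $|f^\star(x)-\hf_{t-1}^l(x)|\le 2^{-l}\hbeta_{t-1}^l$. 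Writing $\hat x = \argmax_{x\in\calX_t^l}\hf_{t-1}^l(x)$ and using $f^\star(x_t^\star)\ge f^\star(\hat x)$ (legitimate since $\hat x\in\calX_t^l\subseteq\cX_t$), I chain
\[
\hf_{t-1}^l(x_t^\star) \ge f^\star(x_t^\star) - 2^{-l}\hbeta_{t-1}^l \ge f^\star(\hat x) - 2^{-l}\hbeta_{t-1}^l \ge \hf_{t-1}^l(\hat x) - 2^{-l+1}\hbeta_{t-1}^l,
\]
which is exactly the survival threshold, so $x_t^\star \in \calX_t^{l+1}$.

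\textbf{Per-level regret.} Fix $l$ with $l_\star < l \le L$ and $t \in \Psi_T^l$, so that $x_t$ is the large-uncertainty action selected from $\calX_t^l$, and by the containment step $x_t^\star \in \calX_t^l \subseteq \calX_t^{l-1}$. The set $\calX_t^l$ was produced by the elimination at level $l-1$, which required $D_t^{l-1}(x)\le 2^{-l+1}$ for all $x\in\calX_t^{l-1}$. I would decompose
\[
f^\star(x_t^\star) - f^\star(x_t) = \underbrace{\big(f^\star(x_t^\star) - \hf_{t-1}^{l-1}(x_t^\star)\big)}_{(a)} + \underbrace{\big(\hf_{t-1}^{l-1}(x_t^\star) - \hf_{t-1}^{l-1}(x_t)\big)}_{(b)} + \underbrace{\big(\hf_{t-1}^{l-1}(x_t) - f^\star(x_t)\big)}_{(c)} .
\]
Terms $(a)$ and $(c)$ are each at most $2^{-l+1}\hbeta_{t-1}^{l-1}$ by the same pairwise-uncertainty bound as above, now at level $l-1$ (valid since $l-1\ge l_\star$ so $f^\star\in\calF_{t-1}^{l-1}$). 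Term $(b)$ is at most $2^{-l+2}\hbeta_{t-1}^{l-1}$ because $x_t$ survived the level-$(l-1)$ elimination while $x_t^\star\in\calX_t^{l-1}$, giving $\hf_{t-1}^{l-1}(x_t)\ge \max_{x'}\hf_{t-1}^{l-1}(x') - 2^{-l+2}\hbeta_{t-1}^{l-1} \ge \hf_{t-1}^{l-1}(x_t^\star) - 2^{-l+2}\hbeta_{t-1}^{l-1}$. Summing the three contributions yields the per-round bound $2^{-l+3}\hbeta_{t-1}^{l-1}$. Using that $\hbeta_t^{l-1}$ is non-decreasing in $t$ (its defining expression \eqref{eq:hbeta_tl} only grows as $\Psi_t^{l-1}$ accumulates), I replace $\hbeta_{t-1}^{l-1}$ by $\hbeta_T^{l-1}$ and sum over $t\in\Psi_T^l$ with $t\ge 14(\iota'(\delta))^2$ to obtain the claimed $2^{-l+3}\hbeta_T^{l-1}|\Psi_T^l|$.

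\textbf{Main obstacle.} The delicate points are the bookkeeping of constants so that the three contributions collapse exactly to the factor $2^{-l+3}=8\cdot2^{-l}$, and the reliance on $f^\star\in\calF_{t-1}^{l-1}$, which is only available once $2^{l-1}\ge 1076\iota'(\delta)$; this is precisely why the chained argument needs $l-1\ge l_\star$. The boundary level $l=l_\star$ has $\calX_t^{l_\star}=\cX_t$ with no preceding elimination, so its contribution must be absorbed separately — either by the trivial range bound $2L_f$ over the $O(2^{2l_\star}\dim_{1,T}(\calF))$ rounds in $\Psi_T^{l_\star}$, which is lower order, or by the convention initializing $\hbeta_0^{l}=2^{-l+1}$. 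The remaining ingredient, that $\Econv^{t-1}$ holds with high probability for all relevant $t$, is supplied by the confidence-set sharpness step (Lemma \ref{lm:confidence_set_peel}).
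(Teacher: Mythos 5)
Your proposal is correct and follows essentially the same route as the paper's proof: the same induction over levels showing $x_t^\star$ survives each elimination (via $|f^\star(x)-\hf_{t-1}^l(x)|\le D_t^l(x)\,\hbeta_{t-1}^l$ from $f^\star\in\calF_{t-1}^l$), the same three-term decomposition at level $l-1$ giving $2^{-l+2}+2\cdot 2^{-l+1}=2^{-l+3}$ per round, and the same monotonicity step to replace $\hbeta_t^{l-1}$ by $\hbeta_T^{l-1}$ before summing. Your closing remark about the boundary level $l=l_\star$ is also consistent with the paper, which applies this lemma only for $l\ge l_\star+1$ in the main theorem and absorbs $\Psi_T^{l_\star}$ by the trivial range bound.
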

\begin{proof}[Proof of Lemma \ref{lm:regret_l}]
First according to Algorithm \ref{alg:Peeling}, we will prove that for all $t\in\Psi_T^l$, $X_t^*\in\cX_t^l$ by induction from $l_\star$ to $l$, where $l$ is the level from which $x_t$ arises, and note that Algorithm \ref{alg:Peeling} starts from level $l_\star$. 
$$
(\hbeta_{t-1}^l)^2 = \Theta\bigg((\iota'(\delta))^22^{-2l}\sum_{i\in\Psi_{t-1}^l}\sigma_i^2/w_i^2 + (\iota'(\delta))^32^{-2l}(\sigma_\eta^2+c_\eta) + \Delta_\upsilon\bigg).
$$

Assume that $X_t^*\in\cX_t^{l_0}$ for some $l_0\in[l_\star,l-1]$ and $\cX_t^{l_0+1}$ exists. Since $\Psi_t^{l_0+1}$ exists only if $D_t^{l_0}(x)\le 2^{-l_0}$ for all $x\in\cX_t^{l_0}$. Then, we denote $X_t^{l_0}=\argmax_{x\in\cX_t^{l_0}}\hf_{t-1}^{l_0}(x)$ and deduce that
\begin{align*}
    \hf_{t-1}^{l_0}(x_t^*) - \hf_{t-1}^{l_0}(x_t^{l_0}) \ge& f^\star(x_t^*) - f^\star(x_t^{l_0}) - \left|\hf_{t-1}^{l_0}(x_t^*) - f^\star(x_t^*)\right| - \left|\hf_{t-1}^{l_0}(x_t^{l_0}) - f^\star(x_t^{l_0})\right|\\
    \ge& - D_t^{l_0}(x_t^*)\hbeta_{t-1}^l - D_t^{l_0}(x_t^*)\hbeta_{t-1}^l\\
    \ge& -2^{-l_0+1}\hbeta_{t-1}^l,
\end{align*}
where the second inequality uses $X_t^*=\argmax_{x\in\cX_t}f^\star(x)$, and the last inequality holds by using  $D_t^{l_0}(x_t^*)\le 2^{-l_0},~D_t^{l_0}(x_t^{l_0})\le 2^{-l_0}$. Therefore, from the definition of $\cX_t^{l_0+1}=\{x\in\cX_t^{l_0} \mid \hf_{t-1}^{l_0}(x) \ge \max_{x\in\cX_t^{l_0}} \hf_{t-1}^{l_0}(x) - 2^{-l_0+1}\hbeta_{t-1}^{l-1}\}$, we obtain that $X_t^*\in\cX_t^{l_0+1}$. Hence, via induction, we can prove that $X_t^*\in\cX_t^l$.

Next, for the $l_0+1\le l$ since $X_t^*,X_t\in\cX_t^l$, we know that 
\begin{align}\label{eq:aah}
    \hf_{t-1}^{l-1}(x_t^*) - \hf_{t-1}^{l-1}(x_t) \le 2^{-l+2}\hbeta_t^{l-1},
\end{align}
and 
\begin{align}\label{eq:aai}
    D_t^{l-1}(x_t^*)\le 2^{-l+1},\quad D_t^{l-1}(x_t)\le 2^{-l+1}.
\end{align}
Thus, we derive that
\begin{align*}
    f^\star(x_t^*) - f^\star(x_t) \le& \hf_{t-1}^{l-1}(x_t^*) - \hf_{t-1}^{l-1}(x_t) + \left|f^\star(x_t^*) - \hf_{t-1}^{l-1}(x_t^*)\right| + \left|f^\star(x_t) - \hf_{t-1}^{l-1}(x_t)\right|\\
    \le& 2^{-l+2}\hbeta_t^{l-1} + D_t^{l-1}(x_t^*)\hbeta_t^{l-1} + D_t^{l-1}(x_t)\hbeta_t^{l-1}\\
    \le& 2^{-l+2}\hbeta_t^{l-1} + 2^{-l+1}\hbeta_t^{l-1} + 2^{-l+1}\hbeta_t^{l-1}\\
    =& 2^{-l+3}\hbeta_t^{l-1},
\end{align*}
which implies that
\begin{align*}
    \sum_{t\in\Psi_T^l:t\ge 14(\iota'(\delta))^2} (f^\star(x_t^*) - f^\star(x_t) )\le & \sum_{t\in\Psi_T^l}2^{-l+3}\hbeta_t^{l-1}\\
    \le& 2^{-l+3}\hbeta_T^{l-1}|\Psi_T^l|.
\end{align*}
\end{proof}

Now, we provide the proof for the main theorem.
\begin{proof}[Proof of Theorem \ref{th:unknown_var}]
Recall the definition of two good events
\begin{align*}
     &\Evar^t:=\left\{\sum_{i\in\Psi_t^l}\frac{\sigma_i^2}{w_i^2} \le 2\hVar_t^l,~\text{for}~l\in[L],~2^l\ge 1076\iota'(\delta)\right\},\\
    &\Econv^t \defeq \left\{f^\star\in\calF_t^l,~\text{for}~l\in[L],~2^l\ge 1076\iota'(\delta)\right\}.
\end{align*}
By invoking Lemma \ref{lm:confidence_set_peel} and \ref{lm:E_var}, we have
$$
\P\left(\cup_{t\in[T]:t\ge 3(\iota'(\delta))^2}(\Econv^t\cap\Evar^t)\right) \ge 1-3\delta.
$$
In the remaining proof, we suppose that $\cup_{t\in[T]:t\ge 3(\iota'(\delta))^2}(\Econv^t\cap\Evar^t)$ holds. Let $l_\star$ be the minimum $l\in[L]$ such that $2^l\ge\max\{18c_\beta^2(\iota'(\delta))^2, 6\times48^2L_f(\iota'(\delta))^2\}$.

We can decompose the regret into three parts
\begin{align*}
R_T =& \underbrace{\sum_{l=l_\star}\sum_{t\in\Psi_T^l} (f^{\star}(x_t^*) - f^{\star}(x_t))}_{I_1} + \underbrace{\sum_{l=l_\star+1}^{L}\sum_{t\in\Psi_T^l} (f^{\star}(x_t^*) - f^{\star}(x_t))}_{I_2} + \underbrace{\sum_{t\in[T]\setminus\cup_{l\in[L]}\Psi_T^l} (f^{\star}(x_t^*) - f^{\star}(x_t))}_{I_3},
\end{align*}
where we use the short notation $X_t^*=\argmax_{x\in\cX_t}f^\star(x)$.
For the term $I_1$, we have
\begin{align*}
I_1 \le& 2l_\star L_f |\Psi_T^{l_\star}|\\
\le& 2l_\star L_f 2^{2l_\star}\cdot \sum_{t\in\Psi_{T,l_\star}} \frac{(D_t^{l_\star}(x_t))^2}{w_t^2}\\
\le& L_f l_\star 2^{2l_\star+1} \dim_{1,T}(\calF) = \widetilde{O}(L_f \log(N)\cdot\dim_{1,T}(\calF)),
\end{align*}
where the second inequality holds due to $2^{-l_\star}= D_t^{l_\star}(x_t)/w_t$ from Algorithm \ref{alg:Peeling}, and the last inequality follows from the fact that $2^{l_\star}=\Theta(L_f(\iota'(\delta))^2)$.

For the term $I_2$, we invoke Lemma \ref{lm:regret_l} to get
\begin{align*}
I_2 \le& \sum_{l\ge l_\star+1}\hbeta_T^l 2^{-l+3}|\Psi_T^l| \le 8L_f\sum_{l\ge l_\star}\hbeta_T^l 2^{-l}\cdot 2^{2l}\sum_{i\in\Psi_{T,l}} \min\Big\{1, \frac{(D_t^l(x_i))^2}{w_i^2}\Big\}\\
\le& 8L_f\sum_{l\ge l_\star+1}\hbeta_T^l 2^{l} \dim_{1,T}(\calF)\\
=& \widetilde{O}\left(L_f\sqrt{\sum_{t\in[T]}\sigma_t^2 \cdot \log N}\cdot \dim_{1,T}(\calF) + L_f(\log N)^{3/4}\dim_{1,T}(\calF)(\sqrt{c_{\eta}} + \sigma_{\eta})\right),
\end{align*}
where the last inequality holds since we know from Lemma \ref{lm:confidence_set_peel} that $\iota'(\delta)=\Theta\Big(\sqrt{\log\Big(RL_f(\sigma_\eta^2+c_\eta+\Delta_\upsilon+\lambda^l)NLT/\delta\Big)}\Big)$ and
$$
\hbeta_t^l = \Theta\bigg(\iota'(\delta)2^{-l}\sqrt{\sum_{i\in\Psi_t^l}\sigma_i^2/w_i^2} + (\iota'(\delta))^{3/2}2^{-l}(\sigma_\eta+\sqrt{c_\eta}) + \sqrt{\Delta_\upsilon}\bigg).
$$
For the term $I_3$, we have
\begin{align*}
I_3 \le& \sum_{t\in[T]\setminus\cup_{l\in[L]}\Psi_T^l} (f_{t-1}^{l_t}(x_t)-f^{\star}(x_t))\\
\le& \sum_{t\in[T]\setminus\cup_{l\in[L]}\Psi_T^l} \hbeta_t^{l_t}\cdot \gamma\\
=& \widetilde{O}\left(T\cdot \sigma_\eta\sqrt{T}\cdot \frac{1}{\sigma_\eta T^{3/2}}\right),
\end{align*}
where the first inequality follows from the peeling rule that $D_{\calF}^{t,l}(x_t)\le \gamma$ for $t\in[T]\setminus\cup_{l\in[L]}\Psi_T^l$, and the second inequality uses the upper bound of $\hbeta_t^{l_t}$ and the $\gamma=1/\sigma_\eta T^{3/2}$.

Combining all three terms leads us to the eventual result.
\end{proof}

\section{Another Algorithm: Construct Two Confidence 
Sets}\label{aps:Construct Two Confidence 
Sets}
In this section, we develop a substitution for the Catoni estimator by constructing another candidate set and picking one estimator out of the set instead of solving the min-max optimization as \eqref{def:catoni-esti}. We use the known variance case to present the idea and result. Additionally, for simplicity, we consider the finite function space $\cF$ with cardinality $N$ in this section. By standard analysis for the union bound over the cover set, we can extend the analysis to infinite function space with finite covering number as in Appendix \ref{apss:pf_th1} and \ref{apss:th:unknown_var}.

\paragraph{Notations.}
Recall the probability parameter as follows:
\begin{align}\label{def:delta-ctn}
    \delta_{n,t} \defeq \frac{\delta}{N(T+1)},~~\delta_t \defeq \sum_{n\in[N]}\delta_{n,t} = \frac{\delta} {T+1}.
\end{align}
Note again we have $\sum_{n\in[N], t\in[T]}\delta_{n,t} = \sum_{t\in[T]} \delta_t \le \delta$.

We also define the following logarithmic factor throughout the analysis that
\begin{align*}
    \iota\left(\delta_{n,t}\right) = \sqrt{\log\left(\frac{\sqrt{21}\cdot288\cdot L_f^2R^2T^{3.5}}{\delta_{n,t}}\right)} \ge  \sqrt{\log\left(\frac{96R(1+2R/\epsilon)T^2}{\min(1,\alpha^2/\sqrt{20T}L_f^2)\epsilon^2\delta_{n,t}}\right)},
\end{align*}
where for the last inequality we choose $\alpha = 1/\sqrt{T}$, $\epsilon=1$ and use the assumption that $R\ge1$, $L_f\ge1$ without loss of generality.

\begin{algorithm}
\caption{Catoni-OFUL with Candidate Set}\label{alg:Catoni-OFUL with Candidate Set}
\begin{algorithmic}
    \STATE \textbf{Input:} Parameter $\delta_{n,t}$ and $\hbeta_t$ for each $t\in[T]$.
    \FOR{t=1,2,\ldots,T}
        \STATE Pick action $x_t$ according to $\max_{f\in\calF_{t-1},x\in\calD_t} f(x)$;
        \STATE Observe the reward $y_t$;
        \STATE Let the weight $\bsigma_t = \max\left(\sigma_t, \alpha, 4\sqrt{2\iota(\delta_{n,t})L_f\cdot D_\calF(x_t; x_{[t-1]},\bsigma_{[t-1]})}\right)$;
        \STATE Construct $\hat{\calF}_t$ as defined in \eqref{def:catoni_can_set} and pick any $\hat{f}_t \in \hat{\calF}_t$;
        \STATE Construct confidence set 
        $$
            \calF_t \defeq \Big\{f\in\calF:\sum_{i\in[t]}\frac{1}{\bsigma_i^2}\left(f(x_i)-\hat{f}_t(x_i)\right)^2\le \hbeta_t^2\Big\};
        $$
        
    \ENDFOR
\end{algorithmic}
\end{algorithm}

Suppose that for each $t\in[T]$, the upper bound of the noise variance $\sigma_t^2$ is known. We now consider the following VOFUL algorithm tailored to this nonlinear function class setup. After specifying parameters $\delta_{n,t}$ and $\hbeta_t$ for $t\in[T]$, and the weight 
$$
\bsigma_t = \max\left(\sigma_t, \alpha, 4\sqrt{2\iota(\delta_{n,t})L_f\cdot D_\calF(x_t; x_{[t-1]},\bsigma_{[t-1]})}\right)
$$
depending on the variance $\sigma_t$ and eluder coefficient $D_{\calF}$, 
we define the candidate set for the estimator as
\begin{align}\label{def:catoni_can_set}
    \hat{\calF}_t & \defeq  \bigg\{\hat{f}\in\calF: \min_{f\in\calF}\sum_{i\in[t]}\frac{1}{\bsigma_i^2}\left(f(x_i)-\hat{f}(x_i)\right)^2+2t\cdot \ctn_{\theta(f, \hat{f})}(Z_1,\cdots, Z_t)\ge -\frac{1}{4}\hbeta_t^2 \bigg\},
\end{align}
where $Z_i(f,\hf) = \frac{1}{\bsigma_i^2}(f(x_i)-\hat{f}(x_i)) (\hat{f}(x_i)-y_i)$ and
$$
\theta_t(f,\hf) = \sqrt{\frac{ \iota^2\left(\delta_{n,t}\right)}{\sum_{i\in[t]}\frac{1}{\bsigma_i^2}\left(f(x_i)-\hat{f}(x_i)\right)^2+\sum_{i\in[t]}\frac{2}{\bsigma_i^4}\left(f(x_i)-\hat{f}(x_i)\right)^4+\epsilon^2}}.
$$
This candidate set selects robust estimations for the true function $f^\star$, and we will prove in the sequel that the $f^\star$ belongs to $\hat{\calF}_t$.
Then, we choose any function $\hf_t$ from $\hat{\calF}_t$ and further construct the confidence set $\calF_t$ with a small weighted square error. We will demonstrate that $f^\star\in\calF_t$. Based on the principle of optimism in the face of uncertainty, we choose the greedy function $\hf_t\in\calF_t$ and the greedy action $X_t\in\calD_t$.

\begin{theorem}\label{th:known_var_can}
Under Algorithm~\ref{alg:Catoni-OFUL with Candidate Set} with the parameter $\delta_{n,t}=\delta/N(T+1)$, $\iota\left(\delta_{n,t}\right) = \sqrt{\log\left(\sqrt{21}\cdot288\cdot L_f^2R^2T^{3.5} / \delta_{n,t}\right)}$, and
\begin{align}
    \hbeta_t & \defeq \Big[\Big(8\Big(8\cdot 13^4+2\cdot 13^2+13\Big)\Big)^{1/2} + 13\sqrt{2} \lambda^{1/4}\Big]\iota(\delta_{n,t}),\label{eq:def-beta-ctn_can}
\end{align}
with probability $1-2\delta$, we can bound the regret by 
\[R_T = \widetilde{O}\Big(L_f\cdot \log N\cdot \dim_{\frac{1}{\sqrt{T}},T}(\calF)+L_f\Big(\sum_{t\in[T]}\sigma_t^2\Big)^{1/2} \cdot\sqrt{\dim_{\frac{1}{\sqrt{T}}, T}(\calF)\cdot\log N}\Big).\]
\end{theorem}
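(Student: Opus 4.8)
The plan is to reduce the entire argument to establishing the single event $f^\star\in\calF_t$ for all sufficiently large $t$; once this holds, the regret bound follows \emph{verbatim} from Part~III of the proof of Theorem~\ref{th:known_var_formal} (optimism plus the eluder-dimension summation), the only cosmetic change being that $\calN(\calF,\upsilon)$ is replaced by the cardinality $N$ of the finite class. To control $f^\star\in\calF_t$ I would split into two steps that mirror the min–max analysis but replace the saddle point by the candidate set: (i) the truth survives, $f^\star\in\hat{\calF}_t$; and (ii) \emph{every} admissible candidate $\hf_t\in\hat{\calF}_t$ is close to the truth in the data norm, $V_t(\hf_t,f^\star)=\sum_{i\in[t]}(\hf_t(x_i)-f^\star(x_i))^2/\bsigma_i^2\le\hbeta_t^2$, which is precisely $f^\star\in\calF_t$.

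The technical core is a concentration lemma analogous to Lemma~\ref{lm:conv_ctn}: for fixed $f,\hat f\in\calF$ in which \emph{one of the two arguments equals $f^\star$}, and all large enough $t$,
\begin{align*}
\bigl|L_t(f,\hat f)-R_t(f,\hat f)\bigr|\;\le\;\tfrac12\,V_t(f,\hat f)+\tfrac14\,\hbeta_t^2 .
\end{align*}
I would prove this by applying the uniform Catoni bound (Lemma~\ref{coro:catoni}) to $Z_i(f,\hat f)=\bsigma_i^{-2}(f(x_i)-\hat f(x_i))(\hat f(x_i)-y_i)$, whose conditional mean is $\mu_i=\bsigma_i^{-2}(f(x_i)-\hat f(x_i))(\hat f(x_i)-f^\star(x_i))$. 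The conditional variances satisfy $\sum_i\Var[Z_i]\le\sum_i(f-\hat f)^2\sigma_i^2/\bsigma_i^4\le V_t(f,\hat f)$ because $\bsigma_i\ge\sigma_i$, and the term $\sum_i\mu_i^2$ is exactly what the additional fourth-moment summand $2\sum_i(f-\hat f)^4/\bsigma_i^4$ in the denominator of $\theta_t(f,\hat f)$ is designed to absorb whenever the off-argument is $f^\star$. With this $\theta_t$, the Catoni error $\theta_t(\sum\Var+\sum\mu_i^2)+\iota_0^2/\theta_t+\epsilon$ collapses to $O(\iota(\delta_{n,t}))\sqrt{W}$ with $W=V_t(f,\hat f)+2\sum_i(f-\hat f)^4/\bsigma_i^4+\epsilon^2$.

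The one genuinely new ingredient — and the step I expect to be the main obstacle — is bounding the fourth-moment contribution to $W$ \emph{without} invoking confidence-set membership, since here $\hat f$ ranges over all of $\calF$ rather than over $\calF_{t-1}$. I would handle this self-referentially through the eluder weight: since the non-$f^\star$ argument and $f^\star$ both lie in $\calF$, Definition~\ref{def:eluder} gives $|\hat f(x_i)-f^\star(x_i)|\le D_\calF(x_i;x_{[i-1]},\bsigma_{[i-1]})\sqrt{\sum_{k<i}(\hat f(x_k)-f^\star(x_k))^2/\bsigma_k^2+\lambda}$, and $\bsigma_i^2\ge 32\,\iota(\delta_{n,t})L_f D_\calF(x_i;\cdots)$ then yields $(\hat f(x_i)-f^\star(x_i))^2/\bsigma_i^2\le \sqrt{V_t(\hat f,f^\star)+\lambda}/(16\iota)$. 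Summing gives $\sum_i(f-\hat f)^4/\bsigma_i^4\lesssim \sqrt{V_t+\lambda}\,V_t/\iota$, so $W\le V_t(1+O(\sqrt{V_t+\lambda}/\iota))+\epsilon^2$; splitting into the regimes $V_t\lesssim\hbeta_t^2$ and $V_t\gtrsim\hbeta_t^2$ and applying Young's inequality (a self-bounding step just as in Lemma~\ref{lm:conv_ctn_formal}) converts $O(\iota)\sqrt W$ into $\tfrac12 V_t+\tfrac14\hbeta_t^2$, where the constant $\tfrac14$ is what pins down the value of $\hbeta_t$ in~\eqref{eq:def-beta-ctn_can}. A union bound over the $O(NT)$ relevant pairs at level $\delta_{n,t}=\delta/(N(T+1))$ makes the lemma simultaneous and accounts for the $1-2\delta$ guarantee.

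With the concentration lemma in hand, both remaining steps are short, using the identity $R_t(f,f')=V_t(f,f^\star)-V_t(f',f^\star)$ (derived as in Lemma~\ref{lm:confidence_set}). For step (i), the off-argument $\hat f=f^\star$ gives $R_t(f,f^\star)=V_t(f,f^\star)\ge0$, hence $L_t(f,f^\star)\ge\tfrac12 V_t(f,f^\star)-\tfrac14\hbeta_t^2\ge-\tfrac14\hbeta_t^2$ for every $f$, so $\min_{f\in\calF}L_t(f,f^\star)\ge-\tfrac14\hbeta_t^2$ and $f^\star\in\hat{\calF}_t$. For step (ii), fix $\hat f\in\hat{\calF}_t$ and plug $f=f^\star$ into the defining inequality of $\hat{\calF}_t$ to get $L_t(f^\star,\hat f)\ge\min_f L_t(f,\hat f)\ge-\tfrac14\hbeta_t^2$; the lemma gives $L_t(f^\star,\hat f)\le R_t(f^\star,\hat f)+\tfrac12 V_t(\hat f,f^\star)+\tfrac14\hbeta_t^2=-\tfrac12 V_t(\hat f,f^\star)+\tfrac14\hbeta_t^2$, and combining the two bounds yields $V_t(\hat f,f^\star)\le\hbeta_t^2$, i.e.\ $f^\star\in\calF_t$. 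Finally, on the event $\{f^\star\in\calF_t\ \forall\,t\in\calT\}$ I would reuse the regret decomposition of Theorem~\ref{th:known_var_formal}: the optimistic action bounds per-round regret by $2\bsigma_t D_{\calF_{t-1}}(x_t,\bsigma_t;x_{[t-1]},\bsigma_{[t-1]})\hbeta_{t-1}$ (using $f^\star,\hf_{t-1}\in\calF_{t-1}$), and splitting on $\{D\ge1\}$ versus $\{D<1\}$ with Cauchy–Schwarz and the eluder-dimension bound delivers the claimed $\widetilde O\bigl(L_f\log N\cdot\dim_{\frac{1}{\sqrt T},T}(\calF)+L_f(\sum_{t}\sigma_t^2)^{1/2}\sqrt{\dim_{\frac{1}{\sqrt T},T}(\calF)\log N}\bigr)$.
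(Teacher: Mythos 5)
Your proposal is correct and follows essentially the same route as the paper's own proof: the paper likewise establishes (Part I, Lemma~\ref{lem:ctn-1_can}) that $f^\star$ survives in $\hat{\calF}_t$, then (Part II, Lemma~\ref{lem:ctn-2_can} and Corollary~\ref{coro:ctn-2_can}) that any chosen candidate satisfies $V_t(\hf_t,f^\star)\le\hbeta_t^2$ so that $f^\star\in\calF_t$, using exactly your Catoni-concentration argument in which the fourth-moment term in $\theta$ absorbs $\sum_i\mu_i^2$ and the eluder-based weight $\bsigma_i^2\ge 32\iota L_f D_\calF$ self-bounds $\max_i(f-f^\star)^2/\bsigma_i^2$ over the full class $\calF$, before concluding with the identical optimism-plus-eluder-dimension regret summation. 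The only differences are cosmetic: you merge the paper's two one-sided concentration lemmas (whose constants differ slightly) into a single two-sided lemma, and you derive $f^\star\in\calF_t$ directly rather than via the paper's contradiction-style union bound; one small mislabeling (the fourth moment absorbs $\sum_i\mu_i^2$ when the \emph{first} argument is $f^\star$, since $\mu_i=0$ when the off-argument is $f^\star$) does not affect the argument.
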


We now divide the argument into the following three parts. In the analysis, we omit $(f,f')$ in $Z_i$ and $\theta$ when there is no confusion.

\paragraph{Part I: With high probability $
1-\delta$, all the sets $\hat{\calF}_t$ are non-empty for each $t\in[\widetilde{O}(1), T]$.}

\begin{lemma}\label{lem:ctn-1_can}
For any iteration $t\in[T], t\ge 7\iota^2(\delta_{n,T})$ and the set $\hat{\calF}_t$ as constructed in~\eqref{def:catoni_can_set}, we have with probability at least $1-\delta$, $\cap_{t\in[T], t\ge 7\iota^2(\delta_{n,T})}\hat{\calF}_t \neq \emptyset$.
\end{lemma}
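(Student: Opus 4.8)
The plan is to establish nonemptiness constructively by exhibiting a canonical member of $\hat{\calF}_t$, namely the true regression function $f^\star$ itself. Concretely, I will show that on a single high-probability event, $f^\star \in \hat{\calF}_t$ for every $t$ with $t \ge 7\iota^2(\delta_{n,T})$. By the definition~\eqref{def:catoni_can_set}, membership of $f^\star$ reduces to verifying, for each $f\in\calF$, the scalar inequality
\[
V_t(f) + 2t\cdot\ctn_{\theta_t(f,f^\star)}\big(\{Z_i(f,f^\star)\}_{i\in[t]}\big) \ge -\tfrac14\hbeta_t^2,
\qquad V_t(f) := \sum_{i\in[t]}\tfrac{1}{\bsigma_i^2}\big(f(x_i)-f^\star(x_i)\big)^2 .
\]
Since $V_t(f)\ge 0$, it suffices to prove the lower bound $t\,\ctn_{\theta_t(f,f^\star)}\ge -\tfrac12 V_t(f) - \tfrac18\hbeta_t^2$ for all $f$.

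The crucial structural fact is that setting $\hf=f^\star$ in $Z_i(f,f^\star)=\bsigma_i^{-2}(f(x_i)-f^\star(x_i))(f^\star(x_i)-y_i)$ makes the conditional mean vanish: $\E[Z_i(f,f^\star)\mid x_i]=\bsigma_i^{-2}(f(x_i)-f^\star(x_i))\,\E[f^\star(x_i)-y_i\mid x_i]=0$, because $\E[y_i\mid x_i]=f^\star(x_i)$. Thus in Lemma~\ref{coro:catoni} we have $\bar\mu=0$ and $\sum_i(\mu_i-\bar\mu)^2=0$. For the variance I use $\E[(f^\star(x_i)-y_i)^2\mid x_i]=\sigma_i^2$ together with $\sigma_i\le\bsigma_i$ to obtain
\[
\sum_{i\in[t]}\Var[Z_i(f,f^\star)\mid x_i]=\sum_{i\in[t]}\tfrac{1}{\bsigma_i^4}(f(x_i)-f^\star(x_i))^2\sigma_i^2\le V_t(f).
\]
Invoking Lemma~\ref{coro:catoni} in its form uniform over $\theta\in[a,A]$ (which legitimizes substituting the data-dependent $\theta_t(f,f^\star)$), with $\mu_i\equiv 0$, then gives on a high-probability event
\[
\big|t\,\ctn_{\theta_t(f,f^\star)}(\{Z_i\})\big| \le \theta_t(f,f^\star)\,V_t(f) + \tfrac{4\iota_0^2}{\theta_t(f,f^\star)} + \epsilon .
\]

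It remains to insert the prescribed $\theta_t(f,f^\star)=\iota(\delta_{n,t})/\sqrt{D}$ with $D = V_t(f)+\sum_i 2\bsigma_i^{-4}(f(x_i)-f^\star(x_i))^4+\epsilon^2$, and to balance. Since $\theta_t V_t(f)+4\iota_0^2/\theta_t \le \iota\sqrt D + 4\iota_0^2\sqrt D/\iota \lesssim \iota\big(\sqrt{V_t(f)}+\sqrt{\sum_i 2\bsigma_i^{-4}(f-f^\star)^4}+\epsilon\big)$, the first summand is absorbed by AM--GM into $\tfrac14 V_t(f)+O(\iota^2)$. The fourth-moment summand is handled exactly as in the known-variance analysis: the candidate-set weight satisfies $\bsigma_i^2\ge 32\,\iota(\delta_{n,t})L_f\,D_\calF(x_i;x_{[i-1]},\bsigma_{[i-1]})$, so the eluder inequality yields $\bsigma_i^{-2}(f(x_i)-f^\star(x_i))^2\le \tfrac{1}{32\iota}\sqrt{V_t(f)+\lambda}$, whence $\sum_i\bsigma_i^{-4}(f-f^\star)^4\le \tfrac{1}{16\iota}V_t(f)\sqrt{V_t(f)+\lambda}$; a further AM--GM converts the resulting $\iota$-times-square-root into another $\tfrac14 V_t(f)$ plus lower-order terms of order $\iota^2$ and $\iota\sqrt\lambda$. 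Collecting, $|t\,\ctn_{\theta_t}|\le \tfrac12 V_t(f)+C(\iota^2+\iota\sqrt\lambda)$, and the explicit constants in $\hbeta_t$ from~\eqref{eq:def-beta-ctn_can} (in particular its $\lambda^{1/4}$ term) are engineered so that $\tfrac18\hbeta_t^2$ dominates $C(\iota^2+\iota\sqrt\lambda)$, delivering the required inequality for every $f\in\calF$, hence $f^\star\in\hat{\calF}_t$.

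Finally I take a union bound over the $N$ functions $f\in\calF$ and the $\le T+1$ rounds, each charged the budget $\delta_{n,t}=\delta/(N(T+1))$ from~\eqref{def:delta-ctn}, so that $\sum_{n\in[N],t}\delta_{n,t}\le\delta$ and the event $\cap_{t\ge 7\iota^2(\delta_{n,T})}\hat{\calF}_t\neq\emptyset$ holds with probability at least $1-\delta$; the threshold $t\ge 7\iota^2(\delta_{n,T})$ supplies the ``$t$ large enough'' hypothesis that Lemma~\ref{coro:catoni} needs for the concentration to bite. The main obstacle is the third paragraph: because here $\hf=f^\star$ is compared against \emph{all} of $\calF$ with no version-space localization, the per-sample ratio $\bsigma_i^{-2}(f-f^\star)^2$ is not a priori small, and it is precisely to control the resulting higher-moment contribution — and to keep the self-normalizing $\theta_t$ legitimately inside $[a,A]$ — that the fourth-moment term is built into $\theta_t$ and the enlarged weight $4\sqrt{2\iota L_f D_\calF}$ is used; checking that all these corrections close up against $\tfrac14 V_t(f)$ with the stated constants is the delicate step.
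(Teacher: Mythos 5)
Your proposal is correct and follows essentially the same route as the paper's own proof: both certify nonemptiness by exhibiting $f^\star$ as a member of every $\hat{\calF}_t$, apply the zero-mean Catoni concentration (Lemma~\ref{coro:catoni}, uniform in $\theta$) to $Z_i(f,f^\star)$ with variance bounded by $V_t(f)$, control the fourth-moment term in the self-normalizing $\theta_t$ via the weight lower bound $\bsigma_i^2\ge 32\iota(\delta_{n,t})L_f D_\calF(x_i;x_{[i-1]},\bsigma_{[i-1]})$ together with the eluder-coefficient inequality, absorb everything by AM--GM into $\tfrac14 V_t(f)+O(\iota^2+\iota\sqrt{\lambda})$ against the engineered $\hbeta_t$, and finish with a union bound over $\calF\times[T]$ charged at $\delta_{n,t}$. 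Apart from immaterial constant bookkeeping in the eluder step (your $1/32$ and $1/16$ versus the paper's $1/8$ and $1/16$), the argument matches the paper's.
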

\begin{proof}
It suffices to show that for each $t$, we have $\P(f^\star\notin\hat{\calF}_t)\le \delta_t$. When $f^\star\notin\hat{\calF}_t$, there exists some $f^n\in\calF$ such that 
\begin{align*}
\sum_{i\in[t]}\frac{1}{\bsigma_i^2}\left(f^n(x_i)-f^\star(x_i)\right)^2+2t\cdot \ctn_\theta\left(\left\{\frac{1}{\bsigma_i^2}\left(f^n(x_i)-f^\star(x_i)\right)\left(f^\star(x_i)-y_i\right)\right\}_{i\in[t]}\right)< -\frac{1}{4}\hbeta_t^2,
\end{align*}
which implies that
\begin{align*}
t\cdot\ctn_\theta\left(\left\{\frac{1}{\bsigma_i^2}\left(f^n(x_i)-f^\star(x_i)\right)\left(f^\star(x_i)-y_i\right)\right\}_{i\in[t]}\right)<\frac{-\frac{1}{4}\hbeta_t^2-\sum_{i\in[t]}\frac{1}{\bsigma_t^2}\left(f^n(x_i)-f^\star(x_i)\right)^2}{2}.
\end{align*}

Now we bound the probability that the above inequality happens. We use the notation for any $f,f'\in\calF$
\[Z_i(f,f') = \frac{1}{\bsigma_i^2}\left(f(x_i)-f'(x_i)\right)\left(f'(x_i)-y_i\right), \]
which is short-notated as $Z_i$ when no confusion arises. 

We apply~\Cref{lem:concentration-catoni} to
$Z_i(f^n,f^*)$. The variable $Z_i$ has conditional mean $\mu_i = \E Z_i = 0$, and sum of conditional variance 
\begin{align*}
&\sum_{i\in[t]} \E \frac{1}{\bsigma_i^4}\left(f^n(x_i)-f^\star(x_i)\right)^2\left(f^\star(x_i)-y_i\right)^2\\
&\qquad\le \sum_{i\in[t]}\frac{1}{\bsigma_i^2}\left(f^n(x_i)-f^\star(x_i)\right)^2\\
&\qquad:= V(f^n,f^*),
\end{align*}
where the inequality uses the fact that $\E \left(f^\star(x_i)- y_i\right)^2 = \sigma_i^2\le \bsigma_i^2$. We can also bound $\theta$ by definition $\theta\in[a,A]$ where 
\[A = \iota(\delta_{n,t,l})/\epsilon~~\text{and}~~a=\frac{\iota(\delta_{n,t,l})}{\sqrt{20L_f^4t/\alpha^4+\epsilon^2}}.\]
Thus, we have $\log\left(\frac{48R(1+2AR)t^2}{\min(1,a)\epsilon^2\delta_{n,t}}\log(A/a)\right) \le 3\iota^2(\delta_{n,t,l})$ given choice of $\alpha = 1/\sqrt{T}$ and $\epsilon = 1$, and thus for any iteration 
\begin{align*}
t & \ge 7\iota^2(\delta_{n,t,l})\ge \iota^2(\delta_{n,t,l})+6\iota^2(\delta_{n,t,l}) \ge \iota^2(\delta_{n,t,l})+2\log\left(\frac{48R(1+2AR)t^2}{\min(1,a)\epsilon^2\delta_{n,t}}\log(A/a)\right),\end{align*}
by choice of $\theta =  \sqrt{\frac{\iota^2(\delta_{n,t,l})}{\sum_{i\in[t]}\frac{1}{\bsigma_i^2}\left(f^n(x_i)-f^\star(x_i)\right)^2+\sum_{i\in[t]}\frac{2}{\bsigma_i^4}\left(f^n(x_i)-f^\star(x_i)\right)^4+\epsilon^2}}$, with probability at least $1-\delta_{n,t}$, we have from Lemma \ref{coro:catoni} that
\begin{align*}
& t\cdot \left|\ctn_\theta(\{Z_i\}_{i\in[t]})\right|\\
&\hspace{2em} \le \iota(\delta_{n,t,l})\frac{V+\sum_{i\in[t]}(\mu_i-\bar{\mu})^2}{\sqrt{V + \sum_{i\in[t]}\frac{2}{\bar{\sigma}_i^4}(f^n(x_i)-\tilde{f}^\star(x_i))^4 + \epsilon^2}} + 12\iota(\delta_{n,t,l})\left(\sqrt{V + \sum_{i\in[t]}\frac{2}{\bar{\sigma}_i^4}(f^n(x_i)-\tilde{f}^\star(x_i))^4 + \epsilon^2}\right)\\
&\hspace{4em}+ \epsilon + t\bar{\mu}\\
&\hspace{2em} \stackrel{(o)}\le 13\sqrt{V\cdot\left(1+\max_{i\in[t]}\frac{2}{\bsigma_i^2}\left(f^n(x_i)-f^\star(x_i)\right)^2\right)}\cdot\iota(\delta_{n,t,l})+12\iota(\delta_{n,t,l})\epsilon+\epsilon\\
& \hspace{2em} \stackrel{(i)}{\le}
13\sqrt{V\cdot \left(1+\max_{i\in[t]}\frac{4L_f}{\bsigma_i^2}D_{\calF}(x_i; x_{[i-1]},\bsigma_{[i-1]})\sqrt{\sum_{k\in[i-1]}\frac{1}{\bsigma_k^2}\left(f^n(x_k)-f^\star(x_k)\right)^2+\lambda}\right)}\iota(\delta_{n,t,l})\\
& \hspace{2em} \stackrel{(ii)}{\le} 13\iota(\delta_{n,t,l})\sqrt{\left(1+\frac{\sqrt{\lambda}}{8}\right)V}+\frac{13}{2\sqrt{2}}V^{3/4}\left(\iota^2(\delta_{n,t,l})\right)^{1/4}\\
& \hspace{2em} \stackrel{(iii)}{\le} \frac{V}{4}+169\left(1+\frac{\sqrt{\lambda}}{8}\right)\iota^2(\delta_{n,t,l})+\frac{V}{4}+13^4\iota^2(\delta_{n,t,l})\\
& \hspace{2em} \stackrel{(iv)}{\le} \frac{\frac{1}{4}\hbeta_t^2+\sum_{i\in[t]}\frac{1}{\bsigma_i^2}\left(f^n(x_i)-f^\star(x_i)\right)^2}{2},
\end{align*}
where we use $(o)$ that $\sqrt{a+b}
\le \sqrt{a}+\sqrt{b}$, and the inequality that $\log\left(\frac{48R(1+2AR)t^2}{\min(1,a)\epsilon^2\delta_{n,t}}\log(A/a)\right) \le 3\iota^2(\delta_{n,t,l})$; $(i)$ the range assumption that $|f(\cdot)|\le L_f$, choice of $\epsilon=1$ and definition of $D_\alpha$; $(ii)$ the choice of $\bsigma_i^2\ge 32L_fD_{\calF}(x_i; x_{[i-1]}, \bsigma_{[i-1]})\iota(\delta_{n,t,l})$ and $\sqrt{a+b}\le \sqrt{a}+\sqrt{b}$; $(iii)$ triangle inequality that $\sqrt{ab}\le \frac{a}{2c}+\frac{c\cdot b}{2}$ where we let $a = V$, $b =  169\iota^2(\delta_{n,t,l})$, $c = 2$ and similarly $a^{3/4}b^{1/4}\le \frac{a}{c}+c^3 b$ where we let $a=V$, $b=\iota^2(\delta_{n,t,l})$, and $c = 13\sqrt{2}$; and finally $(iv)$ the definition of $\hbeta_t$ in~\eqref{eq:def-beta-ctn_can} so that $\hbeta_t^2\ge 8\left(13^4+13^2\right)\iota^2(\delta_{n,t,l})+13^2\sqrt{\lambda}\iota^2(\delta_{n,t,l})$.

This implies that 
\[
\P\left(\sum_{i\in[t]}\frac{1}{\bsigma_t^2}\left(f^n(x_i)-f^\star(x_i)\right)^2+2t\cdot \ctn_\theta\left(\left\{\frac{1}{\bsigma_i^2}\left(f^n(x_i)-f^\star(x_i)\right)\left(f^\star(x_i)-y_i\right)\right\}_{i\in[t]}\right)< -\frac{1}{4}\hbeta_t^2\right)\le \delta_{n,t}.
\]

Thus taking a union bound on all $f = f^n\in\calF$ and $t\in[T], t\ge 7\iota^2(\delta_{n,T})$, we can conclude that with probability at least $1-\delta$ (where $\delta = \sum_{n\in[N],t\in[T]}\delta_{n,t}$) for all iteration $7\iota^2(\delta_{n,T}) \le t\le T$, one has $f^\star\in\hat{\calF}_t$, i.e. $\cap_{t\in[T], t\ge 7\iota^2(\delta_{n,T})}\hat{\calF}_t\neq\emptyset$.
\end{proof}

\paragraph{Part II: With high probability $1-\delta$, $f^\star\in\calF_t$ for all $t\in[\widetilde{O}(1),T]$.}

We first provide the following lemma.

\begin{lemma}\label{lem:ctn-2_can}
For any function $f^n\in\calF$, let $Z_i = \frac{1}{\bsigma_i^2}\left(f^\star(x_i)-f^n(x_i)\right)\left(f^n(x_i)-y_i\right)$ and $\theta$ as defined in~\eqref{def:catoni_can_set}, we have  for any $t\ge 7\iota^2(\delta_{n,T})$ with probability $1-\delta_{n,t}$,
\[
t\cdot \left| \ctn_\theta(Z_1,\cdots, Z_t)+\frac{1}{t}\sum_{i\in[t]}\frac{1}{\bsigma_i^2}\left(f^\star(x_i)-f^n(x_i)\right)^2\right|\le \frac{1}{4}\sum_{i\in[t]}\frac{1}{\bsigma_i^2}\left(f^\star(x_i)-f^n(x_i)\right)^2+\frac{\hbeta_t^2}{8}.
\]
\end{lemma}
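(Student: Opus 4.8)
The plan is to run the argument of Lemma~\ref{lem:ctn-1_can} essentially verbatim, the single new ingredient being that the summand $Z_i$ now carries a nonzero conditional mean. Writing $y_i=f^\star(x_i)+\eta_i$ and expanding, $Z_i=\bsigma_i^{-2}(f^\star(x_i)-f^n(x_i))(f^n(x_i)-y_i)=-\bsigma_i^{-2}(f^\star(x_i)-f^n(x_i))^2-\bsigma_i^{-2}(f^\star(x_i)-f^n(x_i))\eta_i$, so its conditional mean is $\mu_i=-\bsigma_i^{-2}(f^\star(x_i)-f^n(x_i))^2$ and $t\bar\mu=\sum_{i\in[t]}\mu_i=-V$, where $V:=\sum_{i\in[t]}\bsigma_i^{-2}(f^\star(x_i)-f^n(x_i))^2$. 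Hence the quantity in the lemma equals $t\,|\ctn_\theta(\{Z_i\})-\bar\mu|$ exactly, which is precisely what the Catoni concentration controls; unlike in Lemma~\ref{lem:ctn-1_can}, no extra $t\bar\mu$ term needs to be carried, since the target is already the deviation from the average mean.

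First I would assemble the two second-moment inputs. Since $Z_i-\mu_i=-\bsigma_i^{-2}(f^\star(x_i)-f^n(x_i))\eta_i$, the conditional variance is $\bsigma_i^{-4}(f^\star(x_i)-f^n(x_i))^2\,\E[\eta_i^2\mid x_i]\le\bsigma_i^{-2}(f^\star(x_i)-f^n(x_i))^2$ using $\sigma_i^2\le\bsigma_i^2$, so the summed conditional variance is at most $V$, exactly as in Lemma~\ref{lem:ctn-1_can}. The genuinely new term is $\sum_i(\mu_i-\bar\mu)^2\le\sum_i\mu_i^2=\sum_i\bsigma_i^{-4}(f^\star(x_i)-f^n(x_i))^4$, which is dominated by the fourth-moment quantity $\sum_i 2\bsigma_i^{-4}(f^\star(x_i)-f^n(x_i))^4$ already built into the denominator of $\theta$ in~\eqref{def:catoni_can_set}. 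Thus both the numerator $V+\sum_i(\mu_i-\bar\mu)^2$ and the variance budget are simultaneously bounded by $V+\sum_i 2\bsigma_i^{-4}(f^\star-f^n)^4+\epsilon^2$, which is exactly the square of the $\theta$-denominator.

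Then I would apply the formal Catoni bound (Lemma~\ref{coro:catoni}/Lemma~\ref{lem:concentration-catoni}) with this $\theta$, $\alpha=1/\sqrt{T}$, $\epsilon=1$; the verification $\theta\in[a,A]$ and $\log(\cdots)\le 3\iota^2(\delta_{n,t})$, valid once $t\ge 7\iota^2(\delta_{n,T})$, is identical to Lemma~\ref{lem:ctn-1_can}. Substituting $\theta$ collapses the right-hand side, via $\sqrt{a+b}\le\sqrt a+\sqrt b$, to $13\,\iota(\delta_{n,t})\sqrt{V\bigl(1+\max_i 2\bsigma_i^{-2}(f^\star(x_i)-f^n(x_i))^2\bigr)}$ plus lower-order $\epsilon$-terms. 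The maximum is then controlled by the eluder coefficient: $\bsigma_i^{-2}(f^\star(x_i)-f^n(x_i))^2\le 2L_f\bsigma_i^{-2}D_\calF(x_i;x_{[i-1]},\bsigma_{[i-1]})\sqrt{\sum_{k<i}\bsigma_k^{-2}(f^\star-f^n)^2+\lambda}$ (using $|f^\star-f^n|\le 2L_f$ and the definition of $D_\calF$), and the weight lower bound $\bsigma_i^2\ge 32L_f D_\calF(x_i;\cdot)\iota(\delta_{n,t})$ reduces this to at most $(8\iota(\delta_{n,t}))^{-1}\sqrt{V+\lambda}$.

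Finally, the same two Young-inequality splits as in steps $(iii)$--$(iv)$ of Lemma~\ref{lem:ctn-1_can} --- peeling the term $13\iota\sqrt{V(1+\sqrt\lambda/8)}$ and the residual $V^{3/4}(\iota^2)^{1/4}$ term each into a multiple of $V$ plus an $O(\iota^2(1+\sqrt\lambda))$ remainder --- reduce the bound to a multiple of $V$ plus a term of order $\iota^2(1+\sqrt\lambda)$; choosing the split constants so that the $V$-coefficient equals $1/4$ and invoking the definition~\eqref{eq:def-beta-ctn_can} of $\hbeta_t$ to absorb the $\iota^2$ remainder into $\hbeta_t^2/8$ yields the claim. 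I expect the only real obstacle to be bookkeeping: carrying the nonzero mean so that one genuinely lands on $|t\,\ctn_\theta+V|$, and checking that the single fourth-moment term inside $\theta$ dominates both $\sum_i(\mu_i-\bar\mu)^2$ and the conditional-variance sum, so that the prescribed constants in $\hbeta_t$ still suffice.
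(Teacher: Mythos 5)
Your proposal is correct and follows essentially the same route as the paper's own proof: the same identification of the target as $t|\ctn_\theta - \bar\mu|$ with $\mu_i = -\bsigma_i^{-2}(f^\star(x_i)-f^n(x_i))^2$, the same variance bound $V$ and fourth-moment bound on $\sum_i(\mu_i-\bar\mu)^2$ matching the denominator of $\theta$, the same eluder-coefficient/weight argument to control $\max_i \bsigma_i^{-2}(f^\star(x_i)-f^n(x_i))^2$, and the same Young-inequality splits absorbed into $\hbeta_t^2/8$. No gaps; the bookkeeping you flag is exactly what the paper carries out.
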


\begin{proof}
In order to apply the concentration inequality in~\Cref{lem:concentration-catoni}, we first bound the following
\begin{align*}
    V= \sum_{i\in[t]}\E\left[(Z_i-\mu_i)^2|\calH_{i-1}\right] = \sum_{i\in[t]} \frac{1}{\bsigma_i^2}\left(f^\star(x_i)-f^n(x_i)\right)^2,
\end{align*}
and further
\begin{align*}
    \sum_{i\in[t]}\left(\mu_i-\bmu\right)^2\le \sum_{i\in[t]}\mu_i^2 = \sum_{i\in[t]}\frac{1}{\bsigma_i^4}\left(f^\star(x_i)-f^n(x_i)\right)^2\left(f^n(x_i)-f^\star(x_i)\right)^2\le \sum_{i\in[t]}\frac{2}{\bsigma_i^4}\left(f^\star(x_i)-f^n(x_i)\right)^4.
\end{align*}
Now by choice of $\theta = \sqrt{\frac{\iota^2(\delta_{n,t,l})}{\sum_{i\in[t]}\frac{1}{\bsigma_i^2}\left(f^n(x_i)-f^\star(x_i)\right)^2+\sum_{i\in[t]}\frac{2}{\bsigma_i^4}\left(f^n(x_i)-f^\star(x_i)\right)^4+\epsilon^2}}$
, we will have 
with probability $1-\delta_{n,t}$, for any $t\ge 7\iota^2(\delta_{n,T})$, it holds that 
\begin{align*}
& t\cdot\left|\ctn_\theta(\{Z_i\}_{i\in[t]})-\frac{1}{t}\sum_{i\in[t]}\frac{1}{\bsigma_i^2}\left(f^\star(x_i)-f^n(x_i)\right)\left(f^n(x_i)-f^\star(x_i)\right)\right|\\
& \hspace{1em} \le 13\sqrt{\left(\sum_{i\in[t]}\frac{1}{\bsigma_i^2}\left(f^\star(x_i)-f^n(x_i)\right)^2\right)\cdot\left(1+\max_{i\in[t]}\frac{2}{\bsigma_i^2}\left(f^\star(x_i)-f^n(x_i)\right)^2\right)}\iota(\delta_{n,t,l})+13\iota(\delta_{n,t,l}).
\end{align*}

We now proceed to bound $\max_{i\in[t]}\frac{2}{\bsigma_i^2}\left(f^n(x_i)-f^\star(x_i)\right)^2$, we assume $|f^n(x)-f^\star(x)|\le 2L_f$ for any $x\in\calX$, consequently we have
\begin{align*}
\max_{i\in[t]}\frac{2}{\bsigma_i^2}\left(f^n(x_i)-f^\star(x_i)\right)^2 & \le \max_{i\in[t]}\frac{4L_f}{\bsigma_i^2}\sqrt{\left(f^n(x_i)-f^\star(x_i)\right)^2}\\
& \le \max_{i\in[t]}\frac{4L_f}{\bsigma_i^2}D_\calF(x_i; x_{[i-1]},\bsigma_{[i-1]})\sqrt{\sum_{k\in[i]}\frac{1}{\bsigma_k^2}\left(f^n(x_k)-f^\star(x_k)\right)^2+\lambda}\\
& \le \frac{1}{8\iota(\delta_{n,t,l})}\sqrt{\sum_{i\in[t]}\frac{1}{\bsigma_i^2}\left(f^n(x_k)-f^\star(x_k)\right)^2+\lambda}\\
& \le \frac{1}{8\iota(\delta_{n,t,l})}\sqrt{\sum_{i\in[t]}\frac{1}{\bsigma_i^2}\left(f^n(x_k)-f^\star(x_k)\right)^2}+\frac{\sqrt{\lambda}}{8},
\end{align*}
where for the last inequality we use the fact that $\bsigma_i^2\ge 32L_f\iota(\delta_{n,t,l})D_\calF^2(x_i; x_{[i-1]},\bsigma_{[i-1]})$ by the choice of $\bsigma$.

Plugging this back we can conclude that 
\begin{align*}
& t\cdot\left|\ctn_\theta(\{Z_i\}_{i\in[t]})+\frac{1}{t}\sum_{i\in[t]}\frac{1}{\bsigma_i^2}\left(f^\star(x_i)-f^n(x_i)\right)^2\right|-13\iota(\delta_{n,t,l})\\
& \hspace{1em} \le 13\sqrt{\left(1+\frac{\sqrt{\lambda}}{8}\right)}\cdot\sqrt{\sum_{i\in[t]}\frac{1}{\bsigma_i^2}\left(f^\star(x_i)-f^n(x_i)\right)^2}\iota(\delta_{n,t,l})+\frac{13}{2\sqrt{2}}\left(\sum_{i\in[t]}\frac{1}{\bsigma_i^2}\left(f^n(x_t)-f^\star(x_i)\right)^2\right)^{3/4}\left(\iota^2(\delta_{n,t,l})\right)^{1/4}\\
& \hspace{1em}\le \frac{1}{8}\sum_{i\in[t]}\frac{1}{\bsigma_i^2}\left(f^n(x_t)-f^\star(x_i)\right)^2+2\cdot13^2\iota^2(\delta_{n,t,l})\left(1+\frac{\sqrt{\lambda}}{8}\right)+\frac{1}{8}\sum_{i\in[t]}\frac{1}{\bsigma_i^2}\left(f^n(x_t)-f^\star(x_i)\right)^2+8\cdot 13^4\iota^2(\delta_{n,t,l})\\
& \implies ~~t\cdot\left|\ctn_\theta(\{Z_i\}_{i\in[t]})+\frac{1}{t}\sum_{i\in[t]}\frac{1}{\bsigma_i^2}\left(f^\star(x_i)-f^n(x_i)\right)^2\right|\le \frac{1}{4}\sum_{i\in[t]}\frac{1}{\bsigma_i^2}\left(f^n(x_t)-f^\star(x_i)\right)^2+\frac{1}{8}\hbeta_t^2,
\end{align*}
where for the last inequality we use the definition of $\hbeta_t$ so that $\hbeta_t^2\ge 8(8\cdot 13^4+2\cdot 13^2+13)\iota^2(\delta_{n,t,l})+2\cdot 13^2\sqrt{\lambda}\iota^2(\delta_{n,t,l})$. This concludes the proof of lemma.
\end{proof}

\begin{corollary}\label{coro:ctn-2_can}
With high probability $1-\delta$ where $\delta = \sum_{n\in[N], t\in[T]}\delta_{n,t}$, we have $\tilde{f}^\star\in\calF_t$ for all $t\in[T-1]$ satisfying $t\ge 7\iota^2(\delta_{n,T})$. 
\end{corollary}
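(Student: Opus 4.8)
The plan is to combine the non-emptiness guarantee of Part~I (Lemma~\ref{lem:ctn-1_can}) with the sharp two-sided concentration of Part~II (Lemma~\ref{lem:ctn-2_can}). First I would fix a time step $t\ge 7\iota^2(\delta_{n,T})$ and invoke Lemma~\ref{lem:ctn-1_can} to ensure that, on a $1-\delta$ event, $f^\star\in\hat{\calF}_t$; in particular $\hat{\calF}_t\ne\emptyset$, so the estimator $\hf_t$ selected by Algorithm~\ref{alg:Catoni-OFUL with Candidate Set} is well defined. Since $\hf_t\in\hat{\calF}_t$, the defining inequality of $\hat{\calF}_t$ in~\eqref{def:catoni_can_set} states that the minimum over $f\in\calF$ of the displayed functional is at least $-\tfrac14\hbeta_t^2$; specializing to the particular choice $f=f^\star$ (which can only exceed the minimum) yields the one-sided lower bound
\begin{equation*}
\sum_{i\in[t]}\frac{1}{\bsigma_i^2}\bigl(f^\star(x_i)-\hf_t(x_i)\bigr)^2 + 2t\,\ctn_{\theta(f^\star,\hf_t)}\bigl(\{Z_i(f^\star,\hf_t)\}_{i\in[t]}\bigr)\ge -\tfrac14\hbeta_t^2 .
\end{equation*}

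Next I would observe that, with the inner function being $\hf_t$, the variable $Z_i(f^\star,\hf_t)=\bsigma_i^{-2}(f^\star(x_i)-\hf_t(x_i))(\hf_t(x_i)-y_i)$ is exactly the quantity appearing in Lemma~\ref{lem:ctn-2_can} with $f^n$ replaced by $\hf_t$. Writing $V\defeq\sum_{i\in[t]}\bsigma_i^{-2}(f^\star(x_i)-\hf_t(x_i))^2$, that lemma gives $\lvert t\,\ctn + V\rvert\le \tfrac14 V + \tfrac18\hbeta_t^2$, hence $2t\,\ctn\le -\tfrac32 V + \tfrac14\hbeta_t^2$. Substituting this upper bound into the displayed lower bound produces $-\tfrac14\hbeta_t^2 \le V - \tfrac32 V + \tfrac14\hbeta_t^2 = -\tfrac12 V + \tfrac14\hbeta_t^2$, which rearranges to $V\le\hbeta_t^2$. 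This is precisely the membership condition defining $\calF_t$, so $f^\star\in\calF_t$.

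Finally I would handle the probability bookkeeping. Because $\hf_t$ is data dependent, Lemma~\ref{lem:ctn-2_can} cannot be applied to a single pre-fixed function; instead I would take a union bound of its $1-\delta_{n,t}$ guarantee over all $f^n\in\calF$ and all $t\in[T]$, so that the two-sided bound holds simultaneously for every candidate function---including whichever $\hf_t$ the algorithm actually picks---with total failure probability $\sum_{n\in[N],t\in[T]}\delta_{n,t}=\delta$. Intersecting this event with the Part~I event from Lemma~\ref{lem:ctn-1_can} then delivers the claim for all admissible $t$, the additional $\delta$ from Part~I being absorbed into the $1-2\delta$ budget of Theorem~\ref{th:known_var_can}. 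The main obstacle is exactly this uniformization step: one must establish the concentration of Lemma~\ref{lem:ctn-2_can} over the whole class \emph{before} $\hf_t$ is revealed, and must track carefully that the two lemmas concern genuinely different centered variables---$Z_i(f^n,f^\star)$ in Part~I versus $Z_i(f^\star,f^n)$ in Part~II---so that their high-probability events are distinct and each must be paid for separately in the union bound.
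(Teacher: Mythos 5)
Your proposal is correct in substance and rests on the same core mechanism as the paper's proof: uniformize Lemma~\ref{lem:ctn-2_can} over the finite class via a union bound costing $\sum_{n\in[N],t\in[T]}\delta_{n,t}=\delta$, then combine the defining inequality of $\hat{\calF}_t$ (with the minimum specialized at $f=f^\star$) with the two-sided concentration $\left|t\,\ctn+V\right|\le\tfrac14 V+\tfrac18\hbeta_t^2$ to force $V\le\hbeta_t^2$. Your algebra here is sound; the paper runs the same computation contrapositively (assuming $V>\hbeta_t^2$ and deriving that the Catoni deviation must exceed $\tfrac14 V+\tfrac18\hbeta_t^2$), which is logically equivalent. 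The one real difference is the probability bookkeeping around the data-dependent estimator $\hf_t$. You pay a separate $\delta$ for the Lemma~\ref{lem:ctn-1_can} event to guarantee $\hat{\calF}_t\neq\emptyset$, so your argument establishes the conclusion only on an event of probability $1-2\delta$, whereas the corollary asserts $1-\delta$. The paper never invokes Lemma~\ref{lem:ctn-1_can} here: inside the union bound it works with the joint events $\{\hf_t=f^n\}\cap\{V(f^\star,f^n)>\hbeta_t^2\}$, and each such event is contained in the deviation event of Lemma~\ref{lem:ctn-2_can} for the \emph{fixed} function $f^n$, hence has probability at most $\delta_{n,t}$. This conditioning-on-the-selected-value device handles the data dependence of $\hf_t$ and its well-definedness simultaneously (whenever the event $f^\star\notin\calF_t$ is meaningful, $\hf_t$ exists and equals some $f^n$), so the stated $1-\delta$ is achieved with a single union bound. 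Your version still suffices for Theorem~\ref{th:known_var_can}, since the theorem intersects with the non-emptiness event $\calE$ anyway and budgets $1-2\delta$ in total; but as a proof of the corollary exactly as stated, it is off by a factor of two in the failure probability, and tightening it requires the paper's decomposition rather than your pre-uniformization plus separate non-emptiness event.
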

\begin{proof}
We bound the probability by a union bound argument. Let $\calT\defeq\{t\in[T-1]:t\ge 7\iota^2(\delta_{n,T})\}$ and $Z_i'=\frac{1}{\bsigma_i^2}\left(f^\star(x_i)-f^n(x_i)\right)\left(f^n(x_i)-y_i\right)$, we have
\begin{align*}
    & \P\left(\exists~t,  7\iota^2(\delta_{n,T})\le t\le T, f^\star\notin \calF_t\right) \le  \sum_{n\in[N], t\in\calT}\P\left(\hat{f}_t = f^n, \sum_{i\in[t]}\frac{1}{\bsigma_i^2}\left(f^\star(x_i)-f^n(x_i)\right)^2> \hbeta_t^2\right)\\
    & \hspace{1em} \le \sum_{n\in[N], t\in\calT}\P\left(\begin{array}{r@{}l}
   \sum_{i\in[t]} & \frac{1}{\bsigma_i^2}\left(f^\star(x_i)-f^n(x_i)\right)^2+2t\cdot \ctn_\theta\left(\left\{Z_i'\right\}_{i\in[t]}\right)\ge -\frac{1}{4}\hbeta_t^2\\
   \sum_{i\in[t]} & \frac{1}{\bsigma_i^2}\left(f^\star(x_i)-f^n(x_i)\right)^2> \hbeta_t^2
  \end{array}\right)\\
  & \hspace{1em}= \sum_{n\in[N],t\in\calT}\P\left(  
  \begin{array}{r@{}l}
   \sum_{i\in[t]} & \frac{1}{\bsigma_i^2}\left(f^\star(x_i)-f^n(x_i)\right)^2\le \frac{1}{4}\hbeta_t^2+2t\cdot\left| \ctn_\theta\left(\left\{Z_i'\right\}_{i\in[t]}\right)+\frac{1}{t}\sum_{i\in[t]}\frac{1}{\bsigma_i^2}\left(f^\star(x_i)-f^n(x_i)\right)^2\right|\\
   \sum_{i\in[t]} & \frac{1}{\bsigma_i^2}\left(f^\star(x_i)-f^n(x_i)\right)^2> \hbeta_t^2
  \end{array}
  \right)\\
  & \hspace{1em} \le \sum_{n\in[N], t\in\calT}\P\left(t\cdot\left|\ctn_\theta\left(\left\{Z_i'\right\}_{i\in[t]}\right)+\frac{1}{t}\sum_{i\in[t]}\frac{1}{\bsigma_i^2}\left(f^\star(x_i)-f^n(x_i)\right)^2\right|>\frac{1}{4}\sum_{i\in[t]}\frac{1}{\bsigma_i^2}\left(f^\star(x_i)-f^n(x_i)\right)^2+\frac{1}{8}\hbeta_t^2\right)\\
  & \hspace{1em} \le \sum_{n\in[N], t\in\calT}\delta_{n,t} \le \sum_{n\in[N], t\in[T]} \delta_{n,t} \le   \delta.
\end{align*}
\end{proof}

\paragraph{Part III: Bounding the regret conditioning on good events.}
We now recall the definition that $\calT\defeq \{t\in[T]: t\ge 7\iota^2(\delta_{n,T})\}$, we further denote the good events $\calE = \{\cap_{t\in\calT}\hat{\calF}_t\neq \emptyset\}$ and $\calE' = \{\tilde{f}^\star\in\cap_{t\in\calT}\calF_t\}$. 

\begin{proof}[Proof of Theorem \ref{th:known_var_can}]
Conditioning on both good events $\calE\cap\calE'$, we let $f^{n_t}$ be the function maximizer in set $\calF_{t-1}$ we pick at step $t$ and can bound the regret by
\begin{align*}
    R_T & = \sum_{t\in[T]}r_t = \sum_{t\in[T]} \left(f^\star(x_t^\star)-f^\star(x_t)\right)\le  2L_f\left(7\iota^2(\delta_{n,T})+2\right)+ \sum_{t-1\in\calT}\left(f^{n_t}(x_t)-f^\star(x_t)\right)\\
    & \le 2L_f\left(7\iota^2(\delta_{n,T})+2\right)+\sum_{t-1\in\calT}\left(\left|f^{n_t}(x_t)-\hf_{t-1}(x_t)\right|+\left|\hf_{t-1}(x_t)-f^\star(x_t)\right|\right)\\
    & \stackrel{(i)}{\le} O\left( L_f\cdot\iota^2(\delta_{n,T})\right) +\sum_{t-1\in\calT}\left[ \bsigma_t D_{\calF}(x_t,\bsigma_t;x_{[t-1]},\bsigma_{[t-1]})\cdot \left(\sqrt{\sum_{i\in[t-1]}\frac{1}{\bsigma_i^2}\left(f^{n_t}(x_i)-\hf_{t-1}(x_i)\right)^2+\lambda}\right.\right.\\
    & \hspace{15em} \left.\left. +\sqrt{\sum_{i\in[t-1]}\frac{1}{\bsigma_i^2}\left(f^{\star}(x_i)-\hf_{t-1}(x_i)\right)^2+\lambda}\right)\right]\\
    & \stackrel{(ii)}{\le} O\left( L_f\cdot\iota^2(\delta_{n,T})\right)+ \sum_{t-1\in\calT}2\bsigma_tD_\calF(x_t,\bsigma_t; x_{[t-1]},\bsigma_{[t-1]})\cdot\left(\sqrt{\hbeta_{t-1}^2+\lambda}\right),
\end{align*}
where we use $(i)$ the definition of $D_{\alpha}$ for each $t\in[T]$ and $(ii)$ the definition of $ \calF_{t-1}$ and that $f^{n_t}, f^\star\in\calF_{t-1}$ conditioning on $\calE$ and $\calE'$.

Combining this with the range bound that $[0,1]$ of each individual reward one may receive by assumption, one can conclude that 
\begin{equation}\label{eq:regret-summation-catoni_can}
R_T\le O\left( L_f\cdot\iota^2(\delta_{n,T})\right)+2L_f\sum_{t-1\in\calT}\min\left(1,\bsigma_tD_\calF(x_t,\bsigma_t; x_{[t-1]},\bsigma_{[t-1]})\sqrt{\hbeta_{t-1}^2+\lambda}\right).
\end{equation}

To finally bound the regret, we bound the second term in RHS of $R_T$ expression in~\eqref{eq:regret-summation-catoni_can} respectively. These steps mainly follow Lemma 4.4 in~\cite{zhou2022computationally}. We can decompose the terms by considering $\calI_1  = \{t-1\in\calT|D_\calF(x_t,\bsigma_t; x_{[t-1]},\bsigma_{[t-1]})\ge1\}$ and $\calI_2 = \{t-1\in\calT, t\notin \calI_1\}$.

For the first set, we bound its size naively by
\[
|\calI_1| \le \sum_{t\in\calI_1} \min\left(1, D^2_\calF(x_t,\bsigma_t; x_{[t-1]},\bsigma_{[t-1]})\right)\le  \dim_{\alpha, T}(\calF).
\]

For the second set, we bound the summation of terms of interest contraining on $\calI_2$ by 
\begin{align*}
    & \sum_{t\in\calI_2}\bsigma_t\sqrt{\hbeta^2_{t-1}+\lambda}\cdot D_\calF(x_t,\bsigma_t; x_{[t-1]},\bsigma_{[t-1]})\\ & \hspace{2em} \le  \sum_{t\in\calI_2, \bsigma_t = \sigma_t~\text{or}~\alpha}\bsigma_t\sqrt{\hbeta^2_{t-1}+\lambda}\cdot D_\calF(x_t,\bsigma_t; x_{[t-1]},\bsigma_{[t-1]})\\
    & \hspace{4em} +\sum_{t\in\calI_2, \bsigma_t = 4\sqrt{2\iota(\delta_{n,t,l})L_fD_\calF(x_t; x_{[t-1]},\bsigma_{[t-1]})} }\bsigma_t\sqrt{\hbeta^2_{t-1}+\lambda}\cdot D_\calF(x_t,\bsigma_t; x_{[t-1]},\bsigma_{[t-1]})\\
    & \hspace{2em} \stackrel{(i)}{\le} \sum_{t\in[T]} \left(\sigma_t+\alpha\right)\sqrt{\hbeta_{t-1}^2+\lambda}\cdot D_\calF(x_t,\bsigma_t;x_{[t-1]},\bsigma_{[t-1]})+ \sum_{t\in[T]}32L_f\iota(\delta_{n,t,l})\sqrt{\hbeta_{t-1}^2+\lambda}\cdot D^2_\calF(x_t,\bsigma_t;x_{[t-1]},\bsigma_{[t-1]})\\
    & \hspace{2em} \stackrel{(ii)}{\le} \sqrt{2\sum_{t\in[T]}(\hbeta_{t-1}^2+\lambda)(\sigma_t^2+\alpha^2)}\sqrt{\dim_{\alpha,T}(\calF)}+16L_f\iota(\delta_{n,t,l})\max_{t\in[T]}\sqrt{\hbeta_{t-1}^2+\lambda}\cdot\dim_{\alpha,T}(\calF).
\end{align*}
Here for $(i)$ we use the condition for each distinct set and for $(ii)$ we use Cauchy-Schwarz inequality for the first term and the definition of $\dim_\alpha$ for both terms.

Consequently plugging these back in~\eqref{eq:regret-summation-catoni_can} and take supremum over $x:|x|=T$, we conclude that with probability at least $1-2\delta$, 
\begin{align*}
    R_T & = O\left(L_f\cdot\iota^2(\delta_{n,T})+ \dim_{\alpha,T}(\calF)+L_f\cdot \iota(\delta_{n,t,l})\cdot\max_{t\in[T]}\sqrt{\hbeta_{t-1}^2+\lambda}\cdot\dim_{\alpha,T}(\calF)\right.\\
    &\hspace{2em} \left.+L_f\sqrt{\sum_{t\in[T]}\left(\hbeta_{t-1}^2+\lambda\right)\left(\sigma_t^2+\alpha^2\right)}\cdot\sqrt{\dim_{\alpha,T}(\calF)}\right)\\
    & = \widetilde{O}\left(L_f\cdot \log \calN\left(\calF,\frac{1}{L_fT^2} \right)\cdot \dim_{\frac{1}{\sqrt{T}},T}(\calF)+L_f\sqrt{\sum_{t\in[T]}\sigma_t^2}\cdot\sqrt{\dim_{\frac{1}{\sqrt{T}}, T}(\calF)\cdot\log \calN\left(\calF,\frac{1}{L_fT^2} \right)}\right),
\end{align*}
where for the last inequality we pick $\lambda = \Theta(1)$, $\alpha = 1/\sqrt{T}$. 

\end{proof}

\section{Auxiliary Proofs}
\subsection{Concentration Inequality for Catoni Estimator}\label{apss:Concentration Inequality for Catoni Estimator}
\begin{lemma}[Concentration for $\ctn$ estimator, cf. Lemma 13 in~\cite{wei2020taking}]\label{lem:concentration-catoni}
Let $Z_t$ be random variable adapted to filtration $\calH_t$, suppose $\E[Z_i|\calH_{i-1}] = \mu_i$, $\sum_{i\in[t]} \E\left[\left(Z_i-\mu_i\right)^2|\calH_{i-1}\right] \le V$ for some fixed $V$. Let $\bmu\defeq \frac{1}{t}\sum_{i\in[t]}\mu_i$, for some fixed parameter $\theta>0$, we have for any $t\ge \theta^2\left(V+\sum_{i\in[t]}(\mu_i-\bmu)^2\right)+2\log(1/\delta)$, with probability at least $1-2\delta$,
\[
\left|\ctn_\theta(\{Z_i\}_{i\in[t]})-\bmu\right| \le \frac{\theta\left(V+\sum_{i\in[t]}\left(\mu_i-\bmu\right)^2\right)}{t}+\frac{2\log(1/\delta)}{\theta t}.
\]
\end{lemma}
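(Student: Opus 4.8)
The plan is to exploit the monotonicity of the Catoni defining function together with an exponential supermartingale argument, following Catoni's original analysis but adapted to the non-stationary, adaptively-filtered setting. Write $S := V + \sum_{i\in[t]}(\mu_i-\bmu)^2$ and $b := \frac{\theta S}{t} + \frac{2\log(1/\delta)}{\theta t}$. Since $\Psi$ is increasing, the map $x \mapsto f(x;\{Z_i\}_{i\in[t]},\theta) = \sum_{i\in[t]}\Psi(\theta(Z_i-x))$ is strictly decreasing, so its unique zero $\ctn_\theta$ satisfies $\ctn_\theta \le \bmu + b$ if and only if $f(\bmu+b) \le 0$. The whole argument thus reduces to controlling $\sum_{i\in[t]}\Psi(\theta(Z_i-x))$ at the two points $x = \bmu \pm b$.

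First I would establish the pointwise inequality $\Psi(u) \le \log(1+u+u^2/2)$, valid for all $u\in\R$ (with equality for $u\ge 0$, and following for $u<0$ from $(1-u+u^2/2)(1+u+u^2/2) = 1 + u^4/4 \ge 1$). Fixing $x$ and setting $M_i := \Psi(\theta(Z_i-x))$, this yields $\exp(M_i) \le 1 + \theta(Z_i-x) + \tfrac{\theta^2}{2}(Z_i-x)^2$. Taking conditional expectation given $\calH_{i-1}$ and using the variance decomposition $\E[(Z_i-x)^2\mid\calH_{i-1}] = v_i + (\mu_i-x)^2$ with $v_i := \E[(Z_i-\mu_i)^2\mid\calH_{i-1}]$, together with $1+z\le e^z$, gives $\E[\exp(M_i)\mid\calH_{i-1}] \le \exp\!\big(\theta(\mu_i-x) + \tfrac{\theta^2}{2}(v_i+(\mu_i-x)^2)\big)$.

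The next step is the supermartingale. Define $L_n := \exp\big(\sum_{i\le n} M_i - \sum_{i\le n}[\theta(\mu_i-x)+\tfrac{\theta^2}{2}(v_i+(\mu_i-x)^2)]\big)$; the previous bound shows $\E[L_n\mid\calH_{n-1}]\le L_{n-1}$, so $(L_n)$ is a nonnegative supermartingale with $L_0=1$ and $\E L_t \le 1$. Markov's inequality then gives, with probability at least $1-\delta$, $\sum_{i\in[t]}\Psi(\theta(Z_i-x)) \le \log(1/\delta) + \sum_{i\in[t]}\big[\theta(\mu_i-x)+\tfrac{\theta^2}{2}(v_i+(\mu_i-x)^2)\big]$. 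Specializing to $x=\bmu+b$ and using $\sum_{i\in[t]}(\mu_i-\bmu)=0$ collapses the linear term to $-\theta t b$ and the quadratic term to $\tfrac{\theta^2}{2}(S+tb^2)$; substituting the chosen $b$ reduces the claim $f(\bmu+b)\le 0$ to the scalar inequality $(\theta^2 S + 2\log(1/\delta))^2/(2t) \le \tfrac12(\theta^2 S + 2\log(1/\delta))$, which is exactly $\theta^2 S + 2\log(1/\delta) \le t$, the hypothesis of the lemma. This yields $\ctn_\theta \le \bmu + b$ with probability $1-\delta$. Running the identical argument on $\{-Z_i\}_{i\in[t]}$ (equivalently, using $\Psi(u)\ge -\log(1-u+u^2/2)$ and evaluating at $x=\bmu-b$) gives the matching lower deviation, and a union bound over the two events produces the two-sided bound with probability $1-2\delta$.

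The part requiring the most care is verifying that the stated threshold on $t$ is precisely what closes the quadratic-in-$b$ inequality: the variance proxy entering the exponential bound is $v_i+(\mu_i-x)^2$, so the ``variance of the means'' term $\sum_{i\in[t]}(\mu_i-\bmu)^2$ must be carried alongside $V$ throughout (this is exactly what distinguishes the non-stationary statement from the classical i.i.d.\ Catoni bound), and one must confirm that the chosen $b$ makes the residual $-\log(1/\delta)-\tfrac{\theta^2}{2}S + \tfrac{\theta^2 t}{2}b^2$ nonpositive precisely under $\theta^2 S + 2\log(1/\delta)\le t$. Everything else is routine given the two pointwise bounds on $\Psi$.
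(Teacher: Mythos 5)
The paper never actually proves this lemma: it is imported by citation (``cf.\ Lemma 13 in \citet{wei2020taking}''), and the only Catoni concentration argument carried out in the paper, the proof of Lemma~\ref{formal_coro:catoni}, uses the present statement as a black box together with a grid and union bound over $\theta$. So there is no in-paper proof to compare against; your proposal supplies the missing underlying argument, and it is the right one. Each step checks out: the pointwise bounds $-\log(1-u+u^2/2)\le\Psi(u)\le\log(1+u+u^2/2)$ (both reduce to $(1-u+u^2/2)(1+u+u^2/2)=1+u^4/4\ge 1$); the conditional bound $\E[\exp(\Psi(\theta(Z_i-x)))\mid\calH_{i-1}]\le\exp\bigl(\theta(\mu_i-x)+\tfrac{\theta^2}{2}(v_i+(\mu_i-x)^2)\bigr)$; the supermartingale-plus-Markov step; the collapse of the linear term at $x=\bmu\pm b$ using $\sum_i(\mu_i-\bmu)=0$; and the verification that the hypothesis $\theta^2(V+\sum_i(\mu_i-\bmu)^2)+2\log(1/\delta)\le t$ is exactly what makes the residual quadratic in $b$ nonpositive. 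The antisymmetry of $\Psi$ then gives the lower tail, and you correctly flag the one substantive difference from the classical i.i.d.\ Catoni bound, namely carrying $\sum_i(\mu_i-\bmu)^2$ alongside $V$.

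One caveat should be made explicit. Your supermartingale step is valid only when the evaluation point $x=\bmu\pm b$ is deterministic, i.e.\ only when the $\mu_i$ are fixed numbers rather than genuinely random conditional means. The inequality $\E[L_n\mid\calH_{n-1}]\le L_{n-1}$ requires pulling $x$ (and the compensator, which involves $x$) out of the conditional expectation at time $n$; if the $\mu_i$ are adapted random variables, then $\bmu$, $\sum_i(\mu_i-\bmu)^2$, and hence $b$ are only $\calH_{t-1}$-measurable, and for $n<t$ the tower-property iteration breaks, since $\mu_{n+1},\ldots,\mu_t$ may be correlated with $Z_n$ given $\calH_{n-1}$. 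Under the deterministic-$\mu_i$ reading — which is what the statement's phrasing (``for some fixed $V$\,\ldots\ for any $t\ge\cdots$'') and the cited lemma (a single fixed mean) indicate — your proof is complete; but note that the paper later invokes the lemma with $\mu_i$ depending on the algorithm's chosen actions, so you should state this measurability assumption at the outset rather than leave it implicit.
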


We provide the following lemma used in proving~\Cref{coro:catoni}
\begin{lemma}[Sensitivity of $\ctn$ estimator, cf. Lemma A.13 of~\cite{wagenmaker2022first}] \label{lem:catoni-helper}
Consider some fixed $Z=\{Z_i\}_{i\in[t]}$, $\tilde Z=\{\tilde Z_i\}_{i\in[t]}$ satisfying $|Z_i|\le R$, $|\tilde Z_i|\le R$ for all $i\in[t]$, and some fixed $\theta>0$, $\tilde{\theta}>0$. Then, assuming that
$$
\Delta := \frac{1}{t}\sum_{i\in[t]}\theta|Z_i-\tilde{Z}_i| + 3R|\theta - \tilde{\theta}| \le \frac{1}{18}\min\{1,\theta^2R^2\},
$$
we will have
\[
\left|\ctn_\theta(\{Z_i\}_{i\in[t]})-\ctn_{\tilde\theta}(\{\tilde Z_i\}_{i\in[t]})\right|\le \frac{1+2\theta R}{\theta}\Delta + \sqrt{\frac{2\Delta}{\theta^2}}.
\]
\end{lemma}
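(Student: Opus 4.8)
The plan is to treat this as a purely deterministic statement about the root of the monotone function defining the Catoni estimator, splitting the argument into a \emph{transfer} step (the two objective functions are uniformly close at one well-chosen point) and a \emph{root-stability} step (closeness of objective values forces closeness of roots because the objective is strictly decreasing with a controlled slope). First I would record the elementary properties of $\Psi$ that drive everything. From its definition, $\Psi$ is antisymmetric and strictly increasing, is $1$-Lipschitz (one checks $0 < \Psi'(y) = \frac{1+|y|}{1+|y|+y^2/2} \le 1$), and obeys the quadratic sandwich $y - y^2/2 \le \Psi(y) \le y + y^2/2$ for all $y$, which follows from $\log(1+u) \le u$ together with the two branches of the definition. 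I would also note that since $|Z_i|,|\tilde Z_i| \le R$, both roots satisfy $\mu := \ctn_\theta(\{Z_i\}) \in [-R,R]$ and $\tilde\mu := \ctn_{\tilde\theta}(\{\tilde Z_i\}) \in [-R,R]$: if $x > R$ every summand $\Psi(\theta(Z_i-x))$ is negative and if $x < -R$ every summand is positive, so the unique zero lies in $[-R,R]$.

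Writing $F(x) = \sum_{i\in[t]}\Psi(\theta(Z_i - x))$ and $\tilde F(x) = \sum_{i\in[t]}\Psi(\tilde\theta(\tilde Z_i - x))$, the transfer step evaluates $\tilde F$ at the \emph{other} estimator's root $\mu$. Since $F(\mu)=0$ and $\Psi$ is $1$-Lipschitz, grouping $\tilde\theta(\tilde Z_i-\mu)-\theta(Z_i-\mu) = (\tilde\theta-\theta)(\tilde Z_i-\mu)+\theta(\tilde Z_i - Z_i)$ gives
\[
|\tilde F(\mu)| = \left|\tilde F(\mu) - F(\mu)\right| \le \theta\sum_{i\in[t]}|Z_i - \tilde Z_i| + |\theta - \tilde\theta|\sum_{i\in[t]}|\tilde Z_i - \mu|.
\]
Using $|\tilde Z_i - \mu| \le 2R \le 3R$ and the definition of $\Delta$ then yields $|\tilde F(\mu)| \le t\Delta$, which is exactly where the constant $3R$ in $\Delta$ is consumed.

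For the root-stability step, assume without loss of generality $\mu \ge \tilde\mu$ and set $d = \mu - \tilde\mu \in [0, 2R]$; since $\tilde F$ is decreasing with $\tilde F(\tilde\mu)=0$, the previous bound reads $-t\Delta \le \tilde F(\mu) \le 0$. I would then expand $\tilde F(\mu)$ around $\tilde\mu$ via the quadratic sandwich: the upper bound $\Psi(y) \le y + y^2/2$ applied at $\mu$ and the lower bound $\Psi(y) \ge y - y^2/2$ applied at $\tilde\mu$ (where $\tilde F(\tilde\mu)=0$ pins down $\sum_i \tilde\theta(\tilde Z_i - \tilde\mu)$ up to a quadratic remainder bounded by $2t\tilde\theta^2 R^2$) combine into a quadratic inequality in $d$ of the schematic form $-t\Delta \le -t\tilde\theta\, d + c\, t\tilde\theta^2 R\, d + \tfrac{t\tilde\theta^2}{2}d^2$. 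Solving this quadratic for $d$ produces a linear contribution matching $\frac{1+2\theta R}{\theta}\Delta$ and a square-root contribution $\sqrt{2\Delta}/\theta = \sqrt{2\Delta/\theta^2}$, the latter originating from the $d^2$ term. Replacing $\tilde\theta$ by $\theta$ throughout is legitimate because the hypothesis $\Delta \le \frac{1}{18}\min\{1,\theta^2 R^2\}$ forces $|\theta - \tilde\theta| \le \Delta/(3R)$ to be a small fraction of $\theta$.

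The main obstacle is this root-stability step: the slope $\tilde\theta\sum_i\Psi'(\tilde\theta(\tilde Z_i - x))$ that governs how fast $\tilde F$ crosses zero degrades when $\tilde\theta R$ is large, since $\Psi'(y)\sim 2/|y|$ for large $|y|$, so a naive mean-value argument gives a slope that is too small and a bound with the wrong shape. Controlling this degradation is precisely where the quadratic sandwich and the smallness condition $\Delta \le \frac{1}{18}\min\{1,\theta^2 R^2\}$ must be invoked, to keep the resulting quadratic inequality in the regime where its solution has the stated two-term form; pinning down the explicit constants (the $2\theta R$ in the linear term and the factor $2$ under the square root) is the most delicate bookkeeping.
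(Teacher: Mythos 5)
First, a framing point: the paper never proves this lemma --- it is imported verbatim (``cf.\ Lemma A.13 of \citet{wagenmaker2022first}'') and used as a black box in the proof of Lemma~\ref{formal_coro:catoni} and in Lemmas~\ref{lm:ctn_lipshitz} and~\ref{lm:ctn_lipshitz_var} --- so there is no in-paper proof to compare against and your attempt must stand on its own. Your setup is sound: the listed properties of $\Psi$ are correct (including $\Psi'(y)=\frac{1+|y|}{1+|y|+y^2/2}\in(0,1]$, the quadratic sandwich, and the localization of both roots $\mu=\ctn_\theta(\{Z_i\})$ and $\tilde\mu=\ctn_{\tilde\theta}(\{\tilde Z_i\})$ in $[-R,R]$), and the transfer step is exactly right: writing $\tilde F(x)=\sum_{i\in[t]}\Psi(\tilde\theta(\tilde Z_i-x))$, the $1$-Lipschitzness of $\Psi$ gives $|\tilde F(\mu)|\le \theta\sum_{i\in[t]}|Z_i-\tilde Z_i|+|\theta-\tilde\theta|\sum_{i\in[t]}|\tilde Z_i-\mu|\le t\Delta$, with the $3R$ in $\Delta$ absorbing $|\tilde Z_i-\mu|\le 2R$.

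The root-stability step, however, has a genuine gap. Applying $\Psi(y)\le y+y^2/2$ at $\mu$ and $\Psi(y)\ge y-y^2/2$ at $\tilde\mu$ leaves remainders $\frac{\tilde\theta^2}{2}\sum_{i\in[t]}(\tilde Z_i-\mu)^2$ and $\frac{\tilde\theta^2}{2}\sum_{i\in[t]}(\tilde Z_i-\tilde\mu)^2$, each as large as $2t\tilde\theta^2R^2$ and carrying no factor of $d=|\mu-\tilde\mu|$ or $\Delta$; they do not cancel, and your schematic inequality $-t\Delta\le -t\tilde\theta d+ct\tilde\theta^2Rd+\frac{t\tilde\theta^2}{2}d^2$ silently drops them. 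Kept honestly, this route only yields $d\lesssim \Delta/\tilde\theta+\tilde\theta R^2$, an additive error that does not vanish as $\Delta\to 0$, so it cannot produce the stated bound; moreover the sandwich is hopelessly lossy when $\theta R\gg 1$, a regime the hypothesis $\Delta\le\frac{1}{18}\min\{1,\theta^2R^2\}$ does not exclude. Ironically, the mean-value route you dismissed does work once the right pointwise bound is used: since $(1+|y|)(1+|y|/2)\ge 1+|y|+y^2/2$, one has $\Psi'(y)\ge (1+|y|/2)^{-1}$, and since the whole segment between $\tilde\mu$ and $\mu$ lies in $[-R,R]$, every argument $\tilde\theta(\tilde Z_i-x)$ there has magnitude at most $2\tilde\theta R$; as all termwise differences share a sign, this gives
\begin{equation*}
t\Delta\;\ge\;\bigl|\tilde F(\mu)-\tilde F(\tilde\mu)\bigr|\;\ge\;\frac{t\tilde\theta\,d}{1+\tilde\theta R},\qquad\text{hence}\qquad d\;\le\;\frac{\Delta}{\tilde\theta}+R\Delta.
\end{equation*}
Finally, $3R|\theta-\tilde\theta|\le\Delta\le\frac{1}{18}\min\{1,\theta^2R^2\}\le\frac{\theta R}{18}$ forces $\tilde\theta\ge\frac{53}{54}\theta$, so $d\le\frac{54}{53}\frac{\Delta}{\theta}+R\Delta$, which sits below the stated right-hand side (the excess $\frac{1}{53}\frac{\Delta}{\theta}$ is trivially dominated by $\sqrt{2\Delta}/\theta$ because $\Delta\le\frac{1}{18}$). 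Your transfer-plus-root-stability skeleton is the correct one, but the quadratic-sandwich instantiation of the second step fails and should be replaced by this derivative lower bound.
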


\begin{lemma}[Formal version of Lemma \ref{coro:catoni}]\label{formal_coro:catoni}
Let $Z_t$ be a random variable adapted to filtration $\calH_t$ with a uniform bound $|Z_t|\le R$, $\E[Z_i|\calH_{i-1}] = \mu_i$, $\sum_{i\in[t]} \E\left[\left(Z_i-\mu_i\right)^2|\calH_{i-1}\right] \le V$ for some fixed $V$. Let $\bmu\defeq \frac{1}{t}\sum_{i\in[t]}\mu_i$. For any parameter $\theta\in[a, A]$ and given $\epsilon\le 24R(1+2AR)t^2$, if $t\ge \theta^2(V+\sum_{i\in[t]}(\mu_i-\bmu)^2)+2\log(\frac{48R(1+2AR)t^2}{\min(1,a)\epsilon^2\delta}\log(A/a))$,
with probability at least $1-2\delta$,
\begin{equation*}
{
\begin{aligned}
\left|\ctn_\theta(\{Z_i\}_{i\in[t]})-\bmu\right|
\le \frac{\theta\left(V+\sum_{i\in[t]}\left(\mu_i-\bmu\right)^2\right)}{t}
+\frac{4\iota_0^2}{\theta t}+\frac{\epsilon}{t},
\end{aligned}}
\end{equation*}
where
$$
\iota_0^2 = 4\log\left(\frac{48R(1+2AR)t^2}{\min(1,a)\epsilon^2\delta}\log(A/a)\right).
$$
\end{lemma}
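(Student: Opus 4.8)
The plan is to bootstrap the fixed-$\theta$ concentration of Lemma~\ref{lem:concentration-catoni} into a statement holding simultaneously for every $\theta\in[a,A]$, by discretizing the interval and transferring between net points via the stability estimate of Lemma~\ref{lem:catoni-helper}. The only losses relative to the single-$\theta$ bound should be (i) a union bound over the net, which inflates the logarithmic factor, and (ii) a transfer error from a net point to a nearby arbitrary $\theta$, which we will force to be at most $\epsilon/t$. Everything is designed so that these two effects are absorbed into $\iota_0^2$ and into the additive $\epsilon/t$, respectively.

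First I would lay down a geometric grid $a=\theta_1<\theta_2<\cdots<\theta_M=A$ with ratio $\theta_{j+1}/\theta_j=1+\kappa$, so $M=\lceil \log(A/a)/\log(1+\kappa)\rceil$. A multiplicative net is the natural choice because the target bound contains both $\theta$ and $\theta^{-1}$ and the interval may span many scales (since $a$ can be tiny). I would then invoke Lemma~\ref{lem:concentration-catoni} at each $\theta_j$ with its parameter set to $\delta/M$, and union bound, so that with probability at least $1-2\delta$ the single-$\theta$ inequality holds at all $M$ grid points at once, with $\log(1/\delta')=\log(M/\delta)$ replacing $\log(1/\delta)$. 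Here I must verify that the hypothesis $t\ge \theta^2(V+\sum_i(\mu_i-\bmu)^2)+2\log(\cdots)$ in the statement implies the per-grid threshold $t\ge \theta_j^2(V+\cdots)+2\log(M/\delta)$ for each $j$; this follows from $\theta_j\le A$ and from bounding $M/\delta$ by the argument of the logarithm defining $\iota_0^2$.

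Next, for arbitrary $\theta\in[a,A]$ I would take the grid point $\tilde\theta=\theta_j\le\theta$ nearest from below, so $\tilde\theta\le\theta\le(1+\kappa)\tilde\theta$, and split
\[
\left|\ctn_\theta(\{Z_i\})-\bmu\right|\le \left|\ctn_\theta(\{Z_i\})-\ctn_{\tilde\theta}(\{Z_i\})\right|+\left|\ctn_{\tilde\theta}(\{Z_i\})-\bmu\right|.
\]
The second term is controlled on the net event: using $\tilde\theta\le\theta$ on the variance term and $\tilde\theta^{-1}\le 2\theta^{-1}$ (valid for $\kappa\le1$) on the logarithmic term turns $\frac{\tilde\theta(V+\cdots)}{t}+\frac{2\log(1/\delta')}{\tilde\theta t}$ into $\frac{\theta(V+\cdots)}{t}+\frac{4\log(M/\delta)}{\theta t}$, and the factor-$4$ slack built into $\iota_0^2=4\log(\cdots)$ absorbs $\log(M/\delta)\le\iota_0^2$. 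The first term is handled by Lemma~\ref{lem:catoni-helper} with identical data $Z=\tilde Z$, whence $\Delta=3R|\theta-\tilde\theta|\le 3R\kappa A$; I would then pick $\kappa\asymp \frac{\min(1,a)\epsilon^2}{R(1+2AR)t^2}$ so that simultaneously the admissibility condition $\Delta\le\frac1{18}\min\{1,\theta^2R^2\}$ holds, the stability bound $\frac{1+2\theta R}{\theta}\Delta+\sqrt{2\Delta/\theta^2}$ is at most $\epsilon/t$, and $M\le \frac{48R(1+2AR)t^2}{\min(1,a)\epsilon^2}\log(A/a)$ so that $M/\delta$ matches the logarithm in $\iota_0^2$.

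The main obstacle is exactly this joint calibration of $\kappa$: it must be small enough that the stability transfer error is $O(\epsilon/t)$ and the sensitivity lemma applies, yet large enough that $M$ (hence $\log(M/\delta)$) stays within the $\iota_0^2$ budget. The delicate bookkeeping is tracking how $a,A,R,\epsilon,t$ enter both the stability bound — through $\Delta\asymp R\kappa A$ and through the $\theta^{-1}$ and $\sqrt{\Delta}/\theta$ terms, which are worst at $\theta=a$ — and the net cardinality $M$. Once a single $\kappa$ meeting all three constraints is exhibited, the three displayed pieces combine to yield precisely the claimed bound.
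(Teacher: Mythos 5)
Your proposal is correct and follows essentially the same route as the paper's proof: a geometric grid on $[a,A]$ with ratio $1+\kappa$, $\kappa\asymp \min(1,a)\epsilon^2/\bigl(R(1+2AR)t^2\bigr)$, a union bound of Lemma~\ref{lem:concentration-catoni} over the net (inflating the log to $\log(|\calA|/\delta)\le\iota_0^2$), and a transfer to arbitrary $\theta$ via the triangle inequality and the sensitivity Lemma~\ref{lem:catoni-helper}, with the factor $4$ in the $\iota_0^2/(\theta t)$ term absorbing the net-point-to-$\theta$ conversion and the stability error capped at $\epsilon/t$. One small correction to your bookkeeping: the threshold verification at the grid point should invoke $\tilde\theta\le\theta$ (guaranteed because you round down to the nearest net point), not $\theta_j\le A$, since the hypothesis on $t$ is stated relative to the particular $\theta$ under consideration.
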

\begin{proof}[Proof of Lemma \ref{coro:catoni}]
    For any $\epsilon>0$, set $\kappa = \frac{\epsilon^2\cdot \min(1,a)}{24R(1+2AR)t^2}\le 1$, we consider a set $\calA = \{\theta = (1+\kappa)^j\cdot a \mid (1+\kappa)^j\cdot a\in[a,A], j\ge 0\}$, it is immediate to see $|\calA| \le \frac{48R(1+2AR)t^2}{\min(1,a)\epsilon^2}\log(A/a)$. Now for any $\theta\in\calA$, we have by~\Cref{lem:concentration-catoni} that with probability $1-2\delta/|\calA|$, we have for any $t\ge \theta^2\left(V+\sum_{i\in[t]}(\mu_i-\bmu)^2\right)+2\log(|\calA|/\delta)$,
    \[
    \left|\ctn_\theta(\{Z_i\}_{i\in[t]})-\bmu\right| \le \frac{\theta\left(V+\sum_{i\in[t]}\left(\mu_i-\bmu\right)^2\right)}{t}+\frac{2\log(|\calA|/\delta)}{\theta t}.
    \]
    Thus by taking a union bound over $\theta\in\calA$, we have with probability $1-2\delta$, for all $\theta\in\calA$, it holds that for any $t\ge \theta^2\left(V+\sum_{i\in[t]}(\mu_i-\bmu)^2\right)+2\log(|\calA|/\delta)$, 
    \[
    \left|\ctn_\theta(\{Z_i\}_{i\in[t]})-\bmu\right| \le \frac{\theta\left(V+\sum_{i\in[t]}\left(\mu_i-\bmu\right)^2\right)}{t}+\frac{2\log(|\calA|/\delta)}{\theta t}.
    \]
    Thus, for any $\theta'\in[a,A]$, we have there exists some $\theta\in\calA$ such that the above bound holds true and $\frac{|\theta'-\theta|}{\theta}\le \kappa\le \min\left(\frac{\epsilon}{2\cdot 3R(1+2AR)t},\frac{a\cdot\epsilon^2}{24Rt^2}\right)$. Now by triangle inequality we can conclude that for any $\theta\in[a,A]$, 
    \begin{align*}
        |\ctn_\theta - \bar{\mu}| & \le  |\ctn_{\theta_0}-\bar{\mu}|+|\ctn_\theta - \ctn_{\theta_0}|\\
        & \le \frac{\theta_0\left(V+\sum_{i\in[t]}\left(\mu_i-\bmu\right)^2\right)}{t}+\frac{2\log(|\calA|/\delta)}{\theta_0 t}+\frac{\epsilon}{t}\\
        & \le \frac{\theta\left(V+\sum_{i\in[t]}\left(\mu_i-\bmu\right)^2\right)}{t}+\frac{4\log(|\calA|/\delta)}{\theta t}+\frac{\epsilon}{t},
    \end{align*}
    where for the second inequality we use~\Cref{lem:catoni-helper}.
\end{proof}

\end{document}